\newcommand{\druckVersion}{0} %
\newcommand{\specialStyle}{1} %
\definecolor{meinCiteGruen}{rgb}{0.0, 0.7, 0.05}
\definecolor{meineLinkFarbe}{rgb}{0.0, 0.0, 0.61}
\definecolor{orange}{rgb}{1, 0.5, 0.00}
        \newcommand*{\meinGreyscale}{0.5}
	    \newcommand*{\meineTranzparenz}{0.5} %
	    \newcommand*{\meinGreyscale}{0.6}
	    \newcommand*{\meineTranzparenz}{0.4}
    \newcommand*{\meinGreyscale}{0}
	\newcommand*{\meineTranzparenz}{1}
\definecolor{hellgrau}{rgb}{\meinGreyscale, \meinGreyscale, \meinGreyscale}
	\gdef\tfn@fnt{0}%
	\let\Cref\crtCref
	\let\cref\crtcref
\patchcmd{\@makefnmark}{\fontsize}{\check@mathfonts\fontsize}{}{}
\let\oldnormalcolor\normalcolor%
\renewcommand{\normalcolor}{\oldnormalcolor\transparent{1}}
\newcommand{\crefrangeconjunction}{--}
\newcommand{\mydot}{\,{.}}  %
\newcommand{\mycomma}{\,{,}}  %
\newcommand\logeq{\mathrel{\vcentcolon\Leftrightarrow}}
\newcommand*{\meinfullref}[1]{\hyperref[{#1}]{\Cref*{#1} \nameref*{#1}}}
\newcommand*{\unameref}[2][{}]{\hyperref[{#2}]{ \textcolor{black}{\nameref*{#2}{#1}}}}
\newcommand{\meqref}[1]{\labelcref{#1}}
\newcommand*{\meinref}[2][{}]{\hyperref[{#2}]{{#1}~\labelcref*{#2}}}
\newcommand*\colvec[3][]{
	\begin{pmatrix}\ifx\relax#1\relax\else#1\\\fi#2\\#3\end{pmatrix}
}
\newcommand*\colvecs[3][]{
	\begin{psmallmatrix}\ifx\relax#1\relax\else#1\\\fi#2\\#3\end{psmallmatrix}
}
    \newcommand{\unimportantStart}{\transparent{\meineTranzparenz}}
    \newcommand{\mediumimportantStart}{\transparent{\meineTranzparenz}}
        \newcommand{\unimportantStart}{\scriptsize}
        \newcommand{\mediumimportantStart}{}
        \newcommand{\unimportantStart}{}
        \newcommand{\mediumimportantStart}{}
    \newcommand{\unimportant}[1]{\begingroup\unimportantStart{#1}\endgroup}
\newcommand{\veryunimportant}[1]{}
\newcommand{\R}{\ensuremath{\mathbb{R}}}
\newcommand{\C}{\ensuremath{\mathcal{C}}}
\newcommand{\Rp}{\ensuremath{\R_{>0}}}
\newcommand{\Rpz}{\ensuremath{\R_{\geq 0}}}
\newcommand{\N}{\ensuremath{\mathbb{N}}}
\newcommand{\Z}{\ensuremath{\mathbb{Z}}}
\newcommand{\B}{\ensuremath{\mathfrak{B}}}
\newcommand{\E}[1][]{\ensuremath{\mathbb{E}\ifthenelse{\equal{#1}{}}{}{\left[ {#1}\right] }}}
\newcommand{\Eco}[2]{\ensuremath{\E\left[{#1}\middle|{#2}\right]}} 
\newcommand{\PP}[1][]{\ensuremath{\mathbb{P}\ifthenelse{\equal{#1}{}}{}{\left[ {#1}\right] }}}
\newcommand{\PPco}[2]{\ensuremath{\PP\left[{#1}\middle|{#2}\right]}} 
\newcommand{\X}{\ensuremath{\mathcal{X}}}
\newcommand{\Y}{\ensuremath{\mathcal{Y}}}
\newcommand{\eps}{\ensuremath{\varepsilon}}
\newcommand{\om}{\ensuremath{{\color{hellgrau}\omega}}}
\newcommand{\Om}{\ensuremath{\Omega}}
\newcommand{\omb}{\ensuremath{{\color{hellgrau}(\om)}}}
\newcommand{\Wkp}[3][K]{\ensuremath{{W^{{#2},{#3}}({#1})}}}
\newcommand{\Woi}{\ensuremath{\hyperref[rem:sobnormoi]{\Wkp[K]{1}{\infty}}}}
\newcommand{\Wop}[1][{K,\nu}]{\Wkp[#1]{1}{p}}
\newcommand{\WT}{\ensuremath{\hyperref[re:setting]W^{2}}}
\newcommand{\WkpShort}[2]{\ensuremath{\href{https://en.wikipedia.org/w/index.php?title=Sobolev_space&oldid=910223537}{W^{{#1},{#2}}}}}
\newcommand{\iid}{\href{https://en.wikipedia.org/w/index.php?title=Independent_and_identically_distributed_random_variables&oldid=910267759}{\oldnormalcolor i.i.d.}}
\newcommand{\fae}[1][]{\ensuremath{\forall\epsilon_{#1}\in\Rp}}
\newcommand{\fao}{\ensuremath{\forall\omega\in\Om}}
\newcommand{\faog}{\ensuremath{{\color{hellgrau}\fao}}}
\newcommand{\fax}{\ensuremath{\forall x\in\R}}
\newcommand{\faxg}{\ensuremath{{\color{hellgrau}\fax}}}
\newcommand{\faxd}{\ensuremath{\forall x\in\R^d}}
\newcommand{\faxdg}{\ensuremath{{\color{hellgrau}\faxd}}}
\newcommand{\fromto}[2][1]{\ensuremath{\left\lbrace {#1},\dots,{#2}\right\rbrace }} %
\newcommand{\allIndi}[3][1]{\ensuremath{\forall {#3}\in\fromto[{#1}]{#2}}} %
\newcommand{\sgnb}[1]{\ensuremath{\sgn\left( {#1}\right) }}
\newcommand{\relu}[1][\cdot]{\ensuremath{\max\left( 0,{#1}\right) }}
\newcommand{\ind}{\ensuremath{\mathbbm{1}}}
\newcommand{\sobnorm}[1][\cdot]{\ensuremath{\left\| {#1}\right\|_{\Woi}}}
\newcommand{\sobnormop}[1][\cdot]{\ensuremath{\left\| {#1}\right\|_{\Wop}}}
\newcommand{\supnorm}[1][\cdot]{\ensuremath{\left\| {#1}\right\|_{L^\infty(K)}}}
\newcommand{\twonorm}[1][\cdot]{\ensuremath{\left\| {#1}\right\|_2}}
\newcommand{\Ltwonorm}[1][\cdot]{\ensuremath{\left\| {#1}\right\|_{L^2(K)}}}
\newcommand{\plim}[1][n\to\infty]{\ensuremath{\Plim_{#1}}}
\newcommand{\pBigO}[2][n\to\infty]{\ensuremath{\PBigO_{#1}\left( {#2}\right)}}
\newcommand{\tlim}{\ensuremath{\lim_{T\to\infty}}}
\newcommand{\T}{\ensuremath{\hyperref[def:adaptedSplineReg]{\mathcal{T}}}}
\newcommand{\Tgpgm}{\ensuremath{\hyperlink{eq:Tgpgm}{\mathcal{T}_{g_+,g_-}}}}
\newcommand{\gxi}{\ensuremath{\hyperref[def:kinkPosDens]{g_{\xi}}}}
\newcommand{\gxip}{\ensuremath{\hyperref[def:conKinkPosDens]{g_{\xi}^+}}}
\newcommand{\gxim}{\ensuremath{\hyperref[def:conKinkPosDens]{g_{\xi}^-}}}
\newcommand{\Cgl}[1][g]{\ensuremath{\hyperref[rem:compactSupp]{C_{#1}^\ell}}}
\newcommand{\Cgu}[1][g]{\ensuremath{\hyperref[rem:compactSupp]{C_{#1}^u}}}
\newcommand{\cm}{\ensuremath{\hyperref[rem:compactSupp]{c_{-}}}}
\newcommand{\cp}{\ensuremath{\hyperref[rem:compactSupp]{c_{+}}}}
\newcommand{\kapx}{\ensuremath{\hyperref[def:smoothRSNappr]{\kappa_x}}}
\newcommand{\xtr}{\ensuremath{x^{\text{\tiny train}}}}
\newcommand{\ytr}{\ensuremath{y^{\text{\tiny train}}}}
\newcommand{\Xt}{\ensuremath{X^{\top}}}
\newcommand{\lw}{\ensuremath{\hyperref[thm:ridgeToSpline]{\tilde{\lambda}}}}
\newcommand{\lwnl}{\ensuremath{\tilde{\lambda}}}
\newcommand{\Ltr}{\ensuremath{\hyperref[as:generalloss]{L}}}
\newcommand{\Ltrs}{\ensuremath{\hyperref[eq:Ltr_squaredloss]{L}}}
\newcommand{\ltr}{\ensuremath{\hyperref[eq:ltr]{l}}}
\newcommand{\ltri}[1][i]{\ensuremath{\hyperref[ex:lossfunctionalsum]{l_{#1}}}}
\newcommand{\ltrib}[2][i]{\ensuremath{\hyperref[ex:lossfunctionalsum]{l_{#1}}\left( {#2}\right)}}
\newcommand{\Ltrb}[1]{\ensuremath{\Ltr\left( {#1}\right) }}
\newcommand{\Ltrsb}[1]{\ensuremath{\Ltrs\left( {#1}\right) }}
\newcommand{\Pfunc}{\ensuremath{\hyperref[rem:Pfunc]{P}}}
\newcommand{\Pg}[1][g]{\ensuremath{\hyperref[def:splineReg]{P^{\color{hellgrau}{#1}}}}}
\newcommand{\Pgpmm}[1][g]{\ensuremath{\hyperlink{eq:Pgpmm}{P_{\pm}^{{\color{hellgrau} {#1}}}}}}
\newcommand{\Pgpm}[1][g]{\ensuremath{\hyperlink{eq:Pgpm}{P_{+-}^{{\color{hellgrau} {#1}}}}}}
\newcommand{\Pgpmb}[2][g]{\ensuremath{\Pgpm[{#1}]\left( {#2}\right) }}
\newcommand{\ggreg}[1][g]{{\color{hellgrau}{#1},\text{reg}}}
\newcommand{\PgpmmaffineReg}[2][g]{\ensuremath{\hyperlink{eq:Pgpmaffine}{\acute{P}_{\pm}^{{\color{hellgrau} {#1},{#2}}}}}}
\newcommand{\Pgpmmaffine}[1][g]{\ensuremath{\hyperlink{eq:Pgpmaffine}{\acute{P}_{\pm}^{{\color{hellgrau} {#1},\text{reg}}}}}}
\newcommand{\Fn}[1][n]{\ensuremath{\hyperref[def:ridgeNet]{F_{#1}^{{\color{hellgrau}\lw}}}}} %
\newcommand{\Fnb}[2][n]{\ensuremath{\Fn[{#1}]\left( {#2}\right) }}
\newcommand{\Fl}[1][]{\ensuremath{\hyperref[def:splineReg]{F^{{\color{hellgrau}\lambda}\ifthenelse{\equal{#1}{}}{}{,} {#1}}}}} %
\newcommand{\Flb}[2][]{\ensuremath{\Fl[{#1}]\left( {#2}\right) }} %
\newcommand{\Flpmm}[1][g]{\ensuremath{\hyperref[def:adaptedSplineReg]{F_{\pm}^{{\color{hellgrau}\lambda\ifthenelse{\equal{#1}{}}{}{,} {#1}}}}}}
\newcommand{\Flpmmb}[2][g]{\ensuremath{\Flpmm[{#1}]\left( {#2}\right) }}
\newcommand{\Flpm}[1][g]{\ensuremath{\hyperref[def:adaptedSplineRegTuple]{F_{+-}^{{\color{hellgrau}\lambda\ifthenelse{\equal{#1}{}}{}{,} {#1}}}}}}
\newcommand{\Flpmb}[2][g]{\ensuremath{\Flpm[{#1}]\left( {#2}\right) }}
\newcommand{\Flpmasym}[1][g_+,g_-]{\ensuremath{\hyperref[def:asymadaptedSplineReg]{F_{+-}^{{\color{hellgrau}\lambda\ifthenelse{\equal{#1}{}}{}{,} {#1}}}}}}
\newcommand{\Flpmasymb}[2][g_+,g_-]{\ensuremath{\Flpmasym[{#1}]\left( {#2}\right) }}
\newcommand{\kp}{\ensuremath{\hyperref[subeq:kp]{\mathfrak{K}^{+}}}} %
\newcommand{\km}{\ensuremath{\hyperref[subeq:km]{\mathfrak{K}^{-}}}}
\newcommand{\wR}{\ensuremath{\hyperref[def:ridgeNet]{w^{*{\color{hellgrau},n,\lwnl}}}}}
\newcommand{\wRk}[1][k]{\ensuremath{\hyperref[def:ridgeNet]{w_{#1}^{*{\color{hellgrau},n,\lwnl}}}}}
\newcommand{\wRl}[1][\lwnl]{\ensuremath{\hyperref[def:ridgeNet]{w^{*{\color{hellgrau},n,{#1}}}}}}
\newcommand{\wRo}{\ensuremath{\wR\omb}}
\newcommand{\wRp}{\ensuremath{w^{*+{\color{hellgrau},n,\lwnl}}}}
\newcommand{\wRm}{\ensuremath{w^{*-{\color{hellgrau},n,\lwnl}}}}
\newcommand{\ww}{\ensuremath{\hyperref[eq:ww]{\tilde{w}^{\color{hellgrau}n}}}}
\newcommand{\wwo}{\ensuremath{\ww\omb}}
\newcommand{\wwp}{\ensuremath{\hyperref[par:wwp]{\ww^+}}}
\newcommand{\wwm}{\ensuremath{\hyperref[par:wwp]{\ww^-}}}
\newcommand{\wt}[1][T]{\ensuremath{\hyperref[def:GDsolution]{w^{#1{\color{hellgrau},n}}}}}
\newcommand{\wto}[1][T]{\ensuremath{\hyperref[def:GDsolution]{w^{{#1{\color{hellgrau},n}}}}\omb}}
\newcommand{\wth}[1][T]{\ensuremath{\hyperref[def:GDsolution]{\check{w}^{#1{\color{hellgrau},n}}}}}
\newcommand{\wtho}[1][T]{\ensuremath{\hyperref[def:GDsolution]{\check{w}^{{#1{\color{hellgrau},n}}}}\omb}}
\newcommand{\wdag}{\ensuremath{\hyperref[def:minNormSol]{w^{*,n,0+}}}} %
\newcommand{\wdago}{\ensuremath{\wdag\omb}}
\newcommand{\waffine}{\ensuremath{\hyperref[subsec:linearSkip]{\acute{w}}}}
\newcommand{\wRaffine}{\ensuremath{\hyperref[eq:RidgeProblem With SKipConnections]{\waffinenl^{*{\color{hellgrau},n,\lwnl,\text{reg}}}}}}
\newcommand{\wRkaffine}[1][k]{\ensuremath{%
w_{#1}^{*{\color{hellgrau},n,\lwnl,\text{reg}}}}}%
\newcommand{\wthaffine}[1][T]{\ensuremath{\hyperref[eq:GDescentSkip]{\check{\waffinenl}^{#1{\color{hellgrau},n}}}}}
\newcommand{\hq}{\ensuremath{\hyperref[def:estKinkDist]{\bar{h}}}}
\newcommand{\hqp}{\ensuremath{\hyperref[subeq:hqp]{\bar{h}^+}}}
\newcommand{\hqm}{\ensuremath{\hyperref[subeq:hqm]{\bar{h}^-}}}
\newcommand{\NN}{\ensuremath{\hyperref[def:NN]{\mathcal{N\!N}}}}
\newcommand{\RN}{\ensuremath{\hyperref[def:RSNN]{\mathcal{R\!N}}}}
\newcommand{\RNw}[1][w]{\ensuremath{\RN_{#1}}}
\newcommand{\RNwo}[1][w]{\ensuremath{\RN_{{#1}{\color{hellgrau},\om}}}}
\newcommand{\RNR}[1][{\lwnl}]{\ensuremath{\hyperref[def:ridgeNet]{\RNnl^{*{\color{hellgrau},n,{#1}}}}}}
\newcommand{\RNRo}[1][{\lwnl}]{\ensuremath{\hyperref[def:ridgeNet]{\RNnl_{\om}^{*{\color{hellgrau},n,{#1}}}}}}
\newcommand{\RNRz}{\ensuremath{\hyperref[def:minNormSol]{\RNnl^{*{\color{hellgrau},n,0+}}}}}
\newcommand{\RNRp}{\ensuremath{\hyperref[eq:RNRp]{\RNnl^{*{\color{hellgrau},n,\lwnl}}_+}}} 
\newcommand{\RNRm}{\ensuremath{\hyperref[eq:RNRp]{\RNnl^{*{\color{hellgrau},n,\lwnl}}_-}}}  
\newcommand{\sRNw}{\ensuremath{\hyperref[def:splineApproximatingRSN]{\RNnl_{\ww}}}}
\newcommand{\sRNwo}{\ensuremath{\hyperref[def:splineApproximatingRSN]{\RNnl_{\wwo{\color{hellgrau},\om}}}}}
\newcommand{\sRNwp}{\ensuremath{\hyperref[eq:splineApprRSNsplit]{\RNnl_{\wwp}^+}}}
\newcommand{\sRNwpo}{\ensuremath{\hyperref[eq:splineApprRSNsplit]{\RNnl_{\wwp\omb{\color{hellgrau},\om}}^+}}}
\newcommand{\sRNwm}{\ensuremath{\hyperref[eq:splineApprRSNsplit]{\RNnl_{\wwm}^-}}}
\newcommand{\sRNwmo}{\ensuremath{\hyperref[eq:splineApprRSNsplit]{\RNnl_{\wwm\omb{\color{hellgrau},\om}}^-}}}
\newcommand{\RNaffine}{\ensuremath{\hyperref[subsec:linearSkip]{\RNaffinenl}}}
\newcommand{\RNRaffine}[1][{\lwnl}]{\ensuremath{\hyperref[subsec:linearSkip]{\RNaffine_{\wRaffine}}}}
\newcommand{\RNRpaffine}{\ensuremath{\hyperref[eq:RNRpmaffine]{\RNaffinenl^{*{\color{hellgrau},n,\lwnl,\text{reg}}}_+%
%
}}}
\newcommand{\RNRmaffine}{\ensuremath{\hyperref[eq:RNRpmaffine]{\RNaffinenl^{*{\color{hellgrau},n,\lwnl,\text{reg}}}_-%
%
}}} 
\newcommand{\ftrue}{\ensuremath{f_{\text{True}}}}
\newcommand{\fl}[1][\lambda]{\ensuremath{\hyperref[def:splineReg]{f^{*{\color{hellgrau},{#1}}}}}}
\newcommand{\flg}{\ensuremath{\hyperref[eq:weighted_spline_regression]{f_g^{*{\color{hellgrau},\lambda}}}}}
\newcommand{\flgg}[1][g]{\ensuremath{\hyperref[eq:weighted_spline_regression]{f_{#1}^{*{\color{hellgrau},\lambda}}}}}
\newcommand{\fz}{\ensuremath{\hyperref[def:splineInterpolation]{f^{*,0+}}}}
\newcommand{\flp}[1][g]{\ensuremath{\hyperref[def:adaptedSplineReg]{f^{*{\color{hellgrau},\lambda}}_{{#1},+}}}}
\newcommand{\flm}[1][g]{\ensuremath{\hyperref[def:adaptedSplineReg]{f^{*{\color{hellgrau},\lambda}}_{{#1},-}}}}
\newcommand{\flpm}[1][g]{\ensuremath{\hyperref[def:adaptedSplineReg]{f^{*{\color{hellgrau},\lambda}}_{{#1},\pm}}}}
\newcommand{\flpmt}[1][g]{\ensuremath{\left( \flp[{#1}] , \flm[{#1}]\right)}}
\newcommand{\fzpm}{\ensuremath{\hyperref[def:adaptedSplineInterpolation]{f_{g,\pm}^{*,0+}}}}
\newcommand{\fLpm}[1][\lambda]{\ensuremath{\hyperref[def:adaptedSplineReg]{f^{*{\color{hellgrau},{#1}}}_{g,\pm}}}}
\newcommand{\flpasym}[1][g_+,g_-]{\ensuremath{\hyperref[def:asymadaptedSplineReg]{f^{*{\color{hellgrau},\lambda}}_{{#1},+}}}}
\newcommand{\flmasym}[1][g_+,g_-]{\ensuremath{\hyperref[def:asymadaptedSplineReg]{f^{*{\color{hellgrau},\lambda}}_{{#1},-}}}}
\newcommand{\flpmasym}[1][g_+,g_-]{\ensuremath{\hyperref[def:asymadaptedSplineReg]{f^{*{\color{hellgrau},\lambda}}_{{#1},\pm}}}}
\newcommand{\gammaAsym}[1][g_+,g_-]{\ensuremath{\hyperref[def:asymadaptedSplineReg]{\gamma^{*{\color{hellgrau},\lambda}}_{{#1}}}}}
\newcommand{\flpmtasym}[1][g_+,g_-]{\ensuremath{\left( \flpasym[{#1}] , \flmasym[{#1}], \gammaAsym\right)}}
\newcommand{\fnp}{\ensuremath{f_+^n}}
\newcommand{\fnm}{\ensuremath{f_-^n}}
\newcommand{\fnpmt}{\ensuremath{\left(\fnp,\fnm\right)}}
\newcommand{\fonepmt}{\ensuremath{\left(f_+^1,f_-^1\right)}}
\newcommand{\ftwopmt}{\ensuremath{\left(f_+^2,f_-^2\right)}}
\newcommand{\up}{\ensuremath{u_+}}
\newcommand{\um}{\ensuremath{u_-}}
\newcommand{\upmt}{\ensuremath{\left(\up,\um\right)}}
\newcommand{\unp}{\ensuremath{u_+^n}}
\newcommand{\unm}{\ensuremath{u_-^n}}
\newcommand{\unpmt}{\ensuremath{\left(\unp,\unm\right)}}
\newcommand{\fwR}{\ensuremath{\hyperref[def:smoothRSNappr]{f^{\wR}}}}
\newcommand{\fwRo}{\ensuremath{\hyperref[def:smoothRSNappr]{f^{\wRo}}}}
\newcommand{\fwRp}{\ensuremath{\hyperref[def:smoothRSNappr]{f^{\wR}_+}}}
\newcommand{\fwRm}{\ensuremath{\hyperref[def:smoothRSNappr]{f^{\wR}_-}}}
\newcommand{\fwRpmt}{\ensuremath{\left( \fwRp , \fwRm\right)}}
\newcommand{\ClPrime}{\ensuremath{\hyperref[eq:l_prime_bound]{C_{l^{{'}}}}}}
\newcommand{\LLip}{\ensuremath{\hyperref[eq:l_prime_bound]{C_{L}}}}
\newcommand{\nua}{\ensuremath{\nu_{\text{a}}}}
\newcommand{\nuc}{\ensuremath{\nu_{\text{c}}}}
\newcommand{\lip}[1][\cdot]{\ensuremath{\hyperref[def:partition]{\lfloor {#1}\rfloor_{\mathcal{P}}}}}
\newcommand{\uip}[1][\cdot]{\ensuremath{\hyperref[def:partition]{\lceil {#1}\rceil_{\mathcal{P}}}}}
\newcommand{\Ip}[1][\cdot]{\ensuremath{\hyperref[def:partition]{\mathcal{I}_\mathcal{P}}\left({#1}\right)}}
\newcommand{\Iwp}[1][\cdot]{\ensuremath{\hyperref[eq:def:Iwp]{\tilde{\mathcal{I}}_\mathcal{P}}\left({#1}\right)}}
\newcommand{\sScale}{s_\text{scale}}
\newcommand{\mfootref}[1]{\text{{\upshape \textsuperscript{\ref{#1}}}}}
\newcommand{\regSpl}{\hyperref[def:splineReg]{regression spline}}
\newcommand{\regSplf}{\hyperref[def:splineReg]{regression spline~\ensuremath{\fl}}}
\newcommand{\SplReg}{\hyperref[def:splineReg]{smooting spline regression}}
\newcommand{\wregSpl}{\hyperref[def:splineReg]{weighted regression spline}}
\newcommand{\wregSplf}{\hyperref[def:splineReg]{weighted regression spline~\ensuremath{\flg}}}
\newcommand{\aregSpl}{\hyperref[def:adaptedSplineReg]{adapted regression spline}}
\newcommand{\asymaregSpl}{\hyperref[def:asymadaptedSplineReg]{asymmetric adapted regression spline}}
\newcommand{\aregSplf}{\hyperref[def:adaptedSplineReg]{adapted regression spline~\ensuremath{\flpm}}}
\newcommand{\ReLU}{\hyperref[itm:as:ReLU]{\oldnormalcolor ReLU}}
\newcommand{\RReLU}{\hyperref[itm:as:ReLU]{\oldnormalcolor\textbf{R}eLU}}
\newcommand{\RSN}{\hyperref[def:RSNN]{RSN}}
\newcommand{\RSNlong}{\hyperref[def:RSNN]{\textbf{r}andomized \textbf{s}hallow neural \textbf{n}etwork}}
\newcommand{\RSNlongAndShort}{\hyperref[def:RSNN]{\textbf{r}andomized \textbf{s}hallow neural \textbf{n}etwork (\RSN)}}
\newcommand{\wRRSN}{wR\RSN}
\newcommand{\wRRSNlongAndShort}{\textbf{w}ide \textbf{R}eLU \hyperref[def:RSNN]{\textbf{r}andomized \textbf{s}hallow neural \textbf{n}etwork} (\wRRSN)}
\newcommand{\wlargeRRSNlongAndShort}{\textbf{w}ide \unimportant{(large number of neurons~$n$)} \textbf{R}eLU \hyperref[def:RSNN]{\textbf{r}andomized \textbf{s}hallow neural \textbf{n}etwork} (\wRRSN)}
\newcommand{\ridgeRSN}{\hyperref[def:ridgeNet]{ridge regularized RSN}}
\newcommand{\ridgeRSNRN}{\hyperref[def:ridgeNet]{ridge regularized RSN~\RNR}}
\newcommand{\RIDGERSN}{\hyperref[def:ridgeNet]{Ridge Regularized RSN}}
\newcommand{\proofInSec}[2]{{\unimportantStart
		\begin{proof}
			The \hyperlink{proof:#1}{proof of \Cref*{#1}} is formulated in \Cref{#2}.
\end{proof}}}
\theoremstyle{plain}
\newtheorem{theorem}{\meinPrefix Theorem}[section] %
\newtheorem{lemma}[theorem]{\meinPrefix Lemma}
\newtheorem{corollary}[theorem]{\meinPrefix Corollary}
\newtheorem{proposition}[theorem]{\meinPrefix Proposition}
\crefname{proposition}{Proposition}{Propositions}
\theoremstyle{remark}
\newtheorem{remark}[theorem]{\meinPrefix Remark}
\theoremstyle{definition}
\newtheorem{definition}[theorem]{\meinPrefix Definition}
\newtheorem{example}[theorem]{\meinPrefix Example}
\newtheorem{assumption}{\meinPrefix Assumption}
\crefname{assumption}{Assumption}{Assumptions}
\newtheorem{paradox}[]{\meinPrefix Paradox}
\crefname{paradox}{Paradox}{Paradoxes}
\newcommand{\meinPrefix}{}
\DeclareMathOperator*{\argmin}{\hyperref[eq:argminDef]{arg\,min}}
\DeclareMathOperator*{\argmax}{arg\,max}
\DeclareMathOperator*{\supp}{supp}
\DeclareMathOperator*{\ConvexHull}{ConvexHull}
\DeclareMathOperator*{\sgn}{sgn}
\DeclareMathOperator*{\Plim}{\PP-\lim}
\DeclareMathOperator*{\BigO}{\mathcal{O}}
\DeclareMathOperator*{\PBigO}{\PP-\mathcal{O}}
\newcommand\scaledinset[6]{%
	\setbox0=\hbox{#6}%
	\stackinset{#1}{#2\wd0}{#3}{#4\ht0}{#5}{#6}%
} %
\begin{document}

\title[\texorpdfstring{\resizebox{4.7in}{!}{How (Implicit) Regularization of ReLU Neural Networks Characterizes the Learned Function --- Part I}}{How (Implicit) Regularization of ReLU Neural Networks Characterizes the Learned Function - Part I: the 1-D Case of Two Layers with Random First Layer}]{How (Implicit) Regularization of ReLU Neural Networks Characterizes the Learned Function \\ Part I: \\ the 1-D Case of Two Layers with Random First Layer}
\author{Jakob Heiss, Josef Teichmann and Hanna Wutte}
\address{ETH Z\"urich, D-Math, R\"amistrasse 101, CH-8092 Z\"urich, Switzerland}
\email{jakob.heiss@math.ethz.ch, jteichma@math.ethz.ch, hanna.wutte@math.ethz.ch}
\thanks{The authors gratefully acknowledge the support from ETH-foundation. We are very thankful for numerous helpful discussions, feedback, corrections and proof reading---especially to: Lukas Fertl, Peter M\"uhlbacher, Martin \v{S}tef\'anik, Alexis Stockinger, Teresa Heiss and Jakob Weissteiner.}
\curraddr{}
	\begin{abstract}
	In this paper, we consider one dimensional (shallow) ReLU neural networks in which weights are chosen randomly and only the terminal layer is trained.
	First, we mathematically show that for such networks $\ell_2$-regularized regression corresponds in function space to regularizing the estimate's second derivative for fairly general loss functionals.
	For least squares regression, we show that the trained network converges to the smooth spline interpolation of the training data as the number of hidden nodes tends to infinity. 
	Moreover, we derive a novel correspondence between the early stopped gradient descent (without any explicit regularization of the weights) and the smoothing spline regression. 
	\end{abstract}
\keywords{}
\subjclass[]{}

\maketitle

	\section{Introduction}\label{se:Introduction}
	Even though neural networks are becoming increasingly popular in supervised learning, their theoretical understanding is still very limited.
	The most important open questions in the mathematical theory of neural networks nowadays include the following:\footnote{The literature agrees with questions~\labelcref{item:Generalization,item:Gradient,item:Expressive} to be central \cite{PoggioGernalizationDeepNN2018arXiv180611379P}. Question~\ref{item:ProsAndCons} motivates the importance of questions~\labelcref{item:Generalization,item:Gradient,item:Expressive} by summarizing them and concluding their implications.}
	\begin{enumerate}[I., ref=\Roman*]
		\item\label{item:Generalization} \textbf{Generalization}:
		Why and under which conditions can neural networks make good predictions of the output for new unseen input data even though they have only been trained on finitely many data points? How does the trained function behave out of sample? How can one get control of over-fitting?
		\item\label{item:Gradient} \textbf{Gradient Descent}: When training neural networks, a typically very high-dimensional non-convex optimization problem is claimed to be solved by (stochastic) gradient descent quite fast. 
		What happens if the algorithm is stopped early after a realistic number of steps depending on a certain starting point?
		\item\label{item:Expressive} \textbf{Expressiveness:} How expressive are neural networks (with a finite number of nodes)? \cite{shaham2018provable,bianchini2014complexity,ITOApproxNNWithoutScale1991817,leshno1993multilayer}
		\item\label{item:ProsAndCons}\textbf{Summary}: What are the advantages and disadvantages of different architectures? What are the advantages and disadvantages of considering neural networks in approximation/prediction tasks compared to other methods such as Random Forests or Kernel-based Gaussian processes? In both theory and applications, it is of great interest to gain a precise understanding of \ref{item:ProsAndCons}, much of which could be achieved by answering \ref{item:Generalization}\crefrangeconjunction\ref{item:Expressive}.
	\end{enumerate}

	The goal of this work is to contribute to answering these questions by rigorously proving \Cref{thm:ridgeToSpline,thm:GDridge}
	that almost completely resolve question~\ref{item:Gradient} (cp. \cref{eq:conclusionApprox}) for the class of wide \RSNlong{}s (\RSN{}s) with \ReLU\ activation (i.e., \wRRSN s). These answers together with the intuition acquired from \Cref{sec:Regression,sec:Paradoxon} give quite extensive insights to ~\ref{item:Generalization} and thus ~\ref{item:ProsAndCons}.\footnote{We also contribute to answering question~\ref{item:Expressive} within the results marked with a \enquote{\mfootref{footnote:star}}: \Cref{rem:furtherNotationMeasureTheory}, \Cref{cor:universal_in_prob}, \Cref{le:almost_sure_interpolation} and \Cref{rem:AlmostSureInterpolation} in \Cref{sec:RSNN}. These results form an independent storyline.}
	
	The result of this work can be seen in analogy to mean field theory in thermodynamics:
	like we are understanding the collision behavior of each particle, we understand the training behavior of each neuron%
	\unimportant{\footnote{\unimportant{In this work, only \emph{artificial} neural networks are considered. Thus, terms such as 'neurons' and 'neural networks' do not refer to actual biological neurons but rather to their artificial counterparts. The term \enquote{node} will be used interchangeable with the term \enquote{neuron}.}}}\ifthenelse{\equal{\specialStyle}{0}}{\addtocounter{footnote}{1}}{\textsuperscript{\unimportant{,} \!}\unskip%
	\footnote{Notation remark: To improve readability of the paper, we use \ifthenelse{\equal{\specialStyle}{1}}{partially transparent {\transparent{\meineTranzparenz}(grey)}}{
	smaller} fonts to encourage the reader to skip these details.}}. However, due to the extensive number of interactions between particles/neurons the complexity increases in a way that the individual behavior of a particle/neuron does no longer give direct insight into the overall system's behavior. In both cases, taking the limit to infinity allows to precisely derive the system's behavior in terms of interpretable macroscopic laws/theorems (see \Cref{thm:ridgeToSpline}\footnote{\Cref{thm:ridgeToSpline} results from letting the number of neurons~\unimportant{$n$} tend to infinity. In thermodynamics, Brownian motion particle movements or heat equations result from taking the limit of the number of particles to infinity.}).
	
	\subsection{The Regression Problem in Machine Learning}\label{sec:Regression}

	Let $\mathcal{X}$ and $\mathcal{Y}$ be an input and output space, respectively. Assume further, we observe a finite number~$N\in\N$ of \iid\ samples $\left( \xtr_i,\ytr_i\right)\in\X\times\Y$ with $i\in \fromto{N}$ from an \emph{unknown} probability distribution $\PP_D$ on $\X\times\Y$. Given an additional realization $(X,Y)\omb$ of $(X,Y)\sim\PP_D$, for which we can only observe $X\omb$ but not $Y\omb$, the goal is to make a suitable prediction~$\hat{f}(X\omb)$ of $Y\omb$. Thus, for a given cost function~$C:\Y\times\Y\to\R$, we are interested in an estimator~$\hat{f}:\X\to\Y$ with low risk, i.e., for which the expected cost $\E[C\left(\hat{f}(X),Y\right)]$ is minimal. However, since $\PP_D$ is unknown, this risk cannot be calculated. In supervised machine learning, one hence tries to learn an estimator $\hat{f}$ based on the given training data~$\left( \xtr_i,\ytr_i\right)_{i\in \fromto{N}}$. 
	A common heuristic
	is to minimize a suitable training loss 
	    $\Ltr(f)$ over a suitable class of functions $\mathcal{H}$, i.e., 
	\[\min_{f\in\mathcal{H}}\Ltr(f).\]

\begin{remark}[Setting]\label{re:setting}
    Throughout this work, we consider $\X=\R^d$ with input dimension $d\in\N$ and $\Y=\R$. %
    The concepts of this paper apply to supervised learning in general, but within this paper we focus primarily on regression.
    Moreover, this paper's main contribution \Cref{thm:ridgeToSpline} (linking regularization of parameters to regularization on function space) holds for fairly general (non-negative) loss functionals $\Ltr:L^\infty_\text{loc}\to\Rpz$ that need not even depend on training data (see \Cref{as:generalloss}).
    Note however, that \Cref{thm:GDridge} linking gradient descent to regularization of the parameters is derived for least squares loss~$\Ltrs$ as in \meqref{eq:Ltr_squaredloss} \unimportant{(\Cref{as:squaredloss}; see also \Cref{rem:generalizeLoss})}. For ease of exposition, we consider this squared loss throughout the paper with the exception of \Cref{sec:RidgeToSpline}.
	 Finally, we denote by $\WT$ the set of twice weakly differentiable functions. 
\end{remark}

	Historically, linear regression~\cite{gauss1809LeastSquares1,gauss1823theoriaLeastSquares2,legendre1805nouvellesLeastSquares} was among the first methods used within supervised learning. Here, one restricts oneself to a tiny subspace of all functions: the space of \unimportant{(affine-)}linear functions. This choice indeed favors parsimony: if the number of samples~$N$ is larger than the input dimension~$d\unimportant{(+1)}$ there exists a unique\footnote{The solution of a least square linear regression is unique, if there are $d$ linearly independent training data points $\xtr_i$ \unimportant{(or $d+1$ affine independent input points $\xtr_i$ if an intercept is used)}. If the training data points are drawn as \iid\ samples from a distribution that is absolutely continuous with respect to the $d$-dimensional Lebesgue measure, this is almost surely the case, if $d\unimportant{(+1)}\leq N$.} function $\hat{f}$ that fits through the training data best, i.e. minimizes the training loss
	\begin{equation}\label{eq:Ltr_squaredloss}
	\Ltrsb{\hat{f}}:=\sum_{i=1}^{N}\left( \hat{f}(\xtr_i)-\ytr_i\right)^2 .
	\end{equation}
	Although this approach is still extensively used in real-world applications, the space of linear functions often is not sufficient, as true relations between input and output are mostly more involved if not highly non-linear. Ideally, the class $\mathcal{H}$ would hence be chosen to be more expressive, so as to be able to approximate well these underlying maps from input $X$ onto output $Y$.
	
	As a consequence, the challenge nowadays is to choose the \enquote{most desirable} function~$\hat{f}$ out of the infinitely many functions with equal training loss~$\Ltrsb{\hat{f}}$. This opens the question to what the mathematical meaning of \enquote{most desirable} could be.
	At least intuitively, engineers have quite specific convictions (also known as \href{https://en.wikipedia.org/w/index.php?title=Inductive_bias&oldid=901756495}{\emph{\oldnormalcolor inductive bias}}) which functions are not desirable (see \Cref{fig:crazyOscillation,fig:SplineReg}).
	\begin{figure}[htp] 
		\centering
		\includegraphics[width=0.8\textwidth]{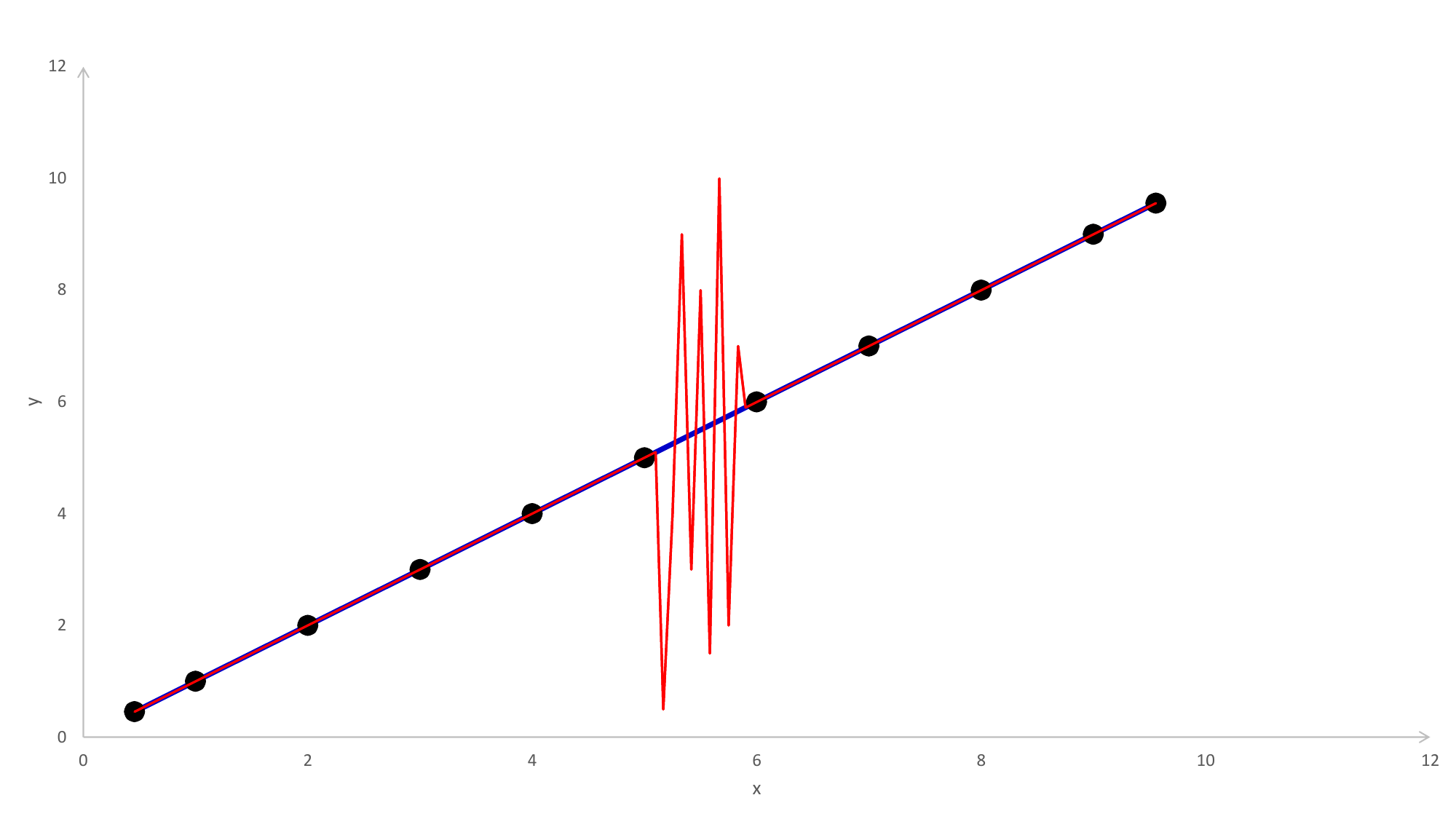}
		\caption{Example: Given these $N=11$ training data points~$\left( \xtr_i,\ytr_i\right)$ (black dots) there are infinitely many functions~$f$ that perfectly fit through the training data and therefore have training loss~$\Ltrsb{f}=0$. Intuition often tells us that one should prefer the straight blue line over the oscillating red line, even though both functions have zero training loss~$\Ltrsb{f}=0$.}
		\label{fig:crazyOscillation}
	\end{figure}
	\begin{figure}[htp]
		\centering
		\includegraphics[width=0.8\textwidth]{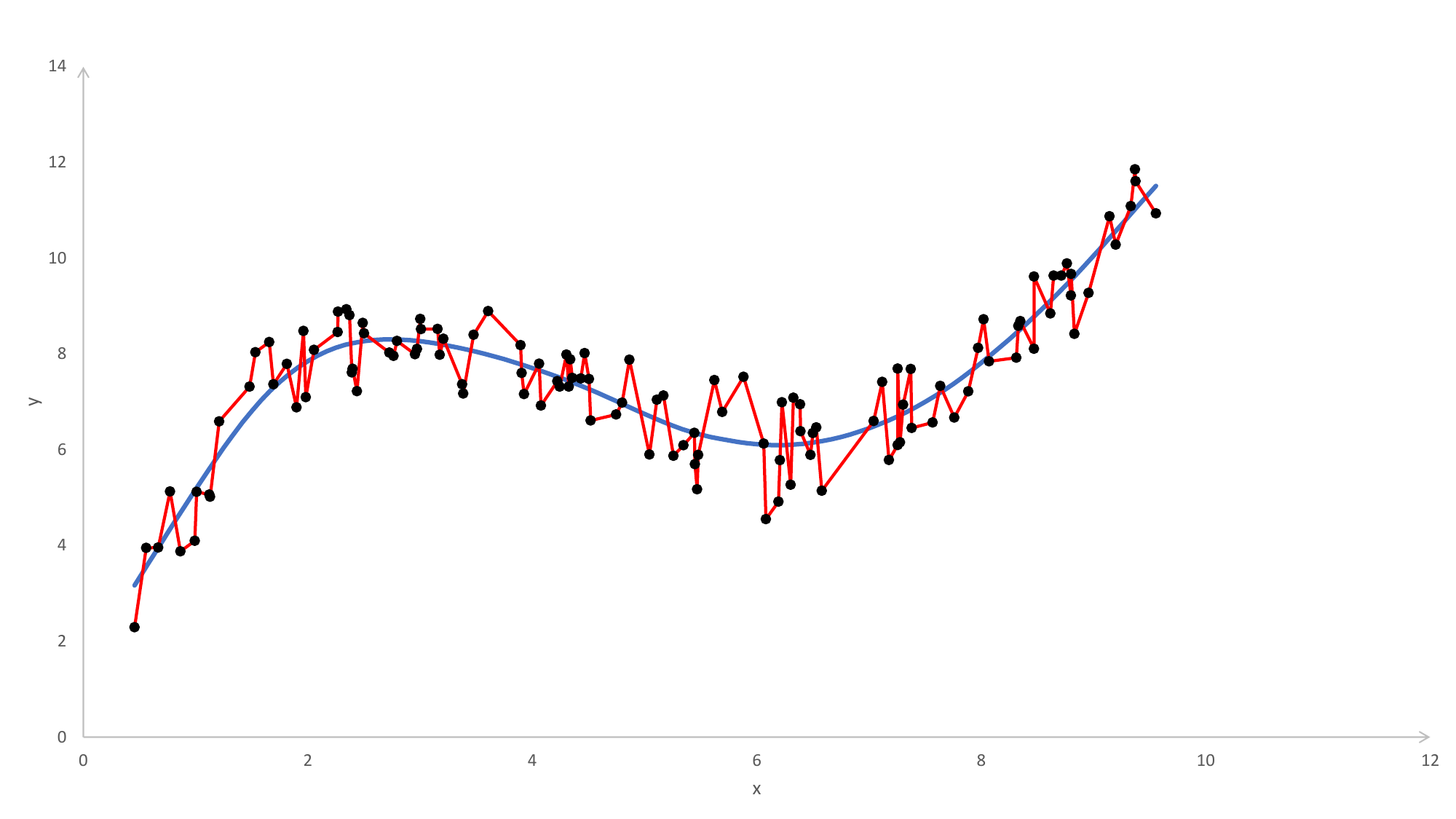}
		\caption{Example: Given these $N=120$ training data points~$\left( \xtr_i,\ytr_i\right)$ (black dots) there are infinitely many functions~$f$ that perfectly fit through the training data and therefore have training loss~$\Ltrsb{f}=0$. For many applications our intuition tells us that we should prefer the smooth blue line~\fl\ over the oscillating red line, even though the smooth function~\fl\ results in training loss~$\Ltrsb{\fl}>0$.}
		\label{fig:SplineReg}
	\end{figure}
	This intuition could be formalized mathematically as a Bayesian prior knowledge\interfootnotelinepenalty=10000\footnote{\label{footnote:longBayesian}One could theoretically formulate this prior knowledge regarding the unknown distribution of $(X,Y)$ on $\X\times\Y$ as a (probability)-measure on the space of all probability measures on $\X\times\Y$. 
If the prior measure is a probability measure, one can work perfectly rigorously in the framework of classical Bayes law. If the prior measure is not a probability measure, we speak of an improper prior, which can also lead to good results in applications. Consider for instance the very restrictive improper prior measure that assigns measure $0$ to the set of all non-linear functions and weights all linear functions the same.
		(Since this measure assigns $\infty$ to the subspace of all linear functions, it is an improper prior.) This improper prior leads to the least square linear regression in the case of \iid\ normally distributed noise. The simple intuitive prior knowledge \enquote{I am absolutely sure that \ftrue\ is linear, but I consider all linear functions as equally likely.} is captured quite well by this improper prior and the solution of the corresponding Bayesian problem can be computed quite fast (linear regression). But for most real-world applications, a more realistic intuitive prior knowledge such as \enquote{I cannot exclude any function for sure, but I have some vague feeling that \ftrue\ is more likely to be a \enquote{simpler}, \enquote{smoother} function than a \enquote{heavily oscillating} function.} is harder to mathematically formalize and calculating the solution of such Bayesian problems is often not tractable. Still, Bayesian theory can be considered a very powerful and general abstract theoretical framework without explicitly solving Bayesian problems and even without explicitly writing down priors.} \cite[e.g.\ page 22]{bishop2006patternML}.

	One approach to capture a common intuition about the prior knowledge is to directly regularize the second derivative of $\hat{f}$. 
	Therefore, in the case of input dimension~$d=1$, the \nameref{def:splineReg} \cite{ReinschSpline1967,CravenSpline1978,KimeldorfSplineBayes10.2307/2239347} is frequently considered in order to choose the function~$\hat{f}$ which minimizes a weighted combination of the integrated square of the second derivative and the training loss~\Ltrs.
	\begin{definition}[spline regression]\label{def:splineReg}
		Let $\allIndi{N}{i}: \xtr_i, \ytr_i \in \R$ and $\lambda \in \Rp$. Then the \textit{(smoothing\footnote{In the literature, the \nameref{def:splineReg} is often called \textit{(natural) (cubic) smoothing spline}, but in this text \fl\ will simply be called \regSpl.}) \regSpl}~$\fl: \R\to\R$ is defined\footnote{\label{footnote:uniquefl}We use the notation $a:\in \{ s\}$ to define $a$ as the unique element $s$ of the set $\{ s\}$ (i.e. $a:=s$). \unimportant{So strictly speaking the set after \enquote{$:\in$} should be a singleton---we are using footnotes to indicate under which assumptions uniqueness can be guaranteed.} The (weighted) \regSpl~\flg\ is uniquely defined \unimportant{(i.e. $\argmin_f \left(\Ltrsb{f} + \Pg(f)\vphantom{\flg} \right)=\left\{\flg\right\}$)} if $\exists(i,j)\in\fromto{N}^2:\xtr_i\neq \xtr_j$ \unimportant{and $g(0)\neq 0$ in the case of the \wregSpl}. The \enquote{$\argmin$} is defined as the set of all minimizers:
		\begin{equation}\label{eq:argminDef}
		    \argmin_{s\in S}F(s):=\Set{s\in S | F(s)=\min_{\tilde{s}\in S} F(\tilde{s})}    \unimportant{=\Set{s\in S|\forall \tilde{s}\in S: F(s)\leq F(\tilde{s})}}
		    .\tag{arg\,min}
		\end{equation}%
		\vspace*{-3ex}
		} as
		\begin{equation}\label{eq:spline_regression}
		\fl \overset{\mfootref{footnote:uniquefl}}{:\in} \argmin_{f\in \C^2(\R)}\underbrace{\left( \overbrace{\sum_{i=1}^{N}\left( f(\xtr_i)-\ytr_i\right)^2}^{\Ltrsb{f}=} + \lambda \overbrace{\int_{-\infty}^{\infty} \left( f''(x) \right) ^2 dx}^{\Pg[1](f):=} \right) }_{=:\Flb{f}}
		\end{equation}
		and for a given function $g:\R\to\Rpz$ the \textit{\wregSplf} is defined\mfootref{footnote:uniquefl} as
		\begin{equation}\label{eq:weighted_spline_regression}
		\flg \overset{\mfootref{footnote:uniquefl}}{:\in} \argmin_{\substack{f\in \C^2(\R)\\ \supp (f'')\subseteq \supp (g)}}\underbrace{\left( \overbrace{\sum_{i=1}^{N}\left( f(\xtr_i)-\ytr_i\right)^2}^{\Ltrsb{f}=}+\lambda \overbrace{g(0) \int_{\supp (g)} \dfrac{\left( f''(x) \right)^2}{g(x)} dx}^{\Pg(f):=} \right) }_{=:\Flb[g]{f}}.  
		\end{equation}
	\end{definition}
	
	The hyperparameter~$\lambda$ controls the trade-off between low training loss and low squared second derivative.
	See~\fl\ in \Cref{fig:SplineReg} for an example of the regression spline (which corresponds to the weighted regression spline~\flg\, with constant weight $g\equiv c>0$).
		
	Letting the regularization parameter $\lambda$ tend to zero in \meqref{eq:spline_regression}, one obtains the smooth \nameref{def:splineInterpolation}, i.e. the \enquote{smoothest} $\C^2$-function interpolating the observed data.
	
	\begin{definition}[spline interpolation]\label{def:splineInterpolation}
		Let $\allIndi{N}{i}: \xtr_i, \ytr_i \in \R$, with $\xtr_i$ distinct, and $\lambda \in \Rp$. Then the \textit{(smooth) spline interpolation}~$\fz: \R\to\R$ is defined\footnote{\label{footnote:uniquefz}Analogous to \cref{footnote:uniquefl}, the spline interpolation~\fz\ is uniquely defined and the unique solution to the right-hand side optimization problem in \cref{eq:SplineInterpolation} if $N>2$, since $\xtr_i$ are distinct.%
		} as:
		\begin{equation}\label{eq:SplineInterpolation}
		\fz :=  \lim_{\lambda\to 0+} \fl \overset{\mfootref{footnote:uniquefz}}{\in} \argmin_{\substack{f\in \C^2(\R), \\ f(\xtr_i)=\ytr_i \ \allIndi{N}{i} }}\left( \int_{-\infty}^{\infty} \left( f''(x) \right) ^2 dx \right)
		.\end{equation}
	\end{definition}
	
	The \Cref{def:splineReg,def:splineInterpolation} can also be seen as solutions to Bayesian problems~\cite{KimeldorfSplineBayes10.2307/2239347}\footnote{\label{footnote:SplineImproperPrior}More precisely, \Cref{def:splineReg,def:splineInterpolation}  can be seen as limits of Bayesian problems \cite[p.\ 502]{KimeldorfSplineBayes10.2307/2239347}. \Cref{def:splineReg,def:splineInterpolation} cannot be solutions of a classical Bayesian problem with a \emph{proper} prior (cp.\ \cref{footnote:longBayesian} on \cpageref{footnote:longBayesian}, \cite[eq.~(4.1) on p.\ 501]{KimeldorfSplineBayes10.2307/2239347} and \cite{wahba1978improper}).}.
	
	\subsection{A paradox of neural networks}\label{sec:Paradoxon}
	As argued above, within a regression problem one might have an intuition about certain attributes of solution functions $\hat{f}$ that are particularly \enquote{desirable}. Moreover, these ideas of suitability could be incorporated directly by including certain regularization terms to the learning problem, such as seen in the popular example of the \nameref{def:splineReg}~\flg. Surprisingly, however, standard algorithms applied to train neural networks \unimportant{(i.e. gradient descent applied to the training loss \Ltrs)} 
	are able to find \enquote{desirable} functions~$\hat{f}$ \emph{without explicit regularization}. This paradox shall be discussed throughout the present section. In particular, we will demonstrate two severe misassumptions typically made in the classical approach to explain supervised learning using neural networks.
	
	The paradox can be observed for deep \cite{Goodfellow-et-al-DeepLearning-2016} as well as for shallow\footnote{\label{footnote:shallowDeep}In the literature \emph{shallow neural networks} are also referred to as \enquote{simple deep neural networks} or \enquote{two-layer (deep) neural networks} \cite[Section~1.1 p.\ 3]{GidelImplicitDiscreteRegularizationDeepLinearNN2019arXiv190413262G}. These three terms all are reasonable, since such a network consists of three layers of neurons (input$\to$hidden$\to$output), therefore it has two layers of weights and biases ($(v,b)\to(w,c)$) and thus one hidden layer of neurons.%
	}
	neural networks. This paper resolves the phenomenon rigorously only in the context of wide \RSNlong{}s (\RSN{}s) with \ReLU\ activation (i.e.\ \wRRSN{}, cp.\ \Cref{sec:theorem}). For simplicity, we outline the paradox for shallow neural networks defined below. %
	
	\begin{definition}[shallow neural network\mfootref{footnote:shallowDeep}]\label{def:NN}
		Let the activation function $\sigma:\mathbb{R}\to\mathbb{R}$ be a non-constant Lipschitz function. Then, a \textit{shallow neural network} is defined as $\NN_\theta:\,\R^d\to \mathbb{R}$ s.t.
		\begin{equation*}%
		\NN_\theta(x):=\sum_{k=1}^{n}w_k\,\sigma\left({b_k}+\sum_{j=1}^{d}{v_{k,j}}x_j\right) + c \quad \faxdg,
		\end{equation*}
		with
		\begin{itemize}
			\item number of neurons~$n\in\N$ and input dimension~$d\in \mathbb{N}$,
			\item weights~$v_k\in\R^d$, $w_k\in\R$, $k=1,\dots, n$ and
			\item biases $c\in\R$, $b_k\in\R$, $k=1,\dots, n$.
		\end{itemize}
		Weights and biases are collected in $$\theta :=(w,b,v,c)\in\Theta:=\R^n\times\R^n\times\R^{n\times d}\times\R.$$
			
	\end{definition}
	\begin{paradox}\label{paradox}
	The paradox of how the training of neural networks leads to solution functions that are surprisingly sensible from a Bayesian perspective (summarized in \Cref{fig:paradox}) consists of two parts:
	\begin{enumerate}[1., ref=\arabic{*}]
		\item\label{itm:pardoxModel} In the literature it is often claimed that the goal of training a neural network is to find parameters
		\begin{equation}\label{eq:thetastar}
		\theta^*\in\argmin_{\theta\in\Theta}\Ltrsb{\NN_\theta},
		\end{equation}
		such that the corresponding neural network~$\hat{f}:=\NN_{\theta^*}$ fits through the training data as good as possible (where goodness of fit is characterized by the loss \Ltrs).
		
		However, such an optimal neural network~$\NN_{\theta^*}$ might have bad generalization properties.
		First, if the number of hidden neurons $n\geq N$ is larger or equal than the number of training data points $N$, there are infinitely many \meqref{eq:thetastar}-optimizing shallow neural networks~$\NN_{\theta^*}$ that generalize arbitrarily badly%
		\footnote{For \ReLU\ activation functions, one can prove, that for every training data $\left( \xtr_i,\ytr_i\right)_{i\in\fromto{N}}$ there exist infinitely many $\NN_{\theta^*}$ such that the $d$-dimensional Lebesgue-measure of the set~$\Set{x\in \left[ -1, 1 \right]^d | {\left|\NN_{\theta^*}(x)\right| >9999} }$ is larger than $99\%\cdot 2^d$ and $\Ltrsb{\NN_{\theta^*}}=0$, if $n\geq N$ and $n\geq1$. %
		This implies that there exist different global optima~$\NN_{\theta^*}$ of $\Ltrs$ that are arbitrarily far from each other in any $L^p$-norm. 
		}, even if there is zero noise~$\eps_i=0$ on the training data.\newline
		Second, if $n\leq N-2$, then $\NN_{\theta^*}$ can be unique, but $\NN_{\theta^*}$ might still overfit to the noise on the training data (see \Cref{fig:exampleparadox}).
		As a consequence of the universal approximation theorems~\cite{CybenkoUniversalApprox1989,HornikUniversalApprox1991251}, we have that large neural networks~$\NN_{\theta^*}$ (or any other universally approximating class of functions) can potentially behave arbitrarily badly (as, for instance, in \Cref{fig:crazyOscillation}) in-between the training data~$\xtr_i$ while keeping the training loss arbitrarily low, i.e. $\Ltrsb{\NN_{\theta^*}}\leq\epsilon$, exactly because of their universal approximation properties.
		(If a very small number of neurons $n\ll\frac{N}{d}$ were chosen, over-fitting of $\NN_{\theta^*}$ would not pose such a severe problem, however, in that case, neural networks would lose their universal approximation property (which is one of their main selling points) and therefore $\NN_{\theta^*}$ could not achieve a low loss~$\Ltrsb{\NN_{\theta^*}}$.)
		
		Paradoxically, however, extremely large (trained) neural networks~$\NN_{\theta}$ typically generalize very well in practice. Indeed, \Cref{thm:ridgeToSpline,thm:GDridge} will demonstrate how well neural networks~$\NN_{\theta}$ with an infinite number of neurons behave in between the data.	
		\item\label{itm:paradox:GDAlgorithm} The objective function in optimization problem~\meqref{eq:thetastar} (in the case of typical activation functions) is a \veryunimportant{Lebesgue-}almost everywhere differentiable function on the finite dimensional \veryunimportant{\R-}vector space~$\Theta$. Thus, for solving \meqref{eq:thetastar}, it seems evident \veryunimportant{not only to most engineers }to use a (stochastic\footnote{\label{footnote:stochasticGradient}Stochastic gradient descent poses immense computational advantages in the case of a very large number~$N$ of training observations (computing an approximate gradient based on small batch is computationally much cheaper than calculating the exact gradient on the entire data set). %
		Within the present work, stochastic gradient descent can be treated equivalently to ordinary gradient descent as we are considering the regime of constant $\gamma/\tau\equiv T$ with diminishing learning rate~$\gamma\to 0$ and $N\in\N$ fixed. In this regime the two methods become equivalent to the continuous gradient flow.}) gradient descent algorithm (where the gradient can be calculated via backpropagation algorithm in the case of neural networks). %

		However, there are no known guarantees that these algorithms converge to a global optimum for a general, typically non-convex optimization problem. Moreover, numerical experiments show that if the algorithm continues for a reasonable time, the solution function obtained is still quite far from being optimal (w.r.t.\ the target function \Ltrs, that 
		the algorithm claims to try to optimize.) (e.g.\ \Cref{fig:exampleparadox}). 
	\end{enumerate}
	\end{paradox}
	\begin{figure}[htp] 
		\centering
		\begin{overpic}[width=0.8\textwidth]{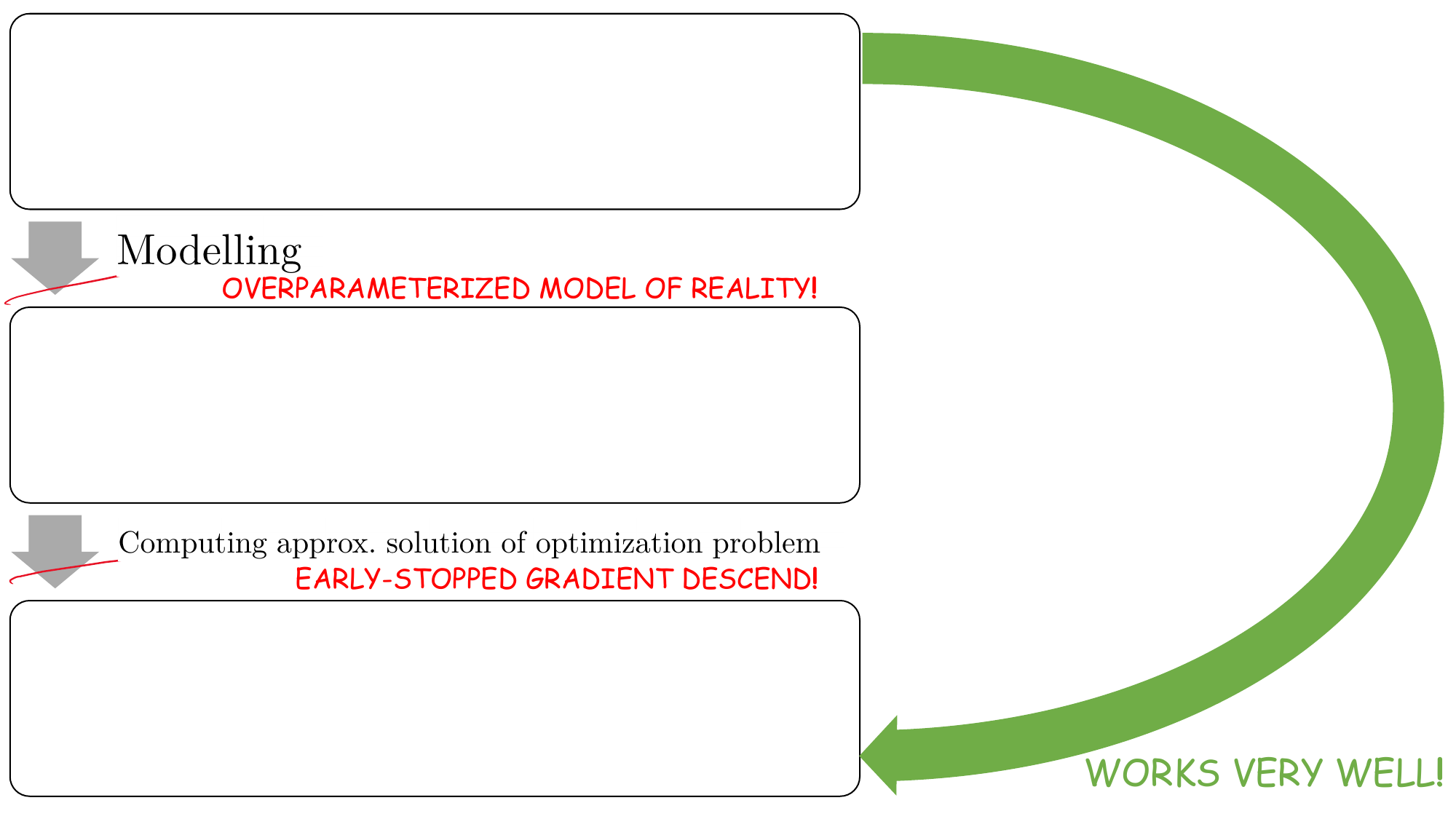}
			\put (3,48) {%
			\begin{tabular}{@{}l@{}}\hspace{-1mm}True Problem in Application: $\hat{f}=\ ?$\\ {\transparent{\meineTranzparenz}\hspace{-1mm}Bayesian Problem with realistic prior} \end{tabular}
			}
			\put (-2.5,35.5) {%
			\begin{minipage}[t]{0.5\textwidth}\small\begin{align*}%
					\hspace{-1mm}\theta^*\span\in\argmin_{\theta\in\Theta}\quad  \underbrace{\Ltrsb{\NN_\theta} }_{\mathclap{\scriptscriptstyle\sum_{i=1}^{N}\left( \NN_\theta (\xtr_i)-\ytr_i\right)^2}}, &\hspace{-1mm} \hat{f}&:=\NN_{\theta^*} \\[8.5ex]%
					\begin{split}
					\theta^{t+\gamma}&=\theta^t-\gamma\nabla_\theta \Ltrsb{\NN_{\theta^t}},\\
					\transparent{\meineTranzparenz}\theta^0&\transparent{\meineTranzparenz}\approx 0,
					\end{split} &\hspace{-1mm} \hat{f}&:=\NN_{\theta^T}
					\end{align*} \end{minipage}
			}
			\put (1.75,38) { %
			\small\ref{itm:pardoxModel}. }
			\put (1.75,18) { \small\ref{itm:paradox:GDAlgorithm}. }
		\end{overpic}
		\caption{\Cref{paradox}:
			\ref{itm:pardoxModel}.\ It would not be desirable for neural networks to solely minimize the training loss~$\Ltrs$.
			\ref{itm:paradox:GDAlgorithm}.\ The (stochastic) gradient descent algorithm (also known as backpropagation algorithm) typically does not succeed in finding a global optimum.
			Nevertheless, the algorithm results in functions~$\hat{f}=\NN_{\theta^T}$ that are surprisingly useful for a wide range of practical applications.}
		\label{fig:paradox}
	\end{figure}

	\begin{figure}[htp]
		\centering
		\scaledinset{l}{.191}{b}{.725}{\resizebox{0.262\hsize}{!}{ \begin{tabular}{@{}l@{}}$\left( \xtr_i,\ytr_i\right)$\\[0.4ex]
				$\ftrue=0$\\
    $\NN_{\theta^*}$ (minimizing $\Ltr$)\\
			$\NN_{\theta^T}$ \text{(obtained by SGD)}\end{tabular}}}{\includegraphics[width=0.8\textwidth,trim={0 0.7cm 0 1.1cm},clip]{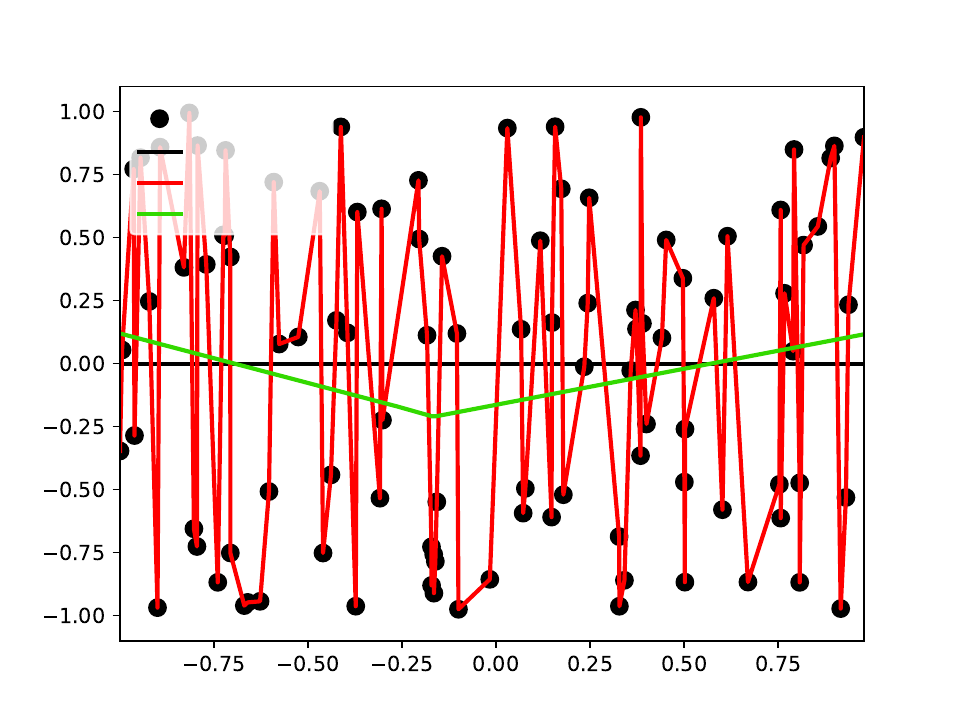}}		\caption{Example: Let $N=100$ training samples~$(\xtr_i,\ytr_i)$ be scattered uniformly around the true function~$\ftrue=0$ and consider a shallow neural network~\NN\ with $n=N=100$ hidden nodes. After 10\,000 training epochs of Adam SGD~\cite{ADAM2014arXiv1412.6980K} the neural network does not converge to the global optimum~$\NN_{\theta^*}$ (red line) with~$\Ltrsb{\NN_{\theta^*}}=0$, but to a more regular function~$\NN_{\theta^T}$ (green line) which is closer to the true function~\ftrue.}  
		\label{fig:exampleparadox}
	\end{figure}
	
	\subsection{Resolving \cref{paradox}: Implicit Regularization}\label{sec:ResolveParadoxon}
	In the following, we like to resolve the paradox described above. Moreover, at the end of this section, a short overview will be given, showing how this work contributes to a better understanding of the aforementioned phenomenon.
	
	Points \ref{itm:pardoxModel}, \ref{itm:paradox:GDAlgorithm} and the observation that neural networks are very useful in practice can be true at the same time:
	
	As discussed above, an \enquote{optimal} network~$\NN_{\theta^*}$ would typically perform quite poorly in practice (cp.\ \ref{itm:pardoxModel}). However, such a network is hardly obtained as a solution from a generic training process involving a gradient descent based algorithm. The reason being that, fortunately, the backpropagation algorithm which was designed to yield trained networks close to $\NN_{\theta^*}$ by minimizing the training loss~\Ltrs\ does not achieve{\transparent{\meineTranzparenz}\footnote{In the limit of infinite training time~$T\to\infty$, the gradient descent method can converge to a global optimum. As we will see in the sequel, even though there typically are infinitely many global optima this limit will be a very specific representative (cp.\ \Cref{def:minNormSol,def:adaptedSplineInterpolation}, \Cref{thm:GDridge,thm:ridgeToSpline,eq:conclusion}). Nonetheless, the training process is typically stopped after a few epochs (with training time $T\ll\infty$). The corresponding solution $\NN_{\theta^T}$ typically satisfies $\Ltrsb{\NN_{\theta^T}}\gg\Ltrsb{\NN_{\theta^*}}$ and is much more desirable (cp.\ \Cref{def:adaptedSplineReg,eq:conclusionApprox}).}} this goal (cp.\ \ref{itm:paradox:GDAlgorithm}, i.e. typically $\Ltrsb{\NN_{\theta^T}}\gg\Ltrsb{\NN_{\theta^*}}$). Instead, it surprisingly succeeds in reaching a much more desirable objective by not only minimizing the training loss~\Ltrs\, but also \emph{implicitly}\footnote{\enquote{\emph{Implicitly}} means that one uses exactly the same algorithm (gradient descent on the training loss~\Ltrs\, cp.\ \Cref{fig:paradox}) that one would use, if one did not care about regularization, but running the algorithm surprisingly results in a very regular solution function $\NN_{\theta^T}$.} regularizing the problem.
	Hence, the typically bad generalization property~\ref{itm:pardoxModel} of $\NN_{\theta^*}$ does not contradict the great out-of-sample performance of $\NN_{\theta^T}$, which is observed to be the much more regular (see \Cref{fig:exampleparadox}).
	
	This phenomenon is known in the literature as \enquote{implicit regularization} \cite{NeyshaburImplicitReg2014arXiv1412.6614N,NeyshaburImplicitRegPhD2017arXiv170901953N,YuanzhiImplicitRegPseudoSmoothing2018arXiv180801204L,KuboImplicitReg2019arXiv,SoudryImplicitBiasSeparableData2017arXiv171010345S,PoggioGernalizationDeepNN2018arXiv180611379P,GidelImplicitDiscreteRegularizationDeepLinearNN2019arXiv190413262G} (also known as \enquote{implicit bias}\cite{SoudryImplicitBiasSeparableData2017arXiv171010345S}). It demonstrates that questions~\ref{item:Generalization} and ~\ref{item:Gradient}, i.e. the generalization properties of neural networks and the use of gradient descent-based methods in their training are strongly linked in practice.
	
	In applications, the phenomenon of implicit regularization is frequently observed \cite{hastie_09_elements-of.statistical-learning,MaennelDradDecentPicewiseLinear2018arXiv180308367M,NeyshaburImplicitReg2014arXiv1412.6614N,NeyshaburImplicitRegPhD2017arXiv170901953N,YuanzhiImplicitRegPseudoSmoothing2018arXiv180801204L,KuboImplicitReg2019arXiv,PoggioGernalizationDeepNN2018arXiv180611379P}. Nonetheless, the theory behind it is still largely unexplored \cite{YuanzhiImplicitRegPseudoSmoothing2018arXiv180801204L,KuboImplicitReg2019arXiv,PoggioGernalizationDeepNN2018arXiv180611379P,MaennelDradDecentPicewiseLinear2018arXiv180308367M}. The contribution of this work (summarized in \Cref{fig:paradoxSolution}) is proving very precisely in which manner the implicit regularization effects occur when training a so-called \RSNlongAndShort\ (a specific type of neural network with one hidden layer and randomly chosen first-layer parameters\unimportant{---\Cref{def:RSNN}}) with a large number of hidden nodes \unimportant{$n\to \infty$} and \ReLU\ activation (i.e., a \wRRSN) using a gradient descent method. \RSN{}s (also known as \enquote{extreme learning machines}) are successfully applied in various settings \citep{HUANG2006489,OptimalStoppingRNarxiv.2104.13669}. As we shall see in the following, for such a network (as a function from $\X$ to $\Y$) the second derivative is implicitly regularized during training. More precisely, we will characterize the solution function obtained in infinite training time for wide networks with a large number of hidden nodes (cp.\ \Cref{def:adaptedSplineReg} and \Cref{thm:ridgeToSpline,thm:GDridge}). In a typical setting, this limit is very close to a \regSplf, whose theory is highly understood  \cite{ReinschSpline1967,CravenSpline1978,KimeldorfSplineBayes10.2307/2239347}.
	
	\begin{remark}[\Pfunc-Functional]\label{rem:Pfunc}In supervised learning, \Pfunc-regularized loss minimization models, i.e.,
	\begin{equation*}
	   f^{*,\Pfunc,\lambda}\in\argmin_f \Ltrb{f}+\lambda\Pfunc(f),
	\end{equation*}
	are typically quite easy to interpret and have nice theoretical properties (e.g. \Cref{def:splineReg}). Each of these models is fully characterized by its regularizing functional~$\Pfunc\unimportant{:\Y^{\X}\to\bar{\R}}$ (e.g. $\Pfunc=\Pg$ in the case of weighted \SplReg\,\flg).\footnote{The letter \enquote{\Pfunc} can be motivated by the fact that the \Pfunc-functional \emph{\textbf{p}enalizes} less regular functions~$f\in\Y^{\X}$, assigning to them a large value of the \emph{\textbf{p}enalty} $\Pfunc(f)$. Moreover, it expresses a certain \emph{\textbf{p}rior belief}\mfootref{footnote:PnotPrior} of which types of functions should be preferred in the supervised learning task.
	\unimportant{Metaphorically speaking the \Pfunc-functional could in some sense be seen as the \enquote{\textbf{p}syche} of a particular type of neural network.
	(I.e. the \Pfunc-functional enables us to easily conclude how the experiences $(\xtr_i,\ytr_i)$ a neural network~\NN\ encounters during training, effect its future behaviour~$\hat{f}(x)=\NN_{\theta^T}(x)$ for any future situation $x\in\X$. This would be a typical question asked in \emph{\textbf{p}sychology} in the case of biological neural networks. Note that different architectures (e.g. different activation functions or different number of layers) can lead to a different \emph{\textbf{p}syche}/character within this analogy.)}
	    }\unimportant{\textsuperscript{, \!}\unskip
	    \footnote{\unimportant{Instead of restricting the definition of \Pfunc\ and the optimization problem $\min_f \Ltrb{f}+\lambda\Pfunc(f)$ to a certain subspace (e.g. $\C^2$) one can also define $\Pfunc (f):=\infty$ for all functions outside the subspace.}
	    }}
	Our key finding is that other supervised learning algorithms (such as standard neural network algorithms) that are typically not considered as \Pfunc-regularized loss minimization, in fact are equivalent to \Pfunc-regularized loss minimization with a specific \Pfunc-functional \unimportant{(i.e. $\NN_{\theta^{\text{result}}}\approx f^{*,\Pfunc,\lambda}$)}. We believe that the framework of \Pfunc-regularized loss minimization could be very well suited to understand and compare the behavior of many different standard methods in supervised learning (in particular neural networks).
	Whether or not a certain \Pfunc-functional (or an equivalent algorithm) leads to functions~$f^{*,\Pfunc,\lambda}$ that generalize well, depends on one's prior\footnote{\label{footnote:PnotPrior}\Pfunc\ should not be directly interpreted as the prior distribution on the function space. However, some \Pfunc-functionals have the property that $f^{*,\Pfunc,\lambda}\in\argmin_f \Ltrb{f}+\lambda\Pfunc(f)$ is equal to the Bayesian a posteriori mean with respect to some Bayesian prior distribution (see e.g.\ \cite{KimeldorfSplineBayes10.2307/2239347} or \cref{footnote:SplineImproperPrior}).} belief. The goal of this work will not be to determine how \emph{well} certain types of neural networks generalize in general situations \unimportant{(this is not possible without assumptions on the data generating process---i.e. $\PP_D$)}. Instead, the main \Cref{thm:ridgeToSpline} expresses \emph{how} a certain neural network~\unimportant{$\RNR\approx\RNw[\wt]$} behaves, by showing its equivalence to a certain \Pfunc-regularized loss minimizer~\unimportant{$\flpm\approx\fl$} characterized by a certain \Pfunc-functional~\unimportant{\Pgpmm\ (see \Cref{def:adaptedSplineReg})}. The long-term goal of this line of research is to describe the learning-behavior of every neural network configuration with its own \Pfunc-functional \unimportant{(see \Cref{fig:paradoxSolution})}, such that one can choose a suitable configuration based on one's prior belief.

	\end{remark}

	\begin{figure}[htp] 
		\centering
		\begin{overpic}[width=0.8\textwidth]{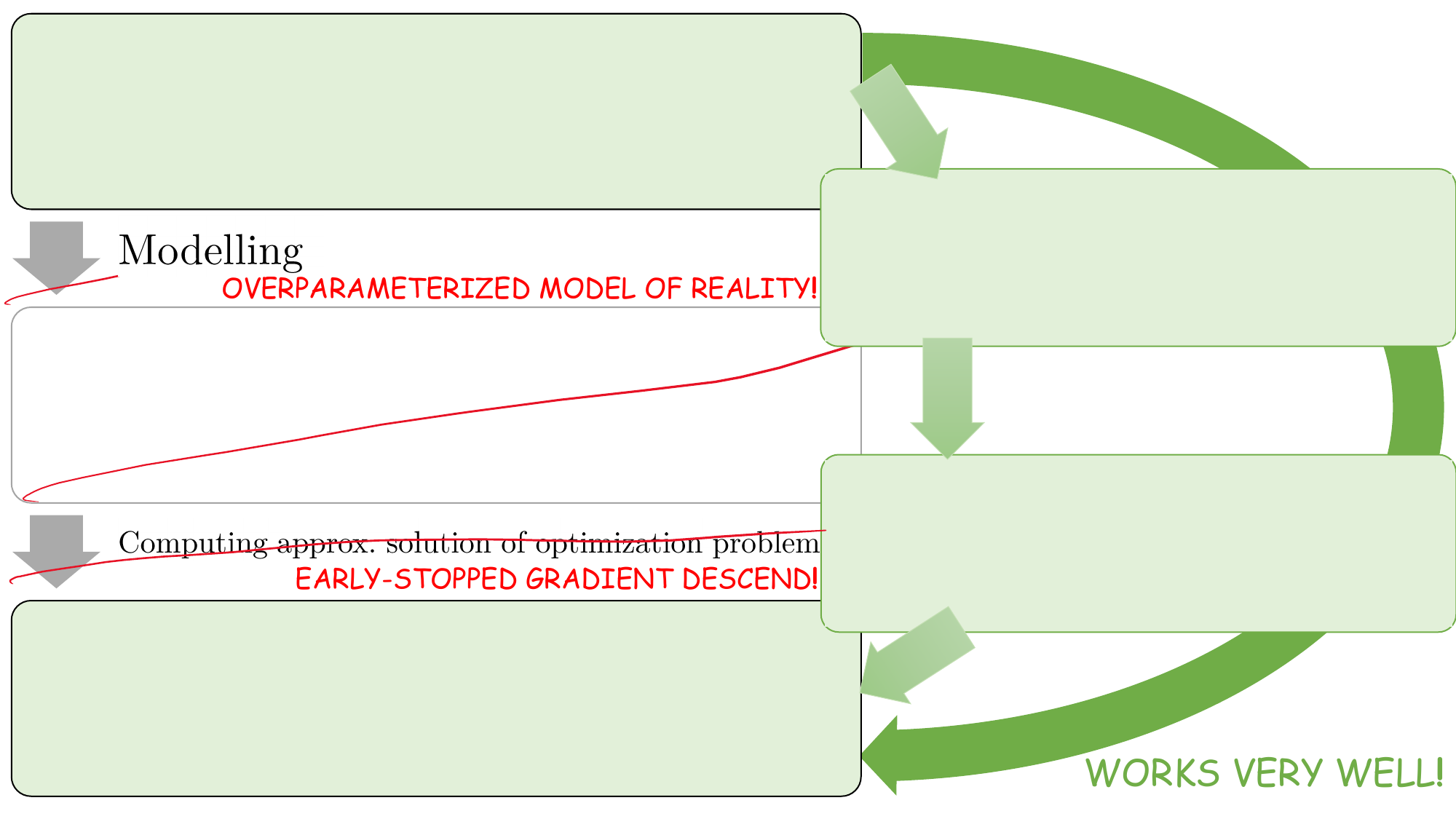}
			\put (3,48) {%
			\begin{tabular}{@{}l@{}}\hspace{-1mm}True Problem in Application: $\hat{f}=\ ?$\\ {\transparent{\meineTranzparenz}\hspace{-1mm}Bayesian Problem with realistic prior} \end{tabular}
			}
			\put (-2.5,35.5) {%
			\begin{minipage}[t]{0.5\textwidth}\small\begin{align*}%
					\transparent{\meineTranzparenz}\hspace{-1mm}\theta^*\span\transparent{\meineTranzparenz}\in\argmin_{\theta\in\Theta}\quad  \underbrace{\Ltrsb{\NN_\theta} }_{\mathclap{\scriptscriptstyle\sum_{i=1}^{N}\left( \NN_\theta (\xtr_i)-\ytr_i\right)^2}}, &\hspace{-1mm} \transparent{\meineTranzparenz}\hat{f}&\transparent{\meineTranzparenz}:=\NN_{\theta^*}\\[8.5ex]%
					\begin{split}
					\theta^{t+\gamma}&=\theta^t-\gamma\nabla_\theta \Ltrsb{\NN_{\theta^t}},\\
					\theta^0&\approx 0,
					\end{split} &\hspace{-1mm} \hat{f}&:=\NN_{\theta^T}
					\end{align*} \end{minipage}
			}
			\put (47,43.75) {%
			\begin{minipage}[t]{0.5\textwidth}\small\begin{align*}%
					&\hat{f}:\in\argmin_{f:\X\to\Y} \Ltrsb{f}+\lambda \Pfunc(f) \\[8.5ex]%
					&\theta_{\lw}:\in\argmin_{\theta\in\Theta}\Ltrsb{\NN_\theta} + \lw\twonorm[\theta]^2 ,\\
					&\hat{f}:=\NN_{\theta_{\lw}}
					\end{align*} \end{minipage}
			}
			\put (1.75,38) { \small\ref{itm:pardoxModel}. }
			\put (1.75,18) { \small\ref{itm:paradox:GDAlgorithm}. }
			\put (67,28) { \small\Cref{thm:ridgeToSpline} }
			\put (64,9) { \small\Cref{thm:GDridge} }
			\put (64,47) { \small\Cref{rem:Pfunc} }

		\end{overpic}
		\caption{Solution of \Cref{paradox}:
		The (early-stopped) (stochastic) gradient descent algorithm on \Ltrs\ (w.r.t.\ the trainable terminal-layer weights) does not solely minimize \Ltrs---instead it minimizes a regularized optimization problem much more accurately, when initialized close to zero $\theta^0\approx0$ (\Cref{thm:GDridge}). This line of research describes this regularization macroscopically on the function space in terms of a \Pfunc-functional (see \Cref{rem:Pfunc}). (\Cref{thm:ridgeToSpline} reveals the very easy to interpret \Pfunc-functional~\Pgpmm\ (\Cref{def:adaptedSplineReg}) in the case of a 1-dimensional \wRRSN~\RN\ (\Cref{def:RSNN}). Other types of neural networks correspond to different \Pfunc-functionals that will be shown in future work.).}
		\label{fig:paradoxSolution}
	\end{figure}

	Within this paper, we state two main theorems that jointly lead to the desired characterization of the solution function obtained in the limit.
	\begin{itemize}
		\item \Cref{thm:GDridge} connects the \RSNlongAndShort\ obtained by performing ordinary gradient descent \veryunimportant{initialized close to zero to train the parameters }without any explicit regularization to the \RSN{} obtained from an explicit ridge regularization of the weights. (This theorem builds on very similar results that are well known in the literature \cite{bishop1995regularizationEarlyStopping,friedman2003gradient,PoggioGernalizationDeepNN2018arXiv180611379P,GidelImplicitDiscreteRegularizationDeepLinearNN2019arXiv190413262G,pmlr-v89-ali19a}.)\footnote{We like to highlight \Cref{sec:EarlyStopping} and in particular \meqref{eq:lambdaWelleT}, which gives a novel approximate relation between the training time $T$ of the GD algorithm and the corresponding ridge regularization parameter $\lw$. In an 'early-stopped' setting, \meqref{eq:lambdaWelleT} is a more refined mapping than the relation $\lw=1/T$. This result is highly interesting on its own as it contributes to the open question posed in e.g.\ \cite{pmlr-v89-ali19a}.}
		\item \Cref{thm:ridgeToSpline} shows how the training of the \wRRSN's weights via ridge regularization results in the (slightly adapted) spline regularization of the learned network function
		if the number of neurons $n\to\infty$. This theorem is the main contribution of this work.
	\end{itemize}

\subsection{Related Work}
Understanding the training of neural networks and, in particular, their frequently astonishing generalization properties has been at the center of interest in many recent works. Without aiming to be exhaustive, we give a brief overview of existing results most related to the the present paper.
	\begin{itemize}
		\item There are a number of works that discuss implicit regularization on the weight space (comparable to \Cref{thm:GDridge}) \cite{bishop1995regularizationEarlyStopping,SoudryImplicitBiasSeparableData2017arXiv171010345S,PoggioGernalizationDeepNN2018arXiv180611379P,GidelImplicitDiscreteRegularizationDeepLinearNN2019arXiv190413262G}\footnote{\cite{SoudryImplicitBiasSeparableData2017arXiv171010345S,PoggioGernalizationDeepNN2018arXiv180611379P} focus on classification (exponential loss) and in
			\cite{bishop1995regularizationEarlyStopping,GidelImplicitDiscreteRegularizationDeepLinearNN2019arXiv190413262G} regression problems (with least square training loss~\Ltrs) are considered.}.
		However, within these works it is mostly not explained how these effects translate to implicit regularization on the function space. As an exception within the framework of classification, \cite{SoudryImplicitBiasSeparableData2017arXiv171010345S,PoggioGernalizationDeepNN2018arXiv180611379P} give insight about the margins between the classes, which is a property of the learned function. These papers provide a precise and quite complete mathematical understanding of linear neural networks without any hidden layers. The theorems in these papers that deal with neural networks with one (or more) hidden layers serve as a basis for arguments why an implicit regularization effect can exist on a qualitative level, but not on a precise quantitative level (especially when non-linear activation functions~$\sigma$ are considered).
		\item Contrary to the above, this paper's main contribution, \Cref{thm:ridgeToSpline}, explains the implicit regularization effects on the function space. In that regard, the results presented in \cite{MaennelDradDecentPicewiseLinear2018arXiv180308367M,KuboImplicitReg2019arXiv,YuanzhiImplicitRegPseudoSmoothing2018arXiv180801204L} are more closely related.
		\begin{itemize}
			\item in \cite{YuanzhiImplicitRegPseudoSmoothing2018arXiv180801204L}, the implicit regularization effects that happen when fully training a shallow neural network~$\NN$ with non-linear \ReLU\ activation function $\sigma=\relu$ are studied on a qualitative level in the context of classification (cross entropy loss over the softmax as a training loss). In said work, the notion \enquote{pseudo-smooth} \cite[e.g.\ p.\ 4]{YuanzhiImplicitRegPseudoSmoothing2018arXiv180801204L} is used, but a quantitative mathematical analysis of the pseudo-smoothness is missing.
			\hypertarget{{subitem:MaennelPiecewiseLinear}}{\item} Similarly in \cite{MaennelDradDecentPicewiseLinear2018arXiv180308367M}, the implicit regularization for a fully trained shallow neural network~$\NN$ with non-linear \ReLU\ activation functions $\sigma=\relu$ is discussed. In the context of regression (using an arbitrary differentiable loss function) the main goal of \cite{MaennelDradDecentPicewiseLinear2018arXiv180308367M} is to explain the macroscopic behavior of the learned neural network function~$\NN_{\theta^T}$, i.e. its generalization properties in between the training data. Within this work, a very rich qualitative understanding of $\NN_{\theta^T}$ as well as very helpful visualizations are provided, however, there is no mention of a precise quantitative formula. Hence, a complete macroscopic characterization of the learned function is not given. In contrast, we provide a precise quantitative macroscopic formula (\Cref{def:adaptedSplineReg}) that characterizes trained \wRRSN s~\unimportant{\RN}. Thus, we provide a quite complete understanding of \wRRSN s~\unimportant{\RN}.
			\item The implicit regularization effects in the training of deep neural networks with non-linear \ReLU\ activation functions $\sigma=\relu$ are studied in \cite{KuboImplicitReg2019arXiv}. Therein, it is stated that the learned function interpolates \enquote{almost linearly} between samples. This behavior is related to a low (in the case of \ReLU s distributional) second derivative which corresponds to the notion of \enquote{gradient gaps} introduced in \cite{KuboImplicitReg2019arXiv}.
		\end{itemize}
		\item In a concurrent line of work, infinitely-wide, ReLU-activated one-layer networks with bounded $\ell_2$-norm of weights are analyzed in function space. Similarly to our work, \cite{ongie2019function,savarese2019infinite,williams2019gradient} intend to link these networks to solutions of regularized optimization problems in function space. %
		These works show convergence of regularized loss functionals, thereafter arguing that minimizers coincide. By contrast, we directly establish the convergence of minimizers. %
  The interchange of limit is highly non-trivial.
		Further differences and similarities to the presented results are the following.
		\begin{itemize}
		    \item \cite{savarese2019infinite} derive a function-space equivalent to the objective of fitting an infinite, real-valued ReLU-network through data while controlling the $\ell_2$-norm of weight parameters (not including the networks biases). In particular, this functional contains a closed-form regularizing complexity term. In \cite{ongie2019function}, this result is lifted to the case of multidimensional input. For networks including a single unregularized linear unit the complexity cost (i.e., regularization term) in function space is derived depending on a certain semi-norm whose interpretation however is not as straightforward. We consider a different setting than both \cite{savarese2019infinite,ongie2019function}, i.e., one-dimensional  networks for which first-layer parameters remain untrained after random initialization. Moreover, we additionally investigate gradient dynamics and implicit regularization effects.
		    In our follow-up paper \cite{Part3Arxiv} we extended our theory to deep networks where the parameters of all layers are optimized as well.
	\item While working on the present manuscript, we learned about the highly related \cite{williams2019gradient} that studies gradient dynamics of infinitely wide networks with one-dimensional inputs in two different parametrizations. For wR\RSN s as we consider in this paper, \cite[Theorem 5]{williams2019gradient} establishes a function space equivalent of the $\ell_2$ regularized objective in parameter space, seemingly similar to this paper's main contribution \Cref{thm:ridgeToSpline}. We stress however that our result is original and in particular not covered by the previous proof. First, as mentioned above, \cite[Theorem 5]{williams2019gradient} is not concerned with the limit of minimizers as claimed and the interchange of limit and argmin is left out in the corresponding proof. In contrast, we precisely prove that the  trained networks and their weak first derivative converge uniformly to the \aregSpl.
	Second, we have proven \Cref{thm:ridgeToSpline} for a much more general loss function $\Ltr$ that is only required to satisfy \Cref{as:generalloss}.
    Third, the weighting function $\nu$ in \cite[Theorem 5]{williams2019gradient} is different to the function $g$ in our \Cref{thm:ridgeToSpline}. This leads us to believe that there is a mistake in their characterization of the learned function. %
 Moreover, in \Cref{thm:ridgeToSpline} we require $\supp(f^{''})\subset\supp(g)$. (If there are no neurons that correspond to kinks outside of $\supp(g)$, this network cannot represent any function which is not affine-linear outside of this area). By contrast, \cite[Theorem 5]{williams2019gradient} would allow for infinitely many solutions that behave arbitrarily crazy outside of $\supp(\nu)$.
Finally, we remark that crucial boundary conditions seem to get lost in the second half of the proof of \cite[Theorem 5]{williams2019gradient}: if all $y_{train}$ are set to one, the adapted spline precisely describes the u-shape outside the data that is resulting from ridge-regularized loss minimization of an \RSN{} (see \Cref{fig:GernealizationOfAdaptedSplineVsSpline} in \Cref{sec:IntuitionAdaptedSpline}), while \cite[Theorem 5]{williams2019gradient} would incorrectly result in a straight line.
	\end{itemize}
		\item \cite{jacot2018neural} gives an exact characterisation of the limiting function by proving an equivalence between neural networks and kernel methods (Gaussian Processes) under quite general assumptions.
		\cite{jacot2018neural} and its generalization \cite{YangFeatureLearningInfiniteWidth} are well-developed theories about the generalization properties and the training of (infinitely) wide neural networks where all parameters are trained, while for \RSN{}s only the last layer needs to be trained. For a more detailed comparison to our follow-up theory on networks where all layers are optimized please see \cite{Part3Arxiv}.
		\item Recently, there has been growing interest in analyzing the convergence behavior of the gradient descent algorithm in the training of infinitely wide (shallow and deep) neural networks \citep{jacot2018neural,chizat2018global,mei2018mean,pmlr-v89-ali19a}. Moreover, in these works, conditions for convergence to global optima are discussed. 
		\item%
		The relation between (possibly multivariate versions of) spline interpolation and neural network structures was analyzed as well in \cite{poggio1990networks}. Therein, a so-called regularization network is defined with activation functions given by certain basis functions based on Green functions corresponding to the optimization problem characterizing the spline interpolation. This network is then proven to implement the smooth spline interpolation. However, this result does not treat implicit regularization effects but rather explicitly implements the desired regularization in the form of a network structure. Moreover, there is no connection to ReLU-NNs established in \cite{poggio1990networks}. 
		\item The architecture of \RSN{}s has been used as well in \cite{doubleDescentbelkin2018reconciling} to describe the phenomenon of double descent. In this work, empirical studies suggest that an \RSN{} \enquote{appears to become smoother---with small norm---as the number of features [neurons] is increased}\citep[p.~6]{doubleDescentbelkin2018reconciling}. In the present work, we prove this phenomenon in a mathematically precise way and quantify this \enquote{smoothness} in terms of the exact regularizing term $\Pgpmm$.
  \item \citet{HUANG2006489} also study the \RSN{}-architecture, which they call \enquote{Extreme Learning Machines} but they focus on different aspects of the theory. They neither consider infinitely wide networks nor gradient descent.
  \item When updating this paper, we came across the related work of \citet{pmlr-v119-jacot20a}.%
  They also study the implicit regularization of \RSN{}s, which they call \enquote{Random Feature Models}. They however do not study the implicit regularization induced by gradient descent, but the implicit regularization induced by averaging over infinitely many ensemble members of which each ensemble member only has a small number of random neurons and the final layer $w$ is perfectly optimized with respect to $\ell_2$-regularization without considering gradient descent.
	\end{itemize}

	The remainder of this paper is structured as follows.
	In \Cref{sec:RSNN}, we begin by defining the specific type of neural network~\unimportant{$\RN$} considered in the subsequent analyses: \emph{1-dimensional \textbf{w}ide \textbf{R}eLU \hyperref[def:RSNN]{\textbf{r}andomized\footnote{The most striking property of this type of network is that the first layer is chosen randomly and not trained, i.e.\ after random initialization only the terminal layer is trained. One might expect that this randomness decreases the regularity of the learned function, but in fact the effect is quite the opposite: as we will thoroughly discuss, the learned function will be especially smooth because of this randomness, where smoothness is understood as minimizing the integrated squared second derivative; cp. \Cref{thm:ridgeToSpline})} \textbf{s}hallow neural \textbf{n}etworks}} (\wRRSN s)~\unimportant{\meqref{eq:RSN1d}}. Moreover, we discuss the expressiveness of the function class of such \RSN s and give further definitions that are central to the understanding of the main \Cref{thm:ridgeToSpline,thm:GDridge}.
	
	Thereafter, in \Cref{sec:RidgeToSpline,sec:GradientToRidge}, \Cref{thm:ridgeToSpline,thm:GDridge} are formulated and discussed. The corresponding proofs are to be found in \Cref{sec:proofs}.
	Finally, in \Cref{sec:conclusion} the implications of these results are summarized in \cref{eq:conclusion,eq:conclusionApprox}. Moreover, therein, we give a brief discussion on extensions of this work.

	\section{Randomized Shallow Neural Networks (\texorpdfstring{$\RSN$}{RSN}s)}\label{sec:RSNN}
	Within this section, we introduce the notion of \RSNlongAndShort, a specific kind of artificial neural network with one hidden layer, that we consider in this paper. 
	
		\begin{definition}[\RSN]\label{def:RSNN}
		Let $\left( \Om, \Sigma, \PP\right) $ be a probability space,
		and the activation function $\sigma:\mathbb{R}\to\mathbb{R}$ Lipschitz continuous and non-constant. Then a \textit{\RSNlongAndShort} is defined as $\RNwo:\,\R^d\to \mathbb{R}$ s.t.
		\begin{equation}\label{eq:RSN}
		\RNwo(x):=\sum_{k=1}^{n}w_k\,\sigma\left({b_k\omb}+\sum_{j=1}^{d}{v_{k,j}\omb}x_j\right) \quad\faog\ \faxdg
		\end{equation}
		with\footnote{\label{footnote:lastlayerbias} One could include an additional bias~$c\in\R$ to the last layer too, but in the limit $n\to\infty$ this last-layer bias $c$ does not change the behavior of the trained network-functions \RNw[\wt] or \RNR. In \Cref{fig:sine,fig:sinezoomedout,fig:secondderiv} this last layer bias $c$ was included in the training.}
		\begin{itemize}
			\item number of neurons~$n\in\N$ and input dimension~$d\in \mathbb{N}$,
			\item trainable weights~$w_k\in\mathbb{R}$, $k=1,\dots, n$,
			\item non-trainable random biases $b_k\begingroup\mediumimportantStart
			:(\Om, \Sigma) \to (\R, \B)\endgroup$ \iid\ real-valued random variables $k=1,\dots,n$,
			\item non-trainable random weights $v_k\begingroup\mediumimportantStart
			:(\Om, \Sigma) \to (\R^d, \B^d)\endgroup$ \iid~$\R^d$-valued random variables $k=1,\dots,n$.
		\end{itemize}
		
	\end{definition}
	\renewcommand{\meinPrefix}{\renewcommand{\thefootnote}{\fnsymbol{footnote}}\!\footnote[1]{\label{footnote:star}\Cref{rem:furtherNotationMeasureTheory}, \Cref{cor:universal_in_prob}, \Cref{le:almost_sure_interpolation}, \Cref{rem:AlmostSureInterpolation} and the text in-between (marked with a \enquote{\mfootref{footnote:star}}) form an independent storyline dealing with question~\ref{item:Expressive} about the expressiveness. If these results are skipped, one can still understand the main storyline and the main~\Cref{thm:ridgeToSpline,thm:GDridge}.}\unskip
	\renewcommand{\thefootnote}{\arabic{footnote}}}
	
	Before describing in detail the implicit regularization effects obtained by applying gradient descent methods to train the last layer of such an \RSN\ in \Cref{sec:theorem}, we elaborate on the expressiveness\mfootref{footnote:star} (question~\ref{item:Expressive}) of \RSN s.
	\begin{remark}[further notation]\label{rem:furtherNotationMeasureTheory}
		Throughout this paper, $\PP_\#f$ denotes the push-forward measure of $\PP$ under the map $f$. Moreover, we frequently use the notation $\mu:=\PP_\#(b_k,v_k)$ for denoting the distribution of a random first-layer parameter vectors $(b_k,v_k):\Omega\to\R^{d+1}$ corresponding to an \RSN~$\RNw$ and write $\lambda^d$ for the Lebesgue measure on $\R^d$. We further introduce the map $\psi_{(b,v)}:\Omega\times\R^{d}\to \R^n$, with $\psi_{(b,v)}:(\om,x)\mapsto\psi_{(b,v)\omb}(x)$ s.t. $\psi_{(b,v)\omb}(x)_k= \sigma\left({b_k\omb}+\sum_{j=1}^{d}{v_{k,j}\omb}x_j\right)$ for any $k=1,\ldots,n$, mapping the input to an \RSN's hidden layer. We call $\mathrm{range}(\psi_{(b,v)}):=\bigcup_{\omega\in\Omega}\mathrm{range}(\psi_{(b,v)(\omega)})\subseteq\R^n$ the latent space of an \RSN. 
	\end{remark}
	
	\renewcommand{\meinPrefix}{\mfootref{footnote:star}\unskip}
	\mfootref{footnote:star}\unskip The class of \RSN s might be interesting in supervised learning due to a number of reasons. 
	First, as a corollary to any of the much-cited universal approximation theorems, randomized shallow networks are what we call \emph{universal in probability}: Building on the results of \cite{HornikUniversalApprox1991251, CybenkoUniversalApprox1989} and later  \cite{leshno1993multilayer}, we obtain that any real-valued continuous function on a compact subset of $\R^d$ can be arbitrarily well approximated by an \RSN\ with arbitrarily high probability. This result holds under relatively weak assumptions on the activation function and probability distribution of first-layer weights and biases and is given below in \Cref{cor:universal_in_prob}.

	\mfootref{footnote:star}\unskip Second, given any set of observations $(x_i,y_i)\in\mathbb{R}^d\times\mathbb{R}$, $i=1,\ldots,N$, $N\in\N$ (with $\xtr_i$ distinct), if the induced measure on the latent space is zero on sets of lower dimension, then, almost surely, there exists an \RSN\ that precisely interpolates these data. In other words, for suitable choices of randomness in the first layer, with probability one the class of randomized shallow networks contains representatives whose parameters achieve zero training loss. More precisely, we have \Cref{le:almost_sure_interpolation}.

	\begin{corollary}[Universal in probability]\label{cor:universal_in_prob}
		Let $X\subset\R^d$ be compact and $f\in C(X,\mathbb{R})$. Furthermore, let $\RNw$ be as in \Cref{def:RSNN}, with weights $v_k$ and biases $b_k$, $k=1,\ldots,n$ \iid\ according to $\mu:=\PP_\#(b,v)$ with $\mu\gg \lambda^{d+1}$. Then, under mild conditions on the activation function (e.g. $\sigma$ non-polynomial \cite{leshno1993multilayer})

  \begin{displaymath}
		\forall \epsilon\in\mathbb{R}_+,\lim\limits_{n\to\infty} 
		\PP\left[\exists w\in\mathbb{R}^n:\|\RNw-f\|_\infty<\epsilon\right]=1.
		\end{displaymath}
	\end{corollary}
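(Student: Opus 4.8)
The plan is to combine the classical deterministic universal approximation theorem with an elementary ``hitting'' estimate that exploits the domination hypothesis $\mu\gg\lambda^{d+1}$. The underlying idea is that a single \emph{fixed} good finite-neuron approximant of $f$ can be reproduced by a random network as soon as the random first-layer parameters happen to land near the finitely many parameter vectors occurring in that approximant, and the probability of this favourable event tends to $1$ as $n\to\infty$.

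First I would invoke the universal approximation theorem of Leshno et al.~\cite{leshno1993multilayer} (building on \cite{CybenkoUniversalApprox1989,HornikUniversalApprox1991251}): since $\sigma$ is continuous, non-constant and (by the mild assumption) non-polynomial, finite linear combinations $x\mapsto\sum_{j=1}^{m}w^*_j\,\sigma(b^*_j+v^*_j\cdot x)$ are dense in $C(X,\R)$ for $\|\cdot\|_\infty$; hence some such combination lies within $\eps/2$ of $f$, and after merging repeated terms I may assume the centres $(b^*_j,v^*_j)\in\R^{d+1}$, $j=1,\dots,m$, pairwise distinct. Next I would record the routine Lipschitz/compactness estimate: with $R:=\sup_{x\in X}\|x\|<\infty$ and $L$ a Lipschitz constant of $\sigma$, one has $\sup_{x\in X}|\sigma(b+v\cdot x)-\sigma(b^*+v^*\cdot x)|\le L(1+R)\,\|(b,v)-(b^*,v^*)\|$. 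Consequently there is $\delta>0$, small enough both that $L(1+R)\,\delta\sum_{j}|w^*_j|<\eps/2$ and that the open balls $B_j:=B_\delta((b^*_j,v^*_j))$ are pairwise disjoint, with the following property: if a network with first-layer parameters $(b_k,v_k)_{k=1}^n$ has, for every $j$, at least one neuron whose parameter vector lies in $B_j$, then setting that neuron's terminal weight to $w^*_j$ (and the remaining terminal weights to $0$) yields, by the triangle inequality, a network $\RNw$ with $\|\RNw-f\|_\infty<\eps$.

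It then remains to estimate the probability of this ``all balls are hit'' event. Since $\mu\gg\lambda^{d+1}$ and each $B_j$ is a nonempty open ball, $p_j:=\mu(B_j)>0$. By the union bound, the $\mu^n$-probability that some $B_j$ contains none of the $n$ i.i.d.\ draws is at most $\sum_{j=1}^{m}(1-p_j)^n\to0$; on the complementary event (which for $n\ge m$ automatically furnishes $m$ distinct neurons, the $B_j$ being disjoint) the construction above produces $w\in\R^n$ with $\|\RNw-f\|_\infty<\eps$. Hence $\mu^n(\exists w\in\R^n:\|\RNw-f\|_\infty\le\eps)\ge 1-\sum_{j=1}^{m}(1-p_j)^n\to1$ as $n\to\infty$, which is the claimed universal-in-probability statement (equivalently, its complementary event has $\mu^n$-measure tending to $0$).

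I do not anticipate a real obstacle here: the only genuinely necessary observations are (i) that the domination hypothesis $\mu\gg\lambda^{d+1}$ is precisely what converts ``nonempty open'' into ``positive $\mu$-mass'' for the balls $B_j$, and (ii) that boundedness of $X$ is what makes the Lipschitz estimate in the second step uniform in $x$. A variant with $\sigma$ merely locally bounded and Riemann integrable rather than continuous follows identically, using the corresponding general form of \cite{leshno1993multilayer} in the first step.
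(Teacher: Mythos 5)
Your proposal is correct and follows essentially the same route as the paper's own proof: a fixed Leshno-type approximant with finitely many neurons, a uniform Lipschitz/compactness estimate in the first-layer parameters (the paper isolates this as a separate lemma), positivity of the $\mu$-mass of the $\delta$-balls via $\mu\gg\lambda^{d+1}$, a union bound showing all balls are hit with probability tending to one, and the terminal-weight assignment plus triangle inequality. No substantive difference to report.
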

	\proofInSec{cor:universal_in_prob}{sec:Proof:expressiveness}

	\begin{lemma}[Almost sure interpolation]\label{le:almost_sure_interpolation}
		Let observations $(\xtr_i,\ytr_i)\in\mathbb{R}^d\times\mathbb{R}$, $i=1,\ldots,N$, with $\xtr_i$ distinct, be given. Then, any (perfectly trained\footnote{Since the optimization problem is convex in the last-layer weighs~$w$, the gradient descent actually converges to a global minimum. Hence, under the conditions of \Cref{le:almost_sure_interpolation}, the statement can be refined to:
		\[\PP[{\tlim\RNw[\wt](\xtr_i) = \ytr_i,\quad \forall i=1,\ldots, N}]=1.\]\vspace{-3ex}%
		}) \RSN~$\RNw$ with $n\ge N$ hidden nodes such that $\PP_\#(\psi_{(b,v)}(\xtr)_1)[A]=0$ for any $A\subseteq\mathrm{range}(\psi_{(b,v)})$ of dimension $N-1$, almost surely interpolates the data, i.e.
		\begin{displaymath}
		\PP[{\exists w^*\in\mathbb{R}^n:\forall i\in\fromto[1]{N}: \RNw[w^*](\xtr_i) = \ytr_i}]=1.
		\end{displaymath}
	\end{lemma}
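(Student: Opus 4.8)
The plan is to translate the interpolation property into a rank statement about the hidden-layer feature matrix and then exploit that its \emph{columns} are \iid.

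First I would set up the linear algebra. Fix the randomness in the first layer and write $\Psi(\omega)\in\R^{N\times n}$ for the matrix whose $i$-th row is $\psi_{(b,v)(\omega)}(\xtr_i)$, the hidden-layer activation on the $i$-th input in the notation of \Cref{rem:furtherNotationMeasureTheory}, so that $\RNw[w](\xtr_i)=\bigl(\Psi(\omega)\,w\bigr)_i$. Then the existence of an interpolating last layer $w^*\in\R^n$ with $\RNw[w^*](\xtr_i)=\ytr_i$ for all $i$ is precisely the solvability of the linear system $\Psi(\omega)\,w=\ytr$. Because $n\ge N$, it suffices to prove that $\Psi(\omega)$ has full row rank $N$ for $\PP$-almost every $\omega$: then $w\mapsto\Psi(\omega)\,w$ is onto $\R^N$, so the prescribed target $\ytr$ lies in its image for every choice of $\ytr$. (The same full-rank statement, combined with convexity of $w\mapsto\Ltrb{\RNw[w]}$ for the loss functions considered here, gives the stronger claim in the footnote, since the global minimum of the training loss is then $0$ and is attained in the limit $T\to\infty$.)

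Next, the probabilistic core. The $k$-th \emph{column} of $\Psi(\omega)$ is $C_k(\omega):=\bigl(\sigma(b_k(\omega)+\langle v_k(\omega),\xtr_i\rangle)\bigr)_{i=1}^N\in\R^N$, and it depends only on the $k$-th first-layer parameter $(b_k,v_k)$. Since the $(b_k,v_k)$ are \iid\ by \Cref{def:RSNN}, the columns $C_1,\dots,C_n$ are \iid\ $\R^N$-valued random vectors; write $\nu$ for their common law. Now $\Psi(\omega)$ has full row rank $N$ iff $C_1(\omega),\dots,C_n(\omega)$ span $\R^N$. The plan is to deduce this from the claim that \emph{$\nu$ assigns mass $0$ to every proper linear subspace of $\R^N$}. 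Granting the claim, a short induction on $j\le N$ shows that $C_1,\dots,C_j$ are almost surely linearly independent: conditionally on $C_1,\dots,C_{j-1}$, which by the inductive hypothesis almost surely span a subspace $V$ with $\dim V=j-1<N$, hence a proper subspace, the event $\{C_j\in V\}$ has probability $\nu(V)=0$. Taking $j=N$ and using $n\ge N$, we get that $\Psi(\omega)$ has rank $N$ almost surely, which finishes the argument modulo the claim.

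It remains to establish the claim, and this is where the codimension hypothesis of the lemma together with the distinctness of the data points enters. Every proper subspace is contained in a hyperplane $a^{\perp}$ for some $a\in\R^N\setminus\{0\}$, so it is enough to show $\nu(a^{\perp})=0$ for every such $a$, i.e.\ $\PP\bigl[\sum_{i=1}^N a_i\,\sigma(b_1+\langle v_1,\xtr_i\rangle)=0\bigr]=0$, equivalently that the event $\{\sum_{i=1}^N a_i\,\psi_{(b,v)}(\xtr_i)=0\}$ (a statement about the full hidden layer) is $\PP$-null. Here I would argue that, because the inputs $\xtr_1,\dots,\xtr_N$ are pairwise distinct, a nontrivial linear combination of the feature maps can vanish only when the underlying first-layer parameter falls into an exceptional (lower-dimensional) subset of the latent space $\mathrm{range}(\psi_{(b,v)})$ of the kind excluded by the hypothesis $\PP_\#(\psi_{(b,v)}(\xtr_i))[A]=0$, hence that subset is $\PP$-null and $\nu(a^{\perp})=0$. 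The main obstacle is precisely this translation step: converting the geometric non-degeneracy hypothesis on the latent space into the linear-algebraic statement that the one-column law $\nu$ charges no proper subspace of $\R^N$, which forces one to control how the laws of $\psi_{(b,v)}(\xtr_1),\dots,\psi_{(b,v)}(\xtr_N)$ interact jointly across the $N$ distinct inputs rather than input by input. The remaining ingredients — the solvability/rank dictionary and the iterated-conditioning argument for \iid\ columns — are routine.
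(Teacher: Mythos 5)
Your linear-algebra dictionary (interpolation for all $\ytr$ iff $\Psi(\omega)$ has full row rank $N$) and the iid-columns induction are fine, but the proposal has a genuine gap exactly at the step you yourself flag: the claim that the law $\nu$ of a single column $C_1=\bigl(\sigma(b_1+\langle v_1,\xtr_i\rangle)\bigr)_{i=1}^N$ charges no proper subspace of $\R^N$ is never established, and it does not follow from the lemma's hypothesis. The hypothesis constrains the transposed object: for each fixed $i$ it restricts the law of $\psi_{(b,v)}(\xtr_i)\in\R^n$ (all $n$ neurons at one input), whereas your claim restricts the joint law of one neuron across the $N$ inputs, and neither controls the other. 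Concretely, take $\sigma$ the logistic function, $d=1$, $N=2$, $\xtr_1=0$, $\xtr_2=1$, and let $(b_k,v_k)$ be drawn with probability $1/2$ from a density on $\R^2$ and with probability $1/2$ from a density on the curve $\{(b,v):\sigma(b)=2\sigma(b+v)\}$. Each marginal $\sigma(b_k+v_k\xtr_i)$ is then absolutely continuous, so the row $\psi_{(b,v)}(\xtr_i)$, an $n$-fold product of such laws, assigns zero mass to every fixed lower-dimensional $A\subseteq\R^n$; yet $\nu$ gives mass at least $1/2$ to the hyperplane $\{x\in\R^2:x_1=2x_2\}$. Note also that your claim is not an auxiliary lemma but essentially equivalent to the conclusion: since the columns are iid, $\nu(H)>0$ for a proper subspace $H$ forces all $n$ columns into $H$ with probability $\nu(H)^n>0$, so the rank drops below $N$ with positive probability. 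Hence the skipped "translation step" carries the entire content of the lemma, and under the literal reading of the hypothesis it cannot be carried out (in the example above the interpolation property itself fails on the event, of probability $2^{-n}$, that all neurons are of the second kind).

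The paper argues differently and in a way that matches the form of the hypothesis: it works with the rows, showing by induction that $\psi_{(b,v)}(\xtr_1),\ldots,\psi_{(b,v)}(\xtr_N)$ are almost surely linearly independent, applying the assumption $\PP_\#(\psi_{(b,v)}(\xtr_j))[L_{j-1}]=0$ to the span $L_{j-1}$ of the previously treated feature vectors (a subspace of dimension $j-1<n$), and then solving the terminal linear system. Your column route would become correct if you replaced the lemma's hypothesis by the sufficient conditions of \Cref{rem:AlmostSureInterpolation} (e.g.\ $\PP_\#(b,v)\ll\lambda^{d+1}$ and a real-analytic sigmoid such as the logistic function), since then for every $a\neq 0$ and distinct $\xtr_i$ the set $\{(b,v):\sum_i a_i\sigma(b+\langle v,\xtr_i\rangle)=0\}$ is a Lebesgue-null zero set of a non-trivial analytic function; but as written, deriving $\nu(a^\perp)=0$ from the stated row-wise codimension condition is a gap that cannot be closed.
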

	\proofInSec{le:almost_sure_interpolation}{sec:Proof:expressiveness}
	\begin{remark}\label{rem:AlmostSureInterpolation}
		In \Cref{le:almost_sure_interpolation} we required random features of the latent space $\psi_{(b,v)}(x_i)$, $i=1,\ldots,N$ to follow a distribution on $\R^n$ that puts zero mass on sets of dimension $N-1$. A setting which is rather usual in applications and for which this condition is satisfied would for instance consist in taking $\PP_\#(b,v)\ll \lambda^{d+1}$ and $\sigma:\R\to(0,1)$, $\sigma(x)=\exp(x)/(1+\exp(x))$.\footnote{For \ReLU\ activation functions almost sure interpolation is often not the case with finite $n<\infty$, but the probability of perfect interpolation converges to one when the number of neurons $n\to\infty$ tends to infinity.}
	\end{remark}
	\renewcommand{\meinPrefix}{}
	By \Cref{le:almost_sure_interpolation} and \Cref{cor:universal_in_prob}, the function class of \RSN s is expressive enough to qualify as a suitable architecture within the framework of supervised learning. At the same time, these results raise the question~\ref{item:Generalization}, if \RSN s generalize badly to unseen data, because of over-parametrization and over-fitting (see \Cref{paradox} \cref{itm:pardoxModel}.). Our main \Cref{thm:ridgeToSpline,thm:GDridge} are dealing with question~\ref{item:Generalization} by providing
	a certain understanding of the implicit regularization effects that occur when training a specific kind of \RSN: As we will show in the sequel, training the last layer of a \textbf{w}ide \unimportant{(i.e. $n\to\infty$)}, \RReLU-activated \textbf{\RSN} (\wRRSN) using gradient descent \unimportant{initialized close to zero} corresponds to solving a \SplReg.
	The main assumptions we require to hold are made precise in \Cref{as:mainAssumptions} below.
	\begin{assumption}\label{as:mainAssumptions} Using the notation from \Cref{def:RSNN}:	
		\begin{enumerate}[a)]
			\item\label{itm:as:ReLU} The activation function $\sigma(\cdot)=\relu$ is \ReLU.\footnote{In future work we want to derive other \Pfunc-functionals for other activation functions instead of the \textbf{re}ctified \textbf{l}inear \textbf{u}nits (\ReLU)}
			\item\label{item:densityExists} The distribution of the quotient~$\xi_k:=\frac{-b_k}{v_k}$ has a probability density function~$g_\xi$ \unimportant{with respect to the Lebesgue measure}.\footnote{\Cref{as:mainAssumptions}\ref{item:densityExists} holds for any distribution typically used in practice. Moreover, it implies that $\PP[v_k=0]=0 \quad\forall k \in \fromto{n}$. Note that \Cref{as:mainAssumptions}\ref{item:densityExists} is required in order to exclude certain degenerate cases of \RSN s such as those with constant weights and biases $w_k,b_k, k=1,\ldots,n$, and could in fact be weakened.}
			\item The input dimension~$d=1$.\footnote{In part~II~\cite{implReg2} we are going to generalize the result to arbitrary input dimension~$d\in\N$.}
		\end{enumerate}
		Under these assumptions, \cref{eq:RSN} simplifies to
		\begin{equation}\label{eq:RSN1d}
		\RNw(x)=\sum_{k=1}^{n}w_k\,\relu[{b_k}+v_{k} x] \quad \faxg\mydot
		\end{equation}
	\end{assumption}
	
	We henceforth require \Cref{as:mainAssumptions} to be in place. For later uses, we further introduce the notions of kink positions \veryunimportant{corresponding to individual neurons }of a one-dimensional \RSN\ with \ReLU\ activation and their density function.   
	
	\begin{definition}[kink positions~$\xi$]\label{def:kinkPos}
		The \textit{kink positions}~$\xi_k:=\frac{-b_k}{v_k}$ are defined using the notation of \Cref{def:RSNN} under the \Cref{as:mainAssumptions}.
	\end{definition}
	
	\begin{definition}[kink position density~\gxi]\label{def:kinkPosDens}
		The \textit{probability density function~$\gxi:\R\to\Rpz$ of the kink position}~$\xi_k:=\frac{-b_k}{v_k}$ is defined in the setting of \Cref{def:kinkPos}.
	\end{definition}

	\section{Main Theorems}\label{sec:theorem}

	We now proceed to show that a standard gradient descent method applied to optimize the (trainable) parameters~$w$ of an \wRRSNlongAndShort~\RN, implicitly minimizes the second derivative of the solution function~\unimportant{\RNw[\wt]}. That is, in the many particle (i.e. neurons) limit \unimportant{($n\to\infty$)} and as training time~$T\to\infty$ tends to infinity, the solution found by the gradient descent algorithm~\unimportant{\RNw[\wt]} converges to a \unimportant{slightly adapted} smooth spline interpolation~$\unimportant{\fLpm[0+]}\approx\fz$, if initialized \unimportant{$\wt[0]\approx 0$} close to zero.

	Our result follows by two separate observations. First, note that training a wide \RSN\ in essence reduces to solving a (random) kernelized linear regression in high dimensions \unimportant{(over-parameterized)}. We obtain in \Cref{thm:GDridge} that training an \RSN\ up to infinity \unimportant{(initialized at zero $\wt[0]=0$)} leads to the same solution as performing ridge regression \unimportant{(\Cref{def:ridgeNet})} with diminishing regularization to tune the parameters of the \RSN's terminal layer. Note, that the results in \Cref{sec:GradientToRidge} hold for a general input dimension $d\in\N$ and any fixed number of neurons in the hidden layer $n\in\N$.
	
	Second, in \Cref{sec:RidgeToSpline}, we relate the \RSN~\unimportant{\RNR} with optimal terminal-layer parameters~\unimportant{\wR} chosen according to a ridge regression \unimportant{(\Cref{def:ridgeNet})} to a smoothing spline~\fl\ \unimportant{(with certain regularization parameters $\lw:=\lambda n2g(0)$ and $\lambda\in\Rp$ respectively)}. More precisely, we show in \Cref{thm:ridgeToSpline} that as the number of hidden nodes~$n$ \unimportant{(i.e. the dimension of the hidden layer)} tends to infinity the \ridgeRSN~\unimportant{\RNR} converges to a slightly adapted smoothing spline~\unimportant{\fLpm} in probability with respect to a certain Sobolev norm. Recall that, by \Cref{as:mainAssumptions}, we prove this correspondence for \wRRSN s with one-dimensional input.

	\begin{remark}
		The implicit regularization effects we characterize within this paper are of an asymptotic nature. For applications, however, it is interesting to note that, even for finitely many hidden nodes and finite training time. In future work, we will give explicit bounds on the distance between the solution obtained by gradient descent and a certain smoothing spline (see also \Cref{sec:EarlyStopping} for further details). %
	\end{remark}

	In the following \Cref{sec:RidgeToSpline,sec:GradientToRidge} we discuss both observations separately, before combining them to formulate our main conclusion in \Cref{sec:conclusion}.  We start by introducing the notions of \ridgeRSN\ and minimum norm network.

	\begin{definition}[\ridgeRSN]\label{def:ridgeNet}
		Let $\RNwo$ be a randomized shallow network as introduced in \Cref{def:RSNN}, $\Ltr:L^{\infty}\to\Rpz$ a given loss functional and $\lw>0$. The \ridgeRSN\ is defined as
		\begin{equation}
		\RNRo := \RN_{\wRo,\om} \quad\faog\mycomma
		\end{equation}
		with $\wRo$ such that\footnote{If $\Ltr\ge 0$ is convex, $\wRo$ is uniquely defined by \meqref{eq:wR}, see \Cref{le:existenceuniquenessRidgeRSN}.} 
		\begin{equation}\label{eq:wR}
		\wRo \in \argmin_{w\in\R^n}\underbrace{
		\Ltr\left( \RNwo\right)+\lw\|w\|_2^2}_{\Fnb{\RNwo}} \quad\faog\mydot
		\end{equation}
		
	\end{definition}
	The ridge regularization is also known as \enquote{weight decay}, \enquote{ridge penalization}, \enquote{$L^2$ (parameter) regularization} or \enquote{Tikhonov regularization} (or \enquote{ridge regression}, \enquote{$\ell_2$ penalty}, \dots)\cite[section 7.1.1 on p. 227]{Goodfellow-et-al-DeepLearning-2016}.

	\begin{definition}[minimum norm \RSN]\label{def:minNormSol}
		Using the notation from \Cref{def:ridgeNet},
		the \emph{minimum norm\footnote{\label{footnote:existenceminimumnorm}Upon all global optima~$w^*\omb$ of the training loss~\Ltr, the minimum norm \RSN~$\RNwo[\wdago]$ has unique weights \wdago\ with minimal norm in the case of convex $\Ltr$ which fullfills \Cref{as:generalloss} and if an optimzer of $\Ltr$ exists. E.g., in the case of standard square loss~\meqref{eq:Ltr_squaredloss} in the over-parameterized setting ($n\gg N$) there are infinitely many global optima ~$\RNwo[w^*\omb]$ with arbitrary large norm~$\twonorm[w^*\omb]$, but \wdago\ is always unique. If the number of hidden neurons $n$ is large enough and the $\xtr_i$ are distinct \unimportant{(see \Cref{le:almost_sure_interpolation})}, \wdag\ could be equivalently defined as
		\[\wdago:\in\argmin_{w\in\R^n,\allIndi{N}{i}:\  \RNwo(\xtr_i)=\ytr_i}\twonorm[w]\quad\faog.
		\]
		} \RSN} is then defined as $\RNRz:=\RNw[\wdag]$ with weights%
		\begin{equation}\label{eq:minNormSol}
		\wdago:=\lim_{\lw\to 0+}\wRo \quad\faog\mydot
		\end{equation}
	\end{definition}

	\subsection{\texorpdfstring{\RIDGERSN}{Ridge Regularized RSN} \texorpdfstring{$\to$}{to} Spline Regularization \texorpdfstring{\color{hellgrau}($d=1,\lambda\in\Rp$)}{}}\label{sec:RidgeToSpline}%
	Throughout this section we rigorously derive the correspondence between the \regSplf\ respectively the \ridgeRSNRN\ with penalty parameters $\lambda>0$ and $\lw>0$. For giving a detailed description of the convergence behavior, we introduce an adapted version of the regression spline, for which we consider a weighted version of the spline penalization \unimportant{restricted to the support of the weighting function and introduce certain \enquote{boundary conditions}}. Depending on the distribution of the random weights~$w_k$ and biases~$w_b$, the \ridgeRSNRN\ will converge to such a (slightly) adapted version~\flpm\ of the classical \regSplf. 

	\begin{remark}
	For constant $g\equiv g(0)$, the following \Cref{def:adaptedSplineReg} recovers the original spline regression.\footnote{This statement holds in the limit $\frac{g}{g(0)}\to 1$ (see \Cref{prop:gTo1} in \Cref{subsec:SimilaritWithoutSkipConnections}). Formally \cref{eq:adaptedSplineReg} in \Cref{def:adaptedSplineReg} would not have a classical minimizer, if $g$ were constant (see \cref{footnote:first:as:truncated}), but one could reformulate the definition of $\Pgpmm$ in \Cref{def:adaptedSplineReg} by replacing the minimum by an infimum to extend \Cref{def:adaptedSplineReg} to arbitrary weighting functions $g$ that do not have finite second momentum or that even have infinite integral like constant $g\equiv g(0)\neq 0$. For typical choices of distribution for the first-layer weights $v_k$ and biases $b_k$, the corresponding weighting function~$g$ satisfies the finite second moment condition of \cref{foot:gFiniteSecondMoment} in \Cref{le:existenceuniquenessaregspl}.} As we will show in the sequel, the distribution chosen for the \nameref{def:kinkPos} of the \RNR\ %
	will determine the weighting function of the corresponding \flpm .
	Thus, $g$ reflects how the choice of randomness in the hidden layer affects the inductive bias.
	\end{remark}

	\begin{definition}[adapted spline regression]\label{def:adaptedSplineReg}
		Let $\Ltr:L^{\infty}\to\Rpz$ be a loss functional
		and $\lambda \in \Rp$.
		Then for a given function $g:\R\to\Rpz$ the \textit{\aregSpl}~$\flpm$ is defined%
		\footnote{\label{footnote:uniqueAdaRegSpline}The \aregSplf\ exists for $L$ satisfying \Cref{as:generalloss}\ref{item:continuousL}, if~$g$ is compactly supported and continuous on $\supp(g)$
		\unimportant{and $g(0)\neq 0$}. It is uniquely defined in case $L$ is convex (cp. \Cref{le:existenceuniquenessaregspl}).}\textsuperscript{,\,}\footnote{Note that $f_+$ and $f_-$ do \emph{not} denote the positive and negative part of $f$. The tuple of functions $\left(f_+,f_-\right)$ can be any arbitrary element of $\T$.%
  }
		as
		\begin{equation}\label{eq:adaptedSplineReg}
		\flpm \in \argmin_{f\in \WT(\R)}\underbrace{ \Ltrb{f}+\lambda \Pgpmm(f)  }_{=:\Flpmmb{f}},    
		\end{equation}
		with\footnote{Within this paper, we use the notation $\min \emptyset=\infty$.}
		\hypertarget{eq:Pgpmm}{\begin{equation*}
		\Pgpmm(f):=  2g(0)\underset{\underset{f=f_++f_-}{(f_+, f_-)\in\T}}{\min} \left(
		\int_{\supp (g)} \frac{\left( {f_+}^{''}(x) \right)^2}{g(x)} dx
		+\int_{\supp (g)} \frac{\left( {f_-}^{''}(x) \right)^2}{g(x)} dx
		\right),
		\end{equation*}}
		and
		\begin{align*}
		\T:=\bigg\{(f_+,f_-)\in \WT(\R)\times \WT(\R)\bigg|& \supp (f_+'')\subseteq \supp (g), \supp (f_-'')\subseteq \supp (g),\\
		&\lim_{x\to -\infty} f_+(x)=0, \lim_{x\to -\infty} f_+'(x)=0,\\
		&\lim_{x\to +\infty} f_-(x)=0, \lim_{x\to +\infty} f_-'(x)=0\bigg\}.
		\end{align*}
	\end{definition}
	
	\begin{remark}\label{rem:compactSupp}
		If for the weighting function $g$ it holds that $\supp(g)$ is compact (cp. \Cref{as:truncatedg}\ref{item:truncatedg}), we define
		\begin{equation}
		\Cgl:= \min (\supp(g)) \quad \text{and}\quad \Cgu:=\max(\supp(g)).
		\end{equation}
		Furthermore, in that case, the set $\T$ can be rewritten: From $\supp (f_+'')\subseteq \supp (g)$ it follows that $f_+'\in\C^1(\R)$ is constant on $(-\infty, \Cgl]$. With $\lim_{x\to -\infty} f_+'(x)=0$ we obtain that $f_+'(x)=0$ $\forall x\le\Cgl$. By the same argument we obtain $f_+(x)=0$ $\forall x\le\Cgl$.
		Moreover, we have that $\exists \,\cp\in\R: f_+'(x)\equiv \cp$ on $[\Cgu, \infty)$. Analogous derivations lead to $f_-'(x)\equiv \cm$ $\forall x\le\Cgl$  with $\cm\in\R$ and $f_-(x)=f_-'(x)=0$ on $[\Cgu, \infty)$. Hence, altogether, we have

		\begin{align*}
		\T=\bigg\{(f_+,f_-)\in \WT(\R)\times \WT(\R)\bigg|
		&\supp (f_+'')\subseteq \supp (g), \supp (f_-'')\subseteq \supp (g), &\\
		&\, \forall x\le\Cgl: f_+(x)=0=f_+'(x) ,  &\\
		&\, \forall x\ge\Cgu: f_-(x)=0=f_-'(x)  &\bigg\}.
		\end{align*}
		If we assume $\supp(g)=[\Cgl,\Cgu]$ we get:
		\begin{align*}
		\T=\bigg\{(f_+,f_-)\in \WT(\R)\times \WT(\R)\bigg|
		&\exists \cm,\cp\in\R:\\
		&\, \forall x\le\Cgl: \left( \vphantom{\Cgl} f_+(x)=0=f_+'(x) \ \text{ and } %
		\ f_-'(x)=\cm\right) ,&\\
		&\, \forall x\ge\Cgu: \left( \vphantom{\Cgl} f_-(x)=0=f_-'(x) \ \text{ and } %
		\ f_+'(x)=\cp\right)  &\bigg\}.
		\end{align*}
		
	\end{remark}
 For more intuition on the \aregSpl{} from \Cref{def:adaptedSplineReg} see \Cref{sec:IntuitionAdaptedSpline}. In \Cref{sec:IntuitionAdaptedSpline} we explain the (subtle) difference between the \aregSplf{} and the classical \regSplf{}.
 
	Building on \Cref{def:adaptedSplineReg}, we define an adapted version of the smooth spline interpolation.
	\begin{definition}[adapted spline interpolation]\label{def:adaptedSplineInterpolation}
		For \flpm\ as in \Cref{def:adaptedSplineReg}, the \textit{adapted spline interpolation}~$\fzpm: \R\to\R$ is defined\footnote{\label{footnote:uniquefzpm}Analogous to \cref{footnote:existenceminimumnorm} the spline interpolation~\fzpm\ is uniquely defined for convex $L$ satisfying \Cref{as:generalloss} if an optimizer of $L$ exists, and if~$g$ is compactly supported and continuous on $\supp(g)$
		\unimportant{and $g(0)\neq 0$} (cp. \Cref{le:existenceuniquenessaregspl}).}
		as
		\begin{equation}
		\fzpm :=  \lim_{\lambda\to 0+} \flpm
		.\end{equation}
	\end{definition}
	
	Before stating the core result of this paper's analyses in \Cref{thm:ridgeToSpline}, we like to discuss further assumptions we make therein. These requirements are technicalities that facilitate the \hyperlink{proof:thm:ridgeToSpline}{proof} of \Cref{thm:ridgeToSpline} and could be weakened (see \crefrange{footnote:first:as:truncated}{footnote:last:as:truncated}).
	\begin{assumption}[kink position density]\label{as:truncatedg} Using the notation from \Cref{def:RSNN,def:kinkPosDens} the following assumptions extend \Cref{as:mainAssumptions}:
		\begin{enumerate}[a)]
			\item\label{item:truncatedg} The probability density function~$\gxi$ of the kinks~$\xi_k$ has compact support~$\supp(\gxi)$.
			\footnote{\label{footnote:first:as:truncated}We believe that \Cref{as:truncatedg}\ref{item:truncatedg} can be weakened quite extensively. %
   This assumption facilitates our proofs and it assures that a minimum of~\meqref{eq:adaptedSplineRegTuple} exists. \Cref{subsec:SimilaritWithoutSkipConnections} suggests that the weaker assumption $\int_{\R}g(x)x^2dx<\infty$ should be sufficient to obtain the existence of a minimum. If %
   $\int_{\R}g(x)x^2dx=\infty$, it could happen that \meqref{eq:adaptedSplineRegTuple} does not have a classical minimum (e.g. $\PP[v_k=-1]=\frac{1}{2}=\PP[v_k=1]$ and $b_k\sim \text{Cauchy}$). As a remedy, one could define a weaker concept of minimum being the limit of minimizing sequences which converge to a unique function on every compact set. This also corresponds to the unique point-wise limit of minimizing sequences, which is not a classical minimum, because it does not satisfy all the boundary conditions~$\lim_{x\to -\infty} f_+(x)=0 =\lim_{x\to +\infty} f_-(x)$ anymore. For of this weaker minimum concept, \Cref{thm:ridgeToSpline} would need to be reformulated at least slightly, in case \Cref{as:truncatedg}\ref{item:truncatedg} were entirely skipped. This weaker minimum concept can also be seen as the limit of \aregSpl s~\flpm\ for truncated $g$ as the range of the truncation tends to~$(-\infty,\infty)$.%
   }
			\item\label{item:densityIsSmooth} The density~$\left.\gxi\right|_{\supp (\gxi)}$ is uniformly continuous on $\supp (\gxi)$.%
			\footnote{\label{footnote:densityIsSmooth}One could think of replacing \Cref{as:truncatedg}\ref{item:densityIsSmooth} by the weaker assumption that $\gxi$ is (improper) Riemann-integrable, however, almost all distributions which are typically used in practice satisfy \Cref{as:truncatedg}\ref{item:densityIsSmooth}.}
			\item\label{item:reziprokdensityIsSmooth} The reciprocal density~$\left.\frac{1}{\gxi}\right|_{\supp (\gxi)}$ is uniformly continuous on $\supp (\gxi)$.%
			\footnote{\Cref{as:truncatedg}\ref{item:reziprokdensityIsSmooth} implies that $\min_{x\in\supp (\gxi)}\gxi >0$. Similarly to \cref{footnote:densityIsSmooth}, this assumption might be weakened in a way allowing $\gxi$ to have finitely many jumps and  $\min_{x\in\supp (\gxi)}\gxi$ to be zero.}
			\item\label{item:vkDistrSmooth} The conditioned distribution~$\mathcal{L}(v_k|\xi_k=x)$ of $v_k$ is uniformly continuous in $x$ on $\supp(\gxi)$.%
			\footnote{\label{footnote:last:as:truncated}Similarly to \cref{footnote:densityIsSmooth}, \Cref{as:truncatedg}\ref{item:vkDistrSmooth} might be attenuated.}
			\item\label{item:vk:finiteSecondMoment}$\E[v_k^2]<\infty$.\footnote{\label{footnote:finiteSecondConditionalMomentOfvk}\Cref{as:truncatedg}\ref{item:vk:finiteSecondMoment} always holds in typical scenarios. \Cref{as:truncatedg}\ref{item:vk:finiteSecondMoment} together with \Cref{as:truncatedg}\ref{item:truncatedg} and \ref{item:vkDistrSmooth} implies that $\Eco{v_k^2}{\xi_k=x}$ is bounded on $\supp(\gxi)$.}
		\end{enumerate}
	\end{assumption}
	The following technical \Cref{as:easyReadable} makes the result of \Cref{thm:ridgeToSpline} more readable by referring to the easier \Cref{def:adaptedSplineReg}. Without \Cref{as:easyReadable}, the \Cref{cor:ridgeToSplineasym} would still hold, which is more general than \Cref{thm:ridgeToSpline}, but uses the heavier notation of \Cref{def:asymadaptedSplineReg}.
	\begin{assumption}[symmetry]\label{as:easyReadable} Using the notation from \Cref{def:RSNN,def:kinkPosDens} the following assumptions extend \Cref{as:mainAssumptions}:
		\begin{enumerate}[a)]
			\item\label{item:gVonNull} $\gxi(0)\neq 0$.%
			\footnote{\Cref{as:easyReadable}\ref{item:gVonNull} has to be satisfied due to the way \Cref{def:adaptedSplineReg} and \Cref{thm:ridgeToSpline} are formulated, although the theory could be easily reformulated (see for instance \Cref{cor:ridgeToSplineasym}) if \Cref{as:easyReadable}\ref{item:gVonNull} were not satisfied. The theorems presented would hold as well if $g(0)$ were replaced by a fixed value~$g(x_{\text{mid}})$ or by e.g. $\frac{1}{2}\int_{-1}^1 g(x)dx$, if this replacement is done consistently both for $\lw$ and $\Pgpmm$. However, the results are more easily interpreted if $x_{\text{mid}}$ is located somewhere \enquote{in the middle} of the training data. \Cref{thm:ridgeToSpline} would even hold true if $g(0)$ is replaced by $1$ (see \Cref{cor:ridgeToSplineasym} and \Cref{def:asymadaptedSplineReg}).}
			\item\label{item:symDistributions} The distributions of the random weights and biases $v_k$ respectively $b_k$ are symmetric w.r.t the sign, i.e.
			\begin{enumerate}[i)]
				\item $\PP[v_k\in E]=\PP[v_k\in -E] \quad\forall E\in\B$ and
				\item $\PP[b_k\in E]=\PP[b_k\in -E] \quad\forall E\in\B$.
			\end{enumerate}
		\end{enumerate}
	\end{assumption}
	\begin{assumption}[loss]\label{as:generalloss}
There exist $p\in[1,\infty)$ and a finite Borel-measure $\nu=\nuc+\nua$, where $\nuc$ is absolutely continuous w.r.t.\ the Lebesgue measure and $\nua$ is supported on a finite (possibly empty) subset $\{x_1,\ldots,x_N\}\subset\R$ such that the loss functional $\Ltr:L^\infty_\text{loc}\to\Rpz$ is
\begin{enumerate}[a)]
\item\label{item:continuousL} continuous w.r.t.\ $\sobnormop[\cdot]$ for some compact interval $K\subset\R$ and
\item\label{item:lipschitzL} 
Lipschitz\footnote{We think uniformly continuous should be sufficient, but would make the proof more cumbersome.} continuous w.r.t.\ $\sobnormop[\cdot]$ on $\{f: \Ltr(f)<\Ltr(0)+\epsilon\}$ for some $\epsilon>0$.
\end{enumerate}
	\end{assumption}
	
	\begin{example}\label{ex:lossfunctionalsum}
	\Cref{as:generalloss} allows for loss functionals of the form \begin{equation}\label{eq:Ltri}
	    \Ltr(f):=\sum_{i=1}^N \ltrib{ f(\xtr_i)}, 
	\end{equation}
	with convex and continuously differentiable loss functions $l_i:\R\to\Rpz$, $i=1,\ldots,N$ as we prove in \Cref{le:ex:lossfunctionalsum}. As a special case, this includes functionals of the kind
	\begin{equation}\label{eq:ltr}
 	    \Ltr(f):=\sum_{i=1}^N{\ltr\left( f(\xtr_i),\ytr_i\right), }
 	\end{equation}
 	used in classical supervised learning (e.g. MSE \meqref{eq:Ltr_squaredloss}, MAE or cross entropy loss, \dots).
 	In the literature, minimizing such a training loss is motivated sometimes as empirical cost minimization (or empirical risk minimization) if the cost function $C\propto\ltr$%
  . Alternatively it can be motivated as maximum %
 	likelihood method if the negative logarithm of the density of $Y|f(X)$ is proportional to $\ltr(f(\xtr_i),\cdot)$ (e.g., Gaussian noise in the case of squared loss).
 	However, \Cref{as:generalloss} also allows for much more general possibly non-convex loss functionals that can depend on the first derivative $f^{'}$ of $f$ and loss functionals that include integrals over $f$ or $f^{'}$ instead of finite sums over finitely many training points (e.g. the loss in \cite{NOMUICML}, if a Lipschitz-continuous version is used as in \cite{weissteiner2023BOCA}).
	\end{example}
	\begin{theorem}[ridge weight penalty corresponds to adapted spline]\label{thm:ridgeToSpline}
		Using the notation from \Cref{def:RSNN,def:kinkPosDens,def:ridgeNet,def:adaptedSplineReg} and let%
		\footnote{Since all $v_k$ are identically distributed and all $\xi_k$ are identically distributed as well, the conditioned expectation \Eco{v_k^2}{\xi_k=x} does not depend on the choice of $k\in\fromto{n}$.}
		$\fax: g(x):=\gxi(x) \Eco{v_k^2}{\xi_k=x}\frac{1}{2}$ and $\lw:=\lambda n 2g(0)$, then, under the \Cref{as:mainAssumptions,as:truncatedg,as:easyReadable,as:generalloss}, the following statement holds for every compact interval $K\subset\R$: for every $\left(\RNR\right)_{n\in\N}$ as in \Cref{def:ridgeNet}
		\begin{equation}\label{eq:ridgeToSplineNotUnique}
	\plim d_{\Wkp[K]{1}{\infty}}\left(\RNR,
	\argmin_{f\in\WT(\R)}\Flpmm(f)\right) =0,
	\end{equation}
	i.e., $\forall\left(\RNR\right)_{n\in\N}:\footnote{According to \Cref{def:ridgeNet}, \RNR does not necessarily have to be unique. By writing \enquote{$\forall\left(\RNR\right)_{n\in\N}$} we emphasize that the result holds for every possibly not unique sequence of optimizers, i.e., every sequence $\left(\RNR\right)_{n\in\N}$ with weights $\wR$ that satisfy $\faog:\left(\wRo\right)_{n\in\N}\in\prod_{n\in\N}\argmin\limits_{w\in\R^n}\Fn(\RNwo)$.}
 \forall\epsilon>0:\forall\rho\in(0,1):\exists n_0\in\N:\forall n>n_0$:
	\begin{equation}
	    \PP\left[\exists\flpm\in\argmin_{f\in\WT(\R)}\Flpmm(f):\sobnorm[{\RNR-\flpm }]<\epsilon\right]>\rho.
	\end{equation}
	\end{theorem}
	\proofInSec{thm:ridgeToSpline}{sec:proof:RidgeToSpline}
	\begin{corollary}[convex case of \Cref{thm:ridgeToSpline}]\label{cor:ridgeToSpline}
        Under the assumptions of \Cref{thm:ridgeToSpline}, if additionally $\Ltr$ is convex\footnote{\Cref{eq:uniqueRidgeToSpline} also holds also true for every $\RNR\in\argmin \Fn$ if $\Ltr$ is not convex as long as $\argmin \Flpmm$ is unique.}, then, $\RNR$ and $\flpm$ are the unique minimizers of $\Fn$ and $\Flpmm$ respectively, and
		the following statement holds for every compact interval $K\subset\R$:
		\begin{equation}\label{eq:uniqueRidgeToSpline}
		\plim
		\sobnorm[ \RNR -\flpm ]=0 .\protect\footnotemark
		\end{equation}\footnotetext{Using the definition of the $\Plim$, equation~\meqref{eq:uniqueRidgeToSpline} reads as: $\fae :\forall \rho \in (0,1): \exists n_0\in\N :\forall n\geq n_0:
			\PP[
			{\sobnorm[ \RNR -\flpm ]}
			< \epsilon ] > \rho.$}
	\end{corollary}
	\unimportant{\begin{proof}
	This result follows directly from \Cref{le:existenceuniquenessRidgeRSN,le:existenceuniquenessaregspl} and \Cref{thm:ridgeToSpline}.
	\end{proof}}
	\begin{remark}[The norm $\sobnorm$]\label{rem:sobnormoi}
	    Throughout this paper, we consider
	    \begin{equation}
	        \sobnorm[f]:=\max\{\sup_{x\in K}|f(x)|,\sup_{x\in K}|f^{'}(x)| \},
	    \end{equation}
	    for every $f\in\mathcal{C}(\R)$ with piece-wise continuous derivative $f^{'}$, where we assume w.l.o.g.~that $f^{'}$ is left continuous (i.e.~$\relu[x]^{'}=\ind_{(0,\infty)}(x)$). However, by $\|\cdot\|_{\Wkp[K]{2}{2}}$ we denote the Sobolev norm using weak (second) derivatives.
	\end{remark}
		Without \Cref{as:easyReadable}, the $P$-functional becomes slightly more complicated as we formulate and discuss in \Cref{sec:AssymetricDistribution}, where we also formulate the analogous statement to \Cref{thm:ridgeToSpline} in \Cref{cor:ridgeToSplineasym}.

		\subsection{\texorpdfstring{\RSN}{RSN} and Gradient Descent \texorpdfstring{$\to$}{to} Implicit Ridge Regularization \texorpdfstring{\color{hellgrau}($d\in\N$)}{}}\label{sec:GradientToRidge}
	
	We now move on to derive the relation between the \RSN~\unimportant{\RNw[\wt]} whose terminal-layer parameters are optimized performing gradient descent initialized at zero \unimportant{$\wt[0]=0$} up to a certain time~$T$ on the one hand, and the \ridgeRSNRN\ with penalization parameter $\lw$ on the other. In particular, we show that in the limit of infinite training time the solution~\unimportant{\RNw[{\wt[\infty]}]} obtained from the GD method corresponds to the one resulting by taking the limit $\lw\to 0$ in the ridge problem (This solution is also referred to as \hyperref[def:minNormSol]{minimum norm solution}~\RNRz.). Note again, that this result is well known thanks to the work of i.a. \cite{bishop1995regularizationEarlyStopping,friedman2003gradient,PoggioGernalizationDeepNN2018arXiv180611379P,GidelImplicitDiscreteRegularizationDeepLinearNN2019arXiv190413262G,pmlr-v89-ali19a}. Within the present section, we collect the most important findings relating these two solutions within our setting.
	
	Moreover, we will argue that, if suitably transformed, the ridge path mapping $\lw$ to the optimal parameter corresponds to the GD path mapping training time to the corresponding parameter. Again, this equivalence has been discussed in the existing literature (e.g. \cite{bishop1995regularizationEarlyStopping,friedman2003gradient,PoggioGernalizationDeepNN2018arXiv180611379P,GidelImplicitDiscreteRegularizationDeepLinearNN2019arXiv190413262G,pmlr-v89-ali19a}). In these works, it is frequently claimed that the GD solution at time $T$ approximately coincides with the ridge solution for $\lw=1/T$.
	We intend to make this relation more precise below (cp. \cref{eq:lambdaWelleT}). Within future work we will further analyze the errors arising from that approximate relation (see also \Cref{sec:conclusion} \Cref{itm:arbitraryT}).

	Throughout this section, we consider the setting of supervised learning with squared loss, i.e., we require \Cref{as:squaredloss} to hold true. We begin by defining the trained \RSN s~\unimportant{\RNw[\wt]} obtained by pursuing the gradient flow w.r.t.\ this choice of training loss starting in the origin~\unimportant{\wt[0]=0} in parameter space up to time $T$.
	
	\begin{assumption}[least squares loss]\label{as:squaredloss}
	Let $N\in\N$ be a finite number of arbitrary training data $\left( \xtr_i, \ytr_i\right)\in \R^d\times\R\, \allIndi{N}{i}$. The loss functional is given by 
	 \begin{equation}
	    \Ltrs(f):=\sum_{i=1}^N{\left( f(\xtr_i),\ytr_i\right)^2 }.
	\end{equation}
	\end{assumption}

	\begin{definition}[time-$T$ solution]\label{def:GDsolution}
		Let $\allIndi{N}{i}: (\xtr_i, \ytr_i)\in \R^{d+1}$ for some $N,d\in\mathbb{N}$ and $\RNw$ be a \RSNlongAndShort\ with $n\in\mathbb{N}$ hidden nodes. For any $\omega \in \Omega$ and $T>0$, the time-$T$ solution to the problem
		\begin{equation}\label{eq:GDproblem}
		\min_{w\in\R^n} \underbrace{\sum_{i=1}^N\left(\RNwo(\xtr_i)-\ytr_i\right)^2}_{\Ltrsb{\RNwo}}
		\end{equation}
		is defined as $\RNwo[\wto]$, with weights $\wto\in\R^n$ obtained by taking the gradient flow
		\begin{align*}\begin{split}
		d\wt[t]&=-\nabla_w \Ltrsb{\RNw[{\wt[t]}]}\, dt,\label{eq:GDflow}\\
		\wt[0]&=0,
		\end{split}\tag{GF}\end{align*}
		corresponding to \meqref{eq:GDproblem} up to time $T$.
	\end{definition}
	
	\begin{remark}\label{rem:discreteGD}
		In practice, the weights $\wt$ of the time-$T$ solution as introduced in \Cref{def:GDsolution} are approximated by taking 
		$\tau:=T/\gamma$ steps of size $\gamma>0$ according to the Euler discretization 
		\begin{align*}
		    \begin{split}
		\wth[t+\gamma]&=\wth[t]-\gamma\nabla_w \Ltrsb{\RNw[{\wth[t]}]},\label{eq:GDescent}\\
		\wth[0]&=0,
		\end{split}\tag{GD}
		\end{align*}
		corresponding to \meqref{eq:GDflow}. 
	\end{remark}
	
	Within our setting, which in essence corresponds to a kernelized linear regression with random features, the time-$T$ solution takes an explicit form, as shown in \Cref{le:GDsolution}.
	\begin{lemma}\label{le:GDsolution}
		Let $\allIndi{N}{i}: (\xtr_i, \ytr_i)\in \R^{d+1}$ for some $N,d\in\mathbb{N}$ and for any $\omega\in\Omega$, let $\RNwo$ be a randomized shallow network with $n\ge N$ hidden nodes. Define further $X\omb\in\R^{N\times n}$ via 
		\begin{displaymath}
		X_{i,k}\omb:= \sigma\left(b_k\omb+\sum_{j=1}^d v_{k,j}\omb\xtr_{i,j}\right)\quad \allIndi{N}{i}\ \allIndi{n}{k}\ \faog,
		\end{displaymath}
		where $\xtr_{i,j}$ denotes the j\textsuperscript{th} component of $\xtr_i$. For any $T\ge0$, the weights $\wto$ corresponding to the time-$T$ solution $\RNwo[\wto]$ satisfy
		\begin{equation}\label{eq:GDsolution}
		\wto=-\exp\left({-2T\Xt\omb X\omb}\right)\wdago+\wdago,
		\end{equation}
		with weights $\wdago$ corresponding to the minimum norm network (see \Cref{def:minNormSol}).
	\end{lemma}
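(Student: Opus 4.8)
The plan is to exploit that, for each fixed $\omega\in\Omega$, the training loss $\Ltrb{\RNwo}$ is a quadratic function of the terminal-layer weights~$w$, so that the gradient flow \meqref{eq:GDflow} is an inhomogeneous \emph{linear} ODE that can be integrated explicitly. First I would fix $\omega\in\Omega$ and rewrite $\Ltrb{\RNwo}=\twonorm[X\omb w-\ytr]^2$ with $X\omb$ the matrix from the statement, so that $\nabla_w\Ltrb{\RNw[{\wt[t]}]}=2\Xt\omb(X\omb\wt[t]-\ytr)$ and the gradient flow \meqref{eq:GDflow} becomes $d\wt[t]=(-2\Xt\omb X\omb\,\wt[t]+2\Xt\omb\ytr)\,dt$ with $\wt[0]=0$. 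Existence and uniqueness of the solution are immediate, since the right-hand side is affine linear in~$w$.

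The key structural observation is that the whole flow stays inside the subspace $V:=\mathrm{range}(\Xt\omb)\subseteq\R^n$. Indeed $V=\mathrm{range}(\Xt\omb X\omb)=\ker(X\omb)^{\perp}$, the symmetric positive semidefinite matrix $\Xt\omb X\omb$ restricts to a \emph{positive definite} (hence invertible) operator $A$ on~$V$ and annihilates $\ker(X\omb)$, and $2\Xt\omb\ytr\in V$; since $\wt[0]=0\in V$ and the velocity field maps into~$V$, the trajectory $(\wt[t])_{t\ge 0}$ never leaves~$V$. On $V$ the ODE reads $d\wt[t]=-2A(\wt[t]-\bar w)\,dt$ with unique equilibrium $\bar w:=A^{-1}\Xt\omb\ytr\in V$, i.e.\ the unique solution in~$V$ of the normal equations $\Xt\omb X\omb w=\Xt\omb\ytr$. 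I would then identify $\bar w=\wdago$: for every $\lwnl>0$ the ridge minimizer $\wRo=(\Xt\omb X\omb+\lwnl I)^{-1}\Xt\omb\ytr$ lies in the closed subspace~$V$, hence so does its limit $\wdago=\lim_{\lwnl\to 0+}\wRo$ from \Cref{def:minNormSol}; and passing to the limit in $(\Xt\omb X\omb+\lwnl I)\wRo=\Xt\omb\ytr$ shows that $\wdago$ solves the normal equations, so by uniqueness in~$V$ it equals~$\bar w$.

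It then only remains to solve a homogeneous linear ODE: putting $z^t:=\wt[t]-\wdago$ gives $dz^t=-2Az^t\,dt$ with $z^0=-\wdago$, whence $z^t=-e^{-2tA}\wdago$ and $\wt[t]=\wdago-e^{-2tA}\wdago$. Because $V$ is invariant under $\Xt\omb X\omb$ and $\wdago\in V$, the matrix exponential $\exp(-2t\Xt\omb X\omb)$ acts on $\wdago$ exactly as $e^{-2tA}$, so $\wt[t]=-\exp(-2t\Xt\omb X\omb)\wdago+\wdago$; evaluating at $t=T$ yields \meqref{eq:GDsolution}. The only genuinely delicate point is this bookkeeping around the rank-deficiency of $\Xt\omb X\omb$, which is unavoidable as soon as $n>N$ (and can already happen at $n=N$ if the random features are linearly dependent): one must pass to $V=\mathrm{range}(\Xt\omb)$ both to invert $\Xt\omb X\omb$ and to identify its equilibrium with~$\wdago$, and one must note that the matrix exponential applied to a vector of~$V$ "sees" only the invertible part~$A$. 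Everything else is the textbook variation-of-constants formula together with the standard characterization of the minimum-norm least-squares solution via the normal equations.
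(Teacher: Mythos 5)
Your proposal is correct. It rests on the same two ingredients as the paper's proof---the gradient flow \meqref{eq:GDflow} is an affine linear ODE in $w$, and the minimum norm weights satisfy the normal equations $\Xt\omb X\omb\,\wdago=\Xt\omb y$---but you arrange them differently: the paper simply differentiates the candidate formula \meqref{eq:GDsolution}, checks it solves the ODE with $\wt[0]=0$ (using the normal equations for $\wdago$ implicitly in the cancellation step) and concludes by Picard--Lindel\"of uniqueness, whereas you \emph{derive} the formula by shifting to the equilibrium, $z^t:=\wt[t]-\wdago$, and solving the resulting homogeneous system. Your version also makes explicit a point the paper leaves implicit (it is justified there only later, via pseudoinverse/SVD facts in the proof of \Cref{rem:GDlimit}): that $\wdago$, defined in \Cref{def:minNormSol} as $\lim_{\lwnl\to0+}\wRo$, lies in $\mathrm{range}(\Xt\omb)$ and solves the normal equations, which you obtain cleanly by passing to the limit in $(\Xt\omb X\omb+\lwnl I)\wRo=\Xt\omb y$. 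One small remark: the excursion through $V=\mathrm{range}(\Xt\omb)$ and the invertible restriction $A$ is not actually needed for the final step---once you know $\Xt\omb X\omb\,\wdago=\Xt\omb y$, the difference $z^t$ satisfies $dz^t=-2\Xt\omb X\omb z^t\,dt$ on all of $\R^n$, so $z^t=\exp(-2t\Xt\omb X\omb)z^0$ directly, rank deficiency notwithstanding; your subspace bookkeeping is harmless and does correctly handle the degenerate directions, it is just slightly more machinery than required.
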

	\proofInSec{le:GDsolution}{sec:Proof:GDridge}
	
	With the above, the asymptotic behavior of $\wto$ is easily analyzed. As \Cref{rem:GDlimit} shows, the time-$T$ parameters $\wto$ converge to the minimum norm parameters $\wdago$ (see \Cref{def:minNormSol}).
	Consequently, the time-$T$ solution converges to the ridge penalized network when choosing the penalization accordingly, as is stated in \Cref{thm:GDridge}. 
	\begin{remark}[limiting solution of gradient descent]\label{rem:GDlimit}
		By \Cref{le:GDsolution}, the weights~$\wt[T]$ corresponding to the time-$T$ solution converge to the minimum norm solution~\wdag\ as time tends to infinity---i.e.\ taking the limit $T\to\infty$ in \meqref{eq:GDsolution}, we have $\tlim \wto=\wdago$ $\faog$.
	\end{remark}
	\proofInSec{rem:GDlimit}{sec:Proof:GDridge}
	
	\begin{theorem}\label{thm:GDridge}
		Let $\RNw[{\wt}]$ be the time-$T$ solution and consider for $\lw=\frac{1}{T}$ the corresponding ridge solution $\RNR[\frac{1}{T}]$ (cp. \Cref{def:GDsolution} and \Cref{def:ridgeNet} with $\Ltrs$ as in \Cref{as:squaredloss}). We then have that
		\begin{equation}\label{eq:thm:GDridge}
		\fao:\quad \tlim\sobnorm[{\RNRo[\frac{1}{T}]-\RNwo[\wt\omb]}]=0.
		\end{equation}
	\end{theorem}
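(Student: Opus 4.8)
The plan is to reduce the claim to a statement in the finite-dimensional last-layer weight space and then combine it with two asymptotic facts already established. Fix $\omega\in\Omega$. Since $w\mapsto\RNwo[w]$ is \emph{linear}, we have $\RNRo[\frac{1}{T}]-\RNwo[\wt\omb]=\RNwo[\wRl[\frac{1}{T}]\omb-\wto]$, where $\wRl[\frac{1}{T}]\omb$ are the ridge weights of \Cref{def:ridgeNet} for $\lw=\frac{1}{T}$ and $\wto$ the gradient-flow weights of \Cref{def:GDsolution}. Hence it suffices to prove: \emph{(i)} a Lipschitz-type estimate $\sobnorm[\RNwo[\delta w]]\le C(\omega)\,\twonorm[\delta w]$ for all $\delta w\in\R^n$, with a finite constant $C(\omega)=C(\omega,n,K)$; and \emph{(ii)} $\twonorm[\wRl[\frac{1}{T}]\omb-\wto]\to0$ as $T\to\infty$. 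Indeed, (i) and (ii) together give $\sobnorm[\RNRo[\frac{1}{T}]-\RNwo[\wt\omb]]\le C(\omega)\,\twonorm[\wRl[\frac{1}{T}]\omb-\wto]\to0$, which is the assertion for this $\omega$, and $\omega\in\Omega$ was arbitrary.

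For \emph{(i)}, write $\RNwo[\delta w](x)=\sum_{k=1}^{n}\delta w_k\,\relu[b_k\omb+v_k\omb x]$. On the compact set $K$ one has $\relu[b_k\omb+v_k\omb x]\le|b_k\omb|+|v_k\omb|\sup_{x\in K}|x|=:a_k(\omega)$, so Cauchy--Schwarz yields $\supnorm[\RNwo[\delta w]]\le\twonorm[\delta w]\big(\sum_{k}a_k(\omega)^2\big)^{1/2}$. Being a finite sum of shifted ReLUs, $\RNwo[\delta w]$ is piecewise affine, hence lies in $W^{1,\infty}$, and its a.e.\ derivative $\sum_k\delta w_k\,v_k\omb\,\ind_{\{b_k\omb+v_k\omb x>0\}}$ is bounded in modulus by $\twonorm[\delta w]\big(\sum_k v_k\omb^2\big)^{1/2}$. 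Adding the two bounds gives \emph{(i)} with $C(\omega)=\big(\sum_k a_k(\omega)^2\big)^{1/2}+\big(\sum_k v_k\omb^2\big)^{1/2}$, which is finite because $n<\infty$.

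For \emph{(ii)}, observe that $\lw=\frac{1}{T}\to0+$ as $T\to\infty$, so $\wRl[\frac{1}{T}]\omb\to\wdago$ directly from \Cref{def:minNormSol}, which \emph{defines} $\wdago$ as $\lim_{\lw\to0+}\wRo$; on the other side, $\wto\to\wdago$ by \Cref{rem:GDlimit}. The triangle inequality then gives $\twonorm[\wRl[\frac{1}{T}]\omb-\wto]\le\twonorm[\wRl[\frac{1}{T}]\omb-\wdago]+\twonorm[\wdago-\wto]\to0$, which establishes \emph{(ii)} and completes the proof.

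\emph{Main obstacle.} Strictly speaking there is no deep difficulty --- \Cref{thm:GDridge} is the ``early stopping $\approx$ vanishing ridge'' folklore specialised to our random-feature regression --- but two points deserve care. First, the two limits in \emph{(ii)} coincide only because $\Xt\omb\,y$ (with $y=(\ytr_i)_{i=1}^{N}$) lies in the range of $\Xt\omb X\omb$, equivalently because the minimum-norm least-squares solution $\wdago$ has no component in $\ker X\omb$; this is precisely the consistency between \Cref{le:GDsolution} and \Cref{def:minNormSol}, and it is where the ``minimum-norm'' qualifier enters. Second, $C(\omega)$ is genuinely $\omega$-dependent and may blow up as $n\to\infty$, so the bound is pointwise in $\omega$ (which is all \Cref{thm:GDridge} asserts) and gives no $n$-uniform control; obtaining such control, as well as a rate in $T$, requires the explicit route. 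For a rate, note \Cref{le:GDsolution} gives $\wto=\wdago-\exp(-2T\,\Xt\omb X\omb)\wdago$, while for the squared loss the ridge solution is $\wRl[\lw]\omb=(\Xt\omb X\omb+\lw I)^{-1}\Xt\omb X\omb\,\wdago$ (using the normal equations $\Xt\omb X\omb\,\wdago=\Xt\omb\,y$). Diagonalising the symmetric positive-semidefinite matrix $\Xt\omb X\omb=\sum_j\mu_j(\omega)P_j(\omega)$ and using $P_0(\omega)\wdago=0$, both vectors vanish on $\ker X\omb$ and on the eigenspace with $\mu_j>0$ carry the scalar factors $1-e^{-2T\mu_j}$ and $\tfrac{T\mu_j}{1+T\mu_j}$ respectively, whose difference is $\mathcal{O}\big(e^{-2T\mu_j}+(T\mu_j)^{-1}\big)$; summing the finitely many terms gives $\twonorm[\wRl[\frac{1}{T}]\omb-\wto]=\mathcal{O}\big((T\,\mu_{\min}(\omega))^{-1}\big)$, with $\mu_{\min}(\omega)$ the smallest positive eigenvalue of $\Xt\omb X\omb$.
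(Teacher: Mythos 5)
Your proof is correct and follows essentially the same route as the paper's: both arguments pass through the minimum-norm weights $\wdag$ (ridge weights converge to $\wdag$ by \Cref{def:minNormSol}, gradient-flow weights by \Cref{rem:GDlimit}), combine the two limits by the triangle inequality, and transfer the weight-space convergence to $\sobnorm[\cdot]$ via the continuity of the linear map $w\mapsto\RNwo$, which you merely make quantitative with an explicit $\omega$-dependent Lipschitz constant. The added eigenvalue-based rate is extra material beyond what the paper proves, but it does not change the substance of the argument.
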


	\proofInSec{thm:GDridge}{sec:Proof:GDridge}
		\begin{remark}[Relaxed requirements on loss function]\label{rem:generalizeLoss}
	    Without \Cref{as:squaredloss}, \Cref{thm:GDridge} can still be proven, if instead one requires that $\ltr(\cdot, \ytr_i):\R\to\Rpz$ has a unique minimum for every $i=1\ldots,N$, where $\Ltr$ and $\ltr$ are given as in \Cref{ex:lossfunctionalsum}. This will be proven in future work. In the case of strongly convex sufficiently smooth loss functions, this convergence has already been proven on parameter space in \cite[Theorem~1]{NEURIPS2018_6459257d} and can be lifted to function space analogously as in our proof of \Cref{thm:GDridge}.
	\end{remark}
	\subsubsection{Early Stopping}\label{sec:EarlyStopping}
	Moreover, we may use the representation \meqref{eq:GDsolution} to derive an approximate relation between the weights~$\wt[T]$ corresponding to the time-$T$ solution and those obtained by performing a ridge regression with penalization parameter $\lw$. The idea is to first analyze which singular value is trained most at a given time $T$ in an infinitesimal step along the solution path of $\wt$. In other words, we seek to find $s\ge 0$ that maximizes the gradient w.r.t.\ time of the singular values corresponding to the matrix exponential characterizing the time-$T$ solution, i.e. we solve
	\begin{displaymath}
	\underset{s\ge 0}{\argmax}\nabla_T\exp(-2Ts)=\underset{s\ge 0}{\argmax}-2s\exp(-2Ts).
	\end{displaymath}
	The unique solution is given by
	\begin{equation}\label{eq:sMostImportantSV}
	    s^*=\frac{1}{2T}.
	\end{equation} %
	In a second step, we compare the closed-form solution of the parameters resulting from a $\lw$-ridge regression to the time-$T$ solution, which we now consider to be characterized by $s^*(T)$. To that end, we remark that using the singular value decomposition of the data matrix $X\in\R^{N\times n}$, i.e. $X=U\Sigma V^\top$ with $$\Sigma=\left(\begin{matrix}\mathrm{diag}(\sqrt{s_1},\ldots,\sqrt{s_r}) & 0 \\
	0 & 0\end{matrix}\right)\in\R^{N\times n},$$
	these solutions may be written as
	\begin{equation}\label{eq:timeTSVD}
	\wt=-V\left(\begin{matrix}\mathrm{diag}\left(\frac{\exp(-2Ts_1)-1}{\sqrt{s_1}},\ldots,\frac{exp(-2Ts_r)-1}{\sqrt{s_r}}\right) & 0 \\
	0 & 0\end{matrix}\right)U^\top y,
	\end{equation}
	\begin{equation}\label{eq:rdigeSVD}
	w_{\lw}=V\left(\begin{matrix}\mathrm{diag}\left(\frac{\sqrt{s_1}}{s_1+\lw},\ldots,
	\frac{\sqrt{s_r}}{s_r+\lw}\right) & 0 \\
	0 & 0\end{matrix}\right)U^\top y.
	\end{equation}
	We then arrive at the ridge estimate approximating the time-$T$ solution by comparing \cref{eq:timeTSVD,eq:rdigeSVD} for the singular value $s^*$, i.e., the one that is most affected by the training at time-point $T$. Hence, by solving $\frac{\exp(-2Ts^*)-1}{\sqrt{s^*}}=\frac{\sqrt{s^*}}{s^*+\lw}$, we relate the time-$T$ solution to the ridge solution obtained using the penalization parameter
	\begin{equation}\label{eq:lambdaWelleT}
	    \lw(T) =\unimportant{\frac{s^* e^{-2s^*T}}{1-e^{-2s^*T}}} =\frac{1}{2T(e-1)}.
	\end{equation}
	Note that, by the above relation $\lw(T)$ still is of order $1/T$ and hence the asymptotic behavior that we characterize in \Cref{thm:GDridge} below, is sufficiently captured taking the relation $\lw(T)=1/T$ (i.e., \Cref{thm:GDridge} holds true for $\lw(T)=1/T$ as well as for $\lw(T)=\frac{1}{2T(e-1)}$).
	However, for comparing the early-stopped time-$T$ solution~$\RNw[\wt]$ to a \ridgeRSNRN\ and, as a consequence, to a certain \regSpl~$\fl\approx\flpm$, we make use of the precise relation \meqref{eq:lambdaWelleT}. See also \Cref{sec:conclusion} (and \Cref{sec:IntuitionAdaptedSpline}) for empirical results, that underline the quality of the fit using \meqref{eq:lambdaWelleT}.

\section{Conclusion and Extensions}\label{sec:conclusion}
	Combining the main \Cref{thm:ridgeToSpline,thm:GDridge} finally yields our main result: for a large number of training epochs~$\tau=T/\gamma$, the \wlargeRRSNlongAndShort\ (obtained from \meqref{eq:GDescent} under \Cref{as:mainAssumptions,as:easyReadable,as:truncatedg,as:squaredloss})
	is very close to the \hyperref[def:splineInterpolation]{spline interpolation}~\fz. Formally,
 \begin{equation}\label{eq:conclusion}
	{\mediumimportantStart\RN_{\wth[{T,\wth[0]}]}
	\overset{\wth[0]\to 0}{\approx}}\RN_{\wth[T]}
	\overset{\gamma\to0}{\approx}\RN_{\wt[T]}
	\underset{\scriptscriptstyle\text{\tiny\Cref{thm:GDridge}}}{\overset{T\to\infty}{\approx}}\RNRz \!\!
	\underset{\scriptscriptstyle\text{\tiny\Cref{thm:ridgeToSpline}}}{\overset{\underset{n\to\infty}{\PP}}{\approx}}\fzpm
	\underset{\scriptscriptstyle\text{\tiny\Cref{prop:gTo1}}}{\overset{\frac{g}{g(0)}\to\ 1}{\approx}}\fz
	\end{equation}
 Here, the notation $\overset{\to}{\approx}$ corresponds to a mathematically proven exact limit in the very strong\footnote{Convergence in \sobnorm[\cdot] implies uniform convergence on $K$ or convergence in $\Wkp[K]{1}{p}$. Even stronger Sobolev convergenve, such as convergence w.r.t. $\WkpShort{2}{p}$, cannot be shown since $\RN_w\notin \Wkp[K]{2}{p}$.} Sobolev norm~\sobnorm[\cdot] {\color{hellgrau}(in probability in the case of $\scriptstyle \overset{\underset{n\to\infty}{\PP}}{\approx}$)}.\newline
	In applications, however, both the number of hidden nodes and training steps are finite. Hence, it is particularly interesting to note that in typical settings for \emph{arbitrary} training time $T\in\Rp$ (including early stopping, i.e. $T\ll\infty$) the same relation approximately holds true. In other words, by taking $T\overset{\text{\meqref{eq:lambdaWelleT}}}{=}\frac{1}{2\lw (e-1)}$ and $\lw\overset{\text{Th.\! \ref{thm:ridgeToSpline}}}{=}\lambda n 2g(0)$, we have

\begin{equation}\label{eq:conclusionApprox}
	{\mediumimportantStart
	\RN_{\wth[{T,\wth[0]}]}\overset{\wth[0]\approx 0}{\underset{\ref{itm:w0Klein}}\approx}\RN_{\wth[T]}\overset{\gamma\approx 0}{\underset{\ref{itm:gammaKlein}}\approx}}
	\RN_{\wt[T]}\overset{}{\underset{\ref{itm:arbitraryT}}\approx}\RNR[\lw]%
	\underset{\text{\ref{itm:nLarge}}}{\overset{\underset{n \text{ large}}{\PP}}{\approx}}\fLpm[\lambda]\overset{\substack{\text{standard distrib.}\\\text{for $v$ and $b$}\\ \text{and } K\subseteq\left[ -1,1\right] }}{\underset{\ref{itm:standardg}}\approx}\fl[\lambda],
	\end{equation}
	where \enquote{$\approx$} represents equality up to a (small) approximation error (that can be strictly larger than zero).

	Regarding the approximation~\meqref{eq:conclusionApprox}, we remark the following.
	\begin{enumerate}[1.{}]
		{\unimportantStart
			\item\label{itm:w0Klein} The first approximation should be quite simple but is not focused on within this work.\footnote{\Cref{le:smallw*} demonstrates, that with increasing $n$ the initial weights~$\wth[0]$ should be chosen closer to zero.} (As only the last layer of \RN\ is trained, one could just start with $w^0=0$)
			\item\label{itm:gammaKlein} It is of importance to choose the learning rate $\gamma$ rather small.\footnote{\label{footnote:gammaSmall}For finite values of $T$ a standard result on \href{https://en.wikipedia.org/w/index.php?title=Euler_method&oldid=907454399}{\oldnormalcolor Euler discretization} can be used. In the limit $T\to\infty$ one can formulate a direct argument that combines \cref{itm:gammaKlein,itm:arbitraryT}: $\tlim \wth[T]=\wdag$, if the learning rate~$\gamma<1/r(X^\top X)$ is smaller than 1 over the spectral radius (largest eigenvalue) of $X^\top X$ \cite[p.\ 4]{bishop1995regularizationEarlyStopping} \cite[p.\ 11]{GidelImplicitDiscreteRegularizationDeepLinearNN2019arXiv190413262G}.
			}
			When one decreases $\gamma$, one has to increase the number of steps $\tau$ to obtain the same $T$. Stochastic gradient descent is a good technique to get many small steps that are computationally cheap (cp.~\cref{footnote:stochasticGradient}). 
			Note, that by the \cref{footnote:gammaSmall} we have that for an \RSN~\RN\, the learning rate~$\gamma$ should typically be chosen approximately inversely proportional to the number of neurons~$n$ (if one keeps the scale of randomness of $v$ and $b$ fixed).
			}
		\item\label{itm:arbitraryT} Multiple papers assume that the third approximation is quite precise for arbitrary values of $T\in\Rp$ without rigorous proof \cite{bishop1995regularizationEarlyStopping,friedman2003gradient,PoggioGernalizationDeepNN2018arXiv180611379P}.%
        \footnote{We note, that while the literature chooses $\lw=\frac{1}{T}$, it might be more reasonable to chose $\lw=\frac{s e^{-2sT}}{1-e^{-2sT}}$, with an appropriate choice of $s$ (cp. \cref{eq:sMostImportantSV,eq:lambdaWelleT}) to get better approximation bounds. Nonetheless, in \Cref{sec:theorem,eq:conclusion} we work with the relation $\lw=\frac{1}{T}$, %
        since
        in the limit $T\to\infty$ these relations coincide.}
		\item\label{itm:nLarge} The mathematically precise asymptotic relation is the subject of \Cref{thm:ridgeToSpline}. 
		\item\label{itm:standardg} The \aregSplf\ is a macroscopically defined object that already is nice to interpret.  Intuitively, it is plausible that \flpm\ is very close to the classical \fl\ on the $[-1,1]$-cube, if one uses typical\footnote{For instance, one could choose $b_k,v_k\sim \text{Unif}(-\sScale,\sScale)$ \iid\ uniformly symmetrically distributed {\unimportantStart
		or $b_k,v_k\sim \mathcal{N}(0,\sScale)$ \iid\ normally distributed with zero mean} (see \Cref{sec:WeightingFunctionG:IntuitionAdaptedSpline}).} distributions for the first-layer weights and biases $v$ and $b$, and if the training data is scaled and shifted to fit into the $[-1,1]$-cube. Additionally, by that same intuition, it follows that if popular rules of thumb such as scaling and shifting the data to the $[-1,1]$-cube are broken, one can obtain rather poor approximations \flpm. Consequently, by providing these insights on the circumstances that would lead to undesirable results, \Cref{thm:ridgeToSpline} contributes to answering question~\ref{item:ProsAndCons} about best practices in machine learning. See \Cref{sec:IntuitionAdaptedSpline} for more details when $\flpm\approx\fl$ approximately holds true and when not.
	\end{enumerate}
	
	\subsection{Empirical results}\label{sec:EmpiricalResults}
	As a proof of concept we empirically verify the approximate relations~\meqref{eq:conclusionApprox} discussed above. The implementation of these experiments can be found at \url{https://github.com/JakobHeiss/NN_regularization1}. To that end, we consider the aim of approximating the function $f:\R\to\R, x\mapsto \sin{(\pi x)}$, given $N=16$ noisy data points $(x_i, f(x_i)+\epsilon_i)\in\R^2$, where $x_i$, $i=1,\ldots,N$ are equidistant points in the interval $[-1,1]$ and $\epsilon_i$ are realizations of a centred Gaussian random variable with standard deviation $\tt{scale}=1/8$. \Cref{fig:sine} shows a comparison of the solution functions obtained by
	\begin{enumerate}[a)]
		\item training an RSN with ReLU activation using a standard implementation of gradient descent with step size $\gamma=2^{-11}$ %
		for $\tau=2^{15}$ epochs \unimportant{(resulting in \RNw[{\wth[T]}]\ with $T=\tau\gamma$)},
		\item training that same RSN using a ridge penalty on the terminal weights with penalization parameter $\lw=\frac{1}{e -1}\frac{1}{2\tau\gamma}$ according to \cref{eq:lambdaWelleT} \unimportant{(resulting in \RNR)} and
		\item the \nameref{def:splineReg} with penalization parameter $\lambda=\frac{\lw}{n2g(0)}$ \unimportant{(resulting in \fl)}. (Here, $n$ represents the RSN's number of hidden nodes and the weighting function $g$ is defined in \Cref{thm:ridgeToSpline}.)
	\end{enumerate}
	The RSN was chosen to consist of $n=2^{12}$ hidden nodes with first-layer weights and biases sampled from a Uniform distribution on $[-0.05, 0.05]$. Moreover, a last-layer bias was included in the training (cp. \Cref{footnote:lastlayerbias}).

	Within this paper's setting, this experiment corresponds to comparing the time-$T$ solution~\RNw[{\wth[T]}] for $T=16$ to the \ridgeRSNRN\ with $\lw=\frac{1}{e -1}\frac{1}{2T}\approx 0.018$ %
	and the smooth \regSplf\ with penalization parameter $\lambda\approx 0.014$.\newline
	As \Cref{fig:sine} nicely shows, the three solution functions almost coincide on $[-1,1]$. This is of particular interest, since the training data typically is scaled to fit the interval $[-1,1]$.
	\begin{figure}[!h]
				\centering
				\scaledinset{l}{.2}{b}{.663}{\resizebox{0.155\hsize}{!}{\tiny \begin{tabular}{@{}l@{}}$\left( \xtr_i,\ytr_i\right)$\\[1.45ex]
				$\RNw[{\wth[T]}]$ \text{(implicit)}\\
				$\RNR$ \text{(Ridge)}\\
				\fl \text{(spline)} \end{tabular}}}{\includegraphics[width=0.7\linewidth]{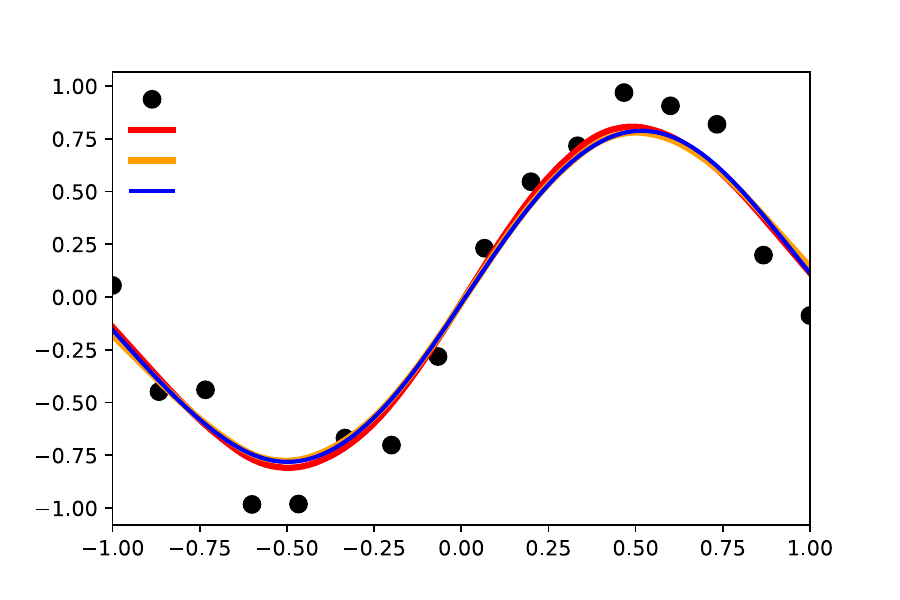}}
				\caption{Comparison of the solution functions obtained from performing gradient descent (red line) and ridge regularization (yellow line) to train an \RSN\ to the spline regression (blue line) with parameters chosen as suggested by \cref{eq:lambdaWelleT} and \Cref{thm:ridgeToSpline}.}
				\label{fig:sine}
			\end{figure}
			
	\unimportant{
	In certain situations %
	the \aregSpl~$\flpm\approx\RNR$ can deviate more from the classical \regSplf\
	as can be seen in \Cref{fig:sinezoomedout} far outside the training data. The \RSN's architecture can be extended to incorporate a direct affine-linear link onto the output, which, when included in the training process, can make up for the observed difference (see also \cref{item:direct_link} below and \Cref{sec:IntuitionAdaptedSpline}). %
	}
		
		\begin{figure}[!ht]
			\centering
			\scaledinset{l}{.2}{b}{.158}{\resizebox{0.155\hsize}{!}{\tiny \begin{tabular}{@{}l@{}}$\left( \xtr_i,\ytr_i\right)$\\[1.45ex]
				$\RNw[{\wth[T]}]$ \text{(implicit)}\\
				$\RNR$ \text{(Ridge)}\\
				\fl \text{(spline)} \end{tabular}}}{\includegraphics[width=0.7\linewidth]{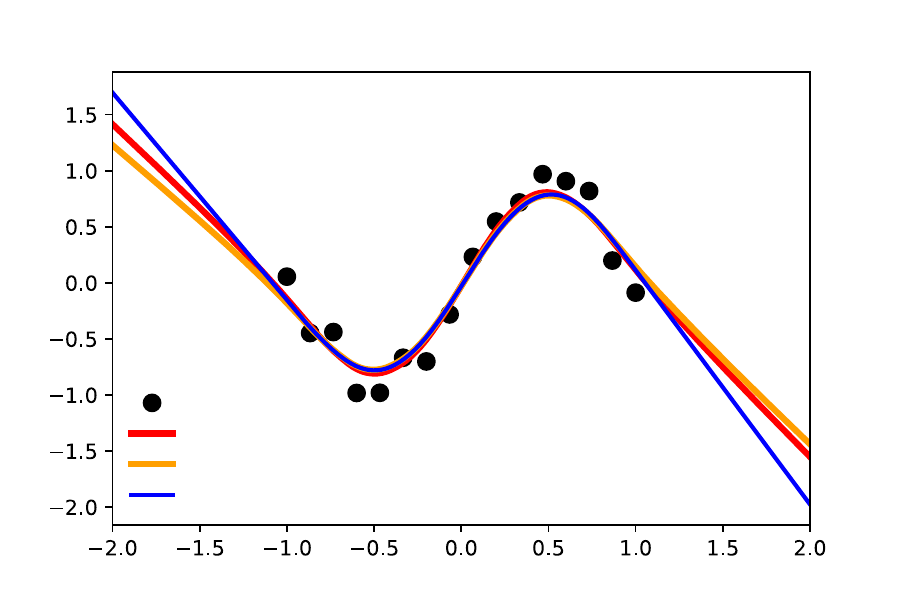}}
			\caption{Large scale comparison of the solution functions as in \Cref{fig:sine}. Outside the training data, the trained \RSN~$\RNw[\wth]$ ranges in between the \ridgeRSNRN\ and the \regSplf.}
			\label{fig:sinezoomedout}
		\end{figure}
		
	A more detailed view on the trained \RSN~$\RNw[\wt]$ is given in \Cref{fig:secondderiv}. Therein, we visualize the \RSN's (distributional) second derivative at the respective realized kink positions as well as a convoluted version of it using a Gaussian kernel. %
 \newline  
		\begin{figure}[h!]
			\centering
			\includegraphics[width=0.7\linewidth]{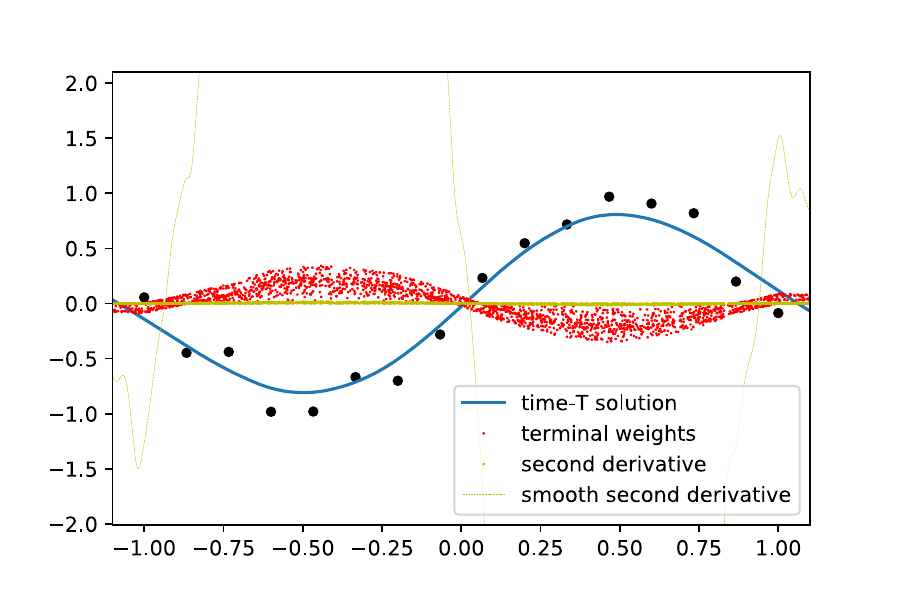}
			\caption{The trained \RSN~$\RN_{\wth[T]}$ (blue line) and its (distributional) second derivative~$\sum_{k=1}^n v_k w_k \delta_{\xi_k}$ (yellow dots) at the respective realized kink positions and a smoothed version of it. The smooth second derivative was obtained from a convolution using a Gaussian kernel. It nicely captures the trained \RSN's curvature. Moreover, the values of the terminal layer's weights~$w_k$ at the respective kink positions~$\xi_k$ are given (red dots).}
			\label{fig:secondderiv}
		\end{figure}

	\subsection{Extensions to this work}\label{sec:FutureWork}
	Besides discussing the correspondence of the spline interpolation and an RSN trained using gradient descent for a finite number of nodes and finite training time, we extend the theory in upcoming and recent work as follows:
	\begin{enumerate}[i., ref=\roman*]
		\item We generalize to multidimensional in- and output in $\X\subset\R^{d_{in}}, \Y\subset\R^{d_{out}}$ (\cite{implReg2}).

		\item\label{item:direct_link} In this paper, we gain insights on how adding skip connections can lead to models which are even more similar to the classical \regSpl{} (see \Cref{sec:IntuitionAdaptedSpline}). As a side effect, one can thereby save computational time and memory by removing neurons with kink positions $\xi_k$ outside of $C:=\ConvexHull\left(\supp(\nu)\right)$. If the skip connections are regularized accordingly, this is possible even without changing the learned function on $C$ (see \Cref{subsec:SimilaritWithoutSkipConnections}).

		{\unimportantStart
		\Cref{thm:ridgeToSpline} and its \hyperlink{proof:thm:ridgeToSpline}{proof} inspire to choose special types of randomness for the first-layer weights and biases. Naturally, we are interested to find out whether these choices provide advantages in the training of such \RN\ or other architectures.}%
		
		\item\label{item:fullytrained} We prove convergence to a differently regularized function (which is optimal with respect to another \Pfunc-functional) in the case of ordinary training of both layers of \NN\ instead of only training the last layer \citep{Part3Arxiv}.%
		
		\item We generalize to deep neural networks with more hidden layers \citep{Part3Arxiv}. %
		The long-term goal of this line of research is to find a \Pfunc-functional {\mediumimportantStart(or another easy to interpret macroscopic description)} for each type of neural network for each set of hyperparameters. (This could be extended to other Machine Learning methods like random forests too.)%
		
	\end{enumerate}

\bibliographystyle{plainnat}
 \newpage
	\appendix
	\section{Proofs}\label{sec:proofs}
	In the following, we rigorously prove the results presented within this paper.
	
	\subsection{Proof of \Cref{thm:ridgeToSpline} \texorpdfstring{($\RNR\to\flpm$)}{}}\label{sec:proof:RidgeToSpline}
	
	A number of lemmata are required for the \hyperlink{proof:thm:ridgeToSpline}{proof} of \Cref{thm:ridgeToSpline}. These will be presented and proved later in this section.
	We start by defining the objects that are central to the subsequent derivations. 
	
	\begin{center}
		\boxed{\text{
				Throughout this section, we henceforth require \Cref{as:mainAssumptions,as:truncatedg,as:easyReadable,as:generalloss} to be in place.}}
	\end{center}
	
	\begin{definition}\label{def:adaptedSplineRegTuple}
		Let 
		$\lambda \in \Rp$.
		Then for a given function $g:\R\to\Rpz$ a tuple $(\flp, \flm)$ is defined%
		\footnote{\label{footnote:uniqueflpmt}See \Cref{le:existenceuniquenessaregspl} for a proof of existence and uniqueness of the solutions of \meqref{eq:adaptedSplineRegTuple}. If the solution is not unique, we denote any element of the $\argmin$ as $(\flp, \flm)$. In that case, it will be clear from the context whether we want to say that a certain statement holds true for any $(\flp, \flm)$ or if there exists a $(\flp, \flm)$ in the $\argmin$ for which the statement holds.}
		as
		\begin{equation}\label{eq:adaptedSplineRegTuple}
		\flpmt \in \argmin_{(f_+,f_-)\in \T}\underbrace{ \Ltrb{f_+ + f_-}+\lambda \Pgpm(f_+,f_-)  }_{=:\Flpmb{f_+,f_-}},    
		\end{equation}
		with
		\hypertarget{eq:Pgpm}{\begin{equation*}
		\Pgpm(f_+,f_-):=  2g(0) \left(
		\int_{\supp (g)} \frac{\left( {f_+}^{''}(x) \right)^2}{g(x)} dx
		+\int_{\supp (g)} \frac{\left( {f_-}^{''}(x) \right)^2}{g(x)} dx
		\right).
		\end{equation*}}

	\end{definition}
	\begin{remark}\label{rem:adRegSplineSum}

	    Note that every solution to \meqref{eq:adaptedSplineRegTuple} $(\flp, \flm)$ yields a solution~$\flpm:=\flp+\flm$ to \meqref{eq:adaptedSplineReg}.
	    Conversely, for any solution~$\flpm$ to \meqref{eq:adaptedSplineReg} there exists a solution~$(\flp, \flm)$ to \meqref{eq:adaptedSplineRegTuple} such that $\flpm=\flp+\flm$.
	    Thus, $(\flp, \flm)$ is a solution to \meqref{eq:adaptedSplineRegTuple} if and only if  $\flpm:=\flp+\flm$ is a solution to \meqref{eq:adaptedSplineReg}
	    
	    \begin{proof} Assume there exists a tuple $(\flp,\flm)$ as in \meqref{eq:adaptedSplineRegTuple}. Then
	    \begin{equation}\label{eq:tupleineq1}
	    L(\flp+\flm)+\lambda\Pgpm(\flp,\flm)\le L(f_++f_-)+\lambda \Pgpm(f_+,f_-)    
	    \end{equation}
	    for all $(f_+,f_-)\in\T$.

	    Thus, since $\Pgpmm(f)=\underset{\underset{f=f_++f_-}{(f_+, f_-)\in\T}}\inf \Pgpm(f_+,f_-)$, we obtain from \meqref{eq:tupleineq1} that
	    \begin{subequations}
	    \begin{align}
	         L(\flp+\flm)+\lambda \Pgpmm(\flp+\flm)&\leq L(\flp+\flm)+\lambda\Pgpm(\flp,\flm)\\
	         &\le \inf_{f\in\WT(\R)} L(f)+\lambda \Pgpmm(f),    
	    \end{align}
	    \end{subequations}
	    i.e., $\flpm:=\flp+\flm$ is a minimizer of \meqref{eq:adaptedSplineReg}.
	    
	    For the reverse direction, assume there exists $\flpm$ solution to \meqref{eq:adaptedSplineReg}. 
	    Then by \Cref{le:uniqueness}, since $\left\{(f_+, f_-)\in\T\, \middle|\, f_+ + f_-=\flpm\right\}$ is convex and complete, there exists a unique minimizer $(f_+^*, f_-^*)$ of $\underset{\underset{\flpm=f_++f_-}{(f_+, f_-)\in\T}}\inf \left( 0+1\Pgpm(f_+,f_-)\right)=\Pgpmm(\flpm)$. The tuple $(f_+^*, f_-^*)$ is a solution to \meqref{eq:adaptedSplineRegTuple} as well (If there was a better tuple for \meqref{eq:adaptedSplineRegTuple} this would correspond to better function for \meqref{eq:adaptedSplineReg} contradicting the optimality of \flpm.). 
	    \end{proof}
	\end{remark}
	
	\begin{definition}[estimated kink distance~\hq\ w.r.t. $\sgnb{v}$]\label{def:estKinkDist}
		Let $\RN$ be a randomized shallow neural network with $n$ hidden nodes as introduced in \Cref{def:RSNN}. The \textit{estimated kink distance w.r.t. $\sgnb{v}$} at the $k$\textsuperscript{th} kink position $\xi_k$ corresponding to $\RN$ is defined as%
		\footnote{\label{footnote:analogous:def:estKinkDist}Without \Cref{as:easyReadable}\ref{item:symDistributions} one would define:
			\begin{subequations}
				\begin{align}
				\hqp_k&:=\frac{1}{n\PP[v_k>0]\, \gxip(\xi_k)}\label{subeq:hqp}\\
				\hqm_k&:=\frac{1}{n\PP[v_k<0]\, \gxim(\xi_k)}\label{subeq:hqm}.
				\end{align}
				Under \Cref{as:easyReadable}\ref{item:symDistributions} we have the equality:
				\begin{equation}
				\hq_k = \hqp_k=\hqm_k.
				\end{equation}
			\end{subequations}
		}
		\begin{equation}
		\hq_k:=\frac{2}{n\, \gxi(\xi_k)}.
		\end{equation}
		
	\end{definition}
	
	\begin{definition}[spline approximating RSN]\label{def:splineApproximatingRSN}
		Let $\RN$ be a real-valued randomized shallow neural network with $n$ hidden nodes (cp. \Cref{def:RSNN}) and 
		$\flpmt\in\T$
		be an adapted regression spline as introduced in \Cref{def:adaptedSplineReg,def:adaptedSplineRegTuple}. The corresponding \emph{spline approximating RSN}~\sRNw\ w.r.t. $\flpmt$ is given by
		\begin{equation}
		\sRNwo(x)=\sum_{k=1}^{n}\ww_k\omb\,\sigma\left({b_k\omb}+{v_{k}\omb}x\right) \quad\faog\ \faxg
		\end{equation}
		with weights~$\wwo$ defined as%
		\footnote{Since all $v_k$ are identically distributed and all $\xi_k$ are identically distributed as well, the conditioned expectation \Eco{v_k^2}{\xi_k=x} does not depend on the choice of $k\in\fromto{n}$. Therefore, we will sometimes use the following notation~$\Eco{v^2}{\xi=x}:=\Eco{v_k^2}{\xi_k=x}$.}\textsuperscript{, \!}\unskip
		\footnote{\label{footnote:analogous:def:splineApproximatingRSN}Note that under \Cref{as:mainAssumptions}\ref{item:densityExists}, the set $\{v_k=0\}$ is of zero measure for any $k\in\fromto{n}$ and hence is not included in the definition of the weights~$\wwo$. Without \Cref{as:easyReadable}\ref{item:symDistributions} (and with a weakened form of \Cref{as:mainAssumptions}\ref{item:densityExists}), \ww\ would need to be reformulated: $\allIndi{n}{k}: \faog:$
			\begin{equation*}\label{eq:wwAsym}
			\ww_k\omb:= w^{\flpmtasym, n}_k\omb:=
			\begin{cases}
			\frac{\hqp_k\omb v_k\omb}{\E[v^2|\xi=\xi_k\omb,v>0]}
			{\flpasym}^{''}(\xi_k\omb), & v_k\omb>0\\
			\frac{-\hqm_k\omb v_k\omb}{\E[v^2|\xi=\xi_k\omb,v<0]}{\flmasym}^{''}(\xi_k\omb) , & v_k\omb<0\\
			\frac{\relu[b_k\omb]}{n\PP[v=0]\E[{\relu[b]^2}]}\gammaAsym, & v_k\omb=0.
 			\end{cases}
 \qquad  \begin{matrix}\allIndi{n}{k}\\ \faog .\end{matrix}
			\end{equation*}
		}
		\begin{equation*}\label{eq:ww}
		\ww_k\omb:= w^{\flpmt, n}_k\omb:=
		\begin{cases}
		\frac{\hq_k\omb v_k\omb}{\E[v^2|\xi=\xi_k\omb]}
		{\flp}^{''}(\xi_k\omb), & v_k\omb>0\\
		\frac{-\hq_k\omb v_k\omb}{\E[v^2|\xi=\xi_k\omb]}{\flm}^{''}(\xi_k\omb) , & v_k\omb<0\\
		\end{cases} \qquad  \begin{matrix}\allIndi{n}{k}\\ \faog .\end{matrix}
		\end{equation*}
		We further define \faog:
		\begin{subequations}\label{eq:kpm}
			\begin{align}
			\kp\omb&:=\Set{k\in\fromto{n}|v_k\omb>0},\label{subeq:kp}\\
			\km\omb&:=\Set{k\in\fromto{n}|v_k\omb<0}\label{subeq:km}
			\end{align}
		\end{subequations}

		and \label{par:wwp}$\wwp:=\left( \ww_k\right)_{k\in\kp}$ respectively $\wwm:=\left( \ww_k\right)_{k\in\km}$.
		With the above, spline approximating RSNs can be alternatively represented as
		\begin{equation}\label{eq:splineApprRSNsplit}
		\sRNwo(x)=\underbrace{\sum_{k\in\kp\omb}\ww_k\omb\,\sigma\left({b_k\omb}+{v_{k}\omb}x\right)}_{=:\sRNwpo}
		+\underbrace{\sum_{k\in\km\omb}\ww_k\omb\,\sigma\left({b_k\omb}+{v_{k}\omb}x\right)}_{=:\sRNwmo}.
		\end{equation}
	\end{definition}
	
	\begin{remark}
		A spline approximating RSN introduced in \Cref{def:splineApproximatingRSN} is a particular randomized shallow neural network designed to be \enquote{close} to the corresponding adapted regression spline $\flpm$ in the sense that its curvature in between kinks is approximately captured by the size of corresponding weights $\ww$.
	\end{remark}
	\begin{definition}\label{def:partition}
	  Let $\nu=\nuc+\nua$ as in \Cref{as:generalloss}. The support $\{x_1,\ldots,x_N\}$ of $\nua$ (w.l.o.g. we assume it is sorted in ascending order) induces a partition
	  \begin{align*}
	   \mathcal{P}:=\left\{(-\infty, x_1),[x_1, x_2),\dots,[x_N,+\infty)\right\}   
	  \end{align*}
	  of the input space $\R$. 
	  For each input point $x\in\R$ we define its lower neighbour in $\supp(\nua)\cup \{-\infty\}$ as
	  \begin{align*}
	      \lip[x]&:=\max\Set{x_i\in \supp(\nua)\cup \{-\infty\}| x_i\le x},\\
	      \uip[x]&:=\min\Set{x_i\in \supp(\nua)\cup \{+\infty\}| x_i> x}.
	  \end{align*}
	  Furthermore, we define the operator that assigns to each input point $x\in\R$ the interval of the partition in which it is contained as follows
	  \begin{equation*}
	      \Ip[x]:=\begin{cases}
	      \left[\lip[x],\uip[x]\right), & -\infty<\lip[x],\\
	      \left(-\infty,\uip[x]\right), & \text{else}.\\
	      \end{cases}
	  \end{equation*}
	\end{definition}
	
	\begin{definition}[smooth RSN approximation]\label{def:smoothRSNappr}
		For $\wR$ and $\RNR$ as in \Cref{def:ridgeNet} with corresponding kink density $\gxi$ consider for every $x\in\R$ the kernel 
		\begin{displaymath}
		\kapx:\R\to\R, \quad \kapx(s):= \ind_{\Iwp[x]}(x-s)\frac{1}{|\Iwp[x]|}\quad {\color{hellgrau}\forall s\in\R}, 
		\end{displaymath}
		where
		\begin{align}
		   \label{eq:def:Iwp} \Iwp[x]&:=B_{\frac{1}{2\sqrt{n}\gxi(x)}}(x)\cap\Ip[x],\\
		    B_{r}(m)&:=\{\tau\in\R: |\tau-m|\le r \}.
		\end{align}
		The corresponding \emph{smooth RSN approximation} $\fwR$ w.r.t.~$\wR$ is then defined as the convolution\footnote{This \enquote{convolution} is a bit special, because the kernel~\kapx\ changes with $x\in\R$. Therefore, the notation $\RNR*\nolinebreak[4]\kappa$ would not be properly defined, but we could define $\RNR*\negthickspace*\ \kappa$ as: $ \left(\RNRo*\negthickspace*\ \kappa\right)(x):=\left(\RNRo*\kapx\right)(x)=\int_\R \RNRo(x-s)\kapx(s)\,ds=\int_{{\Iwp[x]}}\RNRo(t)\frac{1}{|\Iwp[x]|}\,dt$, \ \faog\ \fax.}
		\begin{equation}
		\fwRo(x):= \left(\RNRo*\kapx\right)(x) \qquad\faog\ \fax .
		\end{equation}
		Moreover, with the notation
		\begin{equation}\label{eq:RNRp}
		\RNR(x)=\underbrace{\sum_{k\in\kp}\wRk\,\sigma\left({b_k}+{v_{k}}x\right)}_{=:\RNRp(x)} + \underbrace{\sum_{k\in\km}\wRk\,\sigma\left({b_k}+{v_{k}}x\right)}_{=:\RNRm(x)} \quad\faxg,
		\end{equation}
		{\mediumimportantStart with $\wRp:=\left( \wRk \right)_{k\in\kp}$ and $\wRm$ analogously defined as \wwp\ and \wwm}, we have
		\begin{equation}\label{eq:smoothRSNapprsplit}
		\fwR(x)= \underbrace{\left(\RNRp*\kapx\right)(x)}_{=:\fwRp(x)} +\underbrace{\left(\RNRm*\kapx\right)(x)}_{=:\fwRm(x)} \qquad\faxg .
		\end{equation}
	\end{definition}
	{\footnotesize
		\begin{remark}\label{rem:kernel}
			For any $x\in\R$ the kernel $\kapx$ introduced in \Cref{def:smoothRSNappr} satisfies
			\begin{enumerate}
				\item $\int_\R\kapx(s)\,ds=1$ and
				\item $\lim_{n\to\infty} \kapx=\delta_0$, where $\delta_0$ denotes the Dirac distribution at zero.
			\end{enumerate}
		\end{remark}
	}
	
	\begin{proof}[\hypertarget{proof:thm:ridgeToSpline}{Proof of \Cref{thm:ridgeToSpline}}]%

		The auxiliary functions \sRNw\ and \fwR\ defined above in \Cref{def:splineApproximatingRSN,def:smoothRSNappr} will play an important role in this proof.
	    In the end, we want to show (probabilistic) convergence of \RNR\ to the $\argmin_{\WT(\R)}\Flpmm$ for every $\RNR\in\argmin\Fn$.
		Our strategy to achieve this goal is to first prove that every \RNR\ converges to a corresponding function~\fwR\ in the limit $n\to\infty$ (\Cref{le:3}).
		Second, we show that every $\fwR$ converges to the $	\argmin_{\WT(\R)}\Flpmm$. This requires more steps---first we show the convergence $\Flpm\fwRpmt\to\min_{\T}\Flpm$ (in multiple steps based on \Cref{le:2,le:4}) to further imply with the help of \Cref{le:7} the convergence \meqref{eq:ridgeToSplineNotUnique}.

		Following this strategy, we prove~\Cref{thm:ridgeToSpline} step by step:
		
		\begin{enumerate}[step 1, start=0]
			{\transparent{\meineTranzparenz} \item[step -0.5]Before starting with the proof, we need the auxiliary \Cref{le:poincare,le:deltastrip}}
			\item\label{itm:0} \Cref{le:0} shows
			\[\plim \sobnorm[{\sRNw-\flpm}]=0.\]
			\item\label{itm:1} It is directly clear that \[\Fnb{\RNR}\leq\Fnb{\sRNw},\] because of the optimality of~\RNR\ (see \Cref{def:ridgeNet}).
			{\transparent{\meineTranzparenz}\item[step 1.5] The auxiliary \Cref{le:lossconvergence} will be needed for \ref{itm:2} and \ref{itm:4}.}
			\item\label{itm:2} \Cref{le:2} shows \[ \plim\Fnb{\sRNw}=\Flpm\flpmt.\]
			{\transparent{\meineTranzparenz}\item[step 2.5] The auxiliary \Cref{le:help:wstetig,le:wstetig,le:smallw*} will be needed for \ref{itm:3} and \ref{itm:4}.}
			\item\label{itm:3} \Cref{le:3} shows \[\plim\sobnorm[\RNR-\fwR]=0.\]
			\item\label{itm:4} \Cref{le:4} shows \[ \plim \left|\Fnb{\RNR}-\Flpm\fwRpmt\right| =0.\]
			\item\label{itm:5}
			
			\[\Flpm\flpmt\leq\Flpm\fwRpmt\]
			holds, because of the optimality of $\flpmt\in\T$.
			\item\label{itm:6} Combining \labelcref{itm:4,itm:1,itm:2,itm:5} we directly get:\footnote{\label{footnote:pmEpsNotation}We are using the following notation:
				\[a_n\approx b_n\overset{\PP}{\pm}\epsilon_1 \logeq \fae[1]:\forall P_1\in (0,1):\exists n_0\in\N: \forall n\in\N_{>n_0}:\PP[{a_n\in b_n+\left[-\epsilon_1,\epsilon_1\right]}]>\rho,\]
				but a complete formalization of this notation would be quite long. This notation needs to be interpreted depending on the context---e.g.:
				\[b_n\overset{\PP}{\pm}\epsilon_1\approx b_n\overset{\PP}{\pm}\epsilon_1\overset{\PP}{\pm}\epsilon_2 \logeq \fae[2]:\forall P_{2}\in (0,1):\exists n_0\in\N: \forall n\in\N_{>n_0}:\PP[{b_n\in c_n+\left[-\epsilon_2,\epsilon_2\right]}]>P_2,\] or sometimes it makes sense to replace \enquote{$\in$} by \enquote{$\subseteq$} in a reasonable way. And in the proofs of some later lemmata $\overset{\PP}{\pm}\epsilon_2$ can have the meaning of $\overset{\underset{\delta,\epsilon_1\to 0}{\PP}}{\pm}\epsilon_2$ instead of $\overset{\underset{n\to 0}{\PP}}{\pm}\epsilon_2$ depending on the context.%

			}
			\begin{align*}
			\Flpm\fwRpmt
			&\overset{\text{\ref{itm:4}}}\approx \texttransparent{\meineTranzparenz}{\Fnb{\RNR}\overset{\PP}{\pm}\epsilon_1\leq}\\
			&\overset{\text{\ref{itm:1}}}\leq\texttransparent{\meineTranzparenz}{\Fnb{\sRNw}\overset{\PP}{\pm}\epsilon_1\approx}\\
			&\overset{\text{\ref{itm:2}}}\approx\Flpm\flpmt\overset{\PP}{\pm}\epsilon_1\overset{\PP}{\pm}\epsilon_2
			\overset{\text{\ref{itm:5}}}\leq\Flpm\fwRpmt\overset{\PP}{\pm}\epsilon_1\overset{\PP}{\pm}\epsilon_2,
			\end{align*}
			and thus:
			\begin{align*}
			\Flpm\fwRpmt
			&\overset{\substack{\text{\ref{itm:4}}\\
					\text{\ref{itm:2}}\\ \text{\ref{itm:1}}}}\lessapprox
			\Flpm\flpmt\overset{\PP}{\pm}\epsilon_3 \hphantom{\overset{\PP}{\pm}\epsilon_2}
			&\overset{\text{\ref{itm:5}}}\leq\Flpm\fwRpmt\overset{\PP}{\pm}\epsilon_3,\hphantom{\overset{\PP}{\pm}\epsilon_2}
			\end{align*}
			which directly implies
			\begin{equation}\label{eq:itm:6}
			\plim \Flpm\fwRpmt = \Flpm\flpmt.
			\end{equation}
			\item\label{itm:7} \Cref{le:7} shows
				\begin{equation*}
	\plim d_{\Wkp[K]{1}{\infty}}\left(\fwRpmt,
	\argmin_{\T}\Flpm\right) =0,
	\end{equation*}
			if one applies it on the result~\meqref{eq:itm:6} of \ref{itm:6}.
			\item\label{itm:8} Combining \labelcref{itm:3,itm:7} with the triangle inequality directly results in the statement~\meqref{eq:ridgeToSplineNotUnique} we want show.
		\end{enumerate}
		
	\end{proof}
	
	\begin{lemma}[Poincar\'e typed inequality]\label{le:poincare}
		Let $f:\R\to\R$ be continuously differentiable with $f':\R\to\R$ Lebesgue integrable. Then, for any interval $K=[a, b]\subset\R$ such that $f(a)=0=f'(a)$ there exists a $C_K^\infty\in\Rp$ such that
		\begin{equation}\label{eq:supPoincare}
		\sobnorm[f]\le C_K^\infty\supnorm[f'].
		\end{equation}
		Additionally, if $f$ is twice differentiable with $f'':\R\to\R$ Lebesgue integrable, there exists a $C_K^2 \in\Rp$ such that
		\begin{equation}\label{eq:2Poincare}
		\sobnorm[f]\le C_K^2\Ltwonorm[f''].
		\end{equation}
		
	\end{lemma}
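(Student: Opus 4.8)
The plan is to obtain both estimates by integrating, using only the fundamental theorem of calculus together with (for the second bound) the Cauchy--Schwarz inequality; the vanishing of $f$ at the left endpoint $a$ of $K=[a,b]$ is what makes the first step work, and the vanishing of $f'$ at $a$ (which is the situation in which \cref{eq:2Poincare} is applied --- it is built into the definition of $\T$) is what makes the second work.

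For \cref{eq:supPoincare}: since $f\in\C^1(\R)$ with $f'$ Lebesgue integrable and $f(a)=0$, for every $x\in K$ we may write $f(x)=\int_a^x f'(t)\,dt$, so that $|f(x)|\le\int_a^b|f'(t)|\,dt\le (b-a)\,\supnorm[f']$. Taking the supremum over $x\in K$ gives $\supnorm[f]\le (b-a)\,\supnorm[f']$, and trivially $\supnorm[f']\le\supnorm[f']$. Since $\sobnorm[f]$ is (depending on the convention) either $\supnorm[f]+\supnorm[f']$ or $\max\{\supnorm[f],\supnorm[f']\}$, the estimate follows with $C_K^\infty:=1+(b-a)$ (respectively $C_K^\infty:=\max\{1,b-a\}$).

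For \cref{eq:2Poincare}: by the first part it suffices to bound $\supnorm[f']$ by $\Ltwonorm[f'']$. Using that $f$ is twice differentiable with $f''$ Lebesgue integrable and that $f'(a)=0$, we have $f'(x)=\int_a^x f''(t)\,dt$ for $x\in K$, hence by Cauchy--Schwarz $|f'(x)|\le\sqrt{x-a}\,\bigl(\int_a^x f''(t)^2\,dt\bigr)^{1/2}\le\sqrt{b-a}\,\Ltwonorm[f'']$. Therefore $\supnorm[f']\le\sqrt{b-a}\,\Ltwonorm[f'']$, and chaining this with \cref{eq:supPoincare} yields \cref{eq:2Poincare} with $C_K^2:=C_K^\infty\sqrt{b-a}$.

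I do not expect any genuine obstacle: the whole argument is two applications of the fundamental theorem of calculus, the second combined with Cauchy--Schwarz. The only points needing a little care are fixing the convention for the $\Wkp[K]{1}{\infty}$-norm so the constants are written out consistently, and keeping track of which endpoint conditions are actually available --- $f(a)=0$ for \cref{eq:supPoincare}, and additionally $f'(a)=0$ for \cref{eq:2Poincare}; in the applications both hold because the estimate is used on the components $f_+,f_-$ (and on differences of them with the smoothed RSN approximations), which by construction of $\T$ vanish together with their first derivatives at the left end of the relevant interval.
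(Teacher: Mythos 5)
Your proof is correct and follows essentially the same route as the paper's: the fundamental theorem of calculus for \cref{eq:supPoincare}, and the fundamental theorem of calculus combined with Cauchy--Schwarz (the paper phrases it as H\"older with the indicator $\ind_{[a,x]}$) chained with the first bound for \cref{eq:2Poincare}, differing only in the immaterial explicit constants. You are also right that the second estimate tacitly needs $f'(a)=0$ in addition to $f(a)=0$; the paper's proof uses this silently via $f'(x)=\int_a^x f''(y)\,dy$, and it indeed holds in all applications because of the boundary conditions built into $\T$.
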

	\begin{proof}
		By the fundamental theorem of calculus, if $ \supnorm[f']<\infty$, then
		\begin{align*}
		\supnorm[f]=\sup_{x\in K}\left|\int_a^x f'(y) \, dy\right|\le|b-a|\sup_{y\in K}|f'(y)|.
		\end{align*}
		Hence it follows that
		\begin{align*}
		\sobnorm[f]=\max\left\{\supnorm[f], \supnorm[f']\right\}\le\max\{|b-a|,1\}\supnorm[f']=C_K^\infty\supnorm[f'].
		\end{align*}
		Similarly, by the H\"older inequality we have 
		\begin{align*}
		\supnorm[f']=\sup_{x\in K}\left|\int_a^b f''(y)\ind_{[a, x]}(y) \, dy\right|\le\sup_{y\in K}\Ltwonorm[f'']\Ltwonorm[\ind_{[a,y]}]=\sqrt{|b-a|}\Ltwonorm[f''].
		\end{align*}
		Thus, \meqref{eq:2Poincare} follows from
		\begin{align*}
		\sobnorm[f]\le C_K^\infty\supnorm[f']\le C_K^\infty\sqrt{|b-a|}\Ltwonorm[f'']=C_K^2\Ltwonorm[f''].
		\end{align*}
	\end{proof}
	\begin{lemma}\label{le:deltastrip}
		Let $\RN$ be a real-valued randomized shallow network. 
		For $\varphi:\R^2\to \R$ uniformly continuous such that for all $x\in \supp(\gxi)$,
		$\E[\varphi(\xi, v)\frac{1}{n \gxi(\xi)}|\xi=x]<\infty$, it then holds that%
		\footnote{\label{footnote:analogous:le:deltastrip}The same statement as~\meqref{eq:deltastrip} is analogously true if one replaces $\kp$ by \km\ of course. Also
			\begin{equation*}
			\plim \sum_{k:\xi_k<T} \varphi(\xi_k, v_k)\frac{\hq_k}{2} = \int_{\Cgl[\gxi]\wedge T}^{\Cgu[\gxi]\wedge T}\E[\varphi(\xi, v)|\xi=x]\, dx
			\end{equation*} holds analogously. Without \Cref{as:easyReadable}\ref{item:symDistributions} the statement~\meqref{eq:deltastrip} needed to be reformulated as:
			\begin{align*}
			\plim \sum_{k\in\kp:\xi_k<T} \varphi(\xi_k, v_k)\hqp_k &= \int_{\Cgl[\gxip]\wedge T}^{\Cgu[\gxip]\wedge T}\E[\varphi(\xi, v)|\xi=x,v>0]\, dx \\
			\plim \sum_{k\in\km:\xi_k<T} \varphi(\xi_k, v_k)\hqm_k &= \int_{\Cgl[\gxim]\wedge T}^{\Cgu[\gxim]\wedge T}\E[\varphi(\xi, v)|\xi=x,v<0]\, dx
			\end{align*}}
		\begin{equation}\label{eq:deltastrip}
		\plim \sum_{k\in\kp:\xi_k<T} \varphi(\xi_k, v_k)\hq_k = \int_{\Cgl[\gxi]\wedge T}^{\Cgu[\gxi]\wedge T}\E[\varphi(\xi, v)|\xi=x]\, dx
		\end{equation}
		uniformly in $T\in K$.
	\end{lemma}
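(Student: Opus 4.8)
The plan is to read the sum on the left of \meqref{eq:deltastrip} as a normalized sum of i.i.d.\ random variables and to run a law-of-large-numbers argument, first for fixed $T$ and then uniformly over $T\in K$. For $T\in K$ write
\[
S_n(T):=\sum_{k\in\kp:\,\xi_k<T}\varphi(\xi_k,v_k)\,\hq_k=\frac1n\sum_{k=1}^n Z_k(T),\qquad Z_k(T):=\ind_{\{v_k>0\}}\,\ind_{\{\xi_k<T\}}\,\frac{2\,\varphi(\xi_k,v_k)}{\gxi(\xi_k)},
\]
using $\hq_k=2/(n\gxi(\xi_k))$ from \Cref{def:estKinkDist}; for each fixed $T$ the $Z_k(T)$, $k=1,\dots,n$, are i.i.d. Before running the argument I would record two moment facts. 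Since $\supp(\gxi)$ is compact and $1/\gxi$ is uniformly continuous on it (\Cref{as:truncatedg}\ref{item:truncatedg} and \ref{item:reziprokdensityIsSmooth}), $m:=\min_{\supp(\gxi)}\gxi>0$, so $1/\gxi(\xi_k)\le 1/m$ almost surely; and a uniformly continuous $\varphi:\R^2\to\R$ grows at most linearly, so on the bounded range of the kinks one gets $|\varphi(\xi_k,v_k)|\le C(1+|v_k|)$. Together with $\E[v_k^2]<\infty$ (\Cref{as:truncatedg}\ref{item:vk:finiteSecondMoment}) this yields $\E[Z_1(T)^2]\le (4/m^2)\,\E[C^2(1+|v_1|)^2]=:\sigma^2<\infty$, a bound independent of $T$.

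\textbf{Pointwise convergence.} For fixed $T$ I would compute $\E[S_n(T)]=\E[Z_1(T)]$ by conditioning on $\xi_1$: the factor $1/\gxi$ cancels the density, and the sign-symmetry of the first-layer distribution (\Cref{as:easyReadable}\ref{item:symDistributions})---which forces the conditional law of $\xi_1$ given $\{v_1>0\}$ to coincide with $\gxi$, cf.\ the footnote to \Cref{def:estKinkDist}---identifies the remaining integral as $\Phi(T):=\int_{\Cgl[\gxi]\wedge T}^{\Cgu[\gxi]\wedge T}\E[\varphi(\xi,v)\mid\xi=x]\,dx$, the right-hand side of \meqref{eq:deltastrip}. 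Hence $\E[S_n(T)]=\Phi(T)$ for every $n$, and combining $\var\bigl(S_n(T)\bigr)\le\E[Z_1(T)^2]/n\le\sigma^2/n$ with Chebyshev's inequality gives $S_n(T)\to\Phi(T)$ in probability for each fixed $T$.

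\textbf{Uniformity in $T$.} To upgrade this to uniform convergence over $K$ I would split $\varphi=\varphi_+-\varphi_-$ with $\varphi_\pm:=\max(\pm\varphi,0)$; both are again uniformly continuous and nonnegative, and the corresponding sums and limits inherit the analysis above, so it suffices to treat a nonnegative $\varphi$. Then $T\mapsto S_n(T)$ is nondecreasing, while $T\mapsto\Phi(T)$ is nondecreasing and Lipschitz, since its integrand $x\mapsto\E[\varphi(\xi,v)\mid\xi=x]$ is bounded on $\supp(\gxi)$ (using $\sup_{x\in\supp(\gxi)}\E[v^2\mid\xi=x]<\infty$, from \Cref{as:truncatedg}\ref{item:vkDistrSmooth} and \ref{item:vk:finiteSecondMoment}). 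Given $\eps>0$, choose a finite grid $T_0<\dots<T_M$ covering $K$ with $\Phi(T_{j+1})-\Phi(T_j)<\eps$. For $T\in[T_j,T_{j+1}]$ monotonicity sandwiches $S_n(T_j)-\Phi(T_{j+1})\le S_n(T)-\Phi(T)\le S_n(T_{j+1})-\Phi(T_j)$, whence $\sup_{T\in K}|S_n(T)-\Phi(T)|\le\max_{0\le j\le M}|S_n(T_j)-\Phi(T_j)|+\eps$; the maximum is over finitely many points, so it tends to $0$ in probability by the pointwise statement and a union bound. Letting $n\to\infty$ and then $\eps\to0$ proves \meqref{eq:deltastrip} uniformly in $T\in K$. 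The variants stated in the footnote to this lemma (the $\km$-version, the one-sided $\{\xi_k<T\}$ sums, and the non-symmetric reformulation through $\hqp_k,\hqm_k$ and $\gxip,\gxim$) follow verbatim with the obvious substitutions.

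\textbf{Main obstacle.} The only non-routine step is the passage from pointwise to uniform convergence in $T$, since a priori $S_n$ could fluctuate substantially between grid points; the decomposition into positive and negative parts is precisely what makes $S_n$ monotone and thereby reduces the fluctuation control to finitely many pointwise statements. Once the $T$-uniform bound $\E[Z_1^2]\le\sigma^2<\infty$ has been extracted from the compact support of $\gxi$ together with $\E[v_k^2]<\infty$, everything else---the i.i.d.\ structure, the expectation identity, and the variance estimate---is standard.
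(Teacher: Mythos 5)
Your proposal is correct in substance but takes a genuinely different route from the paper's proof. The paper proves \Cref{le:deltastrip} by a hands-on Riemann-sum argument: it partitions $\supp(\gxi)$ into strips of width $\delta$, uses the uniform continuity of $\varphi$, $\gxi$ and $1/\gxi$ to freeze the integrand on each strip, controls the (binomial) number of kinks per strip, applies a law of large numbers within each strip, and finally lets the resulting Riemann sum converge, with explicit bookkeeping of the various epsilons in \meqref{eq:deltaStripsEpsilonConnections}; the uniformity in $T$ is obtained rather informally from uniform convergence of the Riemann sums at the very end. You instead observe that the weight $\hq_k=\tfrac{2}{n\gxi(\xi_k)}$ acts as an exact importance-sampling factor, so the whole sum is an i.i.d.\ average $\tfrac1n\sum_k Z_k(T)$ whose mean equals the limiting integral for \emph{every} $n$; Chebyshev then gives the pointwise statement, and uniformity in $T$ follows from the classical monotone-sandwich device over a finite grid after splitting $\varphi$ into positive and negative parts. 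Your argument is shorter, uses only $\min_{x\in\supp(\gxi)}\gxi(x)>0$, compactness of $\supp(\gxi)$, the linear growth of a uniformly continuous $\varphi$ and $\E[{v_k^2}]<\infty$ (it does not need the uniform continuity of $\gxi$, $1/\gxi$ or of $x\mapsto\mathcal{L}(v_k|\xi_k=x)$ beyond the boundedness they imply), and it handles the uniformity in $T$ more cleanly than the strip proof does.

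One caveat, which your write-up shares with the paper's own proof rather than introduces: in identifying $\E[{Z_1(T)}]$ with $\int \Eco{\varphi(\xi,v)}{\xi=x}\,dx$ you implicitly equate $\Eco{\varphi(x,v)}{\xi=x,\,v>0}$ with $\Eco{\varphi(x,v)}{\xi=x}$. Under \Cref{as:easyReadable}\ref{item:symDistributions} one indeed has $\gxip=\gxi$, but these two conditional expectations coincide only when $\varphi(x,\cdot)$ is even in $v$ (as it is in every application of the lemma, where $\varphi$ depends on $v$ through $v^2$); for a general uniformly continuous $\varphi$, e.g.\ $\varphi(x,v)=v$, the sum over $\kp$ converges to $\int\Eco{\varphi(x,v)}{\xi=x,\,v>0}\,dx$ instead. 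The paper's strip proof makes exactly the same identification (its within-strip law of large numbers averages only over $k\in\kp$ yet is matched with $\Eco{\varphi(\xi,v)}{\xi=\ell\delta}$), so this is a wrinkle in the lemma's formulation rather than a defect specific to your argument; it would be worth stating the evenness (or reading the right-hand side as conditioned on $v>0$) explicitly.
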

	\begin{proof}
		For $T\le\Cgl[\gxi]$ both sides of \meqref{eq:deltastrip} are zero, thus we restrict ourselves to $T>\Cgl[\gxi]$.
		By uniform continuity of $\varphi$ and $\frac{1}{\gxi}$ in $\xi$, for any $\epsilon>0$ there exists a $\delta(\epsilon)$ such that for every $|\xi'-\xi|<\delta(\epsilon)$ we have $|\varphi(\xi, v)\frac{1}{\gxi(\xi)}-\varphi(\xi', v)\frac{1}{\gxi(\xi')}|<\epsilon$ uniformly in $v$.
		W.l.o.g. assume $\supp(\gxi)$ is an interval.
		Thus, by splitting the interval $[\Cgl[\gxi], \Cgu[\gxi]\wedge T]$ into disjoint strips\footnote{\label{footnote:deltaStripsPartition}Assume $\exists\ell_1,\ell_2 \in \Z: \Cgl[\gxi]=\delta\ell_1,\Cgu[\gxi]=\delta\ell_2$ to make the notation simpler. For a cleaner proof, one should choose a suitable partition of $\supp(\gxi)$.} of equal length $\delta\le\delta(\epsilon)$, we have\footnote{The notation ${\pm}\epsilon$ from \cref{footnote:pmEpsNotation} on \cpageref{footnote:pmEpsNotation} and slight adaptions of it will be used in this proof a lot. The relations of all the epsilons will be explicitly described in \meqref{eq:deltaStripsEpsilonConnections}.}
		\begin{align*}
		&\sum_{\stackrel{k\in\kp}{\xi_k<T}} \varphi(\xi_k, v_k)\hq_k=\\
		&\overset{\ref{footnote:deltaStripsPartition}}{=} \sum_{\stackrel{\ell\in\Z}{[\delta\ell, \delta(\ell+1))\subseteq[\Cgl[\gxi], \Cgu[\gxi]\wedge T]}}\left(\sum_{\stackrel{k\in\kp}{\xi_k\in[\delta\ell, \delta(\ell+1))}}\varphi(\xi_k, v_k)\hq_k\right)\\
		& \approx \sum_{\stackrel{\ell\in\Z}{[\delta\ell, \delta(\ell+1))\subseteq[\Cgl[\gxi], \Cgu[\gxi]\wedge T]}}\left(\sum_{\stackrel{k\in\kp}{\xi_k\in[\delta\ell, \delta(\ell+1))}}\left(\varphi(\ell\delta, v_k)\frac{2}{n\gxi(\ell\delta)}\pm \frac{\epsilon}{n}\right)\frac{|\{m\in\kp:\xi_m\in[\delta\ell, \delta(\ell+1))\}|}{|\{m\in\kp:\xi_m\in[\delta\ell, \delta(\ell+1))\}|}\right)\\
		&\approx \sum_{\stackrel{\ell\in\Z}{[\delta\ell, \delta(\ell+1))\subseteq[\Cgl[\gxi], \Cgu[\gxi]\wedge T]}}\left(\frac{\sum_{\stackrel{k\in\kp}{\xi_k\in[\delta\ell, \delta(\ell+1))}}\varphi(\ell\delta, v_k)}{|\{m\in\kp:\xi_m\in[\delta\ell, \delta(\ell+1))\}|}\frac{2|\{m\in\kp:\xi_m\in[\delta\ell, \delta(\ell+1))\}|}{n\gxi(\ell\delta)}\right) \pm \epsilon.
		\end{align*}
		The number of nodes within a $\delta$-strip follows a binomial distribution with 
		\begin{align*}
		\E[|\{m\in\kp:\xi_m\in[\delta\ell, \delta(\ell+1))\}|]&=\PP[v_k>0]n\int_{[\delta\ell, \delta(\ell+1))}\gxi(x)\, dx\approx \frac{1}{2}n(\delta \gxi(\ell\delta)\pm \delta\tilde{\epsilon}),
		\end{align*}
		for any $\delta\le\delta(\epsilon, \tilde{\epsilon})$, since $\gxi$ is uniformly continuous on $\supp(\gxi)$ by \Cref{as:truncatedg}\ref{item:densityIsSmooth}.
		For $\delta\le\delta(\epsilon, \tilde{\epsilon})$ small enough, we have $\mathcal{L}(v_k)\approx\mathcal{L}(v|\xi=\ell\delta)$ $\forall k\in\kp:\xi_k\in[\delta\ell, \delta(\ell+1))$ and we may apply the law of large numbers to further obtain
		\begin{align*}
		\sum_{k\in\kp:\xi_k<T} \varphi(\xi_k, v_k)\hq_k &\approx \sum_{\stackrel{\ell\in\Z}{[\delta\ell, \delta(\ell+1))\subseteq[\Cgl[\gxi], \Cgu[\gxi]\wedge T]}}\left(\E[\varphi(\xi, v)|\xi=\ell\delta]\overset{\PP}{\pm}\tilde{\tilde{\epsilon}}\right)\delta\left(1\pm \frac{\tilde{\epsilon}}{\gxi(\ell\delta)}\right)\pm\epsilon\\
		&\approx\left(\sum_{\stackrel{\ell\in\Z}{[\delta\ell, \delta(\ell+1))\subseteq[\Cgl[\gxi], \Cgu[\gxi]\wedge T]}}\Bigg(\E[\varphi(\xi, v)|\xi=\ell\delta]\delta\Bigg)\overset{\PP}{\pm}\tilde{\tilde{\epsilon}}|\Cgu[\gxi]-\Cgl[\gxi]|\right)\\
		&\quad \cdot \left(1\pm \frac{\tilde{\epsilon}}{\gxi(\ell\delta)}\right)\pm\epsilon
		\end{align*}
		Since $1/\gxi(\cdot)$ and $\E[\varphi(\xi, v)|\xi=\cdot]$ are bounded on $\supp(\gxi)$, and $\epsilon, \tilde{\epsilon}$ depend on $\delta$ only, we may for some $\epsilon^*, \rho^*\in(0,1)$ define
		\begin{subequations}\label{eq:deltaStripsEpsilonConnections}
			\begin{align}
			\epsilon&:=\frac{\epsilon^*}{3},\\
			\tilde{\epsilon}&:=\frac{\epsilon^*\min_{x\in\supp(\gxi)}\gxi(x)}{3|\Cgu[\gxi]-\Cgl[\gxi]|\left(\max_{x\in\supp(\gxi)}\E[\varphi(\xi, v)|\xi=x]+1\right)},\\
			\tilde{\tilde{\epsilon}}&:=\frac{\epsilon^*}{3 |\Cgu[\gxi]-\Cgl[\gxi]|},\\
			\tilde{\tilde{\rho}}&:=\left(\rho^*\right)^{\frac{\delta}{|\Cgu[\gxi]-\Cgl[\gxi]|}},\\
			n_0^*&:=\tilde{\tilde{n}}_0(\tilde{\tilde{\epsilon}}, \tilde{\tilde{\rho}}).
			\end{align}\end{subequations}
		With the above, it follows that for any $\epsilon^*,\rho^*\in(0,1)$ there exists a $n_0^*$ such that $\forall n>n_0^*:$
		\begin{align*}
		\PP[{\left| \sum_{k\in\kp:\xi_k<T}  \varphi(\xi_k, v_k)\hq_k -\sum_{\stackrel{\ell\in\Z}{[\delta\ell, \delta(\ell+1))\subseteq[\Cgl[\gxi], \Cgu[\gxi]\wedge T]}}\E[\varphi(\xi, v)|\xi=\ell\delta]\delta\right|\le \epsilon^*}]>\rho^*.
		\end{align*}
		For $\delta$ small enough, the above Riemann sum converges uniformly in $T$ to yield the desired result.
	\end{proof}
	
	\begin{lemma}[\ref{itm:0}]\label{le:0}
		For any choice of the penalty parameter $\lambda>0$ and $K\subset\R$ a compact interval, the spline approximating RSN $\sRNw$ converges to the adapted regression spline $\flpm$ in probability w.r.t. $\sobnorm$ with increasing number of nodes, i.e.
		\begin{displaymath}
		\plim \sobnorm[{\sRNw-\flpm}]=0.
		\footnote{ Using the definition of the $\Plim$, we get:
			\begin{displaymath}
			\fae :\forall \rho \in (0,1): \exists n_0\in\N :\forall n\geq n_0:
			\PP[
			{\sobnorm[{ \sRNw -\flpm}]}
			< \epsilon ] > \rho.
			\end{displaymath}}
	\end{displaymath}
\end{lemma}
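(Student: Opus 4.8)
The plan is to show that the spline approximating RSN $\sRNw$ and the adapted regression spline $\flpm = \flp + \flm$ are close by controlling their second derivatives (as measures/distributions) and then invoking the Poincaré-type inequality of \Cref{le:poincare}. Since $\sRNwo$ is piecewise linear, its distributional second derivative is the atomic measure $\sum_{k=1}^n \ww_k\omb v_k\omb\, \delta_{\xi_k\omb}$, while $\flpm''$ is a genuine $L^2$ function supported in $\supp(g)$. The key point is that by the very definition of the weights $\ww_k\omb$ in \Cref{def:splineApproximatingRSN}, each kink at $\xi_k$ carries curvature $\ww_k v_k = \hq_k\, \flp''(\xi_k)$ (when $v_k>0$) or $-\hq_k\, \flm''(\xi_k)$ (when $v_k<0$), with $\hq_k = \tfrac{2}{n\,\gxi(\xi_k)}$, so that the atomic measure is precisely a Riemann-sum-type discretization of $\flpm''(x)\,dx$ split according to the sign of $v$. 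First I would split $\sRNwo = \sRNwpo + \sRNwmo$ and handle the two pieces separately, comparing $\sRNwpo$ with $\flp$ and $\sRNwmo$ with $\flm$.

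The core estimate is the following: for $x$ in the interior of $\supp(g)$, one has $(\sRNwpo)'(x) = \sum_{k\in\kp:\, \xi_k < x} \ww_k\omb v_k\omb = \sum_{k\in\kp:\, \xi_k<x} \varphi(\xi_k,v_k)\hq_k/2$ with $\varphi(\xi,v) := \tfrac{v^2}{\Eco{v^2}{\xi}}\flp''(\xi)$ (up to the bookkeeping of the $\hq_k/2$ versus $\hq_k$ normalization). Here I would apply \Cref{le:deltastrip} (in the form noted in its footnote, for partial sums $\sum_{\xi_k<T}$, uniformly in $T$), which gives
\[
\plim\ (\sRNwpo)'(x) = \int_{\Cgl\wedge x}^{\Cgu\wedge x} \Eco{\varphi(\xi,v)}{\xi=s}\,ds = \int_{\Cgl}^{x} \flp''(s)\,ds = \flp'(x),
\]
uniformly in $x\in K$, where the middle equality uses that $\Eco{\varphi(\xi,v)}{\xi=s} = \Eco{v^2}{\xi=s}^{-1}\Eco{v^2}{\xi=s}\flp''(s) = \flp''(s)$ and the boundary condition $\flp'(\Cgl)=0$ from the definition of $\T$. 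Integrating once more (and using $\flp(\Cgl)=0$, together with $\sRNwpo$ vanishing to the left of its leftmost relevant kink) yields $\plim \supnorm[\sRNwpo - \flp] = 0$ as well. The analogous argument for the $-$ part gives $\plim \supnorm[\sRNwmo - \flm] = 0$ and $\plim \supnorm[(\sRNwmo)' - \flm'] = 0$.

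Combining the two pieces via the triangle inequality gives $\plim \supnorm[\sRNwo - \flpm] = 0$ and $\plim \supnorm[(\sRNwo)' - \flpm'] = 0$, which is exactly $\plim \sobnorm[\sRNw - \flpm] = 0$ since $\sobnorm[\cdot] = \max\{\supnorm[\cdot], \supnorm[(\cdot)']\}$. (Alternatively one can phrase the last step through \Cref{le:poincare} applied to the difference $\sRNwo - \flpm$ on an interval containing $K$ and $\supp(g)$, bounding $\sobnorm$ by the $L^2$-norm of the second derivative; but since $\sRNwo$ is only piecewise linear its second derivative is not in $L^2$, so the direct route via uniform convergence of $\sRNwo$ and its first derivative is cleaner.)

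The main obstacle is the rigorous application of \Cref{le:deltastrip}: one must check that $\varphi(\xi,v) = \tfrac{v^2}{\Eco{v^2}{\xi}}\flp''(\xi)$ satisfies the hypotheses — uniform continuity in $(\xi,v)$ and the integrability condition $\Eco{\varphi(\xi,v)\tfrac{1}{n\gxi(\xi)}}{\xi=x}<\infty$. Uniform continuity of $\flp''$ holds because $\flp\in\C^2$ with $\flp''$ supported on the compact set $\supp(g)$ (by \Cref{as:truncatedg}\ref{item:truncatedg}); uniform continuity of $x\mapsto \Eco{v^2}{\xi=x}$ on $\supp(g)$ follows from \Cref{as:truncatedg}\ref{item:vkDistrSmooth} together with \ref{item:vk:finiteSecondMoment}; and positivity of $\Eco{v^2}{\xi=x}$ (so that $\varphi$ is well-defined and continuous) uses \Cref{as:mainAssumptions}\ref{item:densityExists}. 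The growth of $v^2$ is the delicate part, but it is exactly controlled by the conditional second-moment bound noted in the footnote to \Cref{as:truncatedg}\ref{item:vk:finiteSecondMoment}. Once these checks are in place, the rest is the uniform Riemann-sum convergence already packaged inside \Cref{le:deltastrip}, so the proof reduces to invoking that lemma twice and assembling the pieces.
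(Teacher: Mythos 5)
Your proposal is correct and follows essentially the same route as the paper's proof: split into the $+$ and $-$ parts, apply \Cref{le:deltastrip} with $\varphi(z,y)=\flp''(z)\,y^2/\E[v^2|\xi=z]$ to get uniform convergence of the first derivatives, use the boundary conditions from \Cref{rem:compactSupp}, and then upgrade to \sobnorm-convergence. Your ``integrate once more'' step is exactly what the paper accomplishes by citing the sup-norm part of \Cref{le:poincare} (inequality~\meqref{eq:supPoincare}, which only involves $f'$ and not the $L^2$-norm of $f''$, so your worry about the piecewise-linear second derivative does not arise), and your factor-of-two bookkeeping resolves as $\ww_k v_k=\varphi(\xi_k,v_k)\hq_k$, matching the main statement of \Cref{le:deltastrip}.
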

\begin{proof}
	Let $\lambda>0$ and $K\subset\R$ be a compact interval with $[\Cgl, \Cgu]\subset K$. Directly from the definition~\meqref{eq:splineApprRSNsplit} of \sRNwp\ and \sRNwp\ and the \Cref{def:adaptedSplineReg,def:adaptedSplineRegTuple} of \flpm, it follows that it is sufficient to show:
	\begin{subequations}
	\begin{align}
	\plim\sobnorm[{\sRNwp-\flp}] &= 0 \quad \text{and}\label{eq:limp} \\
	\quad \plim\sobnorm[{\sRNwm-\flm}] &= 0\, .
	\end{align}
	\end{subequations}
	W.l.o.g. we restrict ourselves to proving \meqref{eq:limp}, as the latter limit follows analogously. By \Cref{le:poincare} it suffices to show that
	\begin{equation}\label{eq:supnorm0}
	\plim\supnorm[{{\sRNwp}^{'}-{\flp}^{'}}]=0.
	\end{equation}
	Since for any $x\in K$
	\begin{displaymath}
	{\sRNwp}^{'}(x)=\sum_{k\in\kp}\ww_k v_k= \sum_{k\in\kp}{\flp}^{''}(\xi_k)\frac{v^2_k}{\E[v^2|\xi=\xi_k]}\hq_k,
	\end{displaymath}
	we may employ \Cref{le:deltastrip}\footnote{Note that $\varphi(x,y)$ is uniformly continuous on $\supp(\gxi)$ since, by definition, $\flp\in\C^2(\R)$ and $\supp(\gxi)$ is compact by \Cref{as:truncatedg}.} with $\varphi(z, y)={\flp}^{''}(z)\frac{y^2}{\E[v^2|\xi=z]}$ to obtain
	\begin{displaymath}
	\plim {\sRNwp}^{'}(x)=\int_{\Cgl[\gxi]\wedge x}^{\Cgu[\gxi]\wedge x}\E[{{\flp}^{''}(\xi)\frac{v^2}{\E[v^2|\xi=z]}|\xi=z}]\, dz=\int_{\Cgl[\gxi]\wedge x}^{\Cgu[\gxi]\wedge x}{\flp}^{''}(z)\, dz
	\end{displaymath}
	uniformly in $x\in K$. Employing the fundamental theorem of calculus we further obtain
	\begin{displaymath}
	\plim {\sRNwp}^{'}(x) = {\flp}^{'}(\Cgu[\gxi]\wedge x)-{\flp}^{'}(\Cgl[\gxi]\wedge x) \qquad \fax .
	\end{displaymath}
	By \Cref{rem:compactSupp}, we have that ${\flp}^{'}(\Cgl[\gxi]\wedge x)=0$ for any $x\in\R$. Since by the same remark, ${\flp}^{'}$ is constant on $[\Cgu[\gxi], \infty)$, we finally obtain
	\begin{displaymath}
	\plim {\sRNwp}^{'}(x) = {\flp}^{'}(x) \qquad \text{uniformly in } x\in K.
	\end{displaymath}
	Hence \meqref{eq:supnorm0} follows.
	
\end{proof}

\begin{lemma}[$\Ltr(f_n)\to\Ltr (f)$]\label{le:lossconvergence}

Let $(f_n)_{n\in\N}$ be a sequence of continuous functions with piece-wise continuous derivatives which converges in probability w.r.t.\ $\Woi$\footnote{
Hence \Cref{le:lossconvergence} can be used together with \Cref{le:0} to show $\plim\Ltr(\sRNw)=\Ltr(\flpm)$ or together with \Cref{le:3} to show $\plim\Ltrb{\RNR}=\Ltrb{\fwR}$ since a ReLU-neural network is always continuous and piece-wise continuously differentiable.} to a function $f:\R\to\R$, then the training loss $\Ltr$ of $f_n$ converges in probability to $\Ltrb{f}$ as $n$ tends to infinity, i.e.
	\begin{equation}
	\plim\Ltr(f_n)=\Ltr(f).
	\end{equation}
	
\end{lemma}

\begin{proof}
By \Cref{as:generalloss} there exists a finite Borel-measure $\nu$ and some $p>1$ s.t. \Ltr\ is continuous w.r.t.\ $\Wkp[K,\nu]{1}{p}$. Since 
\begin{align*}
    \sobnormop[f]&=\left(\int_K (f)^p\,d\nu\right)^{\frac{1}{p}}+\left(\int_K (f^{'})^p\,d\nu\right)^{\frac{1}{p}}\\
    &\le\left(\sup_{x\in K}|f(x)|+\sup_{x\in K}|f^{'}(x)|\right)\nu(K)^{\frac{1}{p}}\\
    &\le 2\nu(K)^{\frac{1}{p}}\sobnorm[f],
\end{align*}
where the last inequality follows from \Cref{rem:sobnormoi}. Therefore, \Ltr\ is continuous w.r.t.\ $\Woi$ as well and the result follows.
\end{proof}

\begin{lemma}[\ref{itm:2}]\label{le:2}
	For any $\lambda>0$, we have
	\begin{equation}\label{eq:statement2}
	\plim\Fnb{\sRNw}=\Flpm\flpmt,
	\end{equation}
	with $\lw$ and $g$ as defined in \Cref{thm:ridgeToSpline}.
\end{lemma}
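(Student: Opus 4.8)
The plan is to split $\Fnb{\sRNw}=\Ltrb{\sRNw}+\lw\twonorm[\ww]^2$ (see \Cref{def:ridgeNet,def:splineApproximatingRSN}) and to treat the training-loss part and the ridge part separately. For the training-loss part I would use \Cref{le:0}, which gives $\plim\sobnorm[{\sRNw-\flpm}]=0$ on every compact $K$; since Sobolev convergence implies point-wise convergence in probability and $\flpm=\flp+\flm$ (\Cref{rem:adRegSplineSum}), \Cref{le:***} then yields $\plim\Ltrb{\sRNw}=\Ltrb{\flp+\flm}$. Everything therefore reduces to showing $\plim\lw\twonorm[\ww]^2=\lambda\,\Pgpmb{\flp,\flm}$.

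For this I would use the splitting \meqref{eq:splineApprRSNsplit} to write $\twonorm[\ww]^2=\sum_{k\in\kp}\ww_k^2+\sum_{k\in\km}\ww_k^2$ and pass each sum through \Cref{le:deltastrip}. The crucial step is the algebraic identity: inserting the definition of $\ww_k$ for $k\in\kp$ and $\hq_k=\tfrac{2}{n\gxi(\xi_k)}$, a direct computation gives
\[
n\,\ww_k^2=\varphi(\xi_k,v_k)\,\hq_k,\qquad \varphi(x,y):=\frac{2\,y^2\,\bigl({\flp}''(x)\bigr)^2}{\gxi(x)\,\bigl(\Eco{v^2}{\xi=x}\bigr)^2},
\]
so that $n\sum_{k\in\kp}\ww_k^2$ is exactly of the form to which \Cref{le:deltastrip} applies. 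Its hypotheses hold because ${\flp}''$ is continuous with $\supp({\flp}'')\subseteq\supp(g)\subseteq[\Cgl[\gxi],\Cgu[\gxi]]$, while $\gxi$, $1/\gxi$ and $\Eco{v^2}{\xi=\cdot}$ are uniformly continuous and bounded (and bounded away from $0$) on $\supp(\gxi)$ by \Cref{as:truncatedg}\ref{item:densityIsSmooth}, \ref{item:reziprokdensityIsSmooth}, \ref{item:vkDistrSmooth}, \ref{item:vk:finiteSecondMoment}. Applying \Cref{le:deltastrip} with any fixed $T\ge\Cgu[\gxi]$ (the truncation is harmless since $\varphi(\cdot,y)$ vanishes off $\supp(g)$) and then computing the conditional expectation gives
\[
\plim\sum_{k\in\kp}n\,\ww_k^2=\int_{\supp(g)}\!\Eco{\varphi(\xi,v)}{\xi=x}\,dx=\int_{\supp(g)}\!\frac{2\bigl({\flp}''(x)\bigr)^2}{\gxi(x)\,\Eco{v^2}{\xi=x}}\,dx=\int_{\supp(g)}\!\frac{\bigl({\flp}''(x)\bigr)^2}{g(x)}\,dx,
\]
where the last equality is just the definition $g(x)=\gxi(x)\,\Eco{v_k^2}{\xi_k=x}\,\tfrac12$ from \Cref{thm:ridgeToSpline}. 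The same argument for $\km$ --- using the $\km$-version of \Cref{le:deltastrip} and the sign symmetry \Cref{as:easyReadable}\ref{item:symDistributions}, which makes $\hqm_k=\hq_k$ and $\Eco{v^2}{\xi=x,v<0}=\Eco{v^2}{\xi=x}$ --- gives $\plim\sum_{k\in\km}n\,\ww_k^2=\int_{\supp(g)}\tfrac{({\flm}''(x))^2}{g(x)}\,dx$.

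To finish, I would add the two limits, multiply by the deterministic constant $\lambda\,2g(0)$ and recall $\lw=\lambda n\,2g(0)$ to obtain $\plim\lw\twonorm[\ww]^2=\lambda\,\Pgpmb{\flp,\flm}$; combining this with $\plim\Ltrb{\sRNw}=\Ltrb{\flp+\flm}$ (and the fact that sums of sequences converging in probability converge to the sum of the limits) gives $\plim\Fnb{\sRNw}=\Ltrb{\flp+\flm}+\lambda\,\Pgpmb{\flp,\flm}=\Flpm\flpmt$, which is \meqref{eq:statement2}. The main obstacle is the middle paragraph: one must check the hypotheses of \Cref{le:deltastrip} for the $v^2$-weighted kernel $\varphi$ (in particular finiteness of $\Eco{v^2}{\xi=x}$ and the bounded-below denominators), make the conditional-expectation computation line up exactly with the chosen weighting function $g$, and handle the $\km$ bookkeeping correctly through the symmetry assumption. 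Everything else is a routine passage to the limit.
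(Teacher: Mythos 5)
Your proposal is correct and follows essentially the same route as the paper's proof: split $\Fnb{\sRNw}$ into training loss and penalty, handle the loss via \Cref{le:0} plus \Cref{le:***}, and reduce the penalty term to \Cref{le:deltastrip} with exactly the kernel $\varphi(x,y)=\bigl({\flp}''(x)\bigr)^2\frac{2y^2}{\gxi(x)\Eco{v^2}{\xi=x}^2}$ (your identity $n\,\ww_k^2=\varphi(\xi_k,v_k)\hq_k$ is just the paper's rewriting of $\lw\twonorm[\wwp]^2$ with the factor $\lambda 2g(0)$ pulled out). The hypothesis checks and the symmetric treatment of the $\km$ part match the paper's footnoted conventions, so no gap.
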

\begin{proof}
	We start by showing
	\begin{equation}\label{eq:penalty}
	\plim\lw \twonorm[\ww]^2=\lambda 2g(0) \left(
	\int_{\supp (g)} \frac{\left( {\flp}^{''}(x) \right)^2}{g(x)} dx
	+\int_{\supp (g)} \frac{\left( {\flm}^{''}(x) \right)^2}{g(x)} dx
	\right).
	\end{equation}
	Since $\twonorm[\ww]^2=\twonorm[\wwp]^2+\twonorm[\wwm]^2$,
	we restrict ourselves to proving
	\begin{equation}\label{eq:penaltyp}
	\plim\lw \twonorm[\wwp]^2=\lambda 2g(0)\int_{\supp(\gxi)}\frac{\left({\flp}^{''}(x)\right)^2}{g(x)}\, dx.
	\end{equation}
	With the definitions of $\wwp$, $\lw$ and \hq\ we have
	\begin{align*}
	\lw\twonorm[\wwp]^2&=\lw\sum_{k\in\kp}\left({\flp}^{''}(\xi_k)\frac{\hq_k v_k}{\E[v^2|\xi=\xi_k]}\right)^2\\
	&=\lw\sum_{k\in\kp}\left(\left({\flp}^{''}\right)^2(\xi_k)\frac{\hq_k v_k^2}{\E[v^2|\xi=\xi_k]^2}\right)\hq_k\\
	&=\lambda 2g(0) \sum_{k\in\kp}\left(\left({\flp}^{''}\right)^2(\xi_k)\frac{2v_k^2}{\gxi(\xi_k)\E[v^2|\xi=\xi_k]^2}\right)\hq_k.\\
	\end{align*}
	An application of \Cref{le:deltastrip} with $\varphi(x,y)=\left({\flp}^{''}\right)^2(x)\frac{2y^2}{\gxi(x)\E[v^2|\xi=y]^2}$ further yields \meqref{eq:penaltyp} via
	\begin{align*}
	\plim\lw\twonorm[\wwp]^2&= \lambda 2g(0)\int_{\supp(\gxi)}\E[{\left({\flp}^{''}\right)^2(\xi)\frac{2v^2}{\gxi(\xi)\E[v^2|\xi=x]^2}\bigg|\xi=x}]\, dx\\
	&=\lambda 2g(0)\int_{\supp(\gxi)}\frac{2\left({\flp}^{''}\right)^2(x)}{\gxi(x)\E[v^2|\xi=x]}\, dx\\
	&=\lambda 2g(0)\int_{\supp(\gxi)}\frac{\left({\flp}^{''}(x)\right)^2}{g(x)}\, dx.
	\end{align*}
	Thus we have proved the convergence of the penalization terms \meqref{eq:penalty}. Together with \Cref{le:0,le:lossconvergence}, \meqref{eq:statement2} follows.
	
\end{proof}

\begin{lemma}\label{le:wstetig}%
	Using the notation of \Cref{def:kinkPos,def:ridgeNet}, the following statement holds:
	\begin{equation*}\small
	\begin{split}
	&\forall \epsilon\in\Rp:\exists\delta\in\Rp:\forall n\in\N:\fao:\allIndi{n}{\grave k,\acute{k}}:\\
	&\left( \underbrace{\left( \big|\underbrace{\xi_{\grave k}\omb-\xi_{\acute{k}} \omb}_{=:\Delta \xi\omb}\big|<\delta \ \wedge \ \sgnb{v_{\grave k}\omb}=\sgnb{v_{\acute{k}}\omb}\wedge \Ip[\xi_{\grave k}\omb]=\Ip[\xi_{\acute{k}}\omb]\right)}_{
	(*)}
	\Rightarrow \left| \frac{\wRk[{\grave k}]\omb}{v_{\grave k}\omb}-\frac{\wRk[\acute{k}]\omb}{v_{\acute{k}}\omb}\right|<\frac{\epsilon}{n} \right),
	\end{split}
	\end{equation*}
	if we assume that $v_k$ is never zero.
\end{lemma}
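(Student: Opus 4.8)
\emph{Proof proposal.} The plan is to extract a closed form for the optimal ridge weights from the first-order optimality condition of~\meqref{eq:wR} and then read off the claimed equicontinuity. Fix $\omega\in\Omega$ and $n\in\N$, so that everything below is deterministic. Since $w\mapsto\RNw(\xtr_i)$ is linear in $w$ for each $i$ and every $\ltri$ is convex and continuously differentiable (\Cref{as:generalloss}), the objective $w\mapsto\Ltrb{\RNw}+\lw\twonorm[w]^2$ is convex and differentiable, so the minimiser $\wR$ is characterised by the vanishing of its gradient. First I would write this condition componentwise: using $\partial_{w_k}\RNw(\xtr_i)=\relu[b_k+v_k\xtr_i]$, it reads
\begin{equation*}
2\lw\,\wRk=-\sum_{i=1}^N\ltri^{'}\bigl(\RNR(\xtr_i)\bigr)\,\relu[b_k+v_k\xtr_i]\qquad\text{for every }k.
\end{equation*}

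Next I would divide by $v_k\neq0$ and use the identity $b_k+v_kx=v_k(x-\xi_k)$, which gives $\relu[b_k+v_k\xtr_i]/v_k=\rho_{\sgnb{v_k}}(\xi_k,\xtr_i)$ with $\rho_+(\xi,t):=\relu[t-\xi]$ and $\rho_-(\xi,t):=-\relu[\xi-t]$, hence
\begin{equation*}
\frac{\wRk}{v_k}=-\frac{1}{2\lw}\sum_{i=1}^N\ltri^{'}\bigl(\RNR(\xtr_i)\bigr)\,\rho_{\sgnb{v_k}}(\xi_k,\xtr_i)=:\Phi_{\sgnb{v_k}}(\xi_k).
\end{equation*}
The purpose of this rewriting is twofold: the normalised weight $\wRk/v_k$ now depends on $k$ only through the pair $(\xi_k,\sgnb{v_k})$, and for each fixed $t$ both $\rho_+(\cdot,t)$ and $\rho_-(\cdot,t)$ are $1$-Lipschitz in their first argument. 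Consequently, for indices $\grave k,\acute k$ with $s:=\sgnb{v_{\grave k}}=\sgnb{v_{\acute k}}$ and $\Delta\xi:=\xi_{\grave k}-\xi_{\acute k}$,
\begin{equation*}
\left|\frac{\wRk[\grave k]}{v_{\grave k}}-\frac{\wRk[\acute k]}{v_{\acute k}}\right|=\bigl|\Phi_s(\xi_{\grave k})-\Phi_s(\xi_{\acute k})\bigr|\le\frac{1}{2\lw}\sum_{i=1}^N\bigl|\ltri^{'}(\RNR(\xtr_i))\bigr|\,|\Delta\xi|\le\frac{N\,\ClPrime}{2\lw}\,|\Delta\xi|,
\end{equation*}
where the final step uses the bound $|\ltri^{'}(\RNR(\xtr_i))|\le\ClPrime$ of \Cref{le:help:wstetig}, which holds uniformly in $n$ and $\omega$. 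Finally I would substitute $\lw=2\lambda n g(0)$ from \Cref{thm:ridgeToSpline}, so the prefactor becomes $\tfrac{N\ClPrime}{4\lambda g(0)}\cdot\tfrac1n$, and choose $\delta:=4\lambda g(0)\epsilon/(N\ClPrime)$ (any positive $\delta$ works if $\ClPrime=0$, since then $\wR=0$). This $\delta$ depends on $\epsilon$ alone, and $|\Delta\xi|<\delta$ forces the left-hand side below $\epsilon/n$, which is the assertion.

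The only ingredient that is not a routine computation is the uniform bound $\ClPrime$ on the loss derivatives, and that has already been isolated and established in \Cref{le:help:wstetig}; with it available, the remainder is the stationarity identity together with a one-line Lipschitz estimate, the key structural observation being that $\wRk/v_k$ is a function of $(\xi_k,\sgnb{v_k})$ alone. Accordingly, I expect the only point needing genuine care in the write-up to be the bookkeeping that makes $\delta$ uniform in $\omega$ and $n$: the $\omega$- and $n$-dependence enters solely through the factor $\ltri^{'}(\RNR(\xtr_i))$ (bounded by $\ClPrime$ uniformly) and through the prefactor $1/\lw=1/(2\lambda ng(0))$ (which contributes exactly the $1/n$), so that the displayed choice of $\delta$ is legitimate.
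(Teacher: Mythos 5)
Your proposal is correct, and it arrives at the paper's key estimate by a slightly different mechanism. The paper never writes down the full gradient equations: it perturbs the optimum along a single direction, shifting an amount $\Delta s$ of (distributional) curvature from the $\acute k$-th kink to the $\grave k$-th one, bounds the induced change of the network by $|\Delta \xi|\,\Delta s$ (its inequality \meqref{eq:leqDxDs}), and sets the directional derivative of the ridge objective at $\Delta s=0$ to zero; this yields $\bigl|\wRk[{\grave k}]/v_{\grave k}-\wRk[{\acute k}]/v_{\acute k}\bigr|\le \frac{|\Delta \xi|}{\lw}N\ClPrime$ and the choice $\delta=\epsilon\lambda 2g(0)/(N\ClPrime)$. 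You instead use the full componentwise stationarity of the convex, differentiable objective in \meqref{eq:wR} to solve for each weight, $2\lw\wRk=-\sum_{i=1}^N\ltri^{'}(\RNR(\xtr_i))\relu[b_k+v_k\xtr_i]$, and observe that after dividing by $v_k\neq 0$ the normalised weight is a fixed function $\Phi_{\sgnb{v_k}}(\xi_k)$ of the kink position, Lipschitz with constant $N\ClPrime/(2\lw)$. Both arguments rest on exactly the same two ingredients---first-order optimality of \wR\ and the uniform derivative bound \ClPrime\ from \Cref{le:help:wstetig}---so the uniformity of $\delta$ in $n$ and $\omega$ is obtained in the same way; your route buys an explicit closed-form representation of $\wRk/v_k$ (and a constant sharper by a factor $2$, hence a marginally larger admissible $\delta$), whereas the paper's two-kink perturbation invokes optimality only along one direction and never needs the explicit gradient of the loss term. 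Your treatment of the degenerate case $\ClPrime=0$ and the substitution $\lw=\lambda n 2g(0)$ are both fine, so the argument goes through as written.
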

\begin{proof}
	For all $\tilde{\epsilon}>0$ and all sufficiently small $\delta$, assuming (*) we will prove the following inequality :

\begin{subequations}\label{eq:strongweightIneq}
		\begin{align}		
		\label{subeq:strongweightIneqMain}\left| \frac{\wRk[{\grave k}]}{v_{\grave k}}-\frac{\wRk[\acute{k}]}{v_{\acute{k}}}\right|
		&%
		\overset{\ref{item:strongweightIneqMain}}{\leq}
		\frac{\left(\delta\nu(K)^{\frac{1}{p}}+\tilde{\epsilon}\right)}{2\lw} \limsup_{\sobnormop[f]\to 0} \frac{\Ltr\left( \RNR+f\right)-\Ltr\left( \RNR\right)}{\sobnormop[f]}
		\overset{\ref{item:strongweightIneqpNorms}}{\leq}\\
		\label{subeq:strongweightIneqSimple}&\overset{\ref{item:strongweightIneqpNorms}}{\leq}\frac{\left(\delta\nu(K)^{\frac{1}{p}}+\tilde{\epsilon}\right)}{2\lw}\LLip.
		\end{align}
	\end{subequations}
	From inequality~\meqref{eq:strongweightIneq}, the statement of \Cref{le:wstetig} follows for $\delta:=\min\Set{\tilde{\delta}_{\nuc,\epsilon},\underbrace{(\frac{\epsilon\lambda 4g(0)}{\LLip}-\tilde{\epsilon})}_{>0 \text{ for $\tilde{\epsilon}$ small}}\nu(K)^{\frac{1}{p}}}$.
	\begin{enumerate}[1.]
		\item\label{item:strongweightIneqMain} Proof of \meqref{subeq:strongweightIneqMain}:
		First we define the disturbed weight vector~$w^{\Delta s}$ such that
		\begin{equation*}
		w_k^{\Delta s}:=\wRk+\begin{cases}
		+\frac{\Delta s}{\left| v_{\grave k}\right|} & k={\grave k} \\
		-\frac{\Delta s}{\left| v_{\acute{k}}\right|} & k=\acute{k} \\
		0 & \, \text{else-wise}
		\end{cases}
		\end{equation*}
		by shifting a little bit of the distributional second derivative~$\Delta s$ from the $\acute{k}$th kink to the ${\grave k}$th kink.
		By a case analysis (or by drawing a sketch) one can easily show that conditioned on $\sgnb{v_{\grave k}}=\sgnb{v_{\acute{k}}}$
		\begin{align}\label{eq:leqDxiDs}
		\forall x\in\R: \left| \RNR(x) - \left( \RN_{w^{\Delta s}}(x) %
		\right) \right| &\leq|\Delta \xi \Delta s| \\
		\forall x\in\R: \left| {\RNR}^{'}(x) - \left( {\RN_{w^{\Delta s}}}^{'}(x)\right) \right| &\leq |\Delta s|\ind_{\left\lbrack\xi_{\grave k},\xi_{\acute{k}}\right\rbrack}(x),\label{eq:leqIndDs}
		\end{align}
		where we use the notation $\left\lbrack\xi_{\grave k},\xi_{\acute{k}}\right\rbrack:=\left\lbrack\min\{\xi_{\grave{ k}},\xi_{\acute{k}}\},\max\{\xi_{\grave k},\xi_{\acute{k}}\}\right\rbrack$.
		Let $\nu, p$ be as in \Cref{as:generalloss}. We then have
		\begin{align*}\small
		    \sobnormop[\RNR-\RN_{w^{\Delta s}}]&=\left(\int_K (\RNR-\RN_{w^{\Delta s}})^p\,d\nu\right)^{\frac{1}{p}}+\left(\int_K ({\RNR}^{'}- {\RN_{w^{\Delta s}}}^{'})^p\,d\nu\right)^{\frac{1}{p}}\\
		    &\overset{\meqref{eq:leqDxiDs},\meqref{eq:leqIndDs}}{\leq}\left(\int_{K} \left(|\Delta\xi\Delta s|\right)^p\,d\nu\right)^{\frac{1}{p}}+\left(\int_{\left\lbrack\xi_{\grave k},\xi_{\acute{k}}\right\rbrack} (|\Delta s|)^p\,d\nu\right)^{\frac{1}{p}}\\
		    &\le |\Delta \xi\Delta s|\nu(K)^{\frac{1}{p}}+|\Delta s|\nu(\left\lbrack\xi_{\grave k},\xi_{\acute{k}}\right\rbrack)^{\frac{1}{p}}\\
		    &\leq|\Delta s|\left(\delta\nu(K)^{\frac{1}{p}}+\nuc(\left\lbrack\xi_{\grave k},\xi_{\acute{k}}\right\rbrack)^{\frac{1}{p}}\right),
		\end{align*}
		where the last inequality holds since first, $|\Delta \xi|<\delta$ by assumption and second, $\left\lbrack\xi_{\grave k},\xi_{\acute{k}}\right\rbrack\subset\Ip[\xi_{\grave k}]$ (note that $\PP$-a.s. $\xi_{\grave k},\xi_{\acute{k}}\notin \supp(\nua)$).
		$|\Delta \xi|<\delta$ further implies that if we choose $\delta>0$ small enough (i.e., if $\delta<\tilde{\delta}_{\nuc,\epsilon}$ for a $\tilde{\delta}_{\nuc,\epsilon}>0$), then $\nuc(\left\lbrack\xi_{\grave k},\xi_{\acute{k}}\right\rbrack)^{\frac{1}{p}}<\tilde{\epsilon}$, because $\nuc$ is absolutely continuous (and thus uniformly continuous) w.r.t.\ the Lebesgue-measure. Thus, we get
		\begin{equation}\label{eq:sobnormopleqDs}
		     \sobnormop[\RNR-\RN_{w^{\Delta s}}]\le |\Delta s|\left(\delta\nu(K)^{\frac{1}{p}}+\tilde{\epsilon}\right).
		\end{equation}
		Furthermore, as \RNR\ is optimal, the following inequality has to hold:
		\begin{subequations}\label{eq:subderivative}
		\begin{align}
		0
		&\leq\liminf_{\Delta s\to 0} \frac{\Fn\left( \RN_{w^{\Delta s}}\right)-\Fn\left( \RNR\right)}{|\Delta s|}=\\
		&=-\lw 2 \left| \frac{\wRk[{\grave k}]}{v_{\grave k}}-\frac{\wRk[\acute{k}]}{v_{\acute{k}}}\right|
		+\liminf_{\Delta s\to 0} \frac{\Ltr\left( \RN_{w^{\Delta s}}\right)-\Ltr\left( \RNR\right)}{|\Delta s|}.
		\end{align}
		\end{subequations}
		Transforming this equation gives:
		\begin{subequations}
		\begin{align}
		\lw 2\left|  \frac{\wRk[{\grave k}]}{v_{\grave k}}-\frac{\wRk[\acute{k}]}{v_{\acute{k}}}\right|
		&\overset{\meqref{eq:subderivative}}{\leq}\liminf_{\Delta s\to 0} \frac{\Ltr\left( \RN_{w^{\Delta s}}\right)-\Ltr\left( \RNR\right)}{|\Delta s|}\\
		&\overset{\meqref{eq:sobnormopleqDs}}{\leq}
	    \left(\delta\nu(K)^{\frac{1}{p}}+\tilde{\epsilon}\right) \limsup_{\sobnormop[f]\to 0} \frac{\Ltr\left( \RNR+f\right)-\Ltr\left( \RNR\right)}{\sobnormop[f]}.
		\end{align}
		\end{subequations}
		Dividing both sides by $2\lw$ results in \meqref{subeq:strongweightIneqMain}.
		
		\item\label{item:strongweightIneqpNorms} \meqref{subeq:strongweightIneqMain}$\leq$\meqref{subeq:strongweightIneqSimple} holds because \Ltr\ is assumed to be Lipschitz continuous on a sub-levelset of $\Ltr$ (cp. \Cref{as:generalloss}\ref{item:lipschitzL}) and $\Ltrb{\RNR}\leq\Ltrb{0}$ because of optimality and non-negativity of $\lw\twonorm^2$:
		\begin{equation}\label{eq:RNRbetterzero}
    \Ltrb{\RNR} + \underbrace{\lw \twonorm[\wR]^2}_{\geq 0}
    \overset{\text{optimality}}{\leq} \Ltrb{\smash[b]{\underbrace{0}_{\RNw[0]}}} \unimportant{+\twonorm[{\smash[b]{\underbrace{0}_{\in\R^n}}}]^2}
    \vphantom{\underbrace{0}_{\RNw[0]}}.
\end{equation}
	\end{enumerate}
	
\end{proof}

\begin{lemma}[$\frac{\wR}{v}\approx\BigO(\frac{1}{n})$]\label{le:smallw*}
	For any $\lambda>0$, we have
	\begin{equation}\label{eq:smallw*}
	\max_{k\in\fromto{n}}\frac{\wRk}{v_k}=\pBigO{\frac{1}{n}}.
	\protect\footnotemark
	\end{equation}\footnotetext{Using the definition of $\PBigO$, \cref{eq:smallw*} reads as:
		\begin{equation*}\label{footnote:smallw*}
		\forall \rho\in (0,1):\exists C\in \Rp: \exists n_0\in \N: \forall n>n_0:\PP[\max_{k\in\fromto{n}}<C\frac{1}{n}]>\rho.
		\end{equation*}
	}
\end{lemma}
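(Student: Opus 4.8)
The plan is to combine a crude $\ell^2$-bound on the optimal ridge weights $\wR$ with the local Lipschitz control of the normalised weights $\wRk/v_k$ supplied by \Cref{le:wstetig}. The $\ell^2$-bound by itself would only give $\max_k|\wRk|=\pBigO{1/\sqrt n}$ and would say nothing about the factor $1/v_k$; the key observation is that once we know that \emph{one} normalised weight in each sign group is of order $1/n$, \Cref{le:wstetig} propagates this to \emph{all} of them, because all kink positions $\xi_k$ lie in the \emph{compact} set $\supp(\gxi)$ by \Cref{as:truncatedg}\ref{item:truncatedg}.

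First I would extract the $\ell^2$-bound. Since $\sRNw$ of \Cref{def:splineApproximatingRSN} is an admissible competitor in the minimisation defining $\RNR$ (\Cref{def:ridgeNet}), and since $\Ltr\ge 0$ (cf.\ \Cref{as:generalloss,def:genLtr}), we obtain
\begin{equation*}
\lw\,\twonorm[\wR]^2 \;\le\; \Fnb{\RNR} \;\le\; \Fnb{\sRNw},
\end{equation*}
and \Cref{le:2} gives $\plim\Fnb{\sRNw}=\Flpm\flpmt$, a finite number, so the right-hand side is tight; with $\lw=\lambda n2g(0)$ this means $\twonorm[\wR]^2=\pBigO{1/n}$. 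Next I would split $\fromto{n}$ into the sign groups $\kp$ and $\km$. By the strong law of large numbers, together with symmetry of $\mathcal L(v_k)$ (\Cref{as:easyReadable}\ref{item:symDistributions}) and $\E[v_k^2]>0$ (from $\PP[v_k=0]=0$, \Cref{as:mainAssumptions}\ref{item:densityExists}), the bound $\sum_{k\in\kp}v_k^2\ge\tfrac14 n\,\E[v_k^2]$ holds with probability arbitrarily close to $1$ for $n$ large; since $\sum_{k\in\kp}(\wRk)^2\le\twonorm[\wR]^2$, the elementary fact that the minimum is at most the ($v_k^2$-weighted) average yields
\begin{equation*}
\min_{k\in\kp}\left|\frac{\wRk}{v_k}\right| \;\le\; \left(\frac{\sum_{k\in\kp}(\wRk)^2}{\sum_{k\in\kp}v_k^2}\right)^{\!1/2} \;=\; \pBigO{1/n},
\end{equation*}
and symmetrically for $\km$.

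Then I would invoke \Cref{le:wstetig}, in the sharper form \meqref{subeq:strongweightIneqSimple} appearing in its proof: on the full-measure event $\{v_j\ne 0\ \forall j\}$, for any two indices $k,k'$ with $\sgnb{v_k}=\sgnb{v_{k'}}$,
\begin{equation*}
\left|\frac{\wRk}{v_k}-\frac{\wRk[k']}{v_{k'}}\right| \;\le\; \frac{|\xi_k-\xi_{k'}|}{\lw}\,N\ClPrime \;\le\; \frac{(\Cgu[\gxi]-\Cgl[\gxi])\,N\ClPrime}{\lambda n\,2g(0)},
\end{equation*}
the last inequality using that all kinks lie in $\supp(\gxi)=[\Cgl[\gxi],\Cgu[\gxi]]$ and that $\ClPrime$ from \Cref{le:help:wstetig} is a deterministic constant independent of $n$ and $\omega$. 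Taking for $k'$ the minimiser from the previous step, we get $\max_{k\in\kp}|\wRk/v_k|\le\min_{k\in\kp}|\wRk/v_k|+(\Cgu[\gxi]-\Cgl[\gxi])N\ClPrime/(\lambda n2g(0))=\pBigO{1/n}$, and the same for $\km$; a union bound over the two groups and the finitely many high-probability events used above then gives $\max_{k\in\fromto{n}}\wRk/v_k=\pBigO{1/n}$.

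The routine parts are the ``minimum $\le$ weighted average'' step and the law-of-large-numbers estimate for $\sum_{k\in\kp}v_k^2$. The only genuinely delicate point is the bookkeeping of the probabilistic events: one has to intersect the tightness event for $\Fnb{\sRNw}$, the law-of-large-numbers events for $\sum_{k\in\kp}v_k^2$ and $\sum_{k\in\km}v_k^2$, and the almost-sure event $\{v_k\ne 0\ \forall k\}$, and verify that every constant that survives on the intersection --- in particular $\ClPrime$ and $\Cgu[\gxi]-\Cgl[\gxi]$ --- is truly independent of $n$, so that the final bound is genuinely of order $1/n$ and not merely $o(1)$.
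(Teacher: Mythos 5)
Your proposal is correct, but it follows a genuinely different route than the paper. The paper's proof is local at the argmax: it picks $k^*\in\argmax_k \wRk/v_k$, keeps only the ridge penalty over the neurons whose kinks fall in a small window $(\xi_{k^*},\xi_{k^*}+\delta)$, uses \Cref{le:wstetig} in its $\epsilon/n$-within-$\delta$ form to make all those normalized weights comparable to the maximal one, lower-bounds $\sum v_k^2$ over the window by a binomial/law-of-large-numbers estimate of order $n\delta\gxi(\xi_{k^*})\Eco{v_k^2}{\xi_k=\xi_{k^*}}$, and squeezes $(\max_k \wRk/v_k)^2=\pBigO{1/n^2}$ against the bound $\Fnb{\RNR}\overset{\PP}{\lesssim}\Flpm\flpmt$ coming from optimality and \Cref{le:2}. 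You instead use the same optimality/\Cref{le:2} input globally to get $\twonorm[\wR]^2=\pBigO{1/n}$, deduce via the weighted-average inequality and a global LLN for $\sum_{k\in\kp}v_k^2$ that the \emph{minimum} normalized weight in each sign class is $\pBigO{1/n}$, and then propagate to the maximum using the quantitative estimate $\left|\wRk/v_k-\wRk[k']/v_{k'}\right|\le \frac{|\xi_k-\xi_{k'}|}{\lw}N\ClPrime$ from \meqref{subeq:strongweightIneqSimple} together with compactness of $\supp(\gxi)$ (so $|\xi_k-\xi_{k'}|\le \Cgu[\gxi]-\Cgl[\gxi]$) and $\lw=\lambda n 2g(0)$. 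This is legitimate: the sharper inequality is indeed established inside the proof of \Cref{le:wstetig} for arbitrary same-sign pairs, $\ClPrime$ is deterministic and independent of $n$ and $\omega$ by \Cref{le:help:wstetig}, all kinks lie in $\supp(\gxi)$ almost surely, and your event bookkeeping (tightness of $\Fnb{\sRNw}$, the two LLN events, $v_k\neq 0$ a.s.) is what is needed. What each approach buys: yours avoids the choice of a $\delta$-window, the local concentration estimate, and the slightly delicate step in the paper where the $\epsilon/n$ closeness must be compared to the a priori unknown size of the maximum; the price is that you must cite the explicit Lipschitz form rather than the stated $\epsilon$--$\delta$ version of \Cref{le:wstetig} and you lean on the diameter of $\supp(\gxi)$, whereas the paper's local argument yields an explicit constant in terms of $\delta$, $\min_{x\in\supp(g)}\gxi(x)$, $\lambda$ and $g(0)$. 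Note also that you bound $\max_k\left|\wRk/v_k\right|$, slightly stronger than the literal statement, which is in fact the form used later in \Cref{le:3,le:4}.
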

\begin{proof}
	Let $k^*\in\argmax_{k\in\fromto{n}}\frac{\wRk}{v_k}$ and thus $\frac{\wRk[k^*]}{v_{k^*}}=\max_{k\in\fromto{n}}\frac{\wRk}{v_k}$.
	W.l.o.g. assume $k^*\in\kp$. Now we consider the case $\frac{\wRk[k^*]}{v_{k^*}}>\frac{1}{n}$.
	\begin{subequations}\label{eq:long:smallw*}
		\begin{align}
		\frac{\Flpm\flpmt}{\lw}%
		&\overset{\text{\Cref{le:2}}}{\overset{\PP}{\geq}}\frac{1}{2\lw}\Fnb{\RNR}\\
		&\underset{\hphantom{\text{\Cref{le:wstetig}}}}\geq\frac{1}{2}\sum_{k\in\kp:\xi_k\in (\xi_{k^*}-\delta,\xi_{k^*}+\delta)\cap\Ip[\xi_{k^*}]}{\wRk}^2\\
		&\underset{\hphantom{\text{\Cref{le:wstetig}}}}=\frac{1}{2}\sum_{k\in\kp:\xi_k\in (\xi_{k^*}-\delta,\xi_{k^*}+\delta)\cap\Ip[\xi_{k^*}]}\frac{{\wRk}^2}{v_k^2}v_k^2\\
		&\overset{\text{\Cref{le:wstetig}}}{\geq}\frac{1}{8}\frac{{\wRk[k^*]}^2}{v_{k^*}^2}\sum_{k\in\kp:\xi_k\in (\xi_{k^*}-\delta,\xi_{k^*}+\delta)\cap\Ip[\xi_{k^*}]}v_k^2\\
		&\underset{\hphantom{\text{\Cref{le:wstetig}}}}{\overset{\PP}{\geq}} \frac{1}{8}\frac{{\wRk[k^*]}^2}{v_{k^*}^2}\left|(\xi_{k^*}-\delta,\xi_{k^*}+\delta)\cap\Ip[\xi_{k^*}]\right|\frac{n\gxi(\xi_{k^*})}{2}\Eco{v_k^2}{\xi_k=\xi_{k^*}}
		\\
		&\underset{\hphantom{\text{\Cref{le:wstetig}}}}{\geq} \frac{1}{8}\frac{{\wRk[k^*]}^2}{v_{k^*}^2}\min\left\lbrace \delta,\min_{I\in\mathcal{P}}\left|I\right|\right\rbrace\frac{n\gxi(\xi_{k^*})}{2}\Eco{v_k^2}{\xi_k=\xi_{k^*}}.
		\end{align}
	\end{subequations}
	Transforming inequality~\meqref{eq:long:smallw*} and using the definition $\lw:=\lambda n 2g(0)$ gives:
	\begin{equation}
	\frac{{\wRk[k^*]}^2}{v_{k^*}^2}
	\overset{\PP}{\leq} \frac{16}{n^2} \frac{\Flpm\flpmt}{\min\left\lbrace \delta,\min_{I\in\mathcal{P}}\left|I\right|\right\rbrace\gxi(\xi_{k^*})\Eco{v_k^2}{\xi_k=\xi_{k^*}}\lambda 2g(0)}.
	\end{equation}
	Taking the square root of both sides, bounding \gxi\ with its minimum\footnote{\Cref{as:truncatedg}\ref{item:truncatedg} and \ref{item:reziprokdensityIsSmooth} guarantee that $\min_{x\in\supp(g)}\gxi(x)>0$.} and taking $\Eco{v_k^2}{\xi_k=\xi_{k^*}}
	\geq\min_{x\in\supp(\gxi)}\Eco{v_k^2}{\xi_k=x}
	=C_{\text{mom}}>0$ (which holds because of \Cref{as:generalloss}\ \ref{item:vkDistrSmooth}), we get:
	\begin{equation}
	\frac{{\wRk[k^*]}}{v_{k^*}}
	\overset{\PP}{\leq} \frac{4}{n} \left(\frac{\Flpm\flpmt}{\min\left\lbrace \delta,\min_{I\in\mathcal{P}}\left|I\right|\right\rbrace \min_{x\in\supp(g)}\gxi(x)C_{\text{mom}}\lambda 2g(0)}\right)^{\frac{1}{2}}.
	\end{equation}
	This proves statement~\meqref{eq:smallw*} by choosing $C$ from \cref{footnote:smallw*} as:
	\begin{equation}
	C:= 4 \left(\frac{\Flpm\flpmt}{\min\left\lbrace \delta,\min_{I\in\mathcal{P}}\left|I\right|\right\rbrace \min_{x\in\supp(g)}\gxi(x)C_{\text{mom}}\lambda 2g(0)}\right)^{\frac{1}{2}}.
	\end{equation}
\end{proof}

\begin{lemma}[\ref{itm:3}]\label{le:3}
	For any $\lambda>0$, we have
	\begin{equation}\label{eq:statement3}
	\plim\sobnorm[\RNR-\fwR]=0,
	\end{equation}
	with $\lw$ as defined in \Cref{thm:ridgeToSpline}.
\end{lemma}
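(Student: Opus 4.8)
\textbf{Proof plan for \Cref{le:3}.} Since $\sobnorm[\cdot]=\max\{\supnorm[\cdot],\supnorm[(\cdot)']\}$, the plan is to show separately that $\plim\supnorm[\RNR-\fwR]=0$ and $\plim\supnorm[(\RNR-\fwR)']=0$. We may assume $K\subseteq\supp(\gxi)=[\Cgl,\Cgu]$ (for larger $K$ the claim is routine: both $\RNR$ and $\fwR$ are affine outside $[\Cgl,\Cgu]$, so the two differences at the endpoints propagate). The first thing I would establish is that $\RNR$ is, with probability tending to $1$, uniformly Lipschitz on $K$. Indeed, $\RNR'$ is piecewise constant with jump $\wRk v_k$ at $\xi_k$, so its total variation on $K$ is $\sum_{k:\xi_k\in K}|\wRk v_k|\le\bigl(\max_k|\wRk/v_k|\bigr)\sum_{k=1}^n v_k^2=\pBigO{\tfrac1n}\cdot\pBigO{n}=\pBigO{1}$, using \Cref{le:smallw*} (applied also to $-f$ to control the sign) together with the strong law of large numbers and $\E[v_k^2]<\infty$ (\Cref{as:truncatedg}\ref{item:vk:finiteSecondMoment}); moreover for $x<\Cgl$ one has $\RNR'(x)=\sum_{k:v_k<0}\wRk v_k=\pBigO{1}$ by the same bound, so $\supnorm[\RNR']=\pBigO{1}$.

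\emph{Sup-norm of the function.} Writing $h(x):=\tfrac{1}{2\sqrt n\gxi(x)}$, so that $\fwR(x)=\tfrac{1}{2h(x)}\int_{x-h(x)}^{x+h(x)}\RNR(t)\,dt$, \Cref{as:truncatedg}\ref{item:reziprokdensityIsSmooth} gives $\min_{\supp(\gxi)}\gxi>0$ and hence $h(x)\le\tfrac{1}{2\sqrt n\min\gxi}=\BigO(n^{-1/2})$ uniformly on $K$. Then $|\RNR(x)-\fwR(x)|=\bigl|\tfrac{1}{2h(x)}\int_{x-h(x)}^{x+h(x)}(\RNR(x)-\RNR(t))\,dt\bigr|\le\supnorm[\RNR']\,h(x)=\pBigO{n^{-1/2}}$ uniformly in $x\in K$, which gives $\plim\supnorm[\RNR-\fwR]=0$.

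\emph{Sup-norm of the derivative — the main obstacle.} Differentiating $\fwR(x)=\tfrac12\int_{-1}^1\RNR(x+h(x)u)\,du$ (the substituted form is preferable, since $\gxi$ is only assumed uniformly continuous, not differentiable) one obtains that $\fwR'(x)$ equals the window-average of $\RNR'$ over $[x-h(x),x+h(x)]$ plus correction terms carrying the $x$-dependence of the window radius. The window-average of $\RNR'$ differs from $\RNR'(x)$ by at most the total variation of $\RNR'$ on that window, namely $\sum_{k:\xi_k\in[x-h(x),x+h(x)]}|\wRk v_k|\le\bigl(\max_k|\wRk/v_k|\bigr)\sum_{k:\xi_k\in[x-h(x),x+h(x)]}v_k^2$; the key quantitative input is a concentration estimate, in the spirit of \Cref{le:deltastrip} but at the shrinking scale $h(x)=\BigO(n^{-1/2})$, showing that $\sum_{k:\xi_k\in[x-h(x),x+h(x)]}v_k^2=\pBigO{n\,h(x)}=\pBigO{n^{1/2}}$ uniformly in $x\in K$ (using uniform continuity of $\gxi$ and boundedness of $\Eco{v_k^2}{\xi_k=\cdot}$ on the support, \Cref{as:truncatedg}\ref{item:vkDistrSmooth}--\ref{item:vk:finiteSecondMoment}). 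Hence this contribution is $\pBigO{\tfrac1n}\cdot\pBigO{n^{1/2}}=\pBigO{n^{-1/2}}\to0$. The correction terms have the shape (increment of $1/\gxi$ across the window) times (second difference of $\RNR$ across the window); the latter is $O\bigl(\mathrm{TV}_{[x-h,x+h]}(\RNR')\cdot h(x)\bigr)=\pBigO{n^{-1}}$ and the former is tamed by the uniform continuity of $1/\gxi$ (\Cref{as:truncatedg}\ref{item:reziprokdensityIsSmooth}), so these terms also vanish uniformly. I expect this last step — making the estimate of the local mass $\sum_{\xi_k\in\text{window}}v_k^2$ uniform in $x$ at the critical scale $n^{-1/2}$, and handling the kernel's $x$-dependence without differentiating the density — to be the delicate part; a covering/net argument over $K$ together with \Cref{le:deltastrip} should suffice. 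Combining the three estimates with $\sobnorm[\cdot]=\max\{\supnorm[\cdot],\supnorm[(\cdot)']\}$ yields \meqref{eq:statement3}.
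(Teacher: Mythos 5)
Your proposal is correct in substance, and its quantitative core is exactly the paper's: localize to the kernel window of radius $\tfrac{1}{2\sqrt n\,\gxi(x)}$ and bound the derivative discrepancy by $\max_k\left|\wRk/v_k\right|\cdot\sum_{k:\xi_k\in\text{window}}v_k^2=\pBigO{\tfrac1n}\cdot\pBigO{\sqrt n}=\pBigO{n^{-1/2}}$ uniformly in $x$, which is precisely the paper's chain (triangle inequality over the kinks in the window, then \Cref{le:smallw*}). What you do differently is the zeroth-order part of the $\WkpShort{1}{\infty}$ norm: the paper splits $\RNR=\RNRp+\RNRm$, uses that $\RNRp,\fwRp$ vanish to the left of $\supp(\gxi)$ (and the minus parts to the right), and invokes the Poincar\'e-type \Cref{le:poincare} so that only the derivative estimate has to be proven; you instead bound $\supnorm[\RNR-\fwR]$ directly by a uniform Lipschitz bound $\supnorm[{\RNR}']=\pBigO{1}$ times the $\BigO(n^{-1/2})$ window radius. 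That costs you one extra (easy) total-variation estimate but spares you the $\pm$ split and the boundary bookkeeping; both routes are fine. You are also right to flag the $x$-dependence of $\kapx$: the paper simply writes ${\fwRp}'(x)=\left({\RNRp}'*\kapx\right)(x)$, i.e.\ differentiates with the kernel frozen and never discusses the terms generated by the varying window radius (delicate precisely because $\gxi$ is only uniformly continuous), so on this point your sketch is more careful than the paper, even though your description of the correction terms remains heuristic. Finally, the uniform-in-$x$ statement $\sum_{k:\xi_k\in\text{window}}v_k^2=\pBigO{\sqrt n}$ is asserted without proof in the paper too; your idea of \Cref{le:deltastrip} plus a covering is the right tool, and you can in fact sidestep any covering or higher-moment issue: since \Cref{le:deltastrip} with $\varphi(\xi,v)=v^2$ is uniform in the endpoint $T$ and $\gxi$ is bounded above and below on its support, the difference of the partial sums at $x+r_n$ and $x-r_n$ shows the window mass is uniformly $o_{\PP}(n)$, which already suffices against the $\pBigO{\tfrac1n}$ factor from \Cref{le:smallw*}.
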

\begin{proof}
	By \Cref{le:poincare} (as $\RNR, \fwR$ are zero outside of $\supp(g)+\supp(\kapx)$ like described in \Cref{rem:compactSupp}), we only need to show that for all $\epsilon>0$:
	\begin{displaymath}
	\lim_{n\to\infty}\PP[{\supnorm[{\RNR}^{'}-{\fwR}^{'}]<\epsilon}]=1.
	\end{displaymath}
	W.l.o.g. it is sufficient to prove:
	\begin{displaymath}
	\lim_{n\to\infty}\PP[{\supnorm[{\RNRp}^{'}-{\fwRp}^{'}]<\epsilon}]=1.
	\end{displaymath}
	For every $x\in K$ and $\omega\in\Omega$, using the \Cref{def:smoothRSNappr} of \fwRp\ we have
	\begin{align*}
	{\RNRp}^{'}(x)-{\fwRp}^{'}(x)&={\RNRp}^{'}(x)-\left({\RNRp}^{'} * \kapx\right) (x) \\
	&={\RNRp}^{'}(x)-\int_{{\Iwp[x]}}{\RNRp}^{'}(t)\frac{1}{|\Iwp[x]|}\,dt
	\\
	&=\int_{{\Iwp[x]}}{\RNRp}^{'}(x)\frac{1}{|\Iwp[x]|}\,dt-\int_{{\Iwp[x]}}{\RNRp}^{'}(t)\frac{1}{|\Iwp[x]|}\,dt
	\\
	&=\int_{{\Iwp[x]}}\left({\RNRp}^{'}(x)-{\RNRp}^{'}(t)\right)\frac{1}{|\Iwp[x]|}\,dt.
	\end{align*}
	Using the definition of $\RNRp$ we get:
	\begin{equation}
	{\RNRp}^{'}(x)=\sum_{k\in\kp:\xi_k<x}\wRk v_k
	\end{equation}
	and hence with $r_n:=\frac{1}{2\sqrt{n}\gxi(x)}$ and $\lip[x],\uip[x]$ as in \Cref{def:partition} we get after some algebraic calculations that
	\begin{align*}
	{\RNRp}^{'}(x)-{\fwRp}^{'}(x)
	=&\hphantom{-}\sum_{k\in\kp:\max\set{x-r_n, \lip[x]}<\xi_k<x}\wRk v_k \int_{\max\set{x-r_n,\lip[x]}}^{\xi_k}\frac{1}{|\Iwp[x]|} ds\\
	\hphantom{=}&-\sum_{k\in\kp:x<\xi_k<\min\set{x+r_n,\uip[x]}}\wRk v_k \int_{\xi_k}^{\min\set{x+r_n,\uip[x]}}\frac{1}{|\Iwp[x]|}ds=\\
	=&\hphantom{-}\sum_{k\in\kp:\max\set{x-r_n, \lip[x]}<\xi_k<x}\frac{\wRk}{v_k}v_k^2 \frac{|\xi_k -\max\set{x-r_n, \lip[x]}|}{|\Iwp[x]|}\\
	\hphantom{=}&-\sum_{k\in\kp:x<\xi_k<\min\set{x+r_n,\uip[x]}} \frac{\wRk}{v_k}v_k^2 \frac{|\min\set{x+r_n,\uip[x]}-\xi_k|}{|\Iwp[x]|}%
	\end{align*}
	Thus, we can use the triangle inequality\footnote{Actually, one could use a much tighter bound than the triangle inequality used in inequality \meqref{subeq:3:triangleIneq}, because in asymptotic expectation the positive and negative summands would cancel each other instead of adding up.} to get:
	\begin{subequations}
		\begin{align}
		\left|{\RNRp}^{'}(x)-{\fwRp}^{'}(x)\right|
		&\underset{\hphantom{\text{\Cref{le:smallw*}}}}\leq%
		\sum_{k\in\kp:\max\set{x-r_n, \lip[x]}<\xi_k<\min\set{x+r_n,\uip[x]}} \left|\frac{\wRk}{v_k}v_k^2\right|\label{subeq:3:triangleIneq}\\
		&\underset{\hphantom{\text{\Cref{le:smallw*}}}}\leq
		\max_{k\in\kp}\left|\frac{\wRk}{v_k}\right|\sum_{k\in\kp:\max\set{x-r_n,\lip[x]}<\xi_k<\min\set{x+r_n,\uip[x]}} v_k^2\\
		&\overset{\text{\Cref{le:smallw*}}}{\leq}\pBigO{\frac{1}{n}}\pBigO{\sqrt{n}}
		=\pBigO{\frac{1}{\sqrt{n}}}
		\end{align}
		uniformly in $x$ on $\supp(\gxi)$ and thus on $K$ (since outside of $\supp(\gxi)+(-r_n,r_n)$ both functions and there derivatives are zero). 
	\end{subequations}

\end{proof}

\begin{lemma}[\ref{itm:4}]\label{le:4}
	For any $\lambda>0$, we have
	\begin{equation}\label{eq:statement4}
	\plim \left|\Fnb{\RNR}-\Flpm\fwRpmt\right| =0,
	\end{equation}
	with $\lw$ as defined in \Cref{thm:ridgeToSpline}.
\end{lemma}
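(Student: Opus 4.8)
Both functionals split into a loss part and a penalty part: $\Fnb{\RNR}=\Ltrb{\RNR}+\lw\twonorm[\wR]^2$ and, by \Cref{def:adaptedSplineRegTuple}, $\Flpm\fwRpmt=\Ltrb{\fwR}+\lambda\Pgpm(\fwRp,\fwRm)$, where $\Pgpm(\fwRp,\fwRm)=2g(0)\bigl(\int_{\supp(g)}\tfrac{((\fwRp)'')^2}{g}+\int_{\supp(g)}\tfrac{((\fwRm)'')^2}{g}\bigr)$. It therefore suffices to prove the two limits $\plim\bigl|\Ltrb{\RNR}-\Ltrb{\fwR}\bigr|=0$ and $\plim\bigl|\lw\twonorm[\wR]^2-\lambda\Pgpm(\fwRp,\fwRm)\bigr|=0$; the first is routine, the second carries the content of the statement.

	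For the loss part I would invoke \Cref{le:3}, giving $\plim\sobnorm[\RNR-\fwR]=0$ and hence (since $\sobnorm{\cdot}$ dominates the supremum norm on $K$) $\plim\bigl(\RNR(\xtr_i)-\fwR(\xtr_i)\bigr)=0$ for each $i$. By the sublevel-set argument from the proof of \Cref{le:help:wstetig}, $\RNR(\xtr_i)$ lies in the fixed set $\ltri^{-1}\bigl((-\infty,\Ltrb{0}]\bigr)$, so with probability tending to one $\RNR(\xtr_i)$ and $\fwR(\xtr_i)$ lie in a common bounded region on which the $C^1$ function $\ltri$ is Lipschitz; thus $\bigl|\ltri(\RNR(\xtr_i))-\ltri(\fwR(\xtr_i))\bigr|\to 0$ in probability, and summing over $i=1,\ldots,N$ (a slight variant of \Cref{le:***}) yields $\plim\bigl|\Ltrb{\RNR}-\Ltrb{\fwR}\bigr|=0$.

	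For the penalty part, since $\lw=\lambda n2g(0)$ and $\twonorm[\wR]^2=\twonorm[\wRp]^2+\twonorm[\wRm]^2$ splits along $\sgnb{v_k}$, it is enough to show $\plim\bigl|n\twonorm[\wRp]^2-\int_{\supp(g)}\tfrac{((\fwRp)'')^2}{g}\bigr|=0$ (the minus part being symmetric), and the plan is to show that both sides equal, up to vanishing errors, the common quantity $n^2\!\int_{\supp(g)}g(x)\,a(x)^2\,dx$, where $a(x):=\wRk/v_k$ for the $k\in\kp$ whose kink $\xi_k$ is nearest to $x$ --- by \Cref{le:wstetig} this profile is, with high probability, uniformly within $O(\tfrac1n)$ of being locally constant, and by \Cref{le:smallw*} it satisfies $n\,a=\pBigO{1}$, so the integral is $\pBigO{1}$ and all error terms are genuinely negligible. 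For the right-hand side: the distributional second derivative of $\RNRp$ is the atomic measure $\sum_{k\in\kp}\wRk v_k\,\delta_{\xi_k}$, so differentiating the box-kernel convolution of \Cref{def:smoothRSNappr} gives, up to lower order, $(\fwRp)''(x)=\sqrt n\,\gxi(x)\sum_{k\in\kp:\,|\xi_k-x|<r_n(x)}\wRk v_k$ with $r_n(x)=\tfrac{1}{2\sqrt n\gxi(x)}$; within this shrinking window \Cref{le:wstetig} lets one replace $\wRk v_k$ by $a(x)v_k^2$, a local law of large numbers (in the spirit of the proof of \Cref{le:deltastrip}) gives $\sum_{k\in\kp:\,|\xi_k-x|<r_n(x)}v_k^2\approx\tfrac{\sqrt n}{2}\Eco{v_k^2}{\xi_k=x}$, and with $g(x)=\tfrac12\gxi(x)\Eco{v_k^2}{\xi_k=x}$ this yields $(\fwRp)''(x)\approx n\,g(x)\,a(x)$ uniformly on $\supp(g)$, whence $\int_{\supp(g)}\tfrac{((\fwRp)'')^2}{g}\approx n^2\!\int_{\supp(g)}g\,a^2$. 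For the left-hand side: \Cref{le:wstetig} gives $(\wRk)^2\approx a(\xi_k)^2v_k^2$ for $k\in\kp$, and the law-of-large-numbers computation underlying \Cref{le:deltastrip} gives $\sum_{k\in\kp}a(\xi_k)^2v_k^2\approx n\!\int_{\supp(g)}g\,a^2$ (again using $\gxi\,\Eco{v^2}{\xi=\cdot}=2g$ and $\gxip=\gxi$), so $n\twonorm[\wRp]^2\approx n^2\!\int_{\supp(g)}g\,a^2$ as well. Adding the minus part, multiplying by $\lambda 2g(0)$, and combining with the loss limit proves \meqref{eq:statement4}.

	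The main obstacle is the curvature identification $(\fwRp)''\approx n\,g\,a$: one must turn the discrete curvature measure $\sum_{k\in\kp}\wRk v_k\delta_{\xi_k}$, smeared by the $x$-dependent box kernel $\kapx$, into a macroscopic density and control the errors uniformly in $x$ and with the correct stochastic order, which combines the kernel normalization (height $\sqrt n\gxi(x)$, width $2r_n(x)$), a localized law of large numbers for $\sum v_k^2$ over kinks in a shrinking window, and the near-constancy of the $\wRk/v_k$. A related technical point is that \Cref{le:deltastrip} is stated for a \emph{fixed} continuous integrand, whereas the profile $a$ (equivalently $(\fwRp)''/g$) is $n$-dependent; this is handled by noting that, with probability tending to one, $(\fwRp)''/g$ is uniformly bounded and --- thanks to \Cref{le:wstetig,le:smallw*} --- equicontinuous on $\supp(g)$, hence approximable uniformly by finitely many fixed continuous functions before passing to the limit. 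The rest is the bookkeeping of the nested ``$\overset{\PP}{\pm}\eps$'' error terms inherited from \Cref{le:wstetig,le:smallw*,le:deltastrip,le:3}.
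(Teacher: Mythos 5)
Your plan is correct and follows essentially the same route as the paper's proof: the same split into loss and penalty terms (loss handled via \Cref{le:3} together with a \Cref{le:***}-type continuity argument), and the same identification of both $\lw\twonorm[\wRp]^2$ and $\lambda 2g(0)\int_{\supp(g)}((\fwRp)'')^2/g$ with the common quantity $\lambda 2g(0)\,n^2\!\int_{\supp(g)} g\,(\wRk[l_x]/v_{l_x})^2\,dx$ by combining \Cref{le:wstetig}, \Cref{le:smallw*} and a localized law-of-large-numbers/$\delta$-strip argument, exactly as in the paper (which redoes the strip computation by hand rather than citing \Cref{le:deltastrip}, matching your remark about the $n$-dependent integrand). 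Your slightly more careful treatment of the loss term via sublevel-set boundedness is a refinement of, not a departure from, the paper's argument.
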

\begin{proof}
	\Cref{le:lossconvergence,le:3} combined show that
	\begin{equation*}
	\plim\left|\Ltrb{\RNR}-\Ltr(\fwRp+\fwRm)\right|=0.
	\end{equation*}
	So it is sufficient to show:
	\begin{equation}\label{eq:4:penalty}
	\plim\left|\lw \twonorm[\wR]^2-\lambda 2g(0) \left(
	\int_{\supp (g)} \frac{\left( {\fwRp}^{''}(x) \right)^2}{g(x)} dx
	+\int_{\supp (g)} \frac{\left( {\fwRm}^{''}(x) \right)^2}{g(x)} dx
	\right)\right|=0.
	\end{equation}
	Since $\twonorm[\wR]^2=\sum_{k\in\kp}{\wRk}^2 + \sum_{k\in\km}{\wRk}^2$, %
	we  restrict ourselves to proving
	\begin{equation}\label{eq:4:penaltyp}
	\plim\left|\lw\sum_{k\in\kp}{\wRk}^2-\lambda 2g(0)\int_{\supp(\gxi)}\frac{\left({\fwRp}^{''}(x)\right)^2}{g(x)}\, dx\right|=0.
	\end{equation}
	Using the \Cref{def:smoothRSNappr} of $\fwRp$ we get that
	\begin{subequations}\label{eq:4:fwRp''}
		\begin{align}
		{\fwRp}^{''}(x)&\overset{\text{\Cref{def:smoothRSNappr}}}{=} \sum_{k\in\kp:\xi_k\in \Iwp[x]}\frac{1}{|\Iwp[x]|}\wRk v_k\\
		&\underset{\hphantom{\text{\Cref{def:smoothRSNappr}}}}=\sum_{k\in\kp:\xi_k\in \Iwp[x]}\frac{1}{|\Iwp[x]|}\frac{\wRk}{v_k} v_k^2\\
		&\underset{\hphantom{\text{\Cref{def:smoothRSNappr}}}}{\overset{\text{\Cref{le:wstetig}}}{\approx}}\left(\frac{\wRk[l_x]}{v_{l_x}}\pm\frac{\epsilon}{n}\right)\sum_{k\in\kp:\xi_k\in \Iwp[x]}\frac{1}{|\Iwp[x]|} v_k^2\\
		&\underset{\hphantom{\text{\Cref{def:smoothRSNappr}}}}{{\approx}}\left(\frac{\wRk[l_x]}{v_{l_x}}\pm\frac{\epsilon}{n}\right) \left( 1\overset{\PP}{\pm}\epsilon_1 \right) \PP[v_k>0] n\gxi(x)\left( \Eco{v_k^2}{\xi_k=x} \overset{\PP}{\pm}\epsilon_2 \right)\\
		&\underset{\hphantom{\text{\Cref{def:smoothRSNappr}}}}{\overset{\text{\Cref{le:smallw*}}}{\approx}}\frac{\wRk[l_x]}{v_{l_x}}\PP[v_k>0] n\gxi(x) \Eco{v_k^2}{\xi_k=x} \overset{\PP}{\pm}\epsilon_3 
		\end{align}
	\end{subequations}
	uniformly in $x$ on $K$ for any $l_x$ satisfying $l_x \in\kp:\xi_l\in\Iwp[x]\ \forall x\in \supp(\gxi)$. %
	Therefore we can plug this into the right-hand term of \cref{eq:4:penaltyp}:
	\begin{align*}
	\lambda 2g(0)\int_{\supp(\gxi)}\frac{\left({\fwRp}^{''}(x)\right)^2}{g(x)}\, dx
	&\approx \lambda 2g(0)\int_{\supp(\gxi)}\dfrac{\left(  \frac{\wRk[l_x]}{v_{l_x}}\PP[v_k>0] n\gxi(x) \Eco{v_k^2}{\xi_k=x} \overset{\PP}{\pm}\epsilon_3    \right)^2}{g(x)}\, dx\\
	&\approx \underbrace{\lambda 2g(0)\int_{\supp(\gxi)}\dfrac{\left(  \frac{\wRk[l_x]}{v_{l_x}}\PP[v_k>0] n\gxi(x) \Eco{v_k^2}{\xi_k=x}     \right)^2}{g(x)}\, dx}_{\displaystyle = \frac{\lw n}{2}\int_{\supp(\gxi)}\left(  \frac{\wRk[l_x]}{v_{l_x}}\right)^2 \gxi(x) \Eco{v_k^2}{\xi_k=x} \, dx } \overset{\PP}{\pm}\epsilon_4
	\end{align*}
	by uniformity of approximation~\meqref{eq:4:fwRp''} and by using the definitions of $\lw:=\lambda n 2g(0)$ and $g(x):=\gxi(x)\Eco{v_k^2}{\xi_k=x}\frac{1}{2}$.

	In the next steps we show that the left-hand term of \cref{eq:4:penaltyp} converges to the same term as the right-hand side did.
	Therefore, we choose a suitable partition $\mathcal{P}_\delta:=\set{[\ell_1,\ell_2),\dots,[\ell_{m-1},\ell_m)}$ of $\supp(\gxi)$,
	i.e., $\mathcal{P}_\delta$ is a refinement of $\mathcal{P}$ from \Cref{def:partition} and no interval in $\mathcal{P}_\delta$ is longer than $\delta$ or shorter than $\frac{\delta}{2}$.
	By setting $\delta=\frac{1}{\sqrt{n}}$ and letting $n$ tend to infinity, we get:
	
	\begin{align*}
	\lw\sum_{k\in\kp}{\wRk}^2
	&\underset{\hphantom{\text{\Cref{le:smallw*}}}}{=}
	\lw\sum_{j=1}^m\left(\sum_{\stackrel{k\in\kp}{\xi_k\in[\ell_j, \ell_{j+1})}}\left(\frac{\wRk}{v_k}\right)^2 v_k^2\right)\\
	&\overset{\text{\Cref{le:wstetig}}}{\approx}\lw\sum_{j=1}^m\left(\left(\frac{\wRk[l_{\ell_j}]}{v_{l_{\ell_j}}}\pm\frac{\epsilon_5}{n}\right)^2
	\vphantom{\sum_{\stackrel{k\in\kp}{\xi_k\in[\ell_j, \ell_{j+1})}}}\right.
	\underbrace{\sum_{\stackrel{k\in\kp}{\xi_k\in[\ell_j, \ell_{j+1})}} v_k^2}_{\mathclap{\approx\left(1\overset{\PP}{\pm}\epsilon_6\right) \frac{n}{2}|\ell_{j+1}-\ell_j|\gxi(\ell_j)\left( \Eco{v_k^2}{\xi_k=\ell_j} \overset{\PP}{\pm}\epsilon_7 \right)}}
	\left.\vphantom{\sum_{\stackrel{k\in\kp}{\xi_k\in[\ell_j, \ell_{j+1})}}}\right)\\
	&\overset{\text{\Cref{le:smallw*}}}{\approx}\frac{\lw n}{2}\sum_{j=1}^m\left(\left(\frac{\wRk[l_{\ell_j}]}{v_{l_{\ell_j}}}\right)^2
	|\ell_{j+1}-\ell_j|\gxi(\ell_j)\left( \Eco{v_k^2}{\xi_k=\ell_j}  \right)\overset{\PP}{\pm}\epsilon_8
	\right)\\
	&\underset{\hphantom{\text{\Cref{le:smallw*}}}}{\overset{\text{Riemann}}{\approx}}\frac{\lw n}{2}\int_{\supp(\gxi)}\left(  \frac{\wRk[l_x]}{v_{l_x}}\right)^2 \gxi(x) \Eco{v_k^2}{\xi_k=x} \, dx \overset{\PP}{\pm}\epsilon_9
	\end{align*}
	This proves \cref{eq:statement4}.
\end{proof}

\begin{lemma}\label{le:PfuncStronglyConvex}
    Let $K\subset\R$ be a compact interval and consider the Banach space \newline$(\T,\|\cdot\|_{\Wkp[K]{2}{2}})$, with norm $\|(f_+, f_-)\|_{\Wkp[K]{2}{2}}:=\|f_+\|_{\Wkp[K]{2}{2}}+\|f_-\|_{\Wkp[K]{2}{2}}$ for any $(f_+, f_-)\in\T$.The penalty term of \Flpm, given by $\Pgpm:\T\to\Rpz$,
	\begin{equation}\label{eq:Plpm}
	\Pgpm\left(f_+,f_-\right):= 2g(0) \left(
	\int_{\supp (g)} \frac{\left( {f_+}^{''}(x) \right)^2}{g(x)} dx
	+\int_{\supp (g)} \frac{\left( {f_-}^{''}(x) \right)^2}{g(x)} dx
	\right)
	\end{equation}
	is strongly convex w.r.t. $\|\cdot\|_{\Wkp[K]{2}{2}}$. Moreover, if $\supp(g)\subset K$, then $\Pgpm$ is continuous w.r.t. $\|\cdot\|_{\Wkp[K]{2}{2}}$.
\end{lemma}
\begin{proof}
    Let $K\subset\R$ be a compact interval with diameter $|K|$, $\fonepmt,\ftwopmt\in\T$ and define the tuple
	\begin{equation}
	\upmt:=\ftwopmt- \fonepmt\in\T
	\end{equation} as the component-wise difference. Note that $t\fonepmt+(1-t)\ftwopmt\in\T$ for every $t\in[0,1]$. Since $\Pgpm$ is a quadratic form, we get for any $t\in[0,1]$ with the help of some algebraic calculations
	\begin{equation}\label{eq:stronglyconvexP}
	\Pgpmb{t\fonepmt+(1-t)\ftwopmt}=
	t\Pgpm{\fonepmt}+(1-t)\Pgpm{\ftwopmt}-t(1-t)\Pgpm{\upmt}.
	\end{equation}
	Moreover, we have
	\begin{align}
	\Ltwonorm[{\up}^{''}]^2\leq\frac{max_{x\in\supp(g)}g(x)}{ 2g(0)}\Pgpm\upmt,
	\end{align}
	since $\upmt\in\T$ has zero second derivative outside $\supp(g)$.
	Applying the Poincar\'e-typed \Cref{le:poincare} twice (first on ${\up}^{''}$ then on ${\up}^{'}$) yields 
	\begin{align}
	\Ltwonorm[\up^{'}]^2&\le |K|^2\frac{max_{x\in\supp(g)}g(x)}{ 2g(0)}\Pgpm\upmt,\\
	\Ltwonorm[\up]^2&\le |K|^4\frac{max_{x\in\supp(g)}g(x)}{ 2g(0)}\Pgpm\upmt,
	\end{align}
	as $\unpmt\in\T$ satisfies the boundary conditions at $\Cgl$ (cp. \Cref{rem:compactSupp}) because of the compact support of $g$.
	Analogously,
		\begin{align}
	\Ltwonorm[\um^{'}]^2&\le |K|^2\frac{max_{x\in\supp(g)}g(x)}{ 2g(0)}\Pgpm\upmt,\\
	\Ltwonorm[\um]^2&\le |K|^4\frac{max_{x\in\supp(g)}g(x)}{ 2g(0)}\Pgpm\upmt,
	\end{align}
    and hence
	\begin{align}
	\Ltwonorm[(\up,\um)]^2&\le |K|^4\frac{max_{x\in\supp(g)}g(x)}{ g(0)}\Pgpm\upmt,\\
		\left\|(\up,\um)\right\|_{\Wkp[K]{1}{2}}^2&\le \left(|K|^4+|K|^2\right)\frac{max_{x\in\supp(g)}g(x)}{ g(0)}\Pgpm\upmt,\\
	\left\|(\up,\um)\right\|_{\Wkp[K]{2}{2}}^2&\le \left(|K|^4+|K|^2+1\right)\frac{max_{x\in\supp(g)}g(x)}{ g(0)}\Pgpm\upmt.\label{eq:W22leP}
	\end{align}
	Combining this with \meqref{eq:stronglyconvexP} results in
	\begin{subequations}\label{eq:proofStrongyConvexFInalConstant}
		\begin{align}
	&\Pgpmb{t\fonepmt+(1-t)\ftwopmt}\\
	\leq &\,
	t\Pgpm{\fonepmt}+(1-t)\Pgpm{\ftwopmt}\\
	&\qquad\qquad-\frac{t(1-t) g(0)}{\left(|K|^4+|K|^2+1\right)\max_{x\in\supp(g)}g(x)}\left\|(\up,\um)\right\|_{\Wkp[K]{2}{2}}^2.
	\end{align}
	\end{subequations}
	Thus, $\Pgpm$ is strongly convex with parameter $2g(0)/\left(\left(|K|^4+|K|^2+1\right)\max_{x\in\supp(g)}g(x)\right)$.
	To show continuity, note that we have
	\begin{subequations}\label{eq:7secondDerivativeunpIsHigh}
	\begin{align}
	\Pgpm\upmt&\leq\frac{ 2g(0)}{\min_{x\in\supp(g)}g(x)}\left(\Ltwonorm[{\up}^{''}]^2+\Ltwonorm[{\um}^{''}]^2\right)\\
	&\leq \frac{ 2g(0)}{\min_{x\in\supp(g)}g(x)}\|\upmt\|_{\Wkp[K]{2}{2}}^2,
	\end{align}
	\end{subequations}
	since $\upmt\in\T$ has zero second derivative outside $\supp(g)$ and $\supp(g)\subset K$. Let $\fonepmt\in\T$. Then, for any $\ftwopmt\in\T$
	\begin{subequations}
	\begin{align}
	    |\Pgpm\ftwopmt-\Pgpm\fonepmt|&=2g(0) \Bigg(
	\int_{\supp (g)} \frac{\left( {f_+^{2^{''}}-f_+^{1^{''}}} \right)\left( {f_+^{2^{''}}+f_+^{1^{''}}} \right)(x)}{g(x)} dx\\
	& 
	\qquad\qquad+\int_{\supp (g)} \frac{\left( {f_-^{2^{''}}-f_-^{1^{''}}}\right)\left( {f_-^{2^{''}}+f_-^{1^{''}}}\right)(x) }{g(x)} dx
	\Bigg)\\
	&\le \sqrt{\Pgpm\left(\ftwopmt-\fonepmt\right)}\sqrt{\Pgpm\left(\ftwopmt+\fonepmt\right)}\\
	&\le \sqrt{\Pgpm\upmt}\sqrt{8\Pgpm\fonepmt+2\Pgpm\upmt}\\
	&\le \sqrt{\frac{ 16g(0)\Pgpm\fonepmt}{\min_{x\in\supp(g)}g(x)}}\|\upmt\|_{\Wkp[K]{2}{2}}\\
	&\quad+\frac{ 2^{\frac{3}{2}}g(0)}{\min_{x\in\supp(g)}g(x)}\|\upmt\|_{\Wkp[K]{2}{2}}^2\\
	&=c_{\fonepmt}\|\upmt\|_{\Wkp[K]{2}{2}}+c\|\upmt\|_{\Wkp[K]{2}{2}}^2
	\end{align}
	\end{subequations}
	with positive constants $c_{\fonepmt}$ and $ c$, where we employed the Cauchy-Schwarz inequality and \meqref{eq:7secondDerivativeunpIsHigh}.
	Thus, for any $\epsilon>0$ we achieve $	|\Pgpm\ftwopmt-\Pgpm\fonepmt|<\epsilon$ for every $\ftwopmt$ such that 
	\[\|\ftwopmt-\fonepmt\|_{\Wkp[K]{2}{2}}<\delta\]
	with $\delta := \sqrt{\frac{c_{\fonepmt}^2}{4c^2}+\frac{\epsilon}{c}}-\frac{c_{\fonepmt}}{2c}$. Hence we have shown continuity at an arbitrary $\fonepmt\in\T$.

\end{proof}

\begin{lemma}\label{le:compactnessoflevelset}
    For any $c>0$, the sub-level set $\mathcal{K}:=\left\{(f^+,f^-)\in\T:\Pgpm(f^+,f^-)\leq c\right\}$ is sequentially compact w.r.t. $\Wkp[K]{1}{\infty}$ for any compact interval $K\subset\R$.
\end{lemma}
\begin{proof}
    Note that $\Wkp[K]{1}{2}$ can be compactly embedded into $L^{\infty}(K)$.
    This holds, since by Morrey's inequality (see \citep{EvansPDE}) $\Wkp[K]{1}{2}$ can be continuously embedded into the Hölder-space $C^{0, 0.5}$, $C^{0, 0.5}$ embeds compactly in $C^{0, \alpha}$, for $\alpha<\frac{1}{2}$, and $C^{0, \alpha}$ embeds continuously into $L^{\infty}(K)$. In total we thus have the compact embedding\footnote{Note that the composition of compact and continuous embeddings is itself compact.}
    \begin{align*}
    \Wkp[K]{1}{2}\subset C^{0, 1/2}{\subset\subset}C^{0, \alpha}\subset L^{\infty}(K).
    \end{align*}
    To show then that $\Wkp[K]{2}{2}$ can be compactly embedded into $\Wkp[K]{1}{\infty}$, we first note that $\Wkp[K]{2}{2}$ can be continuously embedded into $\Wkp[K]{1}{\infty}$ by Sobolev embedding theorem.
    Then we show that for every bounded sequence~$f^n$ in $\Wkp[K]{2}{2}$, there exists a sub-sequence $f^{\tilde{\tilde{n}}}$ that converges in $\Wkp[K]{1}{\infty}$:
    Since $f^{n}$ is bounded in $\Wkp[K]{2}{2}\subset\Wkp[K]{1}{2}$, there exists a convergent sub-sequence $f^{\tilde{n}}$ that converges in $L^{\infty}(K)$ (because of $\Wkp[K]{1}{2}{\subset\subset} L^{\infty}(K)$).
    And since $(f^{\tilde{n}})^{'}$ is bounded in $\Wkp[K]{1}{2}$, there exists a convergent sub-sub-sequence $(f^{\tilde{\tilde{n}}})^{'}$ that converges in $L^{\infty}(K)$ (because of $\Wkp[K]{1}{2} {\subset\subset} L^{\infty}(K)$).
    Thus, $f^{\tilde{\tilde{n}}}$ converges in $\Wkp[K]{1}{\infty}$.
    Since $\Pgpm$ is strongly convex (\Cref{le:PfuncStronglyConvex}), $\mathcal{K}$ is bounded in $\Wkp[K]{2}{2}$ (or this be can be seen from \meqref{eq:W22leP}).
    Thus, the $\Wkp[K]{2}{2}$-bounded set $\mathcal{K}$ is sequentially compact w.r.t.\ $\Wkp[K]{1}{\infty}$.
    The same holds true for finite products of the spaces.

\end{proof}
\begin{lemma}[\ref{itm:7}]\label{le:7}
	For any sequence of tuples of functions\footnote{Precisely speaking $\fnpmt$ is a random variable $\fnpmt:\Om\to\T$.} $\fnpmt\in \T$ satisfying
	\begin{equation}\label{eq:condition7}
	\plim \Flpm\fnpmt =\min_{\T}\Flpm
	\end{equation}
	it holds that
	\begin{equation}\label{eq:statement7}
	\plim d_{\Wkp[K]{1}{\infty}}\left(\left(\fnp,\fnm\right),
	\argmin_{\T}\Flpm\right) =0,
	\end{equation}
	i.e., $\forall\epsilon>0:\forall\rho\in(0,1):\exists n_0\in\N:\forall n>n_0$
	such that
	\begin{equation}
	    \PP\left[\exists\flpmt\in\argmin_{\T}\Flpm:\sobnorm[{\left(\fnp,\fnm\right)-\flpmt }]<\epsilon\right]>\rho.
	\end{equation}
	
\end{lemma}
\begin{proof}
Assume on the contrary that
$\exists\epsilon>0:\exists\rho\in(0,1):\forall n_0\in\N:\exists n>n_0$
	such that
	\begin{equation}\label{eq:farAwayFromArgMinINProbability}
	    \PP\left[\forall\flpmt\in\argmin_{\T}\Flpm:\sobnorm[{\left(\fnp,\fnm\right)-\flpmt }]\ge\epsilon\right]>(1-\rho).
	\end{equation}
This implies the existence of a sub-sequence $\left(f_+^{\tilde{n}},f_-^{\tilde{n}}\right)$ of which every element satisfies \meqref{eq:farAwayFromArgMinINProbability}. We further denote by $\Omega_\text{\meqref{eq:farAwayFromArgMinINProbability}}(n)$ the set 
\[\Omega_\text{\meqref{eq:farAwayFromArgMinINProbability}}(n):=\Set{\omega\in\Omega|\forall\flpmt\in\argmin_{\T}\Flpm:\sobnorm[{\left(\fnp,\fnm\right)-\flpmt }]\ge\epsilon}.\]
Moreover, we define \[\Om_\text{\meqref{eq:condition7}}(n):=\Set{\omega\in\Omega| \left|\Flpm\fnpmt -\min_{\T}\Flpm\right|<\epsilon_\text{\meqref{eq:condition7}}}.\]
\Cref{eq:condition7} implies that for every $\epsilon_\text{\meqref{eq:condition7}}>0$ and $\rho_\text{\meqref{eq:condition7}}\in(0,1)$, $\PP\left[\Om_\text{\meqref{eq:condition7}}(n)\right]>\rho_\text{\meqref{eq:condition7}}$ for all $n$ large enough. 
Since $\rho_\text{\meqref{eq:condition7}}$ can be chosen arbitrarily close to one and thus $\PP[\Om_\text{\meqref{eq:condition7}}(\tilde{n})]+\PP[\Omega_\text{\meqref{eq:farAwayFromArgMinINProbability}}(\tilde{n})]=\rho_\text{\meqref{eq:condition7}}+\rho>1$ for $n$ large enough. Therefore, by the axiom of choice, we may consider in what follows an element $\omega(\tilde{n})\in\Om_\text{\meqref{eq:condition7}}(\tilde{n})\cap \Omega_\text{\meqref{eq:farAwayFromArgMinINProbability}}(\tilde{n})\neq\emptyset$.
Define the set $\mathcal{K}:=\{(f^+,f^-)\in\T:\Pgpm(f^+,f^-)\le \frac{L(0)+{\epsilon}_\text{\meqref{eq:condition7}}}{\lambda}\}$. It then holds that
     \[\left(f_+^{\tilde{n}},f_-^{\tilde{n}}\right)(\omega(\tilde{n}))\in\mathcal{K}.\]
     
     Since $\mathcal{K}$ is sequentially compact w.r.t.~$\Wkp[K]{1}{\infty}$ by \Cref{le:compactnessoflevelset}, there exists a sub-sequence $\left(f_+^{\tilde{\tilde{n}}},f_-^{\tilde{\tilde{n}}}\right)(\omega(\tilde{\tilde{n}}))$ of $\left(f_+^{\tilde{n}},f_-^{\tilde{n}}\right)(\omega(\tilde{n}))$ which converges w.r.t.~$\Wkp[K]{1}{\infty}$ to a limit $\left(\tilde{f}_+,\tilde{f}_-\right)$ in $\mathcal{K}$. 
     By \Cref{as:generalloss} $\Ltr$ is continuous w.r.t.\ $\Wkp[K]{1}{\infty}$ and we define
     \[\lim_{\tilde{\tilde{n}}\to\infty}\Ltr\left(f_+^{\tilde{\tilde{n}}}+f_-^{\tilde{\tilde{n}}}\right)(\omega(\tilde{\tilde{n}}))=\Ltr\left(\tilde{f}_++\tilde{f}_-\right)=:L^*\]
     and
     \[\lim_{\tilde{\tilde{n}}\to\infty}\Pgpm\left(f_+^{\tilde{\tilde{n}}},f_-^{\tilde{\tilde{n}}}\right)(\omega(\tilde{\tilde{n}}))=\min_{\T} \Flpm-L^*=:P^*.\]
     We now proceed to show that $\left(f_+^{\tilde{\tilde{n}}},f_-^{\tilde{\tilde{n}}}\right)(\omega(\tilde{\tilde{n}}))$ is a $\Wkp[K]{2}{2}$-Cauchy sequence.
     By continuity of $\Ltr$ there exists a $\delta>0$ such that
     \[\Ltr\left(\frac{\left(f_+^{\tilde{\tilde{n}}},f_-^{\tilde{\tilde{n}}}\right)(\omega(\tilde{\tilde{n}}))+\left(f_+^{\tilde{\tilde{m}}},f_-^{\tilde{\tilde{m}}}\right)(\omega(\tilde{\tilde{m}}))}{2}\right)<L^*+\epsilon_L,\]
     for $\tilde{\tilde{n}},\tilde{\tilde{m}}$ large enough such that 
     \[\left\|\frac{\left(f_+^{\tilde{\tilde{n}}},f_-^{\tilde{\tilde{n}}}\right)(\omega(\tilde{\tilde{n}}))+\left(f_+^{\tilde{\tilde{m}}},f_-^{\tilde{\tilde{m}}}\right)(\omega(\tilde{\tilde{m}}))}{2}-\left(\tilde{f}_+,\tilde{f}_-\right) \right\|_{\Wkp[K]{1}{\infty}}<\delta_L.\]
     By $M$-strong convexity\footnote{In \meqref{eq:proofStrongyConvexFInalConstant}, one can see the explicit form of $M:=2g(0)/\left(\left(|K|^4+|K|^2+1\right)\max_{x\in\supp(g)}g(x)\right)$.} of $\Pgpm$ (see \Cref{le:PfuncStronglyConvex}) we have
     \begin{align*}
     \Pgpm\left(\frac{\left(f_+^{\tilde{\tilde{n}}},f_-^{\tilde{\tilde{n}}}\right)(\omega(\tilde{\tilde{n}}))+\left(f_+^{\tilde{\tilde{m}}},f_-^{\tilde{\tilde{m}}}\right)(\omega(\tilde{\tilde{m}}))}{2}\right)
     &\leq\frac{1}{2}\Pgpm\left(\left(f_+^{\tilde{\tilde{n}}},f_-^{\tilde{\tilde{n}}}\right)(\omega(\tilde{\tilde{n}}))\right)\\
     &+\frac{1}{2}\Pgpm\left(\left(f_+^{\tilde{\tilde{m}}},f_-^{\tilde{\tilde{m}}}\right)(\omega(\tilde{\tilde{m}}))\right)\\
     &-\frac{M}{8}\left\|\left(f_+^{\tilde{\tilde{n}}},f_-^{\tilde{\tilde{n}}}\right)(\omega(\tilde{\tilde{n}}))-\left(f_+^{\tilde{\tilde{m}}},f_-^{\tilde{\tilde{m}}}\right)(\omega(\tilde{\tilde{m}}))\right\|_{\Wkp[K]{2}{2}}\\
     &<\frac{P^*+\epsilon_P}{2}+\frac{P^*+\epsilon_P}{2}\\
     &-\frac{M}{8}\left\|\left(f_+^{\tilde{\tilde{n}}},f_-^{\tilde{\tilde{n}}}\right)(\omega(\tilde{\tilde{n}}))-\left(f_+^{\tilde{\tilde{m}}},f_-^{\tilde{\tilde{m}}}\right)(\omega(\tilde{\tilde{m}}))\right\|_{\Wkp[K]{2}{2}},
     \end{align*}
     where in the last inequality we used $\Wkp[K]{2}{2}$-continuity of $\Pgpm$ (and $\tilde{\tilde{n}},\tilde{\tilde{m}}$ again large enough)
.
Combining the inequalities above, we obtain
\begin{align*}
    &\Ltr\left(\frac{\left(f_+^{\tilde{\tilde{n}}},f_-^{\tilde{\tilde{n}}}\right)(\omega(\tilde{\tilde{n}}))+\left(f_+^{\tilde{\tilde{m}}},f_-^{\tilde{\tilde{m}}}\right)(\omega(\tilde{\tilde{m}}))}{2}\right)+\lambda \Pgpm\left(\frac{\left(f_+^{\tilde{\tilde{n}}},f_-^{\tilde{\tilde{n}}}\right)(\omega(\tilde{\tilde{n}}))+\left(f_+^{\tilde{\tilde{m}}},f_-^{\tilde{\tilde{m}}}\right)(\omega(\tilde{\tilde{m}}))}{2}\right)\\
    &<L^*+\lambda P^*+\epsilon_L+\lambda\frac{\epsilon_P+\epsilon_P}{2}-\lambda\frac{M}{8}\left\|\left(f_+^{\tilde{\tilde{n}}},f_-^{\tilde{\tilde{n}}}\right)(\omega(\tilde{\tilde{n}}))-\left(f_+^{\tilde{\tilde{m}}},f_-^{\tilde{\tilde{m}}}\right)(\omega(\tilde{\tilde{m}}))\right\|_{\Wkp[K]{2}{2}}.
\end{align*}
Thus, 
\[\left\|\left(f_+^{\tilde{\tilde{n}}},f_-^{\tilde{\tilde{n}}}\right)(\omega(\tilde{\tilde{n}}))-\left(f_+^{\tilde{\tilde{m}}},f_-^{\tilde{\tilde{m}}}\right)(\omega(\tilde{\tilde{m}}))\right\|_{\Wkp[K]{2}{2}}<\frac{8}{M}\left(\frac{1}{\lambda}\epsilon_L+\epsilon_P\right)\]
for $\tilde{\tilde{n}},\tilde{\tilde{m}}$ large enough in order not to violate optimality. Thus, $\left(f_+^{\tilde{\tilde{n}}},f_-^{\tilde{\tilde{n}}}\right)(\omega(\tilde{\tilde{n}}))$ is a $\Wkp[K]{2}{2}$-Cauchy sequence, since we can choose $\epsilon_L$ and $\epsilon_P$ arbitrarily small.

Since $\left(f_+^{\tilde{\tilde{n}}},f_-^{\tilde{\tilde{n}}}\right)(\omega(\tilde{\tilde{n}}))$ is a $\Wkp[K]{2}{2}$-Cauchy sequence, it converges to $\left(\tilde{f}_+,\tilde{f}_-\right)$ in $\Wkp[K]{2}{2}$.
Note that $\left(\tilde{f}_+,\tilde{f}_-\right)\in\T$, since $\T$ is closed w.r.t.~$\Wkp[K]{2}{2}$.
This, together with $\Wkp[K]{2}{2}$-continuity of $\Ltr$ and $\Pgpm$ implies
\[\min_{\T}\Flpm =\lim_{\tilde{\tilde{n}}\to\infty}\Ltr\left(f_+^{\tilde{\tilde{n}}}+f_-^{\tilde{\tilde{n}}}\right)(\omega(\tilde{\tilde{n}}))+\lambda\Pgpm\left(f_+^{\tilde{\tilde{n}}},f_-^{\tilde{\tilde{n}}}\right)(\omega(\tilde{\tilde{n}}))=\Flpm\left(\tilde{f}_+,\tilde{f}_-\right),\]
i.e., $\left(\tilde{f}_+,\tilde{f}_-\right)\in\argmin_{\T}\Flpm$.
This however is a contradiction to $\omega(\tilde{\tilde{n}})\in\Omega_\text{\meqref{eq:farAwayFromArgMinINProbability}}(\tilde{\tilde{n}})$ for $\tilde{\tilde{n}}$ large enough.
\end{proof}

\begin{lemma}[step 7 convex]
	Assume \Ltr\ is convex. Then, for any sequence of tuples of functions $\fnpmt\in \T$ such that
	\begin{equation}
	\plim \Flpm\fnpmt =\Flpm\flpmt,
	\end{equation}
	it follows that %
	\begin{equation}\label{eq:statement7convex}
	\plim \sobnorm[{\left(\fnp+\fnm\right)-\smash[b]{\underbrace{\flpm}_{\flp +\flm}} }] =0\vphantom{\underbrace{\flpm}_{\flp +\flm}}.
	\end{equation}
\end{lemma}
\begin{proof}
	Define the tuple of functions
	\begin{equation}\label{eq:7defu}
	\unpmt:=\flpmt - \fnpmt
	\end{equation} as the difference.
	\unimportant{Recall that the penalty term of \Flpm\ is given by
	\begin{equation}
	\Pgpm\left(f_+,f_-\right):= 2g(0) \left(
	\int_{\supp (g)} \frac{\left( {f_+}^{''}(x) \right)^2}{g(x)} dx
	+\int_{\supp (g)} \frac{\left( {f_-}^{''}(x) \right)^2}{g(x)} dx
	\right).
	\end{equation}}
	This penalty~$\Pgpm$ is obviously a quadratic form.
	Note that $\frac{\fnpmt+\flpmt}{2}\in\T$.
	Since the training loss~\Ltr\ is convex, we get the inequality
	\begin{equation}\label{eq:konvexL}
	\Ltrb{\frac{\fnp+\fnm+\flp+\flm }{2}}\leq
	\frac{\Ltrb{\fnp+\fnm }}{2}+\frac{\Ltrb{\flp+\flm }}{2}.
	\end{equation}
	Since by \Cref{le:PfuncStronglyConvex} the penalty~$\Pgpm$ is strongly convex we obtain the inequality
	\begin{equation}\label{eq:quadraticP}
	\Pgpmb{\frac{\fnpmt+\flpmt}{2}}\leq
	\frac{\Pgpm{\fnpmt}}{2}+\frac{\Pgpm{\flpmt}}{2}-\frac{\Pgpm{\unpmt}}{4}.
	\end{equation}
	Combining the inequalities~\meqref{eq:konvexL} and~\meqref{eq:quadraticP} results in
	\begin{equation}\label{eq:7importantIneq}
	\Flpmb{\frac{\fnpmt+\flpmt}{2}}\leq
	\underbrace{\frac{\Flpm{\fnpmt}+\Flpm{\flpmt}}{2}}_{%
		\textstyle\overset{\meqref{eq:condition7}}{\approx}\Flpm\flpmt\overset{\PP}{\pm}\epsilon}
	-\lambda\frac{\Pgpm{\unpmt}}{4}.
	\end{equation}
	Together with the optimality of \flpmt\ this result leads directly to
	\begin{subequations}\label{eq:7:secondlastIneq}
		\begin{align}
		\Flpm\flpmt
		&\overset{\substack{\text{optimality}\\
				\text{\Cref{def:adaptedSplineRegTuple}}}}{\leq}
		\Flpmb{\frac{\fnpmt+\flpmt}{2}}\\
		&\underset{\hphantom{\substack{\text{optimality}\\
					\text{\Cref{def:adaptedSplineRegTuple}}}}}{\overset{\text{\meqref{eq:7importantIneq}}}{\lessapprox}}
		\Flpm\flpmt\overset{\PP}{\pm}\epsilon
		-\lambda\frac{\Pgpm{\unpmt}}{4}.
		\end{align}
	\end{subequations}
	By subtracting $\left(\Flpm\flpmt-\lambda\frac{\Pgpm{\unpmt}}{4}\right)$ from both sides of ineq.~\meqref{eq:7:secondlastIneq} and multiplying by $4$ we get
	\begin{equation*}
	\lambda\Pgpm{\unpmt}\overset{\text{\meqref{eq:7:secondlastIneq}}}{\lessapprox}\overset{\PP}{\pm}4\epsilon,
	\end{equation*}
	which implies that
	\begin{equation}\label{eq:7plimunpmtIs0}
	\plim\Pgpm{\unpmt}=0.
	\end{equation}
	First, we will show that the weak second derivative~${\unp}^{''}$ converges to zero. We have
	\begin{align}\label{eq:7secondDerivativeunpIsLow}
	\Ltwonorm[{\unp}^{''}]\leq\frac{max_{x\in\supp(g)}g(x)}{ 2g(0)}\Pgpm\unpmt \quad\forall K\subseteq\R,
	\end{align}
	because $\unpmt\in\T$ has zero second derivative outside $\supp(g)$.
	Thus, $$\plim\Ltwonorm[{\unp}^{''}]=0$$ (by combining \cref{eq:7plimunpmtIs0,eq:7secondDerivativeunpIsLow}).
	This can be used to apply two times the Poincar\'e-typed \Cref{le:poincare} (first on ${\unp}^{''}$ then on ${\unp}^{'}$) to get for every compact set $K\subset\R$
	\begin{equation}
	\plim\sobnorm[\unp]=0,
	\end{equation}
	as $\unpmt\in\T$ satisfies the boundary conditions at $\Cgl$ (cp. \Cref{rem:compactSupp}) because of the compact support of $g$.
	Analogously, $\plim\sobnorm[\unm]=0$ for every compact set $K\subset\R$ and hence
	\begin{equation}\label{eq:7:limUisZero}
	\plim\sobnorm[\unp+\unm]=0.
	\end{equation}
	Thus, by the definition~\meqref{eq:7defu} of~\unpmt\ we get
	\begin{equation*}
	\plim \sobnorm[{\left(\fnp+\fnm\right)-\smash[b]{\underbrace{\flpm}_{\flp +\flm}} }]
	\overset{\text{\meqref{eq:7defu}}}{=} \plim\sobnorm[{\unp+\unm}]
	\overset{\text{\meqref{eq:7:limUisZero}}}{=}0\vphantom{\underbrace{\flpm}_{\flp +\flm}},
	\end{equation*}
	which shows \meqref{eq:statement7convex}.
\end{proof}

\subsection{Proof of \Cref{thm:GDridge}\texorpdfstring{ ($\RNwo[\wt]\to\RNRo[\frac{1}{T}]$)}{}}\label{sec:Proof:GDridge}

In this section we prove all the results (\Cref{le:GDsolution}, \Cref{rem:GDlimit} and \Cref{thm:GDridge}) presented in \Cref{sec:GradientToRidge}. These results are analogous to the results presented in \cite{bishop1995regularizationEarlyStopping,friedman2003gradient,PoggioGernalizationDeepNN2018arXiv180611379P,GidelImplicitDiscreteRegularizationDeepLinearNN2019arXiv190413262G}, but we will repeat the proofs briefly in this \namecref{sec:Proof:GDridge}.

\begin{proof}[\hypertarget{proof:le:GDsolution}{Proof of \Cref{le:GDsolution}}]
	
	We need to show that for any $\omega\in\Omega$, 
	\begin{equation}\tag{\enquote{\meqref{eq:GDsolution}}}
	\wto=-\exp\left({-2T\Xt\omb X\omb}\right)\wdago+\wdago,
	\end{equation}
	satisfies \meqref{eq:GDflow}. Let $\omega\in\Omega$ be fixed and set $y:=(\ytr_1,\ldots,\ytr_N)^\top$. Clearly, $\wt[0]=0$. Since
	\begin{displaymath}
	\nabla_w \Ltrs(\RNw)=2\Xt (Xw-y),
	\end{displaymath}
	\meqref{eq:GDflow} reads as
	\begin{equation}
	d\wt[t]=-2(\Xt X\wt[t]-\Xt y)\, dt.
	\end{equation}
	Differentiating \meqref{eq:GDsolution} we obtain
	\begin{align}
	\frac{d}{dt}\wt[t]=2X^\top X\exp\left(-2tX^\top X\right)\wdag.
	\end{align}
	Moreover, since
	\begin{align*}
	-2(\Xt X\wt[t]-\Xt y)&=2X^\top X \exp\left({-2t\Xt X}\right)\wdag-2X^\top y\wdag+2X^\top y\wdag\\
	&=2X^\top X \exp\left({-2t\Xt X}\right)\wdag
	\end{align*}
	the result follows {\transparent{\meineTranzparenz}(by the \href{https://en.wikipedia.org/w/index.php?title=Picard\%E2\%80\%93Lindel\%C3\%B6f_theorem&oldid=906035907}{Picard–-Lindel\"of theorem} the solution of linear \href{https://en.wikipedia.org/w/index.php?title=Ordinary_differential_equation&oldid=906735137}{ODE}s is unique)}.
			
		\end{proof}
		
		\begin{proof}[\hypertarget{proof:rem:GDlimit}{Proof of \Cref{rem:GDlimit}}]
			Using \href{https://en.wikipedia.org/w/index.php?title=Moore–Penrose_inverse&oldid=906563099#Singular_value_decomposition_(SVD)}{basic results} on the \href{https://en.wikipedia.org/w/index.php?title=Moore–Penrose_inverse&oldid=906563099}{Moore-Penrose pseudoinverse} \cite{ben2003generalizedPseudoInverse} and \href{https://en.wikipedia.org/w/index.php?title=Singular_value_decomposition&oldid=911708147}{singular value decomposition} it directly follows that the \href{https://en.wikipedia.org/w/index.php?title=Moore–Penrose_inverse&oldid=906563099#Minimum_norm_solution_to_a_linear_system}{minimum norm solution}~\wdag\ does not have any {\transparent{\meineTranzparenz}singular-value-}components in the \href{https://en.wikipedia.org/w/index.php?title=Singular_value_decomposition&oldid=911708147#Range,_null_space_and_rank}{null-space} of the matrix~$X$. Combining this with \href{https://en.wikipedia.org/w/index.php?title=Matrix_exponential&oldid=908188529#Diagonalizable_case}{basic knowledge} about the \href{https://en.wikipedia.org/w/index.php?title=Matrix_exponential&oldid=908188529}{matrix exponential} of diagonalizable matrices, the result follows. Since the matrix-exponential in \cref{eq:GDsolution} only preserves the null-space of $X$, every {\transparent{\meineTranzparenz}singular-value-}component outside the null-space is scaled down to zero as $T\to\infty$.
		\end{proof}
		
		\begin{proof}[\hypertarget{proof:thm:GDridge}{Proof of \Cref{thm:GDridge}}]
			First, we note that obviously
			\begin{equation}\label{eq:weightsToMinNorm}
			\tlim \wRl[\frac{1}{T}]\omb=\wdago \quad \faog
			\end{equation}
			holds by \Cref{def:minNormSol}.
			
			Secondly, the continuity of the map $(\R^n ,\twonorm )\to\Woi: w\mapsto\RNwo$ implies: \faog:
			\begin{subequations}\label{eq:twoLimits}
				\begin{align}
				\tlim\sobnorm[{ \RNRo[\frac{1}{T}] - \RNwo[\wdago] }] &= 0 \text{, because of \cref{eq:weightsToMinNorm}}\\
				\tlim\sobnorm[{ \RNwo[{\wt[T]\omb}] - \RNwo[\wdago] }] &= 0 \text{, because of \Cref{rem:GDlimit}}.
				\end{align}
			\end{subequations}
			Thirdly, by applying the triangle inequality on eqs.~\meqref{eq:twoLimits} the result~\meqref{eq:thm:GDridge} follows.
		\end{proof}

		\subsection{Proof of \Cref{cor:universal_in_prob} and \Cref{le:almost_sure_interpolation}}\label{sec:Proof:expressiveness}
		\begin{lemma}[Uniform continuity w.r.t. first-layer weights]\label{le:uniform_cont_weights}
			Let $\NN_\theta$ be a shallow neural network as introduced in \Cref{def:NN} and define $(b,v)\in \R^{n\times(d+1)}$ to be the collection of the network's first layer parameters. Then, for every $\epsilon>0$ and for any compact $K\subset \R^d$ there exists a $\delta>0$ such that,
			\begin{displaymath}
			\forall (\tilde{b}, \tilde{v})\in U_\delta(b,v):\supnorm[\sum_{k=1}^{n} w_k\sigma\left(\tilde{b}_k+\sum_{j=1}^{d}\tilde{v}_{k,j} x_j\right)-\NN_\theta]<\frac{\epsilon}{2},
			\end{displaymath}
			with
			\begin{displaymath}
			U_\delta(b,v):=\left\{ (\tilde{b}, \tilde{v})\in\R^{n\times(d+1)}\bigg|\underset{k\in\{1,\ldots,n\}}{\max}\twonorm[(b_k,v_k)-(\tilde{b}_k,\tilde{v}_k)]<\delta\right\}.
			\end{displaymath}
		\end{lemma}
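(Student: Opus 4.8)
The plan is to prove this by a direct Lipschitz estimate, using crucially that here the number of neurons $n$, the trainable last-layer weights $w$ (and bias), and the activation function $\sigma$ are all held fixed while only the first-layer parameters $(b,v)$ vary. First I would fix $\epsilon>0$ and a compact $K\subset\R^d$; writing $\NN_\theta(x)=\sum_{k=1}^{n}w_k\sigma(b_k+v_k\cdot x)+c$ and recalling that the perturbed network carries the same last-layer parameters (or, as in the RSN setting, no output bias at all), the bias $c$ drops out of the difference, so that for every $x\in K$
\begin{equation*}
\left|\sum_{k=1}^{n} w_k\sigma\Bigl(\tilde b_k+\sum_{j=1}^{d}\tilde v_{k,j}x_j\Bigr)-\NN_\theta(x)\right|
\le \sum_{k=1}^{n}|w_k|\,\Bigl|\sigma\bigl(\tilde b_k+\tilde v_k\cdot x\bigr)-\sigma\bigl(b_k+v_k\cdot x\bigr)\Bigr|.
\end{equation*}

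Then I would use that $\sigma$ is Lipschitz with some constant $L_\sigma\in\Rp$ to bound each summand by $L_\sigma\bigl|(\tilde b_k-b_k)+(\tilde v_k-v_k)\cdot x\bigr|$, and apply Cauchy--Schwarz to obtain $\bigl|(\tilde b_k-b_k)+(\tilde v_k-v_k)\cdot x\bigr|\le\twonorm[(b_k,v_k)-(\tilde b_k,\tilde v_k)]\cdot\twonorm[(1,x)]$. Compactness of $K$ gives a finite constant $M_K:=\sup_{x\in K}\twonorm[(1,x)]$, and on $U_\delta(b,v)$ the first factor is $<\delta$ for every $k$. Summing over $k$ yields, uniformly in $x\in K$,
\begin{equation*}
\left|\sum_{k=1}^{n} w_k\sigma\Bigl(\tilde b_k+\sum_{j=1}^{d}\tilde v_{k,j}x_j\Bigr)-\NN_\theta(x)\right|
\le \delta\,L_\sigma\,M_K\sum_{k=1}^{n}|w_k|.
\end{equation*}

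It then suffices to set $\delta:=\dfrac{\epsilon}{2\,L_\sigma\,M_K\,\bigl(1+\sum_{k=1}^{n}|w_k|\bigr)}$, where the $+1$ in the denominator guards against the degenerate case $w\equiv 0$, so that the right-hand side is $<\epsilon/2$, which is exactly the claimed bound. I do not expect any genuine obstacle here; the only points deserving a word of care are the cancellation (or absence) of the affine output bias, the use of compactness of $K$ to produce the finite constant $M_K$, and the harmless $+1$ when choosing $\delta$. Since everything except $(\tilde b,\tilde v)$ is fixed, no uniformity in $n$, $w$, or $\sigma$ is needed, which is precisely why the argument stays this elementary.
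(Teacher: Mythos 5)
Your proof is correct and takes essentially the same route as the paper's: both arguments control the variation of the network output under perturbations of the first-layer parameters, uniformly over the compact set $K$, via the Lipschitz constant $L_\sigma$ of $\sigma$ and a compactness bound on $\twonorm[x]$ (the paper bounds the partial derivatives $\partial\NN_\theta/\partial b_k$ and $\partial\NN_\theta/\partial v_{k,i}$ by $\max_k|w_k|\,L_\sigma c_K$, while you estimate the difference directly and telescope over the $n$ neurons). Your direct estimate even avoids the paper's implicit use of $\sigma'$ (relevant since $\sigma$ need only be Lipschitz, e.g.\ ReLU), but the substance and the choice of $\delta$ are the same.
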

		\begin{proof}
			For any $x\in K$, we have
			\begin{align*}
				\frac{\partial \NN_\theta(x)}{\partial b_k}&=w_k\sigma'\left(b_k+\sum_{j=1}^{d}v_{k,j} x_j\right),\\
				\frac{\partial \NN_\theta(x)}{\partial v_{k, i}}&=w_k\sigma'\left(b_k+\sum_{j=1}^{d}v_{k,j} x_j\right)x_i.
			\end{align*}
			Both derivatives can be bounded by above by $L:=\underset{k\in\{1,\ldots,n\}}{\max}|w_k| L_\sigma c_K$, with $L_\sigma$ the Lipschitz constant corresponding to $\sigma$ and $c_K>0$ s.t. $\twonorm[x]\le c_K \forall x\in K$ as $K$ was assumed to be compact. Since the bound $L$ is independent of $x$ and $w$, the statement follows. 
		\end{proof}
		\begin{proof}[\hypertarget{proof:cor:universal_in_prob}{Proof of \Cref{cor:universal_in_prob}}]
  The equation
    \begin{displaymath}
		\forall \epsilon\in\mathbb{R}_+:\lim\limits_{n\to\infty} 
		\PP\left[\exists w\in\mathbb{R}^n:\|\RNw-f\|_\infty<\epsilon\right]=1.
		\end{displaymath}
  can be formulated more concretely as
  \begin{displaymath}
		\forall \epsilon\in\mathbb{R}_+:\lim\limits_{n\to\infty} 
		\PP\left[\Set{\omega\in\Om  |  \exists w\in\mathbb{R}^n:\|\RNwo-f\|_\infty<\epsilon}\right]=1,
		\end{displaymath} or alternatively as
  {\small\begin{displaymath}
		\forall \epsilon\in\mathbb{R}_+:\lim\limits_{n\to\infty} 
		\mu^n\left(\Set{(\tilde{b}_k,\tilde{v}_k)_{k\in \fromto[1]{n}}\in(\R\times\R^d)^n  |  \exists w\in\mathbb{R}^n:\left\|\sum_{k=1}^{n}w_k\sigma\left(\tilde{b}_k+\langle \tilde{v}_k,\cdot\rangle\right)-f\right\|_\infty<\epsilon}\right)=1,
		\end{displaymath}}%
where $\mu^n$ denotes the $n$-fold product measure of $\mu$.

			By uniform approximation in the sense of \cite{leshno1993multilayer}, we have for any $\epsilon>0$, that there exists an $N^{\epsilon/2}\in\N$,  $\NN^{\epsilon/2}:\R^d\to\R$ with
			\begin{displaymath}
			\NN^{\epsilon/2}(x):=\sum_{k=1}^{N^{\epsilon/2}} \theta_k\sigma\left(b_k+\sum_{j=1}^{d}v_{k,j} x_j\right)
			\end{displaymath}
			with $\theta_k, b_k, v_{k,j}\in\R$ such that
			\begin{equation}\label{eq:UAT}
			\supnorm[\NN^{\epsilon/2}-f]<\frac{\epsilon}{2}.
			\end{equation}
			We now like to consider the probability that a randomly chosen vector of weights $(\tilde{b}_k, \tilde{v}_k)$ corresponding to the $k$\textsuperscript{th} neuron in the hidden layer is close to a specific weight vector $(b_i, v_i)$ of $\NN^{\epsilon/2}$. Since $\lambda^{d+1}(U_\delta(b_i,v_i))>0$ it follows from $\mu\gg\lambda^{d+1}$ that $\mu(U_\delta(b_i,v_i))>0$. Therefore,
			\begin{displaymath}
			0<p:=\underset{i\in\{1,\ldots,N^{\epsilon/2}\}}{\min}\mu(U_\delta(b_i,v_i))\le 1.
			\end{displaymath}
			The probability that none of the sampled weights $(\tilde{b}_k, \tilde{v}_k)$, $k=1,\ldots,n$ is in the $\delta$-neighborhood of a specific vector $(b_i,v_i)$ can be bounded as follows:
			\begin{align*}
			\mu^n\left(\left[\forall k\in\{1,\ldots,n\}:(\tilde{b}_k, \tilde{v}_k)\notin U_\delta(b_i,v_i)\right]\right)=\left(1-\mu(U_\delta(b_i,v_i))\right)^n\le (1-p)^n.
			\end{align*}
			This implies
			\begin{align*}
			&\mu^n\left(\underbrace{\left[\exists i\in\{1,\ldots,N^{\epsilon/2}\}:\forall k\in\{1,\ldots,n\}:(\tilde{b}_k, \tilde{v}_k)\notin U_\delta(b_i,v_i)\right]}_{=:B}\right)\\
			&=\mu^n\left(\bigcup_{i=1}^{N^{\epsilon/2}}\left[\forall k\in\{1,\ldots,n\}:(\tilde{b}_k, \tilde{v}_k)\notin U_\delta(b_i,v_i)\right]\right)\\
			&\le\sum_{i=1}^{N^{\epsilon/2}}\mu^n\left(\left[\forall k\in\{1,\ldots,n\}:(\tilde{b}_k, \tilde{v}_k)\notin U_\delta(b_i,v_i)\right]
			\right)\\
			&\le\sum_{i=1}^{N^{\epsilon/2}}(1-p)^n=(1-p)^n\cdot N^{\epsilon/2}\underset{n\to\infty}{\longrightarrow} 0.
			\end{align*}
			For every $\omega\in B^c$ define
			\begin{align*}
			\iota :\left\{1,\ldots,N^{\epsilon/2}\right\}&\to\{1,\ldots,n\},\\
			i&\mapsto\iota(i),
			\end{align*}
			with $(\tilde{b}_{\iota(i)},\tilde{v}_{\iota(i)})(\omega)\in U_\delta(b_i,v_i)$. Without loss of generality, $\iota$ is injective (choose $\delta$ small enough s.t. $U_\delta(b_i,v_i)$, $i=1,\ldots,N^{\epsilon/2}$ are disjoint). For those $\omega\in B^c$ we further define $\RNw$ as in the statement of the corollary, with trainable last layer weights
			\begin{displaymath}
			w_k:=\begin{cases}
			\theta_k, & \exists i\in\{1,\ldots,N^{\epsilon/2}\}:\iota(i)=k,\\
			0, &\nexists i\in\{1,\ldots,N^{\epsilon/2}\}:\iota(i)=k.
			\end{cases}
			\end{displaymath}
			By \Cref{le:uniform_cont_weights}, it follows that $\supnorm[\RNw-\NN^{\epsilon/2}]<\epsilon/2$ on $B^c$. Hence, an application of the triangle inequality, together with \meqref{eq:UAT} yield that
			\begin{displaymath}
			\forall \omega \in B^c:\supnorm[\RNw-f]<\epsilon.
			\end{displaymath} 
		\end{proof}
		
		\begin{proof}[\hypertarget{proof:le:almost_sure_interpolation}{Proof of \Cref{le:almost_sure_interpolation}}]
			We use the notation $\psi_{(b,v)}(\xtr)_k:=\left(\psi_{(b,v)}(\xtr_1)_k,\dots,\psi_{(b,v)}(\xtr_N)_k\right)=
			\left(
			\sigma\left({b_k}+\sum_{j=1}^{d}{v_{k,j}}\xtr_{1,j}\right),\dots,
			\sigma\left({b_k}+\sum_{j=1}^{d}{v_{k,j}}\xtr_{N,j}\right)
			\right)
			$ and note that $\psi_{(b,v)}(\xtr)_k$ are \iid{} for $k=1,\ldots,n$.
			We want to show that $\PP$-almost surely, $\{\psi_{(b,v)}(\xtr_1),\ldots, \psi_{(b,v)}(\xtr_N)\}$ are linearly independent, for then the terminal linear regression can be (uniquely in case $N=n$) solved.
			Since the column-rank of a matrix equals its row-rank, we can equivalently show that $\{\psi_{(b,v)}(\xtr)_1,\ldots, \psi_{(b,v)}(\xtr)_N\}$ are linearly independent $\PP$-almost surely.
			We show this by induction.
			First, the vector $\psi_{(b,v)}(\xtr)_1$ is almost surely non-zero by assumption and therefore almost surely linearly independent. 
			Assume then that \[\{\psi_{(b,v)}(\xtr)_1,\ldots, \psi_{(b,v)}(\xtr)_{N-1}\}\] are almost surely linearly independent. 
			Then almost surely, the linear hull \[L_{N-1}:=[\psi_{(b,v)}(\xtr)_1, \ldots,\psi_{(b,v)}(\xtr)_{N-1}]\subseteq\mathrm{range}(\psi_{(b,v)})\]
			constitutes an ${(N{-}1)}$-dimensional subspace of $\mathrm{range}(\psi_{(b,v)})$ for which \[\PP_\#(\psi_{(b,v)}(\xtr)_N)[L_{N-1}]=0,\]
			and thus $\psi_{(b,v)}(\xtr)_N\notin L_{N-1}$ $\PP$-almost surely. Thus, almost surely there exists $w\in\R^n$ such that $\sum_{k=1}^{n}w_k\psi_{(b,v)}(x_i)_k=y_i$ for all $i=1,\ldots,N$ and $n\ge N$. 
		\end{proof}
	\subsection{Existence and uniqueness of \texorpdfstring{$\flpm$ and $\RNR$}{the arg min f* and RN*}}\label{sec:proof:ExistenceAdaptedSpline}
	\begin{lemma}\label{le:existence}
	    Let $(\X,\tau)$ be a convex subset of a locally convex topological vector space and $\|\cdot\|$ be a norm on $\X$ whose induced topology is at least as fine as $\tau$ and such that $(\X, \|\cdot\|)$ is complete. Let further $L:\X\to\Rpz$ be continuous w.r.t. $\tau$, and $P:\X\to\R$ strongly convex and continuous w.r.t. $\|\cdot\|$. Moreover, for $\lambda>0$ let $\mathcal{K}:=\{f:P(f)\le \frac{L(0)}{\lambda}+P(0)\}$ be sequentially compact w.r.t. $\tau$. Then, the set
	    \begin{equation}\label{eq:argmin}
	        \argmin_{f\in\X} L(f)+ \lambda P(f)
	    \end{equation}
	    is non-empty. 
	\end{lemma}
	\begin{proof}
	    Consider a minimizing sequence of \begin{equation*}
	        \inf_{f\in\X} L(f)+ \lambda P(f)=\inf_{f\in K} L(f)+ \lambda P(f)
	    \end{equation*}
	    and denote by $(f_n)_{n\in\N}$ the $\tau$-convergent subsequence we know exists by $\tau$-compactness of $\mathcal{K}$ w.r.t.\ $\tau$ and $f^\infty%
	    $ is defined as a\footnote{If $\tau$ is not Hausdorff, the limit does not have to be unique.} limit of the subsequence~$(f_n)_{n\in\N}$. By $\tau$-continuity of $L$, for any $\epsilon>0$ there exists a $\tau$-neighbourhood $U_\epsilon\in\tau$ of $f^\infty$  s.t.\ $L(f)< L^*+\epsilon$ for every $f\in U_\epsilon$, where $L^*:=L(f^\infty)=\lim_{n\to\infty} L(f_n)$. Furthermore, there exists a convex $\tau$-neighbourhood $B\subset U_\epsilon$  of $f^\infty$ and (by $\tau$-convergence of $(f_n)_{n\in\N}$) an index $N\in\N$ s.t.\ $f_n\in B$ for all $n>N$. Note that for 
	    \[P^*:=\frac{\inf_{f\in K} L(f)+ \lambda P(f)-L^*}{\lambda},\]
	    also $L(f_n)+\lambda P(f_n)< L^*+\lambda P^*+\epsilon$ for $f_n\in B$ with $n$ large enough since $(f_n)_{n\in\N}$ is a minimizing sequence.\footnote{Note that we do not assume $P^*=P(f^\infty)$ at this point. If $\tau$ is not Hausdorff, $P(f^\infty)$ can be arbitrarily much larger than $P^*$}
	    
	    We now proceed to show that $(f_n)_{n\in\N}$ is a Cauchy sequence w.r.t.\ $\|\cdot\|$. Assume on the contrary that there exists a $\delta>0$ s.t.\ for all $M\in\N$ there exist $n,m>M$ for which $\|f_n-f_m\|>\delta$. First, by convexity of $B$, $\frac{f_n+f_m}{2}\in B$ and hence $L(\frac{f_n+f_m}{2})< L^*+\epsilon$. Second, by strong convexity of $P$ with parameter $c>0$
	    \begin{align}
	    P\left(\frac{f_n+f_m}{2}\right)&\le \frac{1}{2}P(f_m)+\frac{1}{2}P(f_n)-\frac{c}{8}\|f_n-f_m\| \\
	    &< P^*+\frac{2\epsilon}{\lambda}-\frac{c}{8}\delta,
	    \end{align}
	    and thus
	    \begin{align}
	        L\left(\frac{f_n+f_m}{2}\right)+\lambda P\left(\frac{f_n+f_m}{2}\right)&< L^*+\lambda P^*+(3\epsilon-\frac{c\lambda}{8}\delta)\\
	        &=\inf_{f\in K} L(f)+ \lambda P(f)+(3\epsilon-\frac{c\lambda}{8}\delta).
	    \end{align}
	    This however is a contradiction since $\epsilon$ was arbitrary.
	    
	    Since $(\X, \|\cdot\|)$ is assumed to be complete, there exists a $\|\cdot\|$-limit $f^*$ of $(f_n)_{n\in\N}$. Furthermore, $\tau$-continuity of $L$ implies its $\|\cdot\|$-continuity and hence
	    \begin{align}
	       L(f^{*})+\lambda P(f^{*})= \lim_{n\to\infty}L(f_{n})+\lambda P(f_{n})=\inf_{f\in\X} L(f)+ \lambda P(f).
	    \end{align}
	    Therefore, the limit $f^*\in K$ is a minimizer and $\argmin_{f\in\X} L(f)+ \lambda P(f)\neq \emptyset$.
	    
	\end{proof}
	\begin{lemma}\label{le:uniqueness}
	    Let $(\X,\tau)$ be a convex subset of a locally convex topological vector space and $\|\cdot\|$ be a norm on $\X$ whose induced topology is at least as fine as $\tau$ and such that $(\X, \|\cdot\|)$ is complete. Let further $L:\X\to\Rpz$ be convex, continuous w.r.t. $\tau$, and $P:\X\to\R$ strongly convex and continuous w.r.t. $\|\cdot\|$. Then, for any $\lambda>0$ the set
	    \begin{equation}\label{eq:argminConvexLoss}
	        \argmin_{f\in\X} L(f)+ \lambda P(f)
	    \end{equation}
	    is a singleton, i.e., there exists a unique minimizer.
	\end{lemma}
	\begin{proof}
	     Consider a minimizing sequence $(f_n)_{n\in\N}$ of \begin{equation*}
	        \inf_{f\in\X} L(f)+ \lambda P(f).
	    \end{equation*}

	    We now proceed to show that $(f_n)_{n\in\N}$ is a Cauchy sequence w.r.t.\ $\|\cdot\|$. Assume on the contrary that there exists a $\delta>0$ s.t.\ for all $M\in\N$ there exist $n,m>M$ for which $\|f_n-f_m\|>\delta$. First, by convexity of $\X$, $\frac{f_n+f_m}{2}\in \X$. Second, since the functionals $L$ and $P$ are assumed to be convex respectively strongly convex, the objective $L(\cdot)+\lambda P(\cdot)$ is strongly convex with parameter $c>0$ too. Note hat since $(f_n)_{n\in\N}$ is a minimizing sequence, for any $\epsilon>0$ \[L\left(f_n\right)+\lambda P\left(f_n\right)< \inf_{f\in \X}L(f)+\lambda P(f)+\epsilon\] for $n,m$ large enough. Thus we obtain
	    
	    \begin{align}
	        L\left(\frac{f_n+f_m}{2}\right)+\lambda P\left(\frac{f_n+f_m}{2}\right)&<\inf_{f\in \X}L(f)+\lambda P(f)+\epsilon-\frac{c}{8}\|f_n-f_m\|\\
	        &<\inf_{f\in \X}L(f)+\lambda P(f)+\epsilon-\frac{c}{8}\delta.
	    \end{align}
	    
	    This however is a contradiction since $\epsilon$ was arbitrary.
	    
	    Since $(\X, \|\cdot\|)$ is assumed to be complete, there exists a $\|\cdot\|$-limit $f^*$ of $(f_n)_{n\in\N}$.
	    
	    Moreover, by strict convexity of the objective, the minimizer is unique. 
	\end{proof}
	\begin{lemma}[Existence and Uniqueness of \textit{\aregSpl}~$\flpm$]\label{le:existenceuniquenessaregspl}
	    Let $L:L^\infty_\text{loc}\to\Rpz$ be a loss functional satisfying \Cref{as:generalloss} \ref{item:continuousL}. Furthermore, let $g$ satisfy \Cref{as:truncatedg} \ref{item:truncatedg}-\ref{item:densityIsSmooth} (replacing \gxi\ by $g$)\footnote{\label{foot:gFiniteSecondMoment}We expect that instead of assuming compact support of $g$, one could require that $g$ is the scaled probability density function of a distribution with finite second moment (cp.~\cref{eq:regCorrespondingToG} in \Cref{subsec:SimilaritWithoutSkipConnections}). This hypothesis is not proven in the present paper, but \Cref{subsec:SimilaritWithoutSkipConnections} provides some intuition on this (and probably the tools that would facilitate such a proof).}. Then, an \aregSpl~$\flpm$ exists. If in addition, $L$ is convex\footnote{\label{footnote:convexlossexistence} If we assume convexity of $L$, we could weaken \Cref{as:generalloss} \ref{item:continuousL} such that we only need to assume continuity of $L$ with respect to \Wkp[K]{2}{2} for some compact set $K$ by using \Cref{le:uniqueness} instead of \Cref{le:existence}.}, $\flpm$ is uniquely defined.
	\end{lemma}
		\begin{proof}
	Note that by \Cref{rem:adRegSplineSum} it suffices to show existence and uniqueness of the optimization problem formulated for tuples in \Cref{def:adaptedSplineRegTuple}. We set $\X:=\T$ and define $\tau$ as the product topology induced by $\sobnorm[\cdot]$ for some compact $K\subset\R$ with $\supp(g)\subset K$. Moreover, we consider the Banach space $(\T, \|\cdot\|_{\Wkp[K]{2}{2}})$, with norm $\|(f_+, f_-)\|_{\Wkp[K]{2}{2}}:=\|f_+\|_{\Wkp[K]{2}{2}}+\|f_-\|_{\Wkp[K]{2}{2}}$. By \Cref{as:generalloss} \ref{item:continuousL} $L$ is continuous w.r.t. $\tau$. By \Cref{le:PfuncStronglyConvex}, $P:=\Pgpm$ is strongly convex and continuous w.r.t. $\|\cdot\|_{\Wkp[K]{2}{2}}$. An application of \Cref{le:existence} proves the statement, since $\mathcal{K}$ is sequentially compact w.r.t.\ $\Wkp[K]{1}{\infty}$ by \Cref{le:compactnessoflevelset}.
	
	For convex loss functional $L$, the statement follows by an application of \Cref{le:uniqueness} with $\X,\tau,\|\cdot\|$ defined as before.
	\end{proof}
	\begin{lemma}[Existence and Uniqueness of \textit{\ridgeRSN}~$\RNR$]\label{le:existenceuniquenessRidgeRSN}
	    Let $L:L^\infty_\text{loc}\to\Rpz$ be a loss functional such that 
	    \begin{enumerate}[a)]
	        \item\label{itm:case:Lcontionous} $L$ satisfies \Cref{as:generalloss}\ref{item:continuousL} or 
	        \item\label{itm:case:Lconvex} $L$ is convex.
	    \end{enumerate} Then, there exists a solution to \meqref{eq:wR}, i.e., the {\ridgeRSN}~$\RNR$ is  well-defined. If $L$ is convex, $\RNR$ is uniquely defined.
	\end{lemma}
	\begin{proof}
	    \ref{itm:case:Lcontionous} If $L$ satisfies \Cref{as:generalloss}\ref{item:continuousL}, we set $\X:=\R^n$, $\tau=\|\cdot\|:=\|\cdot\|_2$ and apply \Cref{le:existence} to get that there exists a minimizer of \meqref{eq:wR}. Note that the set $\mathcal{K}$ is bounded, closed and therefore compact w.r.t.\ $\|\cdot\|_2$. Moreover, by a proof analogous to the proof of \Cref{le:uniform_cont_weights}, the map $\theta\mapsto\RN_\theta\mapsto L(\RN_\theta)$ is continuous as concatenation of continuous functions.
	    \ref{itm:case:Lconvex} In case $L$ is convex, $\theta\mapsto L(\RN_\theta)$ is convex too and thus $\theta\mapsto L(\RN_\theta)$ is continuous by \cite[Theorem 7.1.1]{MR2467621}. An application of \Cref{le:uniqueness} yields the result.
	\end{proof}	

\subsection{Proof of \Cref{ex:lossfunctionalsum}}

Before we prove \Cref{le:ex:lossfunctionalsum}, we need an auxiliary \Cref{le:help:wstetig} that would be quite easy to prove in the case of square loss \unimportant{(i.e. $\ltri(y):=(y-\ytr_i)^2$)}, but gets a bit more evolved in the case of more general forms of training losses \ltri. 

\begin{lemma}[$\ltri^{'}$ -bound]\label{le:help:wstetig}
Let $\epsilon>0$. Under the assumptions of \Cref{ex:lossfunctionalsum}, i.e., for convex and continuously differentiable loss functions $l_i:\R\to\Rpz$, $i=1,\ldots,N$, and $\Ltr$ as in \meqref{eq:Ltri}, there exists an upper bound:
$\exists \ClPrime\in\Rp:\forall n\in\N:\fao:\allIndi{N}{i}:\forall f\in\Set{f|\Ltr(f)<\Ltr(0)+\epsilon}$
\begin{equation}\label{eq:l_prime_bound}
    \left|\ltri^{'}\left(f(\xtr_i)\right)\right|\leq \ClPrime.
\end{equation}
    
\end{lemma}
\begin{proof}

Let $f\in\Set{f|\Ltr(f)<\Ltr(0)+\epsilon}$. Then $\allIndi{N}{i}:$
    \begin{equation}\label{eq:ltriRSN_leq_L0}
    \ltri \left(f(\xtr_i)\right)
    \overset{\ltri[\iota]\geq 0}{\leq} \sum_{\iota=1}^N \ltri[\iota] \left(f(\xtr_\iota)\right)
    \overset{\meqref{eq:Ltri}}{=}\Ltrb{f}
    \le \Ltrb{0}+\epsilon.
\end{equation}
In other words, $\allIndi{N}{i}$ 
\begin{equation}\label{eq:RNRinSublevelSet}
    f(\xtr_i)\overset{\meqref{eq:ltriRSN_leq_L0}}{\in}
    \ltri^{-1}\left( (-\infty,\Ltrb{0}+\epsilon\rbrack\right)
    :=\Set{y\in\R | \ltrib{y}\leq \Ltr(0)+\epsilon}
\end{equation}
lies in a certain sublevel set of \ltri.

This implies that $\allIndi{N}{i}$:
\begin{equation}\label{eq:ltriRNR_leq_sup}
         \left|\ltri^{'}\left(f(\xtr_i)\right)\right|\overset{\text{\meqref{eq:RNRinSublevelSet}}}{\leq}\sup_{y\in\ltri^{-1}\left( (-\infty,\Ltrb{0}+\epsilon\rbrack\right)}\left| \ltri^{'}(y)\right|=:c_i\in\bar{\R}.
\end{equation}
So we want to show that the right-hand side~$c_i$ of \meqref{eq:ltriRNR_leq_sup} is finite. For this, we need a better understanding of the sublevel set~$\ltri^{-1}\left( (-\infty,\Ltrb{0}+\epsilon\rbrack\right)$.

The sublevel sets of convex functions are convex (\ltri\ is convex by assumption). Convex subsets of \R\ are always intervals. Since $\ltri\in\C^1$ is continuous by assumption, the preimage of the closed set $(-\infty,\Ltr(0)+\epsilon\rbrack$ is closed.
    
    Hence, $\ltri^{-1}\left( (-\infty,\Ltrb{0}+\epsilon\rbrack\right)$ is a closed interval. There are only four types of closed intervals: $\lbrack \alpha, \beta \rbrack$, $\lbrack \alpha, \infty )$, $(-\infty, \beta \rbrack$ and $(-\infty, \infty )$\unimportant{, where $\alpha,\beta\in\R$}. As the domain of \ltri\ is unbounded and as \ltri\ is continuous, we know that $\alpha,\beta\in\ltri^{-1}\left( \Ltrb{0}+\epsilon\right)$. 

Consider these four cases for each $i\in\fromto{N}$ separately:
\begin{enumerate}[{case} 1:,ref={case} \arabic*]
    \item\label{itm:case:alpha_beta} $\ltri^{-1}\left( (-\infty,\Ltrb{0}+\epsilon\rbrack\right)=\lbrack \alpha, \beta \rbrack$ \unimportant{is compact}:\newline
    Since \ltri\ is convex, $\ltri^{'}$ is monotonically increasing. Hence, the minimum of $\ltri^{'}$ must be attained at the left boundary~$\alpha$ and the maximum at the right border~$\beta$. So, we can bound
    \begin{equation}
        c_i\unimportant{\overset{\meqref{eq:ltriRNR_leq_sup}}{:=}\sup_{y\in\ltri^{-1}\left( (-\infty,\Ltrb{0}+\epsilon\rbrack\right)}\left| \ltri^{'}(y)\right|}=\max\left\{\left| \ltri^{'}(\alpha)\right|,\left| \ltri^{'}(\beta)\right|\right\}\in\R
    \end{equation}
    as a finite number (i.e. the maximum of two finite numbers).
    
    \item\label{itm:case:alpha_infty} $\ltri^{-1}\left( (-\infty,\Ltrb{0}+\epsilon\rbrack\right)=\lbrack \alpha, \infty )$ \unimportant{is not compact}:\newline
    This case allows to imply that
    \begin{equation}\label{eq:ltri_leq_0}
      \ltri^{'}(y)\leq 0\quad \forall y \in\R\unimportant{\supseteq\ltri^{-1}\left( (-\infty,\Ltrb{0}+\epsilon\rbrack\right)},  
    \end{equation}
    because of the following contraposition:\newline
    Assume $\exists y^+\in\R: \ltri^{'}(y^+)>0$ then, $\forall y \in \lbrack y^+,\infty): \ltri^{'}(y)\geq\ltri^{'}(y^+)$, because of mo\-no\-to\-nicity. Then $\forall y \in \lbrack y^+,\infty): \ltri(y)\geq\ltri(y^+)+(y-y^+)\ltri^{'}(y^+)$, and further $$\left(y^+ + \frac{\Ltr(0)+\epsilon-\ltri(y^+)}{\ltri^{'}(y^+)},\infty\right)\cap \ltri^{-1}\left( (-\infty,\Ltrb{0}+\epsilon\rbrack\right)=\emptyset,$$ which would contradict the assumption of \ref{itm:case:alpha_infty}. This contraposition has proven ineq.~\meqref{eq:ltri_leq_0}.\hfill\newline
    With the help of ineq.~\meqref{eq:ltri_leq_0} we can bound
    \begin{equation}\label{eq:ci:alpha_infty}
        c_i\unimportant{\overset{\meqref{eq:ltriRNR_leq_sup}}{:=}\sup_{y\in\ltri^{-1}\left( (-\infty,\Ltrb{0}+\epsilon\rbrack\right)}\left| \ltri^{'}(y)\right|}\overset{\meqref{eq:ltri_leq_0}}{=}\left|\inf_{y\in\ltri^{-1}\left( (-\infty,\Ltrb{0}+\epsilon\rbrack\right)} \ltri^{'}(y)\right|\overset{\text{monotonicity}}{=}\left| \ltri^{'}(\alpha)\right|\in\R.
    \end{equation}
    
    \item $\ltri^{-1}\left( (-\infty,\Ltrb{0}+\epsilon\rbrack\right)=(-\infty, \beta \rbrack$ \unimportant{is not compact}:\newline
    Analogously to \meqref{eq:ltri_leq_0} we get
    \begin{equation}\label{eq:ltri_geq_0}\tag{\ref*{eq:ltri_leq_0}\textsubscript{-}}
      \ltri^{'}(y)\geq 0\quad \forall y \in\R\unimportant{\supseteq\ltri^{-1}\left( (-\infty,\Ltrb{0}+\epsilon\rbrack\right)},  
    \end{equation}
    which implies analogously to \meqref{eq:ci:alpha_infty} that we can bound
    \begin{equation}\label{eq:ci:infty_beta}\tag{\ref*{eq:ci:alpha_infty}\textsubscript{-}}
        c_i\unimportant{\overset{\meqref{eq:ltriRNR_leq_sup}}{:=}\sup_{y\in\ltri^{-1}\left( (-\infty,\Ltrb{0}+\epsilon\rbrack\right)}\left| \ltri^{'}(y)\right|}\overset{\meqref{eq:ltri_geq_0}}{=}\left|\sup_{y\in\ltri^{-1}\left( (-\infty,\Ltrb{0}+\epsilon\rbrack\right)} \ltri^{'}(y)\right|\overset{\text{monotonicity}}{=}\left| \ltri^{'}(\beta)\right|\in\R.
    \end{equation}
    
    \item\label{itm:case:infty_infty} $\ltri^{-1}\left( (-\infty,\Ltrb{0}+\epsilon\rbrack\right)=(-\infty, \infty )$ \unimportant{is not compact}:\newline
    Analogously to \meqref{eq:ltri_leq_0} and \meqref{eq:ltri_geq_0} we get
    \begin{equation}\label{eq:ltri_is_0}\tag{\ref*{eq:ltri_leq_0}\textsubscript{0}}
      \ltri^{'}(y) = 0\quad \forall y \in\R\unimportant{\supseteq\ltri^{-1}\left( (-\infty,\Ltrb{0}+\epsilon\rbrack\right)},  
    \end{equation}
    which directly implies that we can bound
    \begin{equation}\label{eq:ci:infty_infty}\tag{\ref*{eq:ci:alpha_infty}\textsubscript{0}}
        c_i\unimportant{\overset{\meqref{eq:ltriRNR_leq_sup}}{:=}\sup_{y\in\ltri^{-1}\left( (-\infty,\Ltrb{0}+\epsilon\rbrack\right)}\left| \ltri^{'}(y)\right|}\overset{\meqref{eq:ltri_is_0}}{=}0\in\R.
    \end{equation}

\end{enumerate}

 Since this case analysis showed that in each case $c_i\in\R$ is finite, we can use \meqref{eq:ltriRNR_leq_sup} to conclude \unimportant{$\forall n\in\N:\fao:\allIndi{N}{i}:$}
 \begin{equation}\label{eq:l_prime_bound:proof}
    \left|\ltri^{'}\left(f(\xtr_i)\right)\right|\unimportant{\overset{\meqref{eq:ltriRNR_leq_sup}}{\leq} c_i} \leq \max_{i\in\fromto{N}}c_i =:\ClPrime \overset{\text{\hyperref[itm:case:alpha_beta]{cases 1}\hyperref[itm:case:alpha_infty]{\crefrangeconjunction}\hyperref[itm:case:infty_infty]{4}}}{<}\infty.
\end{equation}

\unimportant{An equivalent more explicit definition would be:
\begin{equation}
    \ClPrime:=\max \left( \{0\} \cup %
    \Set{\ltri^{'}(y_i)|i\in\fromto{N}, y_i\in\ltri^{-1}(\Ltr(0)+\epsilon)} \right).
\end{equation}}
 \end{proof}

\begin{lemma}[\Cref{ex:lossfunctionalsum}]\label{le:ex:lossfunctionalsum}
Under the assumptions of \Cref{ex:lossfunctionalsum}, the loss~$\Ltr$ from \meqref{eq:Ltri} satisfies \Cref{as:generalloss}.
\end{lemma}
\begin{proof}
    Since by Morrey's inequality point evaluations are Lipschitz-continuous w.r.t.~$\sobnormop[\cdot]$ and $\ltri$ are continuous by assumption, $\Ltr$ of \cref{eq:Ltri} is continuous w.r.t.~$\sobnormop[\cdot]$ as concatenation of continuous functions. Thus, $\Ltr$ satisfies \Cref{as:generalloss}\ref{item:continuousL}.
    
    Since by Morrey's inequality point evaluations are Lipschitz-continuous w.r.t.~$\sobnormop[\cdot]$ and $\ltri$ are Lipschitz-continuous on $\Set{f(\xtr) | \Ltrb{f}\leq\Ltrb{0}+\epsilon}$ for every $\epsilon>0$ by \Cref{le:help:wstetig}, $\Ltr$ of \cref{eq:Ltri} is Lipschitz-continuous w.r.t.~$\sobnormop[\cdot]$ as concatenation of Lipschitz-continuous functions (since finite sums are also Lipschitz-continuous). Thus, $\Ltr$ satisfies \Cref{as:generalloss}\ref{item:lipschitzL}.
\end{proof}

\newpage
\section{Leaving \cref{as:easyReadable}}\label{sec:AssymetricDistribution}
Without \Cref{as:easyReadable}, \Cref{thm:ridgeToSpline} has to be reformulated to \Cref{cor:ridgeToSplineasym}. This is done in the rest of this \namecref{sec:RidgeToSpline}.
		\begin{definition}[asymmetric adapted spline regression]\label{def:asymadaptedSplineReg}
			Let %
			$\Ltr:L^{\infty}\to\Rpz$ be a loss functional and $\lambda \in \Rp$.
			Then for given functions $g_+:\R\to\Rpz$, $g_-:\R\to\Rpz$ the \textit{\asymaregSpl}~$\flpmasym:=\flpasym +\flmasym +\gammaAsym$ is defined%
			\footnote{\label{footnote:uniqueflpmtasym}The optimization problem~\meqref{eq:asymadaptedSplineReg} should be interpreted such that $\frac{0}{0}$ is replaced by zero (For example, if $\PP[v=0]=0$ the last fraction should be ignored.). The triple~\flpmtasym\ and thus an \asymaregSpl\ is defined  for $L$ satisfying \Cref{as:generalloss}\ref{item:continuousL}, if~$g_+$,$g_-$ are compactly supported and continuous on $\supp(g_+)$ respectively $\supp(g_-)$. It is uniquely defined in case $L$ is convex (analogously to \Cref{le:existenceuniquenessaregspl}).}
			for 
			\begin{equation}\label{eq:asymadaptedSplineReg}
			\flpmtasym \in
			\argmin_{(f_+,f_-,\gamma )\in \Tgpgm}\underbrace{\left( \Ltrb{f_+ + f_- + \gamma }+\lambda P^{g_+, g_-}(f_+,f_-, \gamma)\right)}_{=:\Flpmasymb{f_+,f_-,\gamma }},  
			\end{equation}
			
			with
			\begin{equation*}
			P^{g_+, g_-}(f_+,f_-,\gamma):=
			\int_{\supp (g_+)}\!\!\!\! \frac{\left( {f_+}^{''}(x) \right)^2}{g_+(x)} dx
			+\int_{\supp (g_-)}\!\!\!\! \frac{\left( {f_-}^{''}(x) \right)^2}{g_-(x)} dx
			+\frac{\gamma^2}{\PP[v_k=0]\E[{\relu[b]^2}]},
			\end{equation*}
			
			and
			\hypertarget{eq:Tgpgm}{\begin{align*}
				\Tgpgm:=\bigg\{(f_+,f_-,\gamma )\in \WT(\R)\times \WT(\R)\times\R \bigg|& \supp (f_+'')\subseteq \supp (g_+), \supp (f_-'')\subseteq \supp (g_-),\\
				&\lim_{x\to -\infty} f_+(x)=0, \lim_{x\to -\infty} f_+'(x)=0,\\
				&\lim_{x\to +\infty} f_-(x)=0, \lim_{x\to +\infty} f_-'(x)=0,\\
				&\PP[v=0]=0\Rightarrow \gamma = 0
				\bigg\}.\\
				\end{align*}}
		\end{definition}
		
		\begin{remark}[connection to \Cref{def:adaptedSplineReg}]If \Cref{as:easyReadable} holds, then 
		    \begin{equation}
		    2g(0)P^{g_+, g_-}(f_+,f_-,0)=\Pgpm (f_+,f_-)
		    \end{equation}
		holds with $g=g_+=g_-$ and connects \Cref{def:asymadaptedSplineReg} with \Cref{def:adaptedSplineReg,def:adaptedSplineRegTuple}.\footnote{This factor $2g(0)$ explains the difference between $\lw:=\lambda n 2g(0)$ in \Cref{thm:ridgeToSpline} and $\Tilde{\Tilde{\lambda}}:=\lambda n$ in \Cref{cor:ridgeToSplineasym}.}
		\end{remark}
		
		\begin{definition}[conditioned kink position density~\gxip, \gxim]\label{def:conKinkPosDens}
			The \emph{conditioned kink position density}~$\gxip:\R\to\R$ of $\xi_k$ conditioned on $v_k>0$ is defined such that $\int_E \gxip(x) dx =\PPco{\xi_k \in E}{v_k>0}\quad\forall E\in\B$. Analogously, $\int_E \gxim(x) dx =\PPco{\xi_k \in E}{v_k<0},\forall E\in\B$.
		\end{definition}
		
		\begin{corollary}[generalized \Cref{thm:ridgeToSpline}]\label{cor:ridgeToSplineasym}
        Using the notation from \Cref{def:RSNN,def:conKinkPosDens,def:ridgeNet,def:asymadaptedSplineReg} and let
			$\fax:$
			\begin{align*} g_+(x)&:=\gxip(x) \Eco{v_k^2}{\xi_k=x, v_k>0}\PP[v_k>0],\\g_-(x)&:=\gxim(x) \Eco{v_k^2}{\xi_k=x, v_k<0}\PP[v_k<0],\end{align*} and $\Tilde{\Tilde{\lambda}}:=\lambda n $. Then, under the \Cref{as:mainAssumptions,as:truncatedg,as:generalloss}, the following statement holds for every compact set $K\subset\R$: for every $\left(\RNR[\Tilde{\Tilde{\lambda}}]\right)_{n\in\N}$ as in \Cref{def:ridgeNet} and
			\[F^{\text{asym}}_{\text{min}}:=\Set{f=f_++f_-+\gamma|(f_+,f_-,\gamma )\in\argmin_{ \Tgpgm} \Flpmasym },\]
		\begin{equation}\label{eq:ridgeToSplineasymNotUnique}
	\plim d_{\Wkp[K]{1}{\infty}}\left(\RNR[\Tilde{\Tilde{\lambda}}],F^{\text{asym}}_{\text{min}}\right) =0,
	\end{equation}
	i.e., $\forall\left(\RNR[\Tilde{\Tilde{\lambda}}]\right)_{n\in\N}\in\prod_{n\in\N}\argmin \hyperref[eq:wR]{F_n^{\Tilde{\Tilde{\lambda}}}}:\forall\epsilon>0:\forall\rho\in(0,1):\exists n_0\in\N:\forall n>n_0$:
	\begin{equation}
	    \PP\left[\exists\flpmasym\in F^{\text{asym}}_{\text{min}}:\sobnorm[{\RNR[\Tilde{\Tilde{\lambda}}]-\flpmasym }]<\epsilon\right]>\rho.
	\end{equation}
		\end{corollary}
		{\mediumimportantStart
			\begin{proof}
				The proof of \Cref*{cor:ridgeToSplineasym} is analogous to the \hyperlink{proof:thm:ridgeToSpline}{proof of \Cref*{thm:ridgeToSpline}} in \Cref{sec:proof:RidgeToSpline}. (The \cref{footnote:analogous:def:estKinkDist,footnote:analogous:def:splineApproximatingRSN,footnote:analogous:le:deltastrip} on \cpageref{footnote:analogous:def:estKinkDist,footnote:analogous:def:splineApproximatingRSN,footnote:analogous:le:deltastrip} in \Cref{sec:proof:RidgeToSpline} help to understand this analogy.)
		\end{proof}}

\newpage
\section{Intuition about \texorpdfstring{\aregSplf}{adapted regression spline}}\label{sec:IntuitionAdaptedSpline}
In this section, we provide further intuition for the \aregSplf{} by comparing it to the \wregSplf{} and to the classical \regSplf{}. The main difference is that the first (and zeroth) derivative of \aregSplf{} are slightly regularized, whereas the first (and zeroth) derivative of \wregSplf{} and \regSplf{} are not regularized at all. In the following, we discuss further (more subtle) differences.

For all experiments in \Cref{sec:IntuitionAdaptedSpline} we use $\sScale=0.05$ and $n=1024$ and the squared loss~$\Ltr$ from \cref{eq:Ltr_squaredloss}. See \url{https://github.com/JakobHeiss/NN_regularization1} for the implementation of these experiments and further hyper-parameters.
\subsection{Linear skip connections}\label{subsec:linearSkip}
A slight modification of the RSN architecture can lead to a simpler \Pfunc{}-functional.
If we add a trainable linear skip connection with parameter $a$, and a trainable bias $c$ to our \RSN{}~$\RNaffine_{\waffine,\om}(x):=\RNaffine_{(w,a,c),\om}(x):=\RN_{w,\om}(x)+a x + c$, we obtain a result analogous to \Cref{thm:ridgeToSpline}:
\begin{corollary}\label{thm:ridgeToSplineSkipConnection} Let $\text{reg}:\R^2\to\Rpz$ be any convex locally Lipschitz function and \begin{equation}\label{eq:RidgeProblem With SKipConnections}
\wRaffine\omb\in\argmin_{(w,a,c)\in\R^n\times\R\times\R}
		\Ltr\left( \RNaffine_{(w,a,c),\om}(x)\right)+\lw\left(\|w\|_2^2+\frac{1}{n}\text{reg}(a,c)\right).
\end{equation}
Then, under the assumptions of \Cref{thm:ridgeToSpline},
\begin{equation}\label{eq:ridgeToSplineNotUniqueSKipCOnnections}
	\plim d_{\Wkp[K]{1}{\infty}}\left(\RNRaffine,
	\argmin_{f\in\WT(\R)}\left(\Ltrb{f}+\lambda\Pgpmmaffine(f)\right)\right) =0,
	\end{equation}
	where
\hypertarget{eq:Pgpmaffine}{\begin{equation*}
		\Pgpmmaffine(f):=  2g(0)\underset{\underset{f=f_++f_-+a(\cdot)+c}{((f_+, f_-),a,b)\in\T\times\R\times\R}}{\min} \left(
		\int_{\supp (g)} \frac{\left( {f_+}^{''}(x) \right)^2}{g(x)} dx
		+\int_{\supp (g)} \frac{\left( {f_-}^{''}(x) \right)^2}{g(x)} dx
		+\text{reg}(a,c)
		\right).
		\end{equation*}}
\end{corollary}
\begin{proof}
The proof is analogous to the \hyperlink{proof:thm:ridgeToSpline}{proof} of \Cref{thm:ridgeToSpline} formulated in \Cref{sec:proof:RidgeToSpline}.
\end{proof}

\begin{corollary}\label{cor:simplifyingPfuncSkipConnections}
        In the case of unregularized skip connection and bias (i.e., $\text{reg}(a,c)\equiv0$) $\Pgpmmaffine$ from \Cref{thm:ridgeToSplineSkipConnection} simplifies to the regularization functional of the \wregSpl{}, i.e, for every function $f\in \WT(\R)$ that satisfies $\supp(f'')\subseteq\supp(g)$, we obtain
        \[\Pgpmmaffine(f)%
        =\Pg(f)=g(0) \int_{\supp (g)} \dfrac{\left( f''(x) \right)^2}{g(x)} dx \text{ from \Cref{def:splineReg}.}\]
\end{corollary}
\begin{proof}
First, we reformulate $\Pgpmmaffine$ for $\text{reg}\equiv0$ as
\begin{equation}\label{eq:PfuncSkipSimplified}
		\Pgpmmaffine(f)=  2g(0)\underset{\underset{f=f_++f_-+a(\cdot)+c}{((f_+, f_-),a,b)\in\T\times\R\times\R}}{\min} \left(
		\int_{\supp (g)} \frac{\left( {f_+}^{''}(x) \right)^2+\left( {f_-}^{''}(x)\right)^2}{g(x)} dx
		\right).
\end{equation}
Then, for any given function $f\in \WT(\R)$ that satisfies $\supp(f'')\subseteq\supp(g)$ (cp. \Cref{as:truncatedg}\ref{item:truncatedg}), we can explicitly formulate the solution to \meqref{eq:PfuncSkipSimplified} $((f_+^*, f_-^*),a^*,b^*)\in\T\times\R\times\R$ as
\begin{subequations}\label{subeqs:SolutionFUnctionSpaceUnregularizedSkipCOnnections}
    \begin{align}
f_+^*(x)&=\frac{f(x)}{2}-\frac{f'(-\infty) x + \lim_{r\to-\infty}\left(f(r)-f'(-\infty) r\right)}{2},\\
f_-^*(x)&=\frac{f(x)}{2}-\frac{f'(\infty) x + \lim_{r\to\infty}\left(f(r)-f'(\infty) r\right)}{2},\\
a^*&=\frac{f'(-\infty) + f'(\infty)}{2},\\
c^*&=\frac{\lim_{r\to-\infty}\left(f(r)-f'(-\infty) r\right) + \lim_{r\to+\infty}\left(f(r)-f'(\infty) r\right)}{2},
    \end{align}
\end{subequations}
where we use short notations such as $f'(-\infty):=\lim_{s\to-\infty}f(s)$.
In this case the second derivatives ${f_+^*}^{''}={f_-^*}^{''}=\frac{f''}{2}$ are equal.%
\footnote{For visualization of $f_+^*$, $f_-^*$, $a^*$ and $c^*$ from~\eqref{subeqs:SolutionFUnctionSpaceUnregularizedSkipCOnnections}, see \Cref{fig:strongerRegLinear,fig:jump,fig:roleOfWeighting,fig:sinUnderfit,fig:sinFit,fig:sinOverfit}(b), where $\RNRpaffine\approx f_+^*$, $\RNRmaffine\approx f_-^*$, $a^{*{\color{hellgrau},n,\lwnl,\text{reg}}}\approx a^*$ and $c^{*{\color{hellgrau},n,\lwnl,\text{reg}}}\approx c^*$ are very good approximations. (Strictly speaking, this approximations are not exact, since we use only $n=1024<\infty$ neurons and a very weak nut nonzero regularization such as $\frac{1}{n}\text{reg}(a,c)= 2^{-20}(a^2+c^2)>0$ for the implementation of these plots, but the results are visually practically not distinguishable from $n\to\infty$ neurons and $\text{reg}\equiv0$.)}
\end{proof}
\begin{remark}\label{rem:implicitRegAffineSkip}
    The implicit regularization induced by gradient descent
    		\hypertarget{eq:GDescentSkip}{\begin{align*}
		    \begin{split}
		\wthaffine[t+\gamma]&=\wthaffine[t]-\gamma\nabla_w \Ltrsb{\RNaffine_{\wthaffine[t]}},\label{eq:GDescentSkip}\\
		\wthaffine[0]&=0,
		\end{split}\tag{GD\textsubscript{skip}}
		\end{align*}}
    corresponds to the explicitly regularized solution of \meqref{eq:RidgeProblem With SKipConnections} iff $\text{reg}(a,c)=(a^2+c^2)$. (This implicit regularization could be scaled by using different learning rates for $a$ and $c$ than for $w$ or by multiplying $a$ and $c$ by some scaling factors.)
\end{remark}
\begin{proof}
In the case of linear skip connection, one has to add two further columns 
    \begin{align*}
		X_{i,n+1}\omb&:= \xtr_{i}%
  \text{ and}\\
		X_{i,n+2}\omb&:= 1 \quad \allIndi{N}{i}\ \faog
	\end{align*}
  to the matrix $X$ introduced in \Cref{le:GDsolution} in order to obtain analogous results as \Cref{le:GDsolution} and \Cref{thm:GDridge}.
\end{proof}

\Cref{thm:ridgeToSplineSkipConnection,cor:simplifyingPfuncSkipConnections} show how the \Pfunc{}-functional simplifies for RSN architectures with \emph{unregularized} linear skip connections. The P-functional $\Pg$ corresponding to an architecture with unregularized, linear skip connection is easier to interpret than $\Pgpmm$ or $\Pgpmmaffine$, because one can directly plug in a function $f$ into $\Pg$ without solving an optimization problem over $\T$ and without any boundary conditions imposed by \T.

\begin{figure}[hptb]
				\centering
				\scaledinset{l}{.2}{b}{.1475}{\resizebox{0.18\hsize}{!}{\tiny \begin{tabular}{@{}l@{}}%
				$\RNw[{\wth[T]}]$ \text{(implicit)}\vphantom{\RNRp}\\
				$\RNR\approx\flpm$ \text{(Ridge)}\vphantom{\RNRp}\\
				$\fl=\flg$ \text{(spline)}\vphantom{\RNRp}\\
    $\RNRp\approx\flp$\\
    $\RNRm\approx\flm$ \end{tabular}}}{\includegraphics[width=0.7\linewidth]{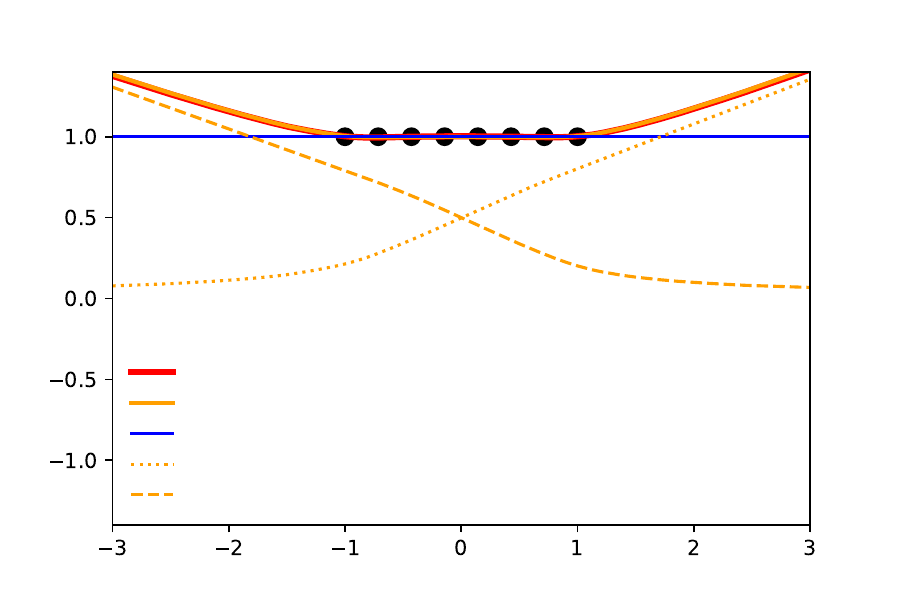}}
				\caption{In the specific example of $\ytr_i=1$ $\forall i$, we obtain that the weighted spline $\flg=\fl\equiv1$ is constant for any weighting function $g$ and for any $\lambda\in\Rp$. However, we see that in this specific example the \RSN~$\RNw[{\wth[T]}]\approx\RNR\approx\flpm$ extrapolates the data differently than $\flg=\fl\equiv1$. This difference is due to the boundary conditions in $\T$. %
    To visualize why these boundary conditions on $\flpmt\in\T$ result in the u-shape of of $\flpm$, we split $\RNR\approx\flpm$ into $\RNRp\approx\flp$ and $\RNRm\approx\flm$. %
    (Recall that $\RNRp+\RNRm=\RNR\approx\flpm=\flp+\flm$.)
    If one adds an unregularized bias $c$ to the architecture, one obtains $\RNRaffine=c^{*{\color{hellgrau},n,\lwnl,\text{reg}}}=\flg=\fl\equiv1$. If regularization~$\text{reg}\neq0$ was added to the loss, $\RNRaffine$ would continuously transform from $\flg$ into $\flpm$ as the strength of the regularization $\text{reg}$ of the bias increases.
    $T=128 \overset{\text{\eqref{eq:lambdaWelleT}}}{\implies}\lw=\frac{1}{2T(e-1)}\approx2.3\cdot 10^{-3}\overset{\text{\Cref{thm:ridgeToSpline}}}{\implies}\lambda=\frac{\lw}{2ng(0)}\approx7.1\cdot 10^{-3}$. %
                }
				\label{fig:GernealizationOfAdaptedSplineVsSpline}
			\end{figure}

\begin{figure}[htbp]
				\centering
				\scaledinset{l}{.2}{b}{.603}{\resizebox{0.18\hsize}{!}{\tiny \begin{tabular}{@{}l@{}}%
				$\RNw[{\wth[T]}]$ \text{(implicit)}\vphantom{\RNRp}\\
				$\RNR\approx\flpm$ \text{(Ridge)}\vphantom{\RNRp}\\
				$\fl=\flg$ \text{(spline)}\vphantom{\RNRp}\\
    $\RNRp\approx\flp$\\
    $\RNRm\approx\flm$\end{tabular}}}{\includegraphics[width=0.7\linewidth]{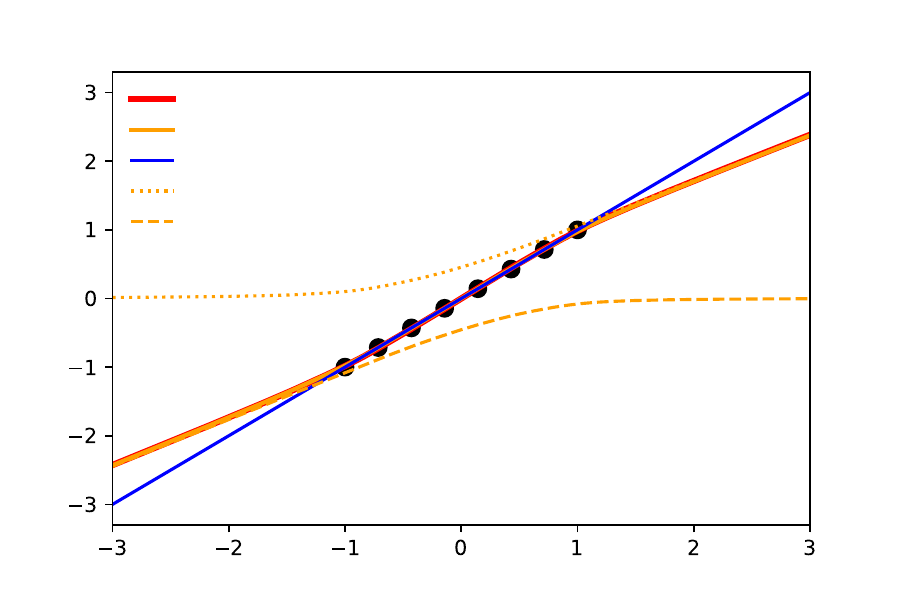}}
				\caption{In the specific example of $\ytr_i=\xtr_i$ $\forall i$, we obtain that the weighted spline $\flg=\fl=\text{id}_\R$ is the identity for any weighting function $g$ and for any $\lambda\in\Rp$. We see that in this specific example the \RSN~$\RNw[{\wth[T]}]\approx\RNR\approx\flpm$ extrapolates the data slightly differently than $\flg=\fl=\text{id}_\R$. This difference is due to the boundary conditions in $\T$. %
    To visualize these boundary conditions on $\flpmt\in\T$, we split $\RNR\approx\flpm$ into $\RNRp\approx\flp$ and $\RNRm\approx\flm$. %
    (Recall again that $\RNRp+\RNRm=\RNR\approx\flpm=\flp+\flm$.) If one adds an unregularized linear skip connection $ax$ to the architecture, one obtains $\RNRaffine=\flg=\fl=\text{id}_\R$. If regularization~$\text{reg}\neq0$ was added to the loss, $\RNRaffine$ would continuously transform from $\flg$ into $\flpm$ as the strength of the regularization $\text{reg}$ of $a$ increases.
    Note that the difference is here not as extreme as in \Cref{fig:GernealizationOfAdaptedSplineVsSpline}. In \Cref{fig:GernealizationOfAdaptedSplineVsSpline}, the extrapolation behavior of $\flpm$ seems to be rather undesirable in most applications. Here, however, the reduced slope for extrapolation of $\flpm$ can be seen as a reasonable additional regularization of the first derivative.
    $T=32 \overset{\text{\eqref{eq:lambdaWelleT}}}{\implies}\lw=\frac{1}{2T(e-1)}\approx9.1\cdot 10^{-3}\overset{\text{\Cref{thm:ridgeToSpline}}}{\implies}\lambda=\frac{\lw}{2ng(0)}\approx2.8\cdot 10^{-2}$. 
                }			\label{fig:LinearGernealizationOfAdaptedSplineVsSpline}
			\end{figure}

However, since the \aregSplf{}$\approx\RNR$ does not have skip connections, it can deviate from \wregSplf{} as can be seen in \Cref{fig:GernealizationOfAdaptedSplineVsSpline,fig:LinearGernealizationOfAdaptedSplineVsSpline}.\footnote{We never plot the exact functions $\flp,\flm$ or $\flpm$, but instead we plot $\RNRp,\RNRm$ and $\RNR$. \Cref{thm:ridgeToSpline} tells us that $\RNR$ is a good numerical approximation of $\flpm$, since in the case of a convex loss $\Ltr$ the theorem tells us that $\plim\sobnorm[\RNR-\flpm]=0$. In the proof of \Cref{thm:ridgeToSpline}, one can also see that $\plim\sobnorm[\RNRp-\flp]=0$ and that $\plim\sobnorm[\RNRm-\flm]=0$ for a convex loss $\Ltr$. For the calculation of $\fl$ we use a completely different implmentation to empirically check the validity of our theory.} In practice, when $\ytr_i$-values are centered around 0 in a pre-processing step, this difference between the two models usually becomes much smaller.\footnote{In \Cref{fig:GernealizationOfAdaptedSplineVsSpline} we see an example where the $\ytr_i$-values are not centered around 0 which leads to a quite substantial difference in the extrapolation behavior of the two models. In \Cref{fig:LinearGernealizationOfAdaptedSplineVsSpline} this difference is already smaller, because the $\ytr_i$-values are centered around 0. In other examples where data does not come from an (affine-)linear map, this difference is usually even smaller (see \Cref{fig:jump,fig:sinUnderfit,fig:sinFit,fig:sinOverfit,fig:DifferentGernalizationDifferentModels,fig:RegDifferentGernalizationDifferentModels}).}

In the case of much stronger regularization, the \aregSplf{} usually deviates substantially from the \wregSplf{}, since $\Pgpmm$ also regularizes the first derivative, also for centered $\ytr_i$-values (see \Cref{subfig:strongerRegLinearWithoutSkip}). $\Pgpmm$ regularizes the first derivative only indirectly via the boundary conditions in $\T$. $\Pgpmm$ regularizes the second derivative much more strongly than the first derivative, but for high values of $\lambda$, the regularization of the first derivative becomes less negligible. By contrast, $\Pg$ does not regularize the first derivative \emph{at all} no matter how large $\lambda$ is. This can be seen in \Cref{fig:strongerRegLinear}, where $\RNRaffine$ (solid yellow line in \Cref{subfig:strongerRegLinearWithSkip}) is much steeper compared to $\RNR$ (solid yellow line in \Cref{subfig:strongerRegLinearWithoutSkip}).  In \Cref{subfig:strongerRegLinearWithSkip}, we split $\RNRaffine$ according to \Cref{def:RNRpmaffine}.  
\begin{definition}\label{def:RNRpmaffine} Analogous to the the definition of $\RNRp$ and $\RNRm$ in \cref{eq:RNRp} we define $\RNRpaffine$ and $\RNRmaffine$ by splitting $\RNRaffine$ into sub-networks according to the signs of $v_k$ (via $\kp$ and $\km$ from \cref{eq:kpm}), i.e.,
    \begin{multline}\label{eq:RNRpmaffine}
        \RNRaffine(x)=%
        \underbrace{\sum_{k\in\kp}\wRkaffine\,\sigma\left({b_k}+{v_{k}}x\right)}_{=:\RNRpaffine(x)}+\underbrace{\sum_{k\in\km}\wRkaffine\,\sigma\left({b_k}+{v_{k}}x\right)}_{=:\RNRmaffine(x)}\\
        + a^{*{\color{hellgrau},n,\lwnl,\text{reg}}}x +c^{*{\color{hellgrau},n,\lwnl,\text{reg}}} \quad\faxg.\footnotemark
    \end{multline}\footnotetext{%
    We use the notation $\wRaffine=\left(w^{*{\color{hellgrau},n,\lwnl,\text{reg}}},a^{*{\color{hellgrau},n,\lwnl,\text{reg}}},c^{*{\color{hellgrau},n,\lwnl,\text{reg}}}\right)$.%
    }

\end{definition}

Moreover, according to \Cref{thm:ridgeToSplineSkipConnection,cor:simplifyingPfuncSkipConnections}, $\RNRaffine$ (solid yellow line in \Cref{subfig:strongerRegLinearWithSkip}) is a very precise approximation of $\flg$ for $\text{reg}\equiv 0$.
In the next subsection, we will show that the \wregSplf{} is actually exactly equal to the classical \regSplf{} in settings such as the setting of \Cref{fig:strongerRegLinear}, which explains why $\RNRaffine$ is practically identical to $\fl$ in \Cref{subfig:strongerRegLinearWithSkip}.

\begin{figure}
     \centering
     \begin{subfigure}[b]{0.45\textwidth}
         \centering
          \scaledinset{l}{.195}{b}{.647}{
    \resizebox{0.245\hsize}{!}{\tiny \begin{tabular}{@{}l@{}}%
				$\RNw[{\wth[T]}]$ \text{(implicit)}\vphantom{\RNRp}\\
				$\RNR\approx\flpm$ \text{(Ridge)}\vphantom{\RNRp}\\
				$\fl=\flg$ \text{(spline)}\vphantom{\RNRp}\\
    $\RNRp\approx\flp$\\
    $\RNRm\approx\flm$\end{tabular}}}{
    \includegraphics[width=\textwidth]{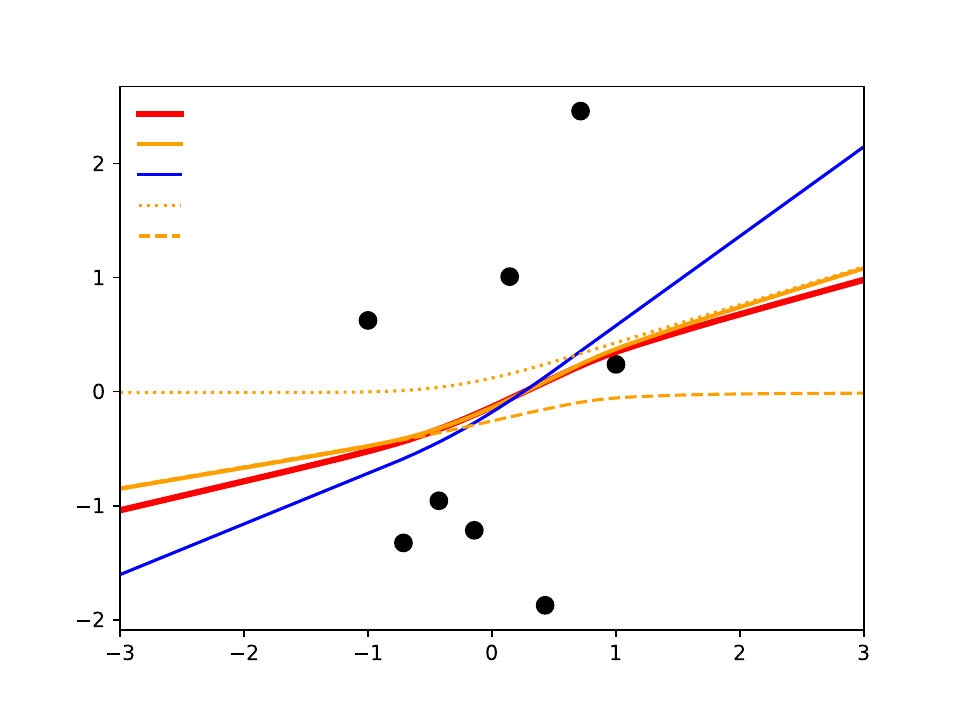}%
    }
         \caption{Without skip connections $a$ and $c$.}
         \label{subfig:strongerRegLinearWithoutSkip}
     \end{subfigure}
     \begin{subfigure}[b]{0.44\textwidth}
         \centering
         \scaledinset{l}{.195}{b}{.602}{
    \resizebox{0.245\hsize}{!}{\tiny \begin{tabular}{@{}l@{}}%
				$\RNaffine_{\wthaffine[T]}$ \text{(implicit)}\vphantom{\RNRp}\\
				\rlap{$\RNRaffine\approx\flg$ \text{(Ridge)}}%
    \hphantom{$\RNR\approx\flpm$ \text{(Ridge)}}\vphantom{\RNRp}\\
				$\fl=\flg$ \text{(spline)}\vphantom{\RNRp}\\
    $\RNRpaffine$\\
    $\RNRmaffine$\\
    \rlap{$a^{*{\color{hellgrau},n,\lwnl,\text{reg}}}x +c^{*{\color{hellgrau},n,\lwnl,\text{reg}}}$}\vphantom{\RNRp}\end{tabular}}}{
         \includegraphics[width=\textwidth]{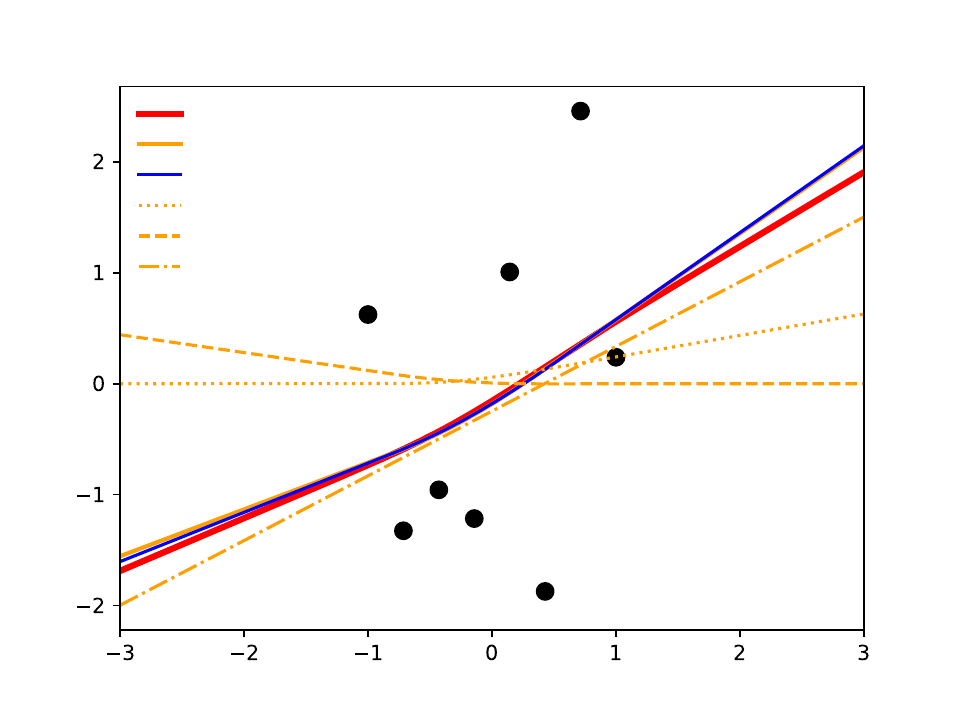}}
         \caption{With skip connections $ax+c$%
         .}
         \label{subfig:strongerRegLinearWithSkip}
     \end{subfigure}
        \caption{The \RSN~$\RNw[{\wth[T]}]\approx\RNR\approx\flpm$ has a clearly smaller slope than the much steeper spline $\RNRaffine\approx\flg=\fl$ for large values of $\lambda$, since $\Pgpmm$ also regularized the first derivative%
        . The yellow line in \Cref{subfig:strongerRegLinearWithSkip} is barely visible since it is almost identical to the blue line (see \Cref{thm:ridgeToSplineSkipConnection,cor:simplifyingPfuncSkipConnections,cor:moresimplifyingPfuncSkipConnections}).
        $\RNaffine_{\wthaffine[T]}$ is in-between $\RNw[{\wth[T]}]$ and $\fl$, since $a$ and $c$ are subject to implicit regularization (see \Cref{rem:implicitRegAffineSkip}).
        $T=0.5 \overset{\text{\eqref{eq:lambdaWelleT}}}{\implies}\lw=\frac{1}{2T(e-1)}\approx0.58 \overset{\text{\Cref{thm:ridgeToSpline}}}{\implies}\lambda=\frac{\lw}{2ng(0)}\approx1.8$. (For the implementation we use $\frac{1}{n}\text{reg}(a,c)= 2^{-20}(a^2+c^2)$ instead of $\text{reg}\equiv0$.)
        }
        \label{fig:strongerRegLinear}
\end{figure}

   \subsection{Weighting function \texorpdfstring{$g$}{g}}\label{sec:WeightingFunctionG:IntuitionAdaptedSpline}
   In the case of $b_k,v_k\sim \text{Unif}(-\sScale,\sScale)$, we obtain
   \begin{align}\label{eq:g}
       g(x):=\gxi(x) \Eco{v_k^2}{\xi_k=x}\frac{1}{2}
       =\begin{cases}
       \frac{\sScale^2}{16} &\text{, if } x\in [-1,1], \\
       \frac{\sScale^2}{16 x^4} &\text{, else,} 
       \end{cases}
   \end{align}
   for the weighting function $g$ given in \Cref{thm:ridgeToSpline} (see \Cref{fig:g}).
   \begin{figure}[!h]
				\centering
				\scaledinset{l}{.208}{b}{.80}{\resizebox{0.43\hsize}{!}{\tiny \rlap{$g$}\phantom{$\RNR\approx\flpm$ \text{(Ridge)}}}}{
                \includegraphics[width=0.7\linewidth]{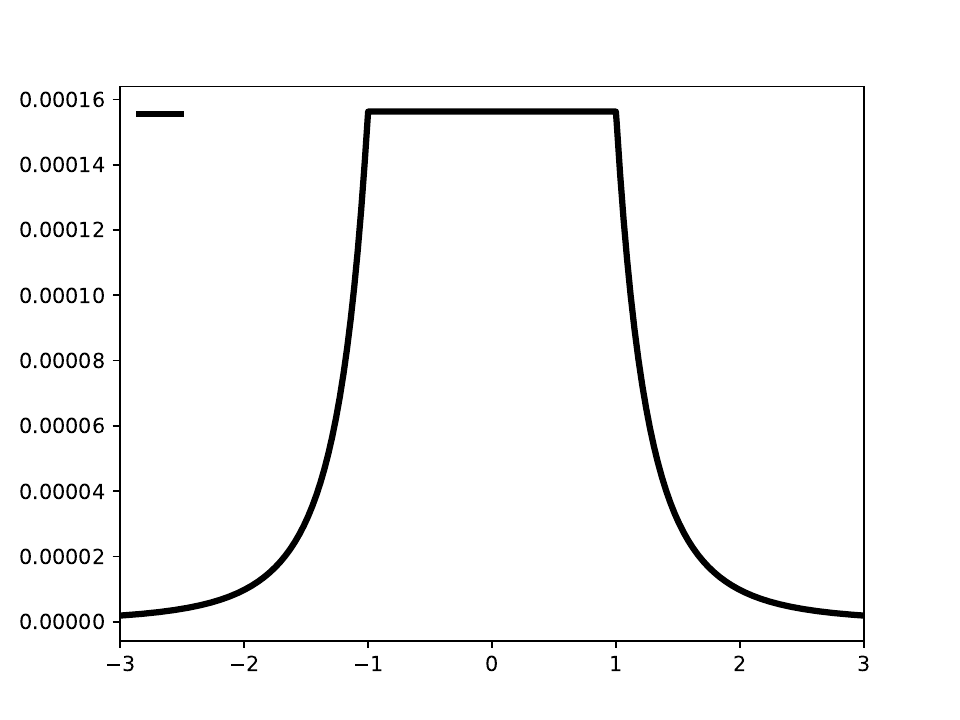}}
				\caption{$g(x):=\gxi(x) \Eco{v_k^2}{\xi_k=x}\frac{1}{2}$ given in \cref{eq:g} in the case of $b_k,v_k\sim \text{Unif}(-\sScale,\sScale)$ \iid{} uniformly distributed with $\sScale=0.05$.}
				\label{fig:g}
			\end{figure}
   The graph of a classical \regSplf{} fitted to data points can be seen as a flat\footnote{The deformation energy of an elastic stick is proportional to the integral over its squared curvature along its length. This integral is exactly proportional to $\Pg[1]$ in the limit of horizontal scales being infinitely times larger than vertical scales. Even without taking any limit over the scales, this picture can also provide basic intuition on how \regSplf{} approximately behaves if the function is not too steep or if the $y$-scale is not too large compared to the $x$-scale.} elastic stick that is pulled to the data points by elastic springs, where high values of $\lambda$ correspond to a stiff stick and low values of $\lambda$ correspond to a soft stick. The \wregSplf{} can also be seen as such an elastic stick whose elasticity is not constant along the stick, i.e., the stick is softer in input regions of high $g(x)$ and stiffer in input regions of low $g(x)$.
   \Cref{eq:g} and \Cref{fig:g} show that for uniform initialization of first-layer parameters, $g$ is constant on the interval [-1,1]. This implies that for any training data set satisfying $\xtr_i\in[-1,1] \ \forall i$, it holds that $\flg=\fl$ for $g$ from \cref{eq:g} (see \Cref{prop:constantg}).
   In practice this usually is the case, since it is very common to re-scale the training data to fit into $[-1,1]$ as a pre-processing step.
   \begin{proposition}\label{prop:constantg}
       Using the notation of \Cref{def:splineReg}, under the \Cref{as:generalloss} with $\nu$ satisfying $\supp(\nu)\subseteq[-1,1]$,\footnote{For instance, \Cref{as:generalloss} with $\supp(\nu)\subseteq[-1,1]$ is satisfied if \Cref{as:squaredloss} is satisfied with $\xtr_i\in[-1,1]\ \forall i \in \{1,\dots,N\}$.
       In case of \Cref{as:squaredloss}, $\supp(\nu)=\Set{\xtr_i : i\in\fromto{N}}$.} let $g:\R\to\Rpz$ be any non-negative function that satisfies $g(x)=g(0)\neq0\ \forall x\in[-1,1]$. Then it holds that
       \[\flg(x)=\fl(x) \forall x \in \R.\]
   \end{proposition}
   \begin{proof}
   As $\supp(\nu)$ (e.g., the training data in the case of \Cref{as:squaredloss}) is contained in [-1,1] (and as $\Ltr(f)$ does not depend on the behaviour of $f$ outside of $\supp(\nu)$ according to \Cref{as:generalloss}), ${\fl}''(x)=0={\flg}''(x)\ \forall x\not\in[-1,1]$ by optimality. Thus, we can w.l.o.g.{} restrict the optimization problems to functions $f$ whose second derivative is zero outside of [-1,1] and for such functions $f$ we see that
       \begin{align*}
           \Pg(f)
       &=g(0) \int_{\supp (g)} \dfrac{\left( {f}^{''}(x) \right)^2}{g(x)} dx \\
       &= \int_{[-1,1]} \dfrac{g(0)\left( {f}^{''}(x) \right)^2}{g(x)} dx \\
       &= \int_{[-1,1]} \dfrac{g(0)\left( {f}^{''}(x) \right)^2}{g(0)} dx \\
       &= \int_{\R} \left( {f}^{''}(x) \right)^2 dx
       =\Pg[1](f).
       \end{align*}
   \end{proof}

   If one does not scale the training data to fit into [-1,1] the \wregSplf{} (and thus \RNR) deviate significantly from the classical \regSplf{}  especially for input values far outside the interval [-1,1], and in particular for large values of $\lambda$ (see \Cref{fig:roleOfWeighting}).

\begin{figure}[htbp]
     \centering
     \begin{subfigure}[b]{0.45\textwidth}
         \centering
         \scaledinset{l}{.725}{b}{.602}{
    \resizebox{0.254\hsize}{!}{\tiny \begin{tabular}{@{}l@{}}%
				$\RNw[{\wth[T]}]$ \text{(implicit)}\vphantom{\RNRp}\\
				$\RNR\approx\flpm$ \text{(Ridge)}\vphantom{\RNRp}\\
				$\fl$ \text{(spline)}\vphantom{\RNRp}\\
    $\RNRp\approx\flp$\\
    $\RNRm\approx\flm$\end{tabular}}}{
         \includegraphics[width=\textwidth]{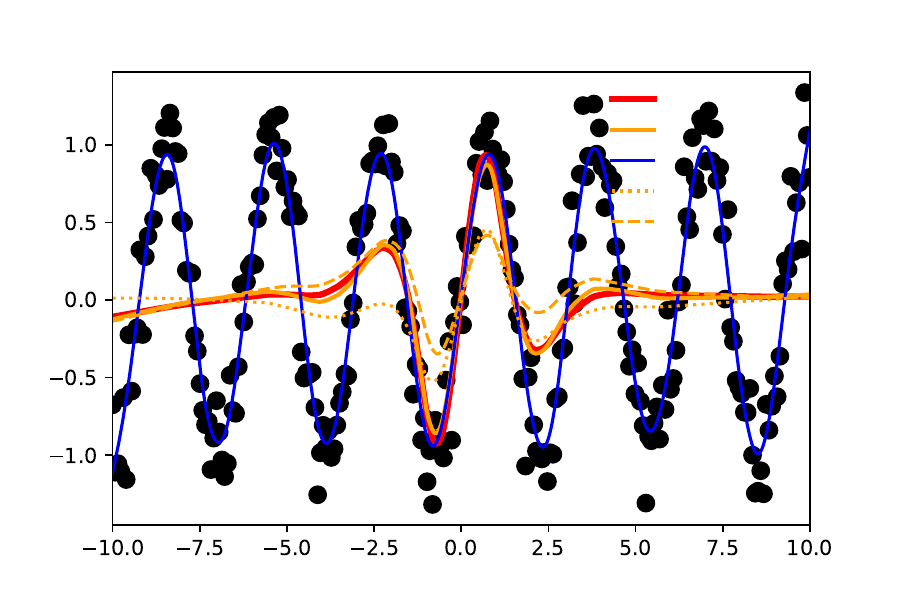}}         \caption{Without skip connections $a$ and $c$.}
         \label{subfig:roleOfWeightingWithoutSkip}
     \end{subfigure}
     \begin{subfigure}[b]{0.45\textwidth}
         \centering
         \scaledinset{l}{.725}{b}{.552}{
    \resizebox{0.25\hsize}{!}{\tiny \begin{tabular}{@{}l@{}}%
				$\RNaffine_{\wthaffine[T]}$ \text{(implicit)}\vphantom{\RNRp}\\
				\rlap{$\RNRaffine\approx\flg$ \text{(Ridge)}}%
    \hphantom{$\RNR\approx\flpm$ \text{(Ridge)}}\vphantom{\RNRp}\\
				$\fl$ \text{(spline)}\vphantom{\RNRp}\\
    $\RNRpaffine$\\
    $\RNRmaffine$\\
    \rlap{$a^{*{\color{hellgrau},n,\lwnl,\text{reg}}}x +c^{*{\color{hellgrau},n,\lwnl,\text{reg}}}$}\vphantom{\RNRp}\end{tabular}}}{
         \includegraphics[width=\textwidth]{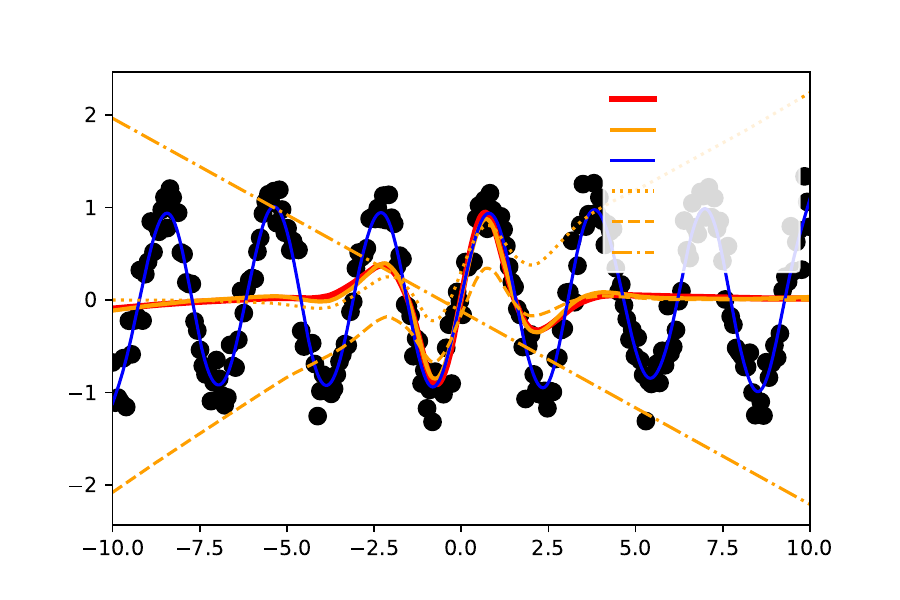}}
         \caption{With skip connections $ax+c$%
         .}
         \label{subfig:roleOfWeightingWithSkip}
     \end{subfigure}
        \caption{%
        The further the input lies from the interval $[-1,1]$ the less flexible the red and yellow curves become, since $g(x)$ gets very low for $x$ far away from $[-1,1]$. This is not the case for the \regSplf{} (blue curve).
        $T=16 \overset{\text{\eqref{eq:lambdaWelleT}}}{\implies}\lw=\frac{1}{2T(e-1)}\approx1.8\cdot 10^{-2} \overset{\text{\Cref{thm:ridgeToSpline}}}{\implies}\lambda=\frac{\lw}{2ng(0)}\approx5.7\cdot10^{-2}$. (For the implementation we use $\frac{1}{n}\text{reg}(a,c)= 2^{-20}(a^2+c^2)$ instead of $\text{reg}\equiv0$.)
        }
        \label{fig:roleOfWeighting}
\end{figure}

\subsection{Summarizing the intuition about \texorpdfstring{\aregSplf}{adapted regression spline}}\label{subsec:SummarizingTheIntuitionAboutARegSpl}
By combining the results from \Cref{thm:ridgeToSplineSkipConnection,cor:simplifyingPfuncSkipConnections,prop:constantg}, we get that if the training input data lies within $[-1,1]$ and if we use \emph{unregularized} linear skip connections parametrized by $a,c$, then $\RNRaffine$ converges to the classical \regSplf{} as we state in the following \Cref{cor:moresimplifyingPfuncSkipConnections}.
\begin{corollary}\label{cor:moresimplifyingPfuncSkipConnections}
        Using the assumptions\footnote{The main assumptions are $\text{reg}\equiv0$, $\xtr_i\in[-1,1]\ \forall i \in \{1,\dots,N\}$ and $g(x)=g(0)\neq0\ \forall x\in[-1,1]$.} and the notation of \Cref{thm:ridgeToSplineSkipConnection,cor:simplifyingPfuncSkipConnections,prop:constantg}  and under \Cref{as:squaredloss}\footnote{\Cref{as:squaredloss} can be weakened. We do not need \Cref{as:squaredloss} and the existence of two different input training points $\xtr_i\neq\xtr_j$, if we directly assume that $\flg$ is unique.}, if there exist two different input training points $\xtr_i\neq\xtr_j$, we get
        \begin{equation}\label{eq:ridgeToSplineSKipCOnnections}
	\plim \RNRaffine=\fl.
	\end{equation}
\end{corollary}
\begin{proof}
    This follows directly from combining \Cref{thm:ridgeToSplineSkipConnection,cor:simplifyingPfuncSkipConnections,prop:constantg}. We need $\xtr_i\neq\xtr_j$ to guarantee that the \regSplf{} and $\wRaffine$ are unique. 
\end{proof}

We have seen in \Cref{fig:GernealizationOfAdaptedSplineVsSpline,subfig:strongerRegLinearWithoutSkip,fig:roleOfWeighting} that \RSN{}s and the classical \regSplf{} can differ significantly, if the assumptions of \Cref{cor:moresimplifyingPfuncSkipConnections} are violated. However, \Cref{fig:GernealizationOfAdaptedSplineVsSpline,fig:roleOfWeighting} show only examples where typical rules of thumbs used in pre-processing of the data are heavily violated.

In a more typical setting where the usual rules of thumb are respected (i.e., if $\xtr_i\in[-1,1]\ \forall i$ and $\ytr_i$ are centered around 0 and $\lambda$ is not too large), intuitively our theory tells that even if the assumptions of \Cref{cor:moresimplifyingPfuncSkipConnections} are violated, both early stopped and $\ell_2$-regularized \RSN{}s, both with linear skip connections and without them, and with different regularizations of the linear skip connections all behave similarly to the classical \regSplf{} (always under the premise that we use \cref{eq:lambdaWelleT} to translate a stopping time $T$ into a weight regularization $\lw$ and that we use the translation from $\lw$ to $\lambda$ given in \Cref{thm:ridgeToSpline}).

\begin{figure}[htbp]
     \centering
     \begin{subfigure}[b]{0.45\textwidth}
         \centering
         \scaledinset{l}{.195}{b}{.607}{
    \resizebox{0.254\hsize}{!}{\tiny \begin{tabular}{@{}l@{}}%
				$\RNw[{\wth[T]}]$ \text{(implicit)}\vphantom{\RNRp}\\
				$\RNR\approx\flpm$ \text{(Ridge)}\vphantom{\RNRp}\\
				$\fl=\flg$ \text{(spline)}\vphantom{\RNRp}\\
    $\RNRp\approx\flp$\\
    $\RNRm\approx\flm$\end{tabular}}}{
         \includegraphics[width=\textwidth]{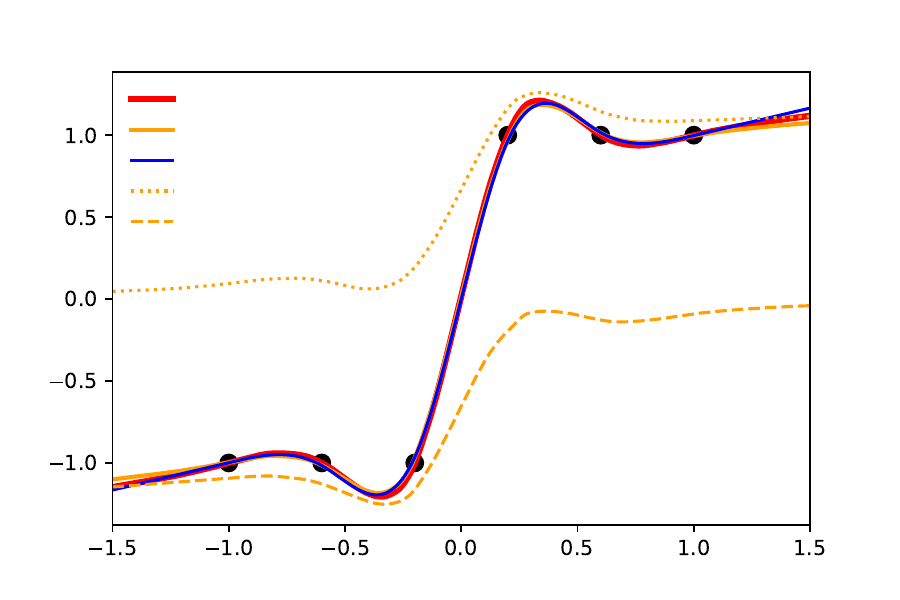}}         \caption{Without skip connections $a$ and $c$.}
         \label{subfig:jumpWithoutSkip}
     \end{subfigure}
     \begin{subfigure}[b]{0.45\textwidth}
         \centering
         \scaledinset{l}{.195}{b}{.557}{
    \resizebox{0.25\hsize}{!}{\tiny \begin{tabular}{@{}l@{}}%
				$\RNaffine_{\wthaffine[T]}$ \text{(implicit)}\vphantom{\RNRp}\\
				\rlap{$\RNRaffine\approx\flg$ \text{(Ridge)}}%
    \hphantom{$\RNR\approx\flpm$ \text{(Ridge)}}\vphantom{\RNRp}\\
				$\fl=\flg$ \text{(spline)}\vphantom{\RNRp}\\
    $\RNRpaffine$\\
    $\RNRmaffine$\\
    \rlap{$a^{*{\color{hellgrau},n,\lwnl,\text{reg}}}x +c^{*{\color{hellgrau},n,\lwnl,\text{reg}}}$}\vphantom{\RNRp}\end{tabular}}}{
         \includegraphics[width=\textwidth]{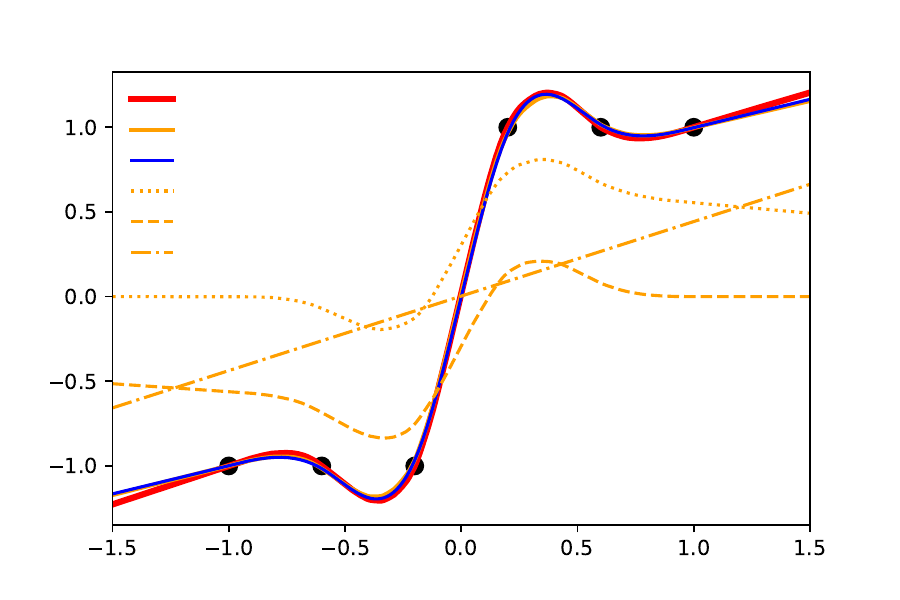}}
         \caption{With skip connections $ax+c$%
         .}
         \label{subfig:jumpWithSkip}
     \end{subfigure}
        \caption{The \RSN~$\RNw[{\wth[T]}]\approx\RNR\approx\flpm$ is in most common settings quite similar to $\RNRaffine\approx\flg=\fl$ for small values of $\lambda$. %
        $T=5461.\dot{3} \overset{\text{\eqref{eq:lambdaWelleT}}}{\implies}\lw=\frac{1}{2T(e-1)}\approx5.1\cdot 10^{-5} \overset{\text{\Cref{thm:ridgeToSpline}}}{\implies}\lambda=\frac{\lw}{2ng(0)}\approx1.7\cdot10^{-4}$. (For the implementation we use $\frac{1}{n}\text{reg}(a,c)= 2^{-30}(a^2+c^2)$ instead of $\text{reg}\equiv0$.)
        }
        \label{fig:jump}
\end{figure}

\Cref{fig:jump,fig:sinUnderfit,fig:sinFit,fig:sinOverfit,fig:DifferentGernalizationDifferentModels,fig:RegDifferentGernalizationDifferentModels} show that this intuition is typically correct. This motivates why it is popular to re-scale the training data to the $[-1,1]$-cube as a pre-processing step.

If all the standard pre-processing steps are in place the most relevant remaining difference among various versions of \RSN{}s and splines occurs for large values of $\lambda$:
 $\RN_{\wth[{T,\wth[0]}]},$
	 $\RN_{\wth[T]},$
	$\RN_{\wt[T]},$
	$\RNR,$
	 $\flpm,$ and various versions of $\RNaffine$ with implicit regularization or $\text{reg}(a,c)=(a^2+c^2)$ have \emph{regularized} first (and zeroth) derivative (see \Cref{subfig:strongerRegLinearWithoutSkip}), while the \regSplf{}, the \wregSplf{} and $\RNaffine$ with $\text{reg}\equiv0$  have \emph{unregularized} first (and zeroth) derivative %
  (see \Cref{subfig:strongerRegLinearWithSkip}).

\Cref{fig:jump} shows an example where the \regSplf{} displays a very characteristic behavior which differs a lot from what many other popular regression techniques would predict. Even this quite particular behavior is almost indistinguishably mimicked by $\RNw[{\wth[T]}],\RNR,\flpm,\RNaffine_{\wthaffine},\RNRaffine$ and $\flg$.

\begin{figure}[htbp]
     \centering
     \begin{subfigure}[b]{0.45\textwidth}
         \centering
         \scaledinset{l}{.195}{b}{.607}{
    \resizebox{0.254\hsize}{!}{\tiny \begin{tabular}{@{}l@{}}%
				$\RNw[{\wth[T]}]$ \text{(implicit)}\vphantom{\RNRp}\\
				$\RNR\approx\flpm$ \text{(Ridge)}\vphantom{\RNRp}\\
				$\fl=\flg$ \text{(spline)}\vphantom{\RNRp}\\
    $\RNRp\approx\flp$\\
    $\RNRm\approx\flm$\end{tabular}}}{
         \includegraphics[width=\textwidth]{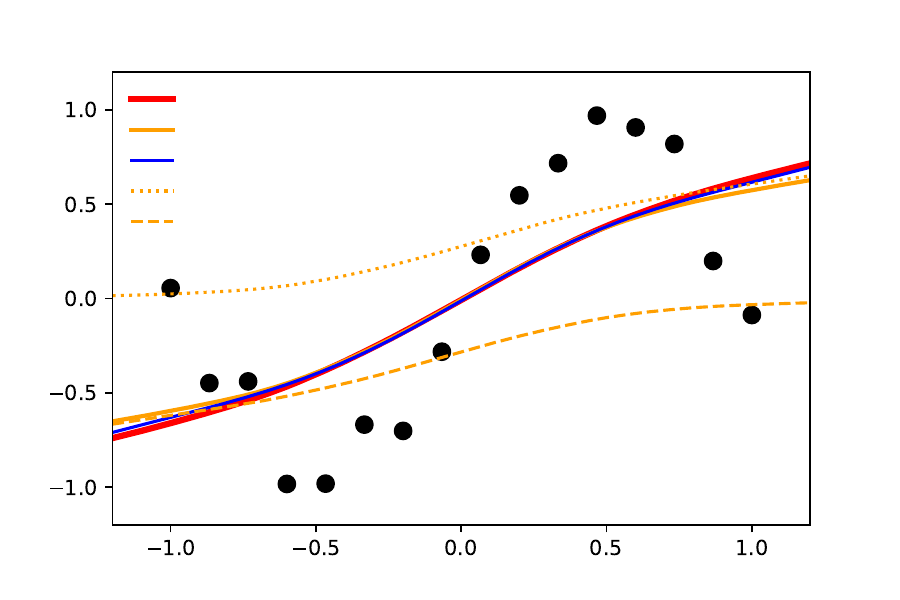}}
         \caption{Without skip connections $a$ and $c$.}
         \label{subfig:sinUnderfitWithoutSkip}
     \end{subfigure}
     \begin{subfigure}[b]{0.45\textwidth}
         \centering
         \scaledinset{l}{.195}{b}{.557}{
    \resizebox{0.25\hsize}{!}{\tiny \begin{tabular}{@{}l@{}}%
				$\RNaffine_{\wthaffine[T]}$ \text{(implicit)}\vphantom{\RNRp}\\
				\rlap{$\RNRaffine\approx\flg$ \text{(Ridge)}}%
    \hphantom{$\RNR\approx\flpm$ \text{(Ridge)}}\vphantom{\RNRp}\\
				$\fl=\flg$ \text{(spline)}\vphantom{\RNRp}\\
    $\RNRpaffine$\\
    $\RNRmaffine$\\
    \rlap{$a^{*{\color{hellgrau},n,\lwnl,\text{reg}}}x +c^{*{\color{hellgrau},n,\lwnl,\text{reg}}}$}\vphantom{\RNRp}\end{tabular}}}{
         \includegraphics[width=\textwidth]{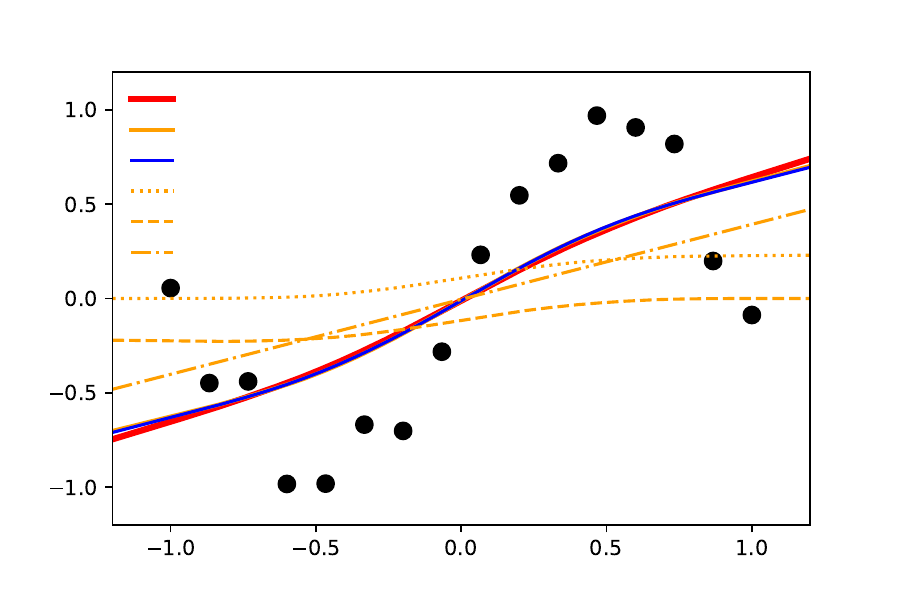}}
         \caption{With skip connections $ax+c$.}
         \label{subfig:sinUnderfitWithSkip}
     \end{subfigure}
        \caption{%
        $T=2 \overset{\text{\eqref{eq:lambdaWelleT}}}{\implies}\lw=\frac{1}{2T(e-1)}\approx 0.15 \overset{\text{\Cref{thm:ridgeToSpline}}}{\implies}\lambda=\frac{\lw}{2ng(0)}\approx0.45$. (For the implementation we use $\frac{1}{n}\text{reg}(a,c)= 2^{-30}(a^2+c^2)$ instead of $\text{reg}\equiv0$.)
        }
        \label{fig:sinUnderfit}
\end{figure}

\begin{figure}[htbp]
     \centering
     \begin{subfigure}[b]{0.45\textwidth}
         \centering
         \scaledinset{l}{.195}{b}{.607}{
    \resizebox{0.254\hsize}{!}{\tiny \begin{tabular}{@{}l@{}}%
				$\RNw[{\wth[T]}]$ \text{(implicit)}\vphantom{\RNRp}\\
				$\RNR\approx\flpm$ \text{(Ridge)}\vphantom{\RNRp}\\
				$\fl=\flg$ \text{(spline)}\vphantom{\RNRp}\\
    $\RNRp\approx\flp$\\
    $\RNRm\approx\flm$\end{tabular}}}{
         \includegraphics[width=\textwidth]{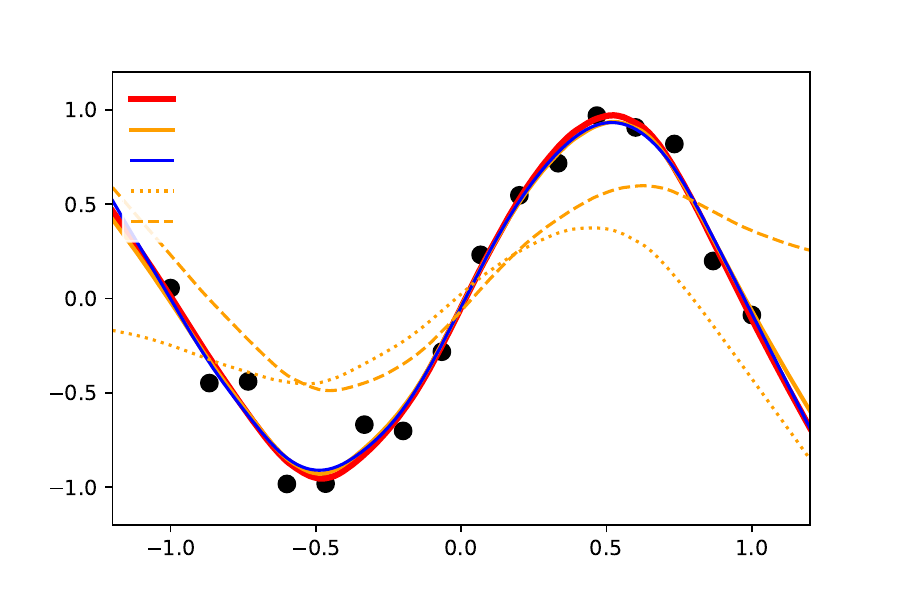}}
         \caption{Without skip connections $a$ and $c$.}
         \label{subfig:sinFitWithoutSkip}
     \end{subfigure}
     \begin{subfigure}[b]{0.45\textwidth}
         \centering
         \scaledinset{l}{.195}{b}{.557}{
    \resizebox{0.25\hsize}{!}{\tiny \begin{tabular}{@{}l@{}}%
				$\RNaffine_{\wthaffine[T]}$ \text{(implicit)}\vphantom{\RNRp}\\
				\rlap{$\RNRaffine\approx\flg$ \text{(Ridge)}}%
    \hphantom{$\RNR\approx\flpm$ \text{(Ridge)}}\vphantom{\RNRp}\\
				$\fl=\flg$ \text{(spline)}\vphantom{\RNRp}\\
    $\RNRpaffine$\\
    $\RNRmaffine$\\
    \rlap{$a^{*{\color{hellgrau},n,\lwnl,\text{reg}}}x +c^{*{\color{hellgrau},n,\lwnl,\text{reg}}}$}\vphantom{\RNRp}\end{tabular}}}{         \includegraphics[width=\textwidth]{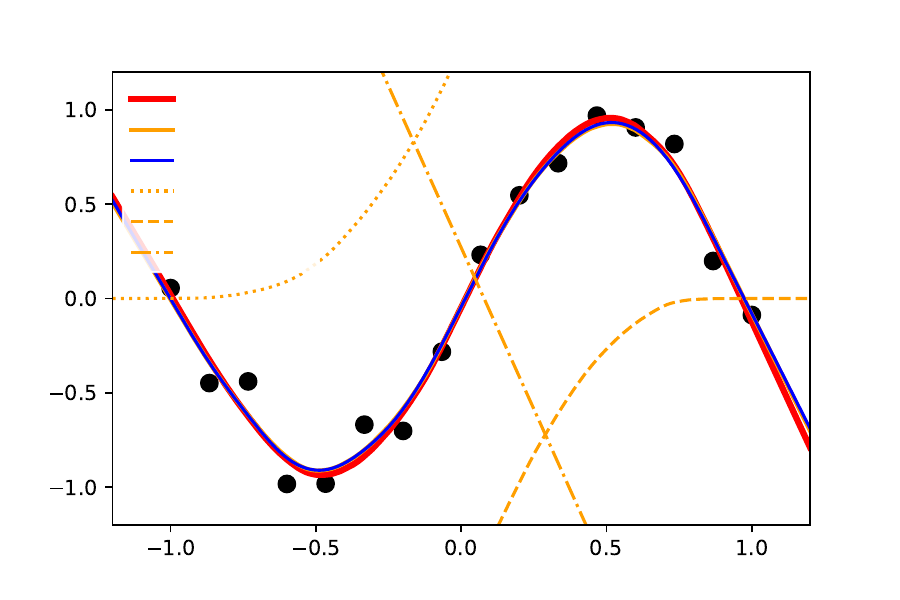}}
         \caption{With skip connections $ax+c$.}
         \label{subfig:sinFitWithSkip}
     \end{subfigure}
        \caption{%
        $T=512 \overset{\text{\eqref{eq:lambdaWelleT}}}{\implies}\lw=\frac{1}{2T(e-1)}\approx 5.7\cdot10^{-4} \overset{\text{\Cref{thm:ridgeToSpline}}}{\implies}\lambda=\frac{\lw}{2ng(0)}\approx1.7\cdot10^{-3} $. (For the implementation we use $\frac{1}{n}\text{reg}(a,c)= 2^{-20}(a^2+c^2)$ instead of $\text{reg}\equiv0$.)
        }
        \label{fig:sinFit}
\end{figure}

\begin{figure}[htbp]
     \centering
     \begin{subfigure}[b]{0.45\textwidth}
         \centering
         \scaledinset{l}{.715}{b}{.15}{
    \resizebox{0.254\hsize}{!}{\tiny \begin{tabular}{@{}l@{}}%
				$\RNw[{\wth[T]}]$ \text{(implicit)}\vphantom{\RNRp}\\
				$\RNR\approx\flpm$ \text{(Ridge)}\vphantom{\RNRp}\\
				$\fl=\flg$ \text{(spline)}\vphantom{\RNRp}\\
    $\RNRp\approx\flp$\\
    $\RNRm\approx\flm$\end{tabular}}}{
         \includegraphics[width=\textwidth]{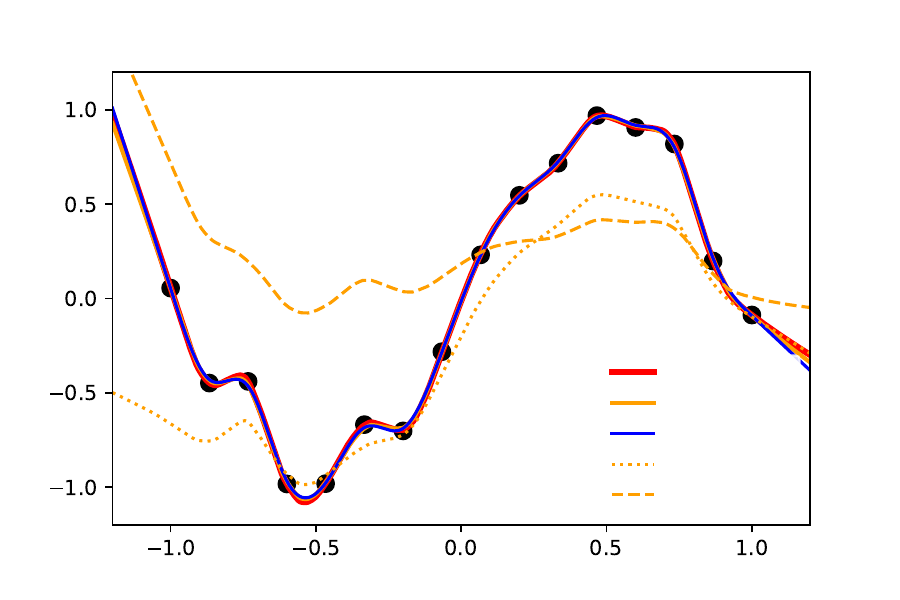}}
         \caption{Without skip connections $a$ and $c$.}
         \label{subfig:sinOverfitWithoutSkip}
     \end{subfigure}
     \begin{subfigure}[b]{0.45\textwidth}
         \centering
         \scaledinset{l}{.715}{b}{.153}{
    \resizebox{0.25\hsize}{!}{\tiny \begin{tabular}{@{}l@{}}%
				$\RNaffine_{\wthaffine[T]}$ \text{(implicit)}\vphantom{\RNRp}\\
				\rlap{$\RNRaffine\approx\flg$ \text{(Ridge)}}%
    \hphantom{$\RNR\approx\flpm$ \text{(Ridge)}}\vphantom{\RNRp}\\
				$\fl=\flg$ \text{(spline)}\vphantom{\RNRp}\\
    $\RNRpaffine$\\
    $\RNRmaffine$\\
    \rlap{$a^{*{\color{hellgrau},n,\lwnl,\text{reg}}}x +c^{*{\color{hellgrau},n,\lwnl,\text{reg}}}$}\vphantom{\RNRp}\end{tabular}}}{
         \includegraphics[width=\textwidth]{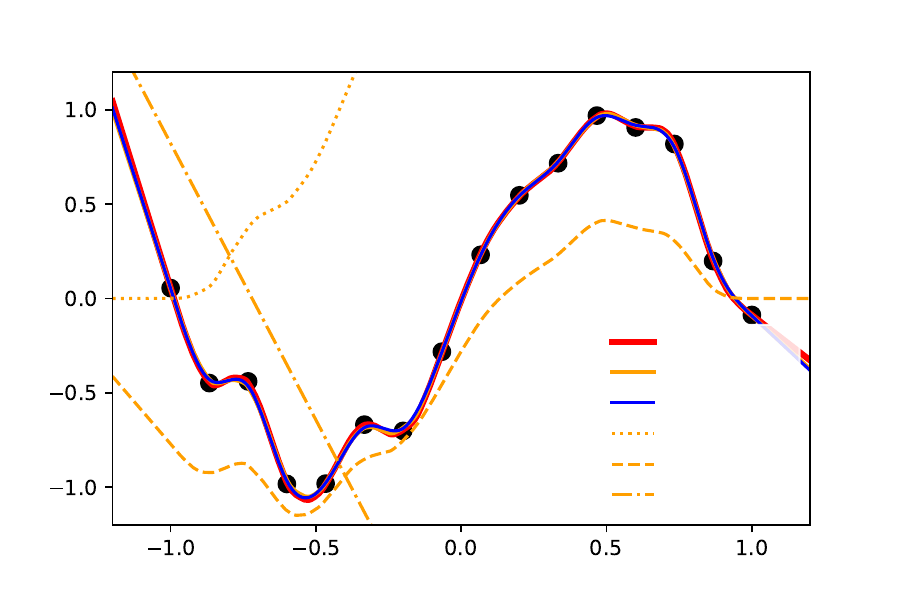}}
         \caption{With skip connections $ax+c$%
         .}
         \label{subfig:sinOverfitWithSkip}
     \end{subfigure}
        \caption{%
        $T=65536 \overset{\text{\eqref{eq:lambdaWelleT}}}{\implies} \lw=\frac{1}{2T(e-1)}\approx 4.4\cdot10^{-6} \overset{\text{\Cref{thm:ridgeToSpline}}}{\implies}\lambda=\frac{\lw}{2ng(0)}\approx1.4\cdot10^{-5} $. (For the implementation we use $\frac{1}{n}\text{reg}(a,c)= 2^{-20}(a^2+c^2)$ instead of $\text{reg}\equiv0$.)
        }
        \label{fig:sinOverfit}
\end{figure}

\Cref{fig:sinUnderfit,fig:sinFit,fig:sinOverfit} all show exactly the same $N=16$ training data points, but with different regularization parameters $\lambda$ and $\lw$ for $\fl,\RNR$ and $\RNRaffine$ corresponding to different stopping times $T\in\Set{2,512,65536}$ of the unregulated \RSN{}s~$\RNw[\wth]$ and $\RNaffine_{\wthaffine}$.
These figures visualize how short training times $T$ (corresponding to strong regularization $\lambda$ and $\lw$) leads to under-fitting (see \Cref{fig:sinUnderfit}), while on the other side extremely long training times $T$ (corresponding to extremely weak regularization $\lambda$ and $\lw$) leads to over-fitting (see \Cref{fig:sinOverfit}), whereas moderate training times $T$ (corresponding to moderate regularization $\lambda$ and $\lw$) nicely filter out the noise (see \Cref{fig:sinFit}).
While strictly speaking, \Cref{thm:GDridge} only tells us for $T\to\infty$ that $\RNw[\wth]\approx\RNR$%
, we can empirically see in \Cref{fig:sinUnderfit,fig:sinFit} that these approximations usually already hold quite well for low values of $T$ (corresponding to high values of $\lambda$ and $\lw$) as we have already discussed in \cref{sec:EarlyStopping} from a theoretical point of view.
Furthermore, we can see in \Cref{fig:sinUnderfit} that even for quite short training times~$T$ (corresponding to strong regularization $\lambda$ and $\lw$ where we are already in the under-fitting regime) the \RSN{}s~$\RNw[{\wth[T]}]\approx\RNR\approx\flpm$ behave very similar to $\RNRaffine\approx\flg=\fl$. The counter example in \Cref{subfig:strongerRegLinearWithoutSkip} is a quite extreme edge case of extremely large regularization, which is only needed in scenarios where linear regression would already lead to over-fitting. By contrast, \Cref{fig:sinUnderfit} shows an example were linear regression is still in the under-fitting regime. We see in this figure that at training time $T=2$, the model has already learned the linear component quite well and the second derivative gets equally regularised by $\Pgpmm$ and $\Pg[1]$.

While $\RNw[{\wth[T]}],\RNR\approx\flpm$ and $\fl=\flg$ have their subtle differences, we want to emphasize how similar they behave compared to other (non-linear) regression methods (see \Cref{fig:DifferentGernalizationDifferentModels,fig:RegDifferentGernalizationDifferentModels} for a visualization of the diversity of different machine-learning models).
In the following \Cref{subsec:SimilaritWithoutSkipConnections} we explain why the different \RSN{} models and different spline models are usually so similar.

\begin{figure}[htbp]
				\centering
				\scaledinset{l}{.2}{b}{.505}{\resizebox{0.18\hsize}{!}{\tiny \begin{tabular}{@{}l@{}}%
				$\RNw[{\wth[T]}]$ \text{(implicit)}\vphantom{\RNRp}\\
				$\RNR\approx\flpm$ \text{(Ridge)}\vphantom{\RNRp}\\
				$\fl=\flg$ \text{(spline)}\vphantom{\RNRp}\\
    linear regression\vphantom{\RNRp}\\
    polynomial interpolation\vphantom{\RNRp}\\
    nearest neighbor\vphantom{\RNRp}\\
    $\NN_{\theta^{*{\color{hellgrau},\lambda}}}$\vphantom{\RNRp}%
    \end{tabular}}}{\includegraphics[width=0.7\linewidth]{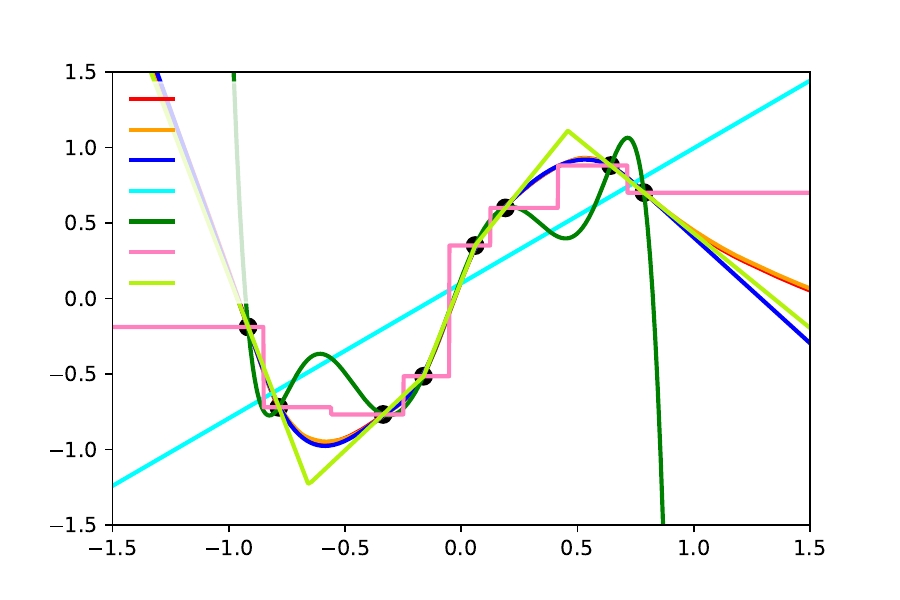}}
				\caption{Different models interpolate and extrapolate the same given 8 data points $(\xtr_i,\ytr_i)$ differently. While $\RNw[{\wth[T]}],\RNR\approx\flpm$ and $\fl=\flg$ can differ from each other in theory, they are almost indistinguishable in this plot as the 3 lines almost cover each other. Linear regression fits and (affine) linear function $ax+c$ by minimizing the squared loss. (In this case linear regression is not able to fit the nonlinear nature of this data-set.) Polynomial regression interpolates the data with a polynomial of minimal degree. (For most real-world data-sets the generalization behavior of polynomial regression displayed in this example would be very undesirable.) Nearest neighbor predicts for any input point $x$ the output value $\ytr_{i^{*,x}}$ of the nearest training input point $\xtr_{i^{*,x}}$, i.e., $i^{*,x}\in\argmin_{i\in\fromto{N}}\left|\xtr_i-x\right|$. For $\NN_{\theta^{*{\color{hellgrau},\lambda}}}$ both layers of a very wide shallow $\NN$ were trained with very small $\ell_2$-regularization (see \cite{Part3Arxiv} for the corresponding \Pfunc{}-Functional).}			\label{fig:DifferentGernalizationDifferentModels}
			\end{figure}

\begin{figure}[htbp]
				\centering
				\scaledinset{l}{.2}{b}{.505}{\resizebox{0.18\hsize}{!}{\tiny \begin{tabular}{@{}l@{}}%
				$\RNw[{\wth[T]}]$ \text{(implicit)}\vphantom{\RNRp}\\
				$\RNR\approx\flpm$ \text{(Ridge)}\vphantom{\RNRp}\\
				$\fl=\flg$ \text{(spline)}\vphantom{\RNRp}\\
    linear regression\vphantom{\RNRp}\\
    polynomial regression\vphantom{\RNRp}\\
    k-nearest neighbor\vphantom{\RNRp}\\
    $\NN_{\theta^{*{\color{hellgrau},\lambda}}}$\vphantom{\RNRp}%
    \end{tabular}}}{\includegraphics[width=0.7\linewidth]{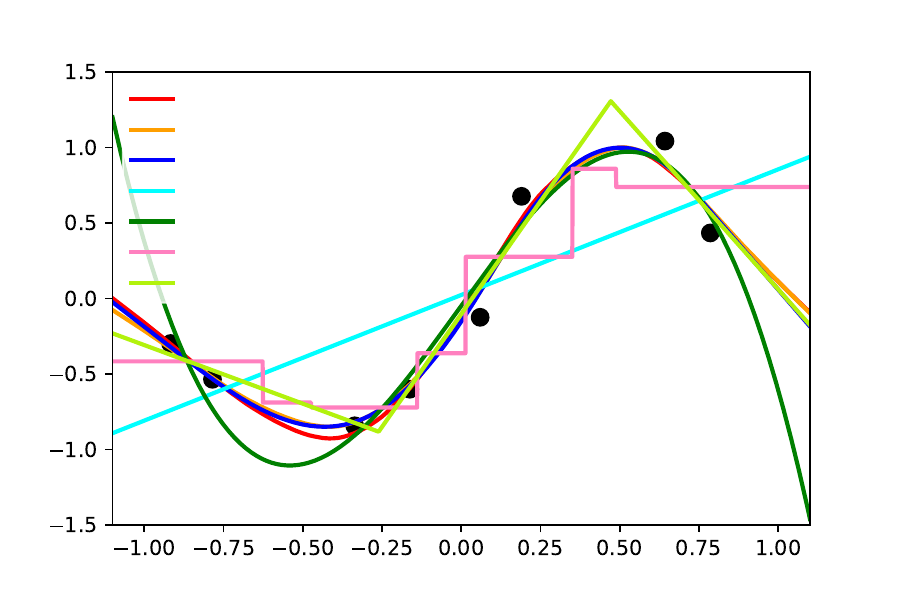}}
				\caption{Different models fit different functions to the same given 8 data points $(\xtr_i,\ytr_i)$. Here we have more noise and more regularization than in \Cref{fig:DifferentGernalizationDifferentModels}. While $\RNw[{\wth[T]}],\RNR\approx\flpm$ and $\fl=\flg$ can differ from each other in theory (especially for stronger regularization), they are very similar in this plot as the 3 lines almost cover each other ($T=512 \overset{\text{\eqref{eq:lambdaWelleT}}}{\implies} \lw=\frac{1}{2T(e-1)}\approx 5.6\cdot10^{-4} \overset{\text{\Cref{thm:ridgeToSpline}}}{\implies}\lambda=\frac{\lw}{2ng(0)}\approx1.7\cdot10^{-3} $). For polynomial regression we fitted a polonium of degree 3 to the data. K-nearest neighbor predicts for any input point $x$ the average of the output-values $\ytr_i$ of the $k=2$ nearest training input point $\xtr_i$. For $\NN_{\theta^{*{\color{hellgrau},\lambda}}}$ both layers of a very wide shallow $\NN$ were trained with $\ell_2$-regularization $\lambda\twonorm[\theta]^2$ with $\lambda=2.5\cdot10^{-4}$ (see \cite{Part3Arxiv} for the corresponding \Pfunc{}-Functional).}			\label{fig:RegDifferentGernalizationDifferentModels}
			\end{figure}

   \subsection{Why is the \texorpdfstring{\aregSplf{}}{adapted regression spline} so similar to the \texorpdfstring{\regSpl{}}{regression spline}?}\label{subsec:SimilaritWithoutSkipConnections}
   In the previous subsections, \Crefrange{thm:ridgeToSplineSkipConnection}{cor:moresimplifyingPfuncSkipConnections} show that different \RSN{}s and different spline models are equivalent if the linear skip connections from \Cref{subsec:linearSkip} are introduced. However, the empirical results displayed in \Cref{fig:jump,fig:sinUnderfit,fig:sinFit,fig:sinOverfit,fig:DifferentGernalizationDifferentModels,fig:RegDifferentGernalizationDifferentModels} %
   suggest that all these models $\RNw[{\wth[T]}],\RNR,\flpm,\RNaffine_{\wthaffine},\RNRaffine$ and $\flg$ are usually very similar for moderate values of $\lambda$ independent of whether or not they contain skip connections. This similarity was already discussed in \Cref{subsec:SummarizingTheIntuitionAboutARegSpl}, but in the following, we want to provide a more mathematical perspective on this.
   \begin{lemma}\label{le:OutSideGEquivToSkip}
       Under the assumptions of \Cref{thm:ridgeToSpline}, let $\alpha\in\Rp$ %
       and $\tilde{f}\in C^2(\R)$, then
       for
       \begin{equation}\label{eq:regCorrespondingToG}
       \text{reg}(a,c)=\frac{a^2}{\int_{\R\setminus[-\alpha,\alpha]}g(x)dx}+
       \frac{c^2}{\int_{\R\setminus[-\alpha,\alpha]}x^2g(x)dx}
       \end{equation}
       the following equations holds:
       \begin{equation}\label{eq:OutSideGEquivToSkip}
           \min_{f\in C^2: f|_{[-\alpha,\alpha]}=\tilde{f}|_{[-\alpha,\alpha]}}\Pgpmm(f)= \min_{f\in C^2: f|_{[-\alpha,\alpha]}=\tilde{f}|_{[-\alpha,\alpha]}}\Pgpmmaffine[g\ind_{[-\alpha,\alpha]}](f).
       \end{equation}
   \end{lemma}
   \begin{proof}
       Let $\left(f_+^{*\ggreg[,g\ind_{[-\alpha,\alpha]}]},
       f_-^{*\ggreg[,g\ind_{[-\alpha,\alpha]}]},
       a^{*\ggreg[,g\ind_{[-\alpha,\alpha]}]},
       c^{*\ggreg[,g\ind_{[-\alpha,\alpha]}]}\right)$ be the solution to the inner optimization problem for the solution of the outer optimization problem of the right-hand side of \cref{eq:OutSideGEquivToSkip}. Then with the help of some variational calculus and some symmetry arguments (see \Cref{as:easyReadable}), one can derive a solution to the inner optimization problem for the solution of the outer optimization problem of the left-hand side of \cref{eq:OutSideGEquivToSkip}\footnote{The inner optimization problem for the solution of the outer optimization problem of the left-hand side of \cref{eq:OutSideGEquivToSkip} is simply
       \[
       \min_{(f_+,f_-)\in\T:(f_++f_-)|_{[-\alpha,\alpha]}=\tilde{f}|_{[-\alpha,\alpha]}}\Pgpm(f_+,f_-).
       \]
       }:
       \begin{subequations}\label{eqs:fpmSuplementingSKipConnectionsSecondDerivative}
       \begin{align}
           {f_+^{*{\color{hellgrau},g}}}^{''}\!(x)&=\begin{cases}
               \frac{a^{*\ggreg[,g\ind_{[-\alpha,\alpha]}]} g(x)}{\int_{\R\setminus[-\alpha,\alpha]}g(\xi)d\xi}+
       \frac{c^{*\ggreg[,g\ind_{[-\alpha,\alpha]}]} |x| g(x)}{\int_{\R\setminus[-\alpha,\alpha]}\xi^2g(\xi)d\xi} & \text{if } x<-\alpha\\
       {f_+^{*\ggreg[,g\ind_{[-\alpha,\alpha]}]}}^{''}\!(x) & \text{if } x\in[-\alpha,\alpha]\\
       0 & \text{if } x>\alpha
           \end{cases}
           \\
           {f_-^{*{\color{hellgrau},g}}}^{''}\!(x)&=\begin{cases}
               \frac{-a^{*\ggreg[,g\ind_{[-\alpha,\alpha]}]} g(x)}{\int_{\R\setminus[-\alpha,\alpha]}g(\xi)d\xi}+
       \frac{c^{*\ggreg[,g\ind_{[-\alpha,\alpha]}]} |x| g(x)}{\int_{\R\setminus[-\alpha,\alpha]}\xi^2g(\xi)d\xi} & \text{if } x>\alpha\\
       {f_-^{*\ggreg[,g\ind_{[-\alpha,\alpha]}]}}^{''}\!(x) & \text{if } x\in[-\alpha,\alpha]\\
       0 & \text{if } x<-\alpha.
           \end{cases}
       \end{align}
              \end{subequations}
    Since $\left(f_+^{*{\color{hellgrau},g}},f_-^{*{\color{hellgrau},g}}\right)\in\T$, \cref{eqs:fpmSuplementingSKipConnectionsSecondDerivative} define
\begin{subequations}\label{eqs:fpmSuplementingSKipConnections}
       \begin{align}
           f_+^{*{\color{hellgrau},g}}(x)&=\int_{-\infty}^x\int_{-\infty}^t{f_+^{*{\color{hellgrau},g}}}^{''}\!(s)\,ds\,dt
           \\
           f_-^{*{\color{hellgrau},g}}(x)&=\int_x^{\infty}\int_t^{\infty}{f_-^{*{\color{hellgrau},g}}}^{''}\!(s)\,ds\,dt.
       \end{align}
              \end{subequations}
    From \cref{eqs:fpmSuplementingSKipConnectionsSecondDerivative}, we can directly compute
    \begin{multline*}
    \Pgpm\left(f_+^{*{\color{hellgrau},g}},f_-^{*{\color{hellgrau},g}}\right)
    =\\
    2g(0) \left(
		\int_{\supp (g\ind_{[-\alpha,\alpha]})} \frac{\left( {f_+^{*\ggreg[,g\ind_{[-\alpha,\alpha]}]}}^{''}\!(x) \right)^2}{g(x)} dx
		+\int_{\supp (g\ind_{[-\alpha,\alpha]})} \frac{\left( {f_-^{*\ggreg[,g\ind_{[-\alpha,\alpha]}]}}^{''}\!(x) \right)^2}{g(x)} dx\right.
  \\
		\left.\vphantom{\int_{\supp (g)} \frac{\left( {f_+^{*\ggreg[,g\ind_{[-\alpha,\alpha]}]}}^{''}\!(x) \right)^2}{g(x)} dx}
  +\frac{\left(a^{*\ggreg[,g\ind_{[-\alpha,\alpha]}]}\right)^2}{\int_{\R\setminus[-\alpha,\alpha]}g(x)dx}+
       \frac{\left(c^{*\ggreg[,g\ind_{[-\alpha,\alpha]}]}\right)^2}{\int_{\R\setminus[-\alpha,\alpha]}x^2g(x)dx}
		\right),
    \end{multline*}
    which concludes the proof.
   \end{proof}

   \begin{lemma}\label{le:OutSideGEquivToSkipWithLoss}
       Under the assumptions of \Cref{le:OutSideGEquivToSkip}, if $\supp(\nu)\subseteq[-\alpha,\alpha]$, then
       for every
       \[\flpm \in \argmin_{f\in \WT(\R)} \Ltrb{f}+\lambda \Pgpmm(f),\]
       there exists a
       \[f_{\pm}^{*\ggreg[,g\ind_{[-\alpha,\alpha]}]} \in \argmin_{f\in \WT(\R)} \Ltrb{f}+\lambda \Pgpmmaffine[g\ind_{[-\alpha,\alpha]}](f)\]
       with $\flpm|_{[-\alpha,\alpha]}=f_{\pm}^{*\ggreg[,g\ind_{[-\alpha,\alpha]}]}|_{[-\alpha,\alpha]}$ and vice versa.\footnote{In the case of convex $\Ltr$, the solutions $\flpm$ and$f_{\pm}^{*\ggreg[,g\ind_{[-\alpha,\alpha]}]}$ are unique, so we can directly say: \enquote{Under the assumptions of \Cref{le:OutSideGEquivToSkipWithLoss}, $\flpm|_{[-\alpha,\alpha]}=f_{\pm}^{*\ggreg[,g\ind_{[-\alpha,\alpha]}]}|_{[-\alpha,\alpha]}$ holds for convex $\Ltr{}$.}%
       }
   \end{lemma}
   \begin{proof}
       The proof directly follows from \Cref{le:OutSideGEquivToSkip}.
   \end{proof}
   \Cref{le:OutSideGEquivToSkip,le:OutSideGEquivToSkipWithLoss} show that every \aregSplf{} without skip connections behaves on every compact interval $[-\alpha,\alpha]$ exactly equivalently to $f_{\pm}^{*\ggreg[,g\ind_{[-\alpha,\alpha]}]}$ with skip connections whose regularization $\text{reg}$ depends on $g$ outside of $[-\alpha,\alpha]$ as specified in \cref{eq:regCorrespondingToG}. For large values of $\int_{\R\setminus[-\alpha,\alpha]}g(x)dx$ the regularization of $a$ becomes small and for large values of $\int_{\R\setminus[-\alpha,\alpha]}x^2g(x)dx$ the regularization of $c$ becomes small, because of \cref{eq:regCorrespondingToG}. For our experiments these two quantities were quite large\footnote{$\int_{\R\setminus[-1,1]}g(x)dx\href{https://www.wolframalpha.com/input?i2d=true&i=2Integrate\%5BDivide\%5Bs\%C2\%B2\%2C16Power\%5Bx\%2C4\%5D\%5D\%2C\%7Bx\%2C1\%2C\%E2\%88\%9E\%7D\%5D}{=}\frac{\sScale^2}{24}$ and $\int_{\R\setminus[-1,1]}x^2g(x)dx\href{https://www.wolframalpha.com/input?i2d=true&i=2Integrate\%5BDivide\%5Bs\%C2\%B2\%2C16Power\%5Bx\%2C2\%5D\%5D\%2C\%7Bx\%2C1\%2C\%E2\%88\%9E\%7D\%5D}{=}\frac{\sScale^2}{8}$ are rather large compared to the other quantities involved.}, which made the regularization of $a$ and $c$ quite negligible (unless you scale them up by a very large $\lambda$ as done in \Cref{subfig:strongerRegLinearWithoutSkip}) which explains the similarity of all these models $\RNw[{\wth[T]}],\RNR,\flpm,\RNaffine_{\wthaffine},\RNRaffine$ and $\flg$ in \Cref{fig:jump,fig:sinUnderfit,fig:sinFit,fig:sinOverfit,fig:DifferentGernalizationDifferentModels,fig:RegDifferentGernalizationDifferentModels}.%

\begin{proposition}[$\frac{g}{g(0)}\to1$]\label{prop:gTo1}
    Let $\Ltr$ be the square loss (i.e., $\Ltr$ satisfies \Cref{as:squaredloss}) with $\forall i \in \fromto[1]{N}: \xtr_i\in[-1,1]$ and $\exists (i,j) \in \fromto[1]{N}^2: \xtr_i\neq\xtr_j$, and let $(g_m)_{m\in\N}$ be a sequence of functions $g_m:\R\to\Rpz$ that have compact support and are continuous on their support.\footnote{All assumptions up to this footnote are only technicalities to simplify the proof and could be substantially weakened. The relevant assumptions follow after this footnote.}
    We further assume that $\forall m\in\N:\forall x\in[-1,1]:g_m(x)=g_m(0)\neq0$ and $\forall\alpha\in\Rp:\lim_{m\to\infty}\int_{\R\setminus[-\alpha,\alpha]}\frac{g_m}{g_m(0)}(x)dx=\infty$.\footnote{E.g., $g_m=\ind_{[-m,m]}$ would satisfy all these constraints on $g_m$.} Then under these assumptions, for every $\lambda\in\Rp$ and every compact interval $K$,
    \[\lim_{m\to\infty}\sobnorm[{\flpm[g_m]-\fl}]=0\]
    holds without any need for skip connections, where is $\flpm[g_m]$ is the \aregSpl{} from \Cref{def:adaptedSplineReg} and $\fl$ is the classical \regSpl{} from \Cref{def:splineReg}.
\end{proposition}
\begin{proof}
We define $\alpha=\max_{x\in K\cup\{1\}}|x|$.
\Cref{prop:constantg} tells us that $\forall m \in \N: \fl=\flgg[g_m]=\flgg[g_m\ind_{[-\alpha,\alpha]}]$, since the training input data is within $[-1,1]$ and $\alpha>1$.
\Cref{cor:simplifyingPfuncSkipConnections} tells us that $\forall m \in \N: \Pg[{g_m\ind_{[-\alpha,\alpha]}}]=\PgpmmaffineReg[g_m\ind_{[-\alpha,\alpha]}]{0}$.
We define
\begin{equation}\label{eq:regCorrespondingToGm}
       \text{reg}_m(a,c)=\frac{a^2}{\int_{\R\setminus[-\alpha,\alpha]}g_m(x)dx}+
       \frac{c^2}{\int_{\R\setminus[-\alpha,\alpha]}x^2g_m(x)dx}
       \end{equation}
       corresponding to \cref{eq:regCorrespondingToG}. 
       Note that
\begin{equation*}
		\Pgpmmaffine(f):=  2g(0)\underset{\underset{f=f_++f_-+a(\cdot)+c}{((f_+, f_-),a,b)\in\T\times\R\times\R}}{\min} \left(
		\Pgpm\left(f_+,f_-\right)
		+\text{reg}(a,c)
		\right).
		\end{equation*}
  We use the notation
   \[f_{\pm}^{*\ggreg[,g]} :\in \argmin_{f\in \WT(\R)} \Ltrb{f}+\lambda \Pgpmmaffine[g](f).\]
   Obviously, the assumption $\forall\alpha\in\Rp:\lim_{m\to\infty}\int_{\R\setminus[-\alpha,\alpha]}\frac{g_m}{g_m(0)}(x)dx=\infty$ implies that $\forall\alpha\in\Rp: \lim_{m\to\infty}\int_{\R\setminus[-\alpha,\alpha]}x^2\frac{g_m}{g_m(0)}(x)dx = \infty$, which means that $2g_m(0)\text{reg}_m$ vanishes in the limit $m\to\infty$.
    By using the strong convexity of $\Pgpm[g_m\ind_{[-\alpha,\alpha]}]$ (see \Cref{le:PfuncStronglyConvex}) one can prove that $\lim_{m\to\infty}\sobnorm[{f_{\pm}^{*{\color{hellgrau},g_m\ind_{[-\alpha,\alpha]},\text{reg}_m}}
    -f_{\pm}^{*{\color{hellgrau},g_m\ind_{[-\alpha,\alpha]},0}} }]=0.$
    And \Cref{le:OutSideGEquivToSkipWithLoss} shows that $\flpm[g_m]|_{[-\alpha,\alpha]}=f_{\pm}^{*{\color{hellgrau},g_m\ind_{[-\alpha,\alpha]},\text{reg}_m}}|_{[-\alpha,\alpha]}$. Combining all these steps concludes the proof.
\end{proof}

	\end{document}